\DeclareMathOperator\Reg{Reg}
\DeclareMathOperator\MDP{\Mcal}
\DeclareMathOperator\Onehot{Onehot}
\newcommand{\step}{\mathrm{Step}}
\newcommand{\feedback}{\zeta_E}
\newcommand{\imitloss}{L}
\newcommand{\auxu}{\hat{u}}
\newcommand{\auxpi}{\hat{\pi}}
\newcommand{\auxincrdata}{\hat{D}}
\newcommand{\auxest}{\hat{g}}
\newcommand{\auxtheta}{\hat{\theta}}
\newcommand{\iterloss}{F}
\newcommand{\base}{\Bcal}
\newcommand{\oloa}{\ensuremath{\mathrm{OLOA}}\xspace}
\newcommand{\algreduct}{\ensuremath{\textsc{Logger}}\xspace}
\newcommand{\algm}{\ensuremath{\textsc{MFTPL}}\xspace}
\newcommand{\algmp}{\ensuremath{\textsc{MFTPL-EG}}\xspace}
\newcommand{\algreductm}{\ensuremath{\textsc{Logger-M}}\xspace}
\newcommand{\algreductmp}{\ensuremath{\textsc{Logger-ME}}\xspace}
\newcommand{\DAgger}{\ensuremath{\textsc{DAgger}}\xspace}
\newcommand{\oilcsc}{\ensuremath{\text{COIL}}\xspace}
\newcommand{\bimatrix}{\ensuremath{\textsc{Poly$^{12}$-Bimatrix}}\xspace}
\newcommand{\vimdp}{\ensuremath{\textsc{Poly$^{6}$-VI-MDP}}\xspace}
\newcommand{\vi}{\ensuremath{\textsc{VI}}\xspace}
\renewcommand{\paragraph}[1]{\noindent\textbf{#1}}
\newcommand{\poly}{\mathrm{poly}}
\newcommand{\ff}{f}
\newcommand{\bias}{\mathrm{Bias}}
\newcommand{\est}{g}
\newcommand{\pred}{\hat{g}}
\newcommand{\Prob}{\mathrm{Pr}}
\newcommand{\SReg}{\mathrm{SReg}}
\newcommand{\DReg}{\mathrm{DReg}}
\newcommand{\sign}{\mathrm{sign}}
\newcommand{\polylog}{\mathrm{polylog}}
\newcommand{\multinomial}{\mathrm{Multinomial}}
\newcommand{\approxerr}{\ensuremath{\mathrm{ApproxErr}}\xspace}
\newcommand{\incrdata}{D}
\newcommand{\incrdist}{\Dcal^E}
\newcommand{\Sp}{T}
\newcommand{\const}{\gamma}
\newtheorem{assumption}{Assumption}
\newcommand{\ampolicy}{\sigma}
\newcommand{\coord}[1]{[#1]}
\newcommand{\algo}{\ensuremath{\textsf{Alg1}}\xspace}
\newcommand{\algop}{\ensuremath{\textsf{Alg1}'}\xspace}
\newcommand{\reducef}{\ensuremath{\textsf{f}}\xspace}
\newcommand{\reduceb}{\ensuremath{\textsf{g}}\xspace}
\newcommand{\funsab}{\ensuremath{p}\xspace}
\newcommand{\funnsab}{\ensuremath{q}\xspace}
\newcommand{\SLReg}{\mathrm{LReg}}
\newcommand{\dom}{\mathrm{dom}}
\setlist[itemize]{topsep=.5pt,itemsep=0pt,parsep=2pt,leftmargin=2em}
\setlist[enumerate]{topsep=.5pt,itemsep=0pt,parsep=2pt,leftmargin=2em}
\title{On Efficient Online Imitation Learning \\ via Classification}
\author{%
  Yichen Li \\
  University of Arizona\\
  \texttt{yichenl@arizona.edu}\\
  \And
  Chicheng Zhang\\
  University of Arizona\\
  \texttt{chichengz@cs.arizona.edu}\\
}
\def\shownotes{1}  \ifnum\shownotes=1
\newcommand{\authnote}[2]{$\ll$\textsf{\footnotesize #1 notes: #2}$\gg$}
 \newcommand{\authnote}[2]{}
\begin{document}
\maketitle
 \doparttoc 
\faketableofcontents 
\begin{abstract}

Imitation learning (IL) is a general learning paradigm for tackling sequential decision-making problems. Interactive imitation learning, where learners can interactively query for expert demonstrations, has been shown to achieve provably superior sample efficiency guarantees compared with its offline counterpart or reinforcement learning. 
In this work, we study classification-based online imitation learning (abbrev. $\ensuremath{\text{COIL}}$) and the fundamental feasibility to design oracle-efficient regret-minimization algorithms in this setting, with a focus on the general nonrealizable case. We make the following contributions: (1) we show that in the $\ensuremath{\text{COIL}}$ problem, any proper online learning algorithm cannot guarantee a sublinear regret in general;
(2) we propose  $\ensuremath{\textsc{Logger}}$,
an improper online learning algorithmic framework, that reduces $\ensuremath{\text{COIL}}$ to \emph{online linear optimization}, by utilizing a new definition of mixed policy class;
(3) we design two oracle-efficient algorithms within the $\ensuremath{\textsc{Logger}}$ framework that enjoy different sample and interaction round complexity tradeoffs, and conduct finite-sample analyses to show their improvements over naive behavior cloning; (4) we show that under the standard complexity-theoretic assumptions, efficient dynamic regret minimization is infeasible in the $\ensuremath{\textsc{Logger}}$ framework. Our work puts classification-based online imitation learning, an important IL setup, into a firmer foundation.

\end{abstract}

\section{Introduction}

Imitation learning (IL), also known as learning from expert demonstrations~\cite{pomerleau1988alvinn,osa2018algorithmic}, is a general paradigm for training intelligent behavior for sequential decision making tasks. IL has been successfully deployed in many applications, such as autonomous driving~\cite{pomerleau1988alvinn,pan2020imitation}, robot arm control~\cite{wang2017robust}, game playing~\cite{silver2016mastering}, and sequence prediction~\cite{daume2009search,bengio2015scheduled}. It is now well-known that with the help of a demonstrating expert, 
an IL agent can bypass the exploration challenges of reinforcement learning, achieving a much lower sample requirement than reinforcement learning agents~\cite{sun2017deeply}. 

Two major IL paradigms have been studied in the literature: offline and interactive. In offline IL~\cite{abbeel2004apprenticeship,syed2007game,ziebart2008maximum,ho2016generative}, the learner receives a set of expert demonstrations ahead of time; in contrast, in interactive IL~\cite{daume2009search,ross2011reduction,ross2014reinforcement,judah2014active}, the learner has the ability to interactively query the expert for demonstrations on states at its disposal, allowing expert feedback to be provided in a targeted manner.  In both settings, the goal of the learner is to output a policy $\hat{\pi}$ that competes with the expert's policy $\pi^E$, by consuming as few resources (e.g. expert annotations) as possible. Between these two, interactive IL is known to be able to achieve superior policy performance than its offline counterpart under certain favorable assumptions on the expert policy and the environment, in that learning agents can use interaction to address the compounding error challenge~\cite{ross2010efficient,rajaraman2021value}.

Despite the recent progress in the fundamental limits of the interactive imitation learning in the realizable  setting~\cite{sun2017deeply,rajaraman2020toward,rajaraman2021value}, the statistical and computational limits of the interactive imitation learning in the general nonrealizable setting remain open. One promising and influential algorithmic framework for studying and analyzing interactive IL in the nonrealizable setting is \DAgger (Data Aggregation)~\cite{ross2011reduction}, whose key insight is to reduce interactive IL to regret minimization in online learning~\cite{shalev2011online}. Specifically, it constructs a $N$-round online learning game, where at every round $n \in [N]$,  the learner outputs some policy $\pi_n$ from a policy class $\base$, and incurs a loss $\iterloss_n(\pi)$; the loss is carefully constructed so that the learner's instantaneous loss $\iterloss_n(\pi_n)$ characterizes  current policy $\pi_n$'s competitiveness compared to the expert $\pi^E$. 
A representative example of $\iterloss_n(\pi)$ is the expected disagreement between $\pi$ and $\pi^E$ on the state occupancy distribution induced by $\pi_n$, used in the original \DAgger paper~\cite{ross2011reduction}, which can be expressed as the expected zero-one loss of $\pi$ on a distribution of classification examples -- we call such setting \emph{Classification-based Online Imitation Learning} (abbrev. \oilcsc).
The \DAgger reduction framework has spurred an active line of research on IL~\cite{ross2014reinforcement,sun2017deeply,cheng2018convergence,cheng2019accelerating,cheng2019predictor,lee2021dynamic}: it enables conversions from stochastic online optimization algorithms with static or dynamic regret guarantees to IL algorithms with different output policy suboptimality guarantees, allowing the research community to directly translate new results in online learning to the field of IL.

Perhaps surprisingly, from a fundamental perspective, rigorous design of efficient regret minimization algorithms for \oilcsc has been largely overlooked by the prior literature. Specifically, many works assume a fixed parameterization of policies in $\base$, and assume that $F_n(\pi)$'s are convex in $\pi$'s underlying parameters to allow for no-regret online convex optimization~\cite{ross2011reduction,sun2017deeply,cheng2018convergence}. Although natural, this viewpoint has two issues: (1) in \DAgger's reduction, the learner uses finite-sample approximations of $F_n(\pi)$, which are often discontinuous in $\pi$'s underlying parameters (e.g. given a policy $\pi_\theta(s) = \sign(\inner{\theta}{s})$ as a linear classifier, its zero-one loss on a  state, $I(\pi_\theta(s) \neq \pi^E(s))$ is discontinuous in $\theta$), making stochastic gradient-based methods inapplicable.
Convex surrogate loss functions has been proposed as a popular workaround~\cite{ross2011reduction}, but it is well-known that in the nonrealizable setting, even for the special case of supervised learning, minimizing convex surrogate losses can result in very different models compared to minimizing the original zero-one classification loss~\cite{ben2012minimizing};
(2) it makes the usage of policy classes with complex parameterization (e.g. rule-based policies such as decision trees) difficult, as convexity is hard to establish for such classes. 

\paragraph{An overview of our results.} In this paper, we bridge the above-mentioned gaps by studying the fundamental feasibility of designing efficient regret minimization algorithms for \oilcsc, putting the study of  statistical and computational limits of interactive imitation learning in the general nonrealizable setting into a firmer foundation. 
Our first result is that, analogous to Cover's impossibility result in online classification~\cite{cover1966behavior}, in the \oilcsc setting,
any proper online learning algorithm (that outputs a sequence of policies $\{\pi_n\}_{n=1}^N$ from the original class $\base$) cannot guarantee sublinear regret in general (\autoref{sec:proper}). 

The above negative result motivates the design of improper learning algorithms for regret minimization. 
To this end, we propose to choose policies from a mixed policy class $\Pi_\base$, and provide an algorithmic framework, \algreduct, that reduces \oilcsc to \emph{online linear optimization}. In a nutshell, \algreduct uses a natural parameterization on $\Pi_\base$ that allows to express $F_n(\pi)$ as a linear function of the underyling parameters of $\pi \in \Pi_\base$. We show that any online linear optimization algorithm with (high-probability) regret guarantees can be plugged into \algreduct to obtain an algorithm for \oilcsc with policy suboptimality guarantees (\autoref{sec:mixture-class}).

Next, enabled by the \algreduct framework, we design computationally efficient algorithms for static regret minimization. Assuming access to an offline cost-sensitive classification (CSC) oracle $\Ocal$, and a set of unlabeled \emph{separator examples} for $\Bcal$~\cite{syrgkanis2016efficient, dudik2020oracle}, we design \algreductm, a sample and computationally efficient algorithm. Using $O(1/\epsilon^2)$ interaction rounds and $O(1/\epsilon^2)$  expert annotations, \algreductm enjoys a per-round static regret of $\epsilon$ (\autoref{sec:reg-sqrt-n}). 
Underlying \algreductm is a delicate utilization of the connection between Follow-the-Perturbed-Leader and Follow-the-Regularized-Leader, two well-known online learning algorithm families, first observed by~\cite{abernethy2014online}.
Moreover, by exploiting the predictability of the \oilcsc problem~\cite{rakhlin2013online,cheng2019accelerating,cheng2019predictor}, we design an efficient algorithm \algreductmp, that enjoys a per-round static regret of $\epsilon$, with $O(1/\epsilon)$ interaction rounds and  $O(1/\epsilon^2)$  expert annotations (\autoref{sec:reg-const}).
Its reduced number of interaction rounds can enable a more practical deployment of IL agents, especially when interactive expert annotations come in batches or with delays. 

Finally, we study efficient dynamic regret minimization in the \algreduct framework (\autoref{sec:dynamic-reg}). We show that this is unlikely to be feasible: 
under a standard complexity-theoretic assumption, 
no oracle-efficient algorithms can output policies in $\Pi_\Bcal$ with sublinear dynamic regret. Due to space constraints, we discuss key related works throughout the paper, and defer additional related works to Appendix~\ref{sec:add-rel-work}.

\section{Preliminaries}
\label{sec:prelims}

\paragraph{Basic definitions.} Define $[n] := \cbr{1,\ldots, n}$. Define indicator function 
$I(\cdot)$ such that $I(E)=1$ if condition $E$ is true, and $=0$ otherwise. We use $\Delta(W)$ to denote the set of probability distributions over a finite set $W$, and use $\Onehot(w,W) \in \Delta(W)$ to denote the delta mass on $w \in W$. 
For a finite $W$, we will oftentimes treat $u \in \RR^{|W|}$  (e.g. $u \in \Delta(W)$) as a $\abr{W}$-dimensional vector; for $w \in W$, denote by $u\coord{w}$ the $w$-th coordinate of $u$. We abuse the notation of $\{\cdot\}$ to denote multisets.


\paragraph{Episodic MDPs.} We study imitation learning in episodic Markov decision processes (MDPs).
An episodic MDP $\Mcal$ is a tuple $(\Scal,\Acal,H,c,\rho,P)$, where $\Scal$ is a finite state space (that can be exponentially large), $\Acal$ is a finite action set, $H \in \NN^+$ is the episode length, $c: \Scal\times\Acal \rightarrow [0,1]$ is the cost function, and $\rho \in \Delta(\Scal)$ is the initial state distribution. Also, $P = \{P_t\}_{t=1}^{H-1}$ denotes $\Mcal$'s transition dynamics, with $P_t: \Scal \times \Acal \to \Delta(\Scal)$ being the transition probability at step $t$. Throughout, we use $S$ and $A$ to denote $|\Scal|$ and $|\Acal|$, respectively.
Without loss of generality, we assume that $\Mcal$ is layered, where $\Scal$ can be partitioned into $H$ disjoint sets $\{\Scal_t\}_{t=1}^H$; the initial distribution $\rho$ is supported on $\Scal_1$, and transition distribution $P_t(\cdot \mid s,a)$ is supported on $\Scal_{t+1}$ for all $t,s,a$. 
For state $s \in \Scal$, define $\step(s)$ as the step $t$ such that $s \in \Scal_t$.


A learning agent interacts with $\Mcal$ for one episode using the following protocol: for every step $t \in [H]$: it observes a state $s_t \in \Scal_t$, takes an action $a_t \in \Acal$, incurs cost $c(s_t, a_t)$, and transitions to next state $s_{t+1} \sim P_t(\cdot \mid s_t, a_t)$ except for the last step when it stops. Given a stationary policy  $\pi: \Scal \rightarrow \Delta(\Acal)$, we use $\pi(\cdot|s)$ to denote the action distribution of $\pi$ on $s$. Denote by $\EE_\pi$ and $\PP_\pi$ the expectation and probability over executing (i.e. rolling out) policy $\pi$ in $\Mcal$.
Given policy $\pi$, its state occupancy distribution at step $t$ is defined as $d_\pi^t(\cdot) := \PP_\pi(s_t = \cdot)$; its average state occupancy distribution is denoted as $d_\pi := \frac1H \sum_{t=1}^H d_\pi^{t}$.
Let $J(\pi) :=  \EE_{\pi} \sbr{ \sum_{t=1}^H c(s_t, a_t)} = H \cdot \EE_{s \sim d_{\pi}} \EE_{a \sim \pi(\cdot|s)}\sbr{c(s,a)}$ denote the expected cumulative cost of $\pi$ over an episode. 
For policy $\pi$, we denote its value function $V_{\pi}(s) := \EE\sbr{ \sum_{t=\step(s)}^{H} c(s_t,a_t) \mid s, \pi}$ and
action-value function 
$Q_\pi(s, a) :=  c(s,a) + \EE\sbr{ \sum_{t=\step(s)+1}^{H} c(s_t,a_t) \mid  s, a, \pi}$; 
in words, they are the expected costs of rolling out $\pi$ starting from $s$ and $(s,a)$, respectively.  
For policy $\pi$, define its advantage function as $A_{\pi}(s,a):= Q_{\pi}(s,a) - V_{\pi}(s)$, which measures the expected performance difference by one step deviation of $\pi$ by taking action $a$ at state $s$. Also, we define the recoverability constant as
the ability of $\pi$ to recover from deviation when rolled out in $\Mcal$:

\begin{definition}[$\mu$-recoverability]
A (MDP, policy) pair $(\Mcal,\pi)$ is said to be $\mu$-recoverable, if $\forall s\in \Scal, a\in \Acal$, $\abr{ A_\pi(s,a) }\leq \mu $.~\footnote{The $\mu$-recoverability definition here is slightly different from the original ones in~\cite{ross2011reduction}, in that it also requires that $A_\pi(s,a) \geq -\mu$; we can drop this assumption with a slightly worse sample complexity analysis.}  
\label{def:recover}
\end{definition}

\paragraph{Interactive IL.} We study interactive imitation learning~\cite{daume2009search,ross2010efficient}, where the learner has access to a stationary deterministic demonstrating expert $\pi^E$ and would like to learn a policy with low expected cost. Throughout, we assume that $(\Mcal,\pi^E)$ is $\mu$-recoverable, for some $\mu \leq H$ that can potentially be $\ll H$. At each interaction round, the learner interacts with $\Mcal$ for a few episodes, obtaining trajectories of the form $\tau = (s_1,a_1, s_2, a_2, \ldots, s_H, a_H)$ and queries the expert for feedback on some of the states. Specifically, given a state $s$, the feedback given by the expert is of the form $(\feedback(s,a))_{a \in \Acal} \in \RR^{A}$. Two notable examples are: (1) direct expert annotation~\cite{ross2011reduction}, i.e. given state $s$, expert provides demonstration $\pi^E(s)$, and we use it to construct an $A$-dimensional feedback $(\feedback(s,a))_{a \in \Acal} = (\mu \cdot I(a \neq \pi^E(s)))_{a \in \Acal}$;
(2) estimates of value functions based on experts' rollout~\cite{ross2014reinforcement}, i.e. 
$(\feedback(s,a))_{a \in \Acal} = (A^{E}(s,a))_{a \in \Acal} := (A_{\pi^E}(s,a))_{a \in \Acal}$.\footnote{Strictly speaking, in AggreVate and its variants~\cite{ross2014reinforcement,sun2017deeply}, the learner requests expert rollout to obtain \emph{unbiased estimators} of $A^E(s,a)$; our sample complexity analysis can also be adapted to this setting.}
Throughout, we assume  $\feedback(s,a)$ satisfies $\forall s\in \Scal, a \in \Acal$, $A^E(s,a) \leq \feedback(s,a) \leq \mu \cdot I(a \neq \pi^E(s))$; this is satisfied by the two examples above.
\label{def:feedback}
Define a policy $\pi$'s \emph{imitation loss} as: $\imitloss(\pi) := \EE_{s \sim d_{\pi}}\EE_{a \sim \pi(\cdot|s)} \sbr{\feedback(s,a)}$.
\label{def:imitloss}
By the performance difference lemma~\cite{Kakade2002ApproximatelyOA} (see Lemma~\ref{lem: performance difference lemma} in Appendix~\ref{sec:auxiliary}), $J(\pi) - J(\pi^E) \leq H \cdot \imitloss(\pi)$, implying that if $\pi$ has a small imitation loss, it will have expected cost competitive with $\pi^E$.
In light of this connection, in interactive IL, 
the learner would like to obtain policy $\hat{\pi}$ with low $\imitloss(\hat{\pi})$. Subject to this, throughout the paper, we consider the learner to optimize two measures of data efficiency: 
\begin{itemize}
\item \emph{Sample complexity}: the total number of expert annotations $\zeta_E$ requested. A smaller sample complexity reduces the total cost of expert annotations (which often takes human effort).
\item \emph{Interaction round complexity}: total number of adaptive interaction rounds. A small number of interaction rounds enables more parallelized annotations within an interaction round, and mitigates the delayed annotation issue~\cite{yang2013buy,wang2021one}.
\end{itemize}

\paragraph{The \DAgger reduction framework for interactive IL.} The \DAgger framework reduces minimizing $\imitloss(\pi)$ to no-regret online learning~\cite{ross2011reduction,ross2014reinforcement}. It constructs a $N$-round online learning game, where at every round $n \in [N]$,  the learner outputs some policy $\pi_n$, which induces a loss function $\iterloss_n(\pi) = \EE_{s \sim d_{\pi_n}} \EE_{a \sim \pi(\cdot|s)} \sbr{\feedback(s,a)}$ \label{def:iterloss}.
Its key insight is that, by the definition of $\cbr{\iterloss_n}_{n=1}^N$, minimizing the online learning cumulative loss $\sum_{n=1}^N \iterloss_n(\pi_n)$ is equivalent to minimizing the cumulative imitation losses of $\pi_n$'s, i.e. $\sum_{n=1}^N L(\pi_n)$. 
Research efforts in online learning~\cite{shalev2011online} have focused on the design of algorithms that can output $\cbr{\pi_n}_{n=1}^N$ with static regret $\SReg_N(\Bcal)$ or dynamic regret $\DReg_N(\Bcal)$ against some benchmark policy class $\Bcal$, formally:
\begin{equation}
\SReg_N(\Bcal):= 
\sum_{n=1}^N \iterloss_n(\pi_n) -  \min_{\pi \in \Bcal}  \sum_{n=1}^N \iterloss_n(\pi), 
\quad 
\DReg_N(\Bcal) := 
\sum_{n=1}^N \rbr{ \iterloss_n(\pi_n) -  \min_{\pi \in \Bcal} \iterloss_n(\pi) } .
\label{eqn:regret}
\end{equation}

{
\makeatletter
\renewcommand{\ALG@name}{Protocol}
\makeatother
\begin{algorithm}[t]
\caption{Classification-based Online Imitation Learning (\oilcsc)}
\begin{algorithmic}
\FOR{$n=1,\ldots,N$}
\STATE Learner outputs policy $\pi_n$.
\STATE Loss function $\iterloss_n(\pi) := \EE_{s \sim d_{\pi_n} } \EE_{a \sim \pi(\cdot \mid s)} \sbr{\feedback(s,a)}$.
\STATE Learner draws samples from $\incrdist_{\pi_n}$ to obtain information about loss $F_n$, via interacting with $\Mcal$ and querying the  expert for annotation $\feedback$.
\ENDFOR
\STATE \textbf{Goal of learner:} minimize $\sum_{n=1}^N \iterloss_n(\pi_n) = \sum_{n=1}^N \imitloss(\pi_n)$.
\end{algorithmic}
\label{protocol:ilrm}
\end{algorithm}
}

Assuming that the learner chooses policies $\cbr{\pi_n}_{n=1}^N$ from another stationary policy class $\Bcal_0$ (which may or may not be $\Bcal$), 
the following proposition shows that static and dynamic regret guarantees in the induced online learning game can be converted to policy suboptimality guarantees:

\begin{proposition}[e.g. \cite{cheng2018convergence}]
\label{prop:reg-conversion}
For any $N \in \mathbb{N}^+$ and online learner that outputs $\cbr{\pi_n}_{n=1}^N \in \Bcal_0^N$, define $\bias(\Bcal,\Bcal_0,N) := \mathop{\max}\limits_{\{\upsilon_n\}_{n=1}^N \in \Bcal_0^N } \min\limits_{\pi \in \Bcal} \EE_{s \sim \bar{d}_{N}} \EE_{a \sim \pi(\cdot \mid s) }\sbr{I(a \neq \pi^E(s))}$, where $\bar{d}_{N} := \frac{1}{N} \sum_{n=1}^N d_{\upsilon_n}$. Then, choosing $\hat{\pi}$ uniformly at random from $\cbr{\pi_n}_{n=1}^N$ has guarantee:
\[
\EE\sbr{ J(\hat{\pi}) - J(\pi^E) }
\leq 
H \cdot
\min\cbr{ \mu\cdot \bias(\Bcal,\Bcal_0,N)  + \frac{\EE[\SReg_N(\Bcal)]}{N}, \; \mu \cdot \bias(\Bcal,\Bcal_0,1)  + \frac{\EE[\DReg_N(\Bcal)]}{N} }.
\]

\end{proposition}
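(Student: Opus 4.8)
The plan is to chain three ingredients already available in the preliminaries: the performance difference lemma (Lemma~\ref{lem: performance difference lemma}), the definition of the per-round loss $\iterloss_n$, and the assumed upper bound $\feedback(s,a) \le \mu \cdot I(a \neq \pi^E(s))$ on the expert feedback. Since $\hat\pi$ is drawn uniformly from $\cbr{\pi_n}_{n=1}^N$, we have $\EE\sbr{J(\hat\pi) - J(\pi^E)} = \frac1N \sum_{n=1}^N \EE\sbr{J(\pi_n) - J(\pi^E)}$, and the performance difference lemma gives $J(\pi_n) - J(\pi^E) \le H \cdot \imitloss(\pi_n)$. The key observation is that, by construction, $\imitloss(\pi_n) = \EE_{s\sim d_{\pi_n}}\EE_{a\sim\pi_n(\cdot\mid s)}\sbr{\feedback(s,a)} = \iterloss_n(\pi_n)$, so the whole quantity is controlled by $\frac{H}{N}\sum_n \iterloss_n(\pi_n)$, and it suffices to bound $\frac1N\sum_n \iterloss_n(\pi_n)$ by each of the two terms inside the minimum (before taking expectations).

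For the static-regret term, I would write $\sum_n \iterloss_n(\pi_n) = \SReg_N(\Bcal) + \min_{\pi\in\Bcal}\sum_n \iterloss_n(\pi)$. For any fixed $\pi \in \Bcal$, the feedback inequality together with $\iterloss_n(\pi) = \EE_{s\sim d_{\pi_n}}\EE_{a\sim\pi(\cdot\mid s)}\sbr{\feedback(s,a)}$ gives $\sum_n \iterloss_n(\pi) \le \mu \sum_n \EE_{s\sim d_{\pi_n}}\EE_{a\sim\pi(\cdot\mid s)}\sbr{I(a\neq\pi^E(s))} = \mu N\, \EE_{s\sim \bar d_N}\EE_{a\sim\pi(\cdot\mid s)}\sbr{I(a\neq\pi^E(s))}$ with $\bar d_N = \frac1N\sum_n d_{\pi_n}$. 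Minimizing over $\pi\in\Bcal$ and using that $\cbr{\pi_n}_{n=1}^N \in \Bcal_0^N$ — so the resulting minimum is at most $\bias(\Bcal,\Bcal_0,N)$, which takes a worst case over all such sequences — yields $\frac1N\sum_n \iterloss_n(\pi_n) \le \mu\cdot\bias(\Bcal,\Bcal_0,N) + \frac{\SReg_N(\Bcal)}{N}$.

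For the dynamic-regret term, I would instead split $\sum_n \iterloss_n(\pi_n) = \DReg_N(\Bcal) + \sum_n \min_{\pi\in\Bcal}\iterloss_n(\pi)$ and bound each summand separately: the same feedback inequality gives $\min_{\pi\in\Bcal}\iterloss_n(\pi) \le \mu\min_{\pi\in\Bcal}\EE_{s\sim d_{\pi_n}}\EE_{a\sim\pi(\cdot\mid s)}\sbr{I(a\neq\pi^E(s))} \le \mu\cdot\bias(\Bcal,\Bcal_0,1)$, since $\pi_n\in\Bcal_0$. Summing over $n$ gives $\frac1N\sum_n\iterloss_n(\pi_n) \le \mu\cdot\bias(\Bcal,\Bcal_0,1) + \frac{\DReg_N(\Bcal)}{N}$. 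Taking expectations over the learner's randomness (and the draw of $\hat\pi$) in both bounds and multiplying through by $H$ produces the claimed inequality with the minimum of the two terms.

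I do not expect a deep obstacle here: the argument is careful bookkeeping. The one point that needs attention is the order of quantifiers in $\bias$ — it is a worst case over sequences $\cbr{\upsilon_n}\in\Bcal_0^N$, while the $\pi_n$'s produced by the learner are themselves random, so the bias quantity must be invoked as a deterministic, sequence-independent upper bound before passing to expectations. Everything else (the performance difference lemma, the identity $\imitloss(\pi_n)=\iterloss_n(\pi_n)$, and the feedback inequality $\feedback(s,a)\le\mu I(a\neq\pi^E(s))$) is quoted directly from the preliminaries.
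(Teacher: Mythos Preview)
Your proposal is correct and follows essentially the same route as the paper's proof: performance difference lemma plus $A^E \le \feedback$ to pass from $J(\pi_n)-J(\pi^E)$ to $H\,\iterloss_n(\pi_n)$, then the regret decompositions in Equation~\eqref{eqn:regret}, the feedback bound $\feedback(s,a)\le\mu I(a\neq\pi^E(s))$ to control the benchmark terms via $\bias$, and finally the law of total expectation over the learner's randomness and the draw of $\hat\pi$. Your remark about invoking $\bias$ as a deterministic sequence-independent upper bound before taking expectations is exactly the right point of care.
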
 

In the above proposition, $\bias(\Bcal,\Bcal_0,N)$ takes the worst-case mixture of state occupancy distributions $\cbr{d_{\upsilon_n}}_{n=1}^N$ induced by $N$ policies from $\Bcal_0$, and measures the expected disagreement between $\pi^E$ and its best  approximating policy in $\Bcal$. Informally, it measures the ``approximation error'' of benchmark class $\Bcal$: it is always nonnegative, and when $\pi^E \in \Bcal$ (which we call the \emph{realizable case)}, $\bias(\Bcal,\Bcal_0,N) = 0$. 
Proposition~\ref{prop:reg-conversion} gives two ways to obtain a competitive imitation policy: (1) choose $(\Bcal,\Bcal_0)$ with a small $\bias(\Bcal,\Bcal_0,N)$ and achieve a low static regret; (2) choose $(\Bcal,\Bcal_0)$ with a small $\bias(\Bcal,\Bcal_0,1)$ and achieve a low dynamic regret. Although achieving low dynamic regret can be significantly more challenging than achieving low static regret, minimizing dynamic regret has the advantage that its approximation error term $\bias(\Bcal,\Bcal_0,1)$ is smaller than the static regret formulation's counterpart $\bias(\Bcal,\Bcal_0,N)$.

\paragraph{Classification-based Online Imitation Learning (\oilcsc).} As we consider a finite action space $\Acal$, a policy $\pi$ can be equivalently viewed as a (possibly randomized) multiclass classifier. 
A cost-sensitive classification (CSC) example is defined to be a pair $(x,\vec{c})$, where $x \in \Scal$ is its feature part, and $\vec{c} \in \RR^{A}$ is its cost part. 
Under the \DAgger reduction framework, the loss at iteration $n$,  
$\iterloss_n(\pi) = \EE_{s \sim d_{\pi_n}} \EE_{a \sim \pi(\cdot|s)} \sbr{\feedback(s,a)}$, can be viewed as the expected cost of policy $\pi$ on a distribution of cost-sensitive examples $\incrdist_{\pi_n}$ (formally, $\EE_{(s, \vec{c}) \sim \incrdist_{\pi_n}} \sbr{ \vec{c}(h(s)) }$), where a sample $(s,\vec{c})$ is drawn from $\incrdist_\pi$ by first rolling out $\pi$ and drawing $s \sim d_{\pi}$, and query the expert on $s$ to obtain $(\zeta_E(s,a))_{a \in \Acal}$ as its associated $\vec{c}$. 
The learner can obtain a finite-sample approximation to $F_n(\pi)$ by interacting with $\Mcal$ and the expert to draw samples from $\incrdist_{\pi_n}$. 
We will focus on designing efficient regret-minimizing algorithms in the \oilcsc setting; see Protocol~\ref{protocol:ilrm} for a summary.



In addition to data efficiency, we also consider the design of imitation learning algorithms with computational efficiency guarantees. 
To this end,
following a sequence of empirically and theoretically successful works on oracle-efficient learning~\cite{syrgkanis2016efficient,dudik2011efficient,agarwal2014taming,dann2018oracle},
we assume access to the benchmark policy class $\Bcal$, which is a collection of $B$ stationary deterministic policies $h: \Scal \rightarrow \Acal$. 
Throughout this paper, we assume access to the following computational oracle for class $\Bcal$ and measure an algorithm's computational efficiency by its number of calls to this oracle.

\begin{definition}[CSC oracle]
A CSC oracle $\Ocal$ for policy class $\Bcal$ is such that: given any input multiset of cost-sensitive examples 
$D = \{(x_1, \vec{c}_1), \ldots, (x_K, \vec{c}_K)\} \in (\Scal \times \RR^{A})^K$, it outputs the policy in $\Bcal$ that has the smallest empirical cost, formally,
\[
\Ocal(D) := \argmin_{h \in \Bcal} \EE_{(x,\vec{c}) \sim D} \sbr{ \vec{c}(h(x)) }, 
\]
where we slightly abuse the notation and
and use $D$ to also denote the uniform distribution over it.
\label{def:oracle}
\end{definition}

\section{\algreduct: reducing \oilcsc to online linear optimization}
\label{sec:improper-learning}

In this section, we introduce our main algorithmic framework, \algreduct(an abbreviation for Linear lOss aGGrEgation) 
for designing regret-minimizing algorithms in the \oilcsc setting. Section~\ref{sec:proper} motivates our approach by showing that natural proper learning-based approaches fail to achieve sublinear regret in general; Section~\ref{sec:mixture-class} introduces the main idea of performing improper learning using a carefully-defined mixture policy class, via a reduction to online linear optimization.

\subsection{Can we achieve sublinear regret using proper learning?}
\label{sec:proper}

A natural idea for minimizing regret is {\em proper learning}: at round $n$, the learner chooses some policy $\pi_n$ (possibly at random) from $\Bcal_0 = \Bcal$, our benchmark policy class, using some online learning algorithm, based on all information collected in the first $n-1$ rounds; the learner collects information on $\iterloss_n$ via rollouts of $\pi_n$ and expert annotations, and continue to the next iteration. 

While this approach has demonstrated sharp online regret guarantees in classical online cost-sensitive classification settings~\cite{littlestone1994weighted,freund1997decision}, perhaps subtly, we show in the following theorem that,  this approach is insufficient to guarantee sublinear regret in the \oilcsc setting.

\begin{theorem}
\label{thm:proper-linear-reg}
Suppose the expert's feedback $\zeta_E(s,a)$ is of the form $\mu \cdot I(a \neq \pi^E(s))$ or $A^E(s,a)$. Then, for any $H \geq 3$,
there exists an MDP $\Mcal$ of episode length $H$, a deterministic expert policy $\pi^E$,
a benchmark policy class $\Bcal$, such that for any learner that sequentially and possibly at random generates a sequence of policies $\{\pi_n\}_{n=1}^N \in \Bcal^N$, its static regret satisfies $
\SReg_N(\Bcal) = \Omega(N).
$
\end{theorem}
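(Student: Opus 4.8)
I would prove this by exhibiting a single fixed instance on which every proper learner is trapped. The instance is an MDP $\Mcal$ with $H=3$ (for $H>3$, append dummy singleton layers), a deterministic expert $\pi^E$, and a two-element benchmark class $\Bcal=\cbr{h_1,h_2}$, designed so that whichever of $h_1,h_2$ the learner plays at round $n$, the induced loss $F_n$ satisfies $F_n(\pi_n)=c_0$ for a fixed $c_0>0$ while the \emph{other} policy $\pi'\in\Bcal$ has $F_n(\pi')=c_0-c_1$ for a fixed $c_1>0$. The key point is that in \oilcsc the loss $F_n$ depends on the learner's own policy $\pi_n$ through the occupancy $d_{\pi_n}$ — a self-referential dependence absent from ordinary online (cost-sensitive) classification — so randomizing over $\cbr{h_1,h_2}$ provides no escape: whatever policy is realized at round $n$ is precisely the one $F_n$ penalizes. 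Hence, with $n_i:=\abr{\cbr{n\le N:\pi_n=h_i}}$, every realized run has $\sum_{n=1}^N F_n(\pi_n)=c_0 N$, while $\min_{\pi\in\Bcal}\sum_{n=1}^N F_n(\pi)=c_0 N-c_1\max(n_1,n_2)\le c_0 N-\tfrac{c_1}{2}N$, giving $\SReg_N(\Bcal)\ge\tfrac{c_1}{2}N=\Omega(N)$ pathwise, hence for any randomized, adaptive learner.

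Concretely, I would take $\Acal=\cbr{0,1,2}$, a start state $s$ (step $1$), states $u_1,u_2$ (step $2$), and a state $w$ (step $3$); from $s$, action $a\in\cbr{1,2}$ transitions to $u_a$ and action $0$ transitions to $u_1$, and from each $u_i$ every action transitions to $w$. Set $\pi^E(s)=0$, $h_1(s)=1$, $h_2(s)=2$ (so that the learner's choice forks the step-$2$ occupancy), pick $\pi^E(u_1),\pi^E(u_2)$ arbitrarily, set $h_1(u_1)\neq\pi^E(u_1)$ but $h_2(u_1)=\pi^E(u_1)$, symmetrically $h_1(u_2)=\pi^E(u_2)$ but $h_2(u_2)\neq\pi^E(u_2)$, and $h_1(w)=h_2(w)=\pi^E(w)$. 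Fix $\mu\in(0,H]$ and let the only nonzero costs be $c(u_i,a)=\mu\cdot I(a\neq\pi^E(u_i))$; a direct computation gives $V_{\pi^E}\equiv 0$, $A^E(u_i,a)=\mu\cdot I(a\neq\pi^E(u_i))\in\cbr{0,\mu}$, and $A^E\equiv 0$ on $s$ and $w$, so $\Mcal$ is $\mu$-recoverable, and both feedback forms $\feedback(s,a)=\mu\cdot I(a\neq\pi^E(s))$ and $\feedback(s,a)=A^E(s,a)$ satisfy the standing requirement $A^E(s,a)\le\feedback(s,a)\le\mu\cdot I(a\neq\pi^E(s))$.

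It remains to evaluate $F_n$. When $\pi_n=h_1$, $d_{\pi_n}$ is uniform over $\cbr{s,u_1,w}$, so $F_n(h_1)$ equals a common contribution from $s$ and $w$ plus $h_1$'s disagreement penalty at $u_1$, whereas $F_n(h_2)$ has only the common contribution (since $h_2$ agrees with $\pi^E$ at $u_1$ and at $w$); the contribution of the forking state $s$ is identical for $h_1$ and $h_2$ exactly because $\pi^E(s)\notin\cbr{h_1(s),h_2(s)}$, which is why three actions are used. By symmetry the analogous statement holds when $\pi_n=h_2$, yielding $F_n(\pi_n)=c_0$ and $F_n(\pi')=c_0-c_1$ with $c_1=\mu/H$ under either feedback form ($c_0=2\mu/H$ for $\feedback=\mu\cdot I(\cdot\neq\pi^E)$, $c_0=\mu/H$ for $\feedback=A^E$), so $\SReg_N(\Bcal)\ge\mu N/(2H)=\Omega(N)$. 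The step I expect to be the main obstacle is keeping one construction simultaneously compatible with both feedback specifications, the constraint $A^E\le\feedback\le\mu\cdot I(\cdot\neq\pi^E)$, and $\mu$-recoverability with $\mu\le H$ — in particular making the forking state contribute identically to $F_n(h_1)$ and $F_n(h_2)$ under both feedback forms, which is what forces the third action (or, alternatively, a multi-step disagreement gadget spanning steps $2,\dots,H$, the route in which the hypothesis $H\ge 3$ is essential).
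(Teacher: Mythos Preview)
Your proposal is correct and shares the paper's core insight: because $F_n$ depends on $\pi_n$ through $d_{\pi_n}$, one can build an instance where whichever benchmark policy the learner plays is the one $F_n$ penalizes, so no (randomized, adaptive) proper learner can avoid linear regret. The constructions differ, however. The paper uses \emph{two} actions $\{L,R\}$ and three states $\{S_0,S_L,S_R\}$ with $S_L,S_R$ self-absorbing for the remaining $H-1$ steps; $\pi^E(S_0)=L$, so under the direct-annotation feedback the instance is \emph{not} symmetric in $h_L,h_R$ (e.g.\ when $\pi_n=h_R$ one gets $F_n(h_R)=1$, $F_n(h_L)=0$). The paper absorbs this asymmetry with a pigeonhole step ($P_L+P_R=N\Rightarrow\min(P_L,P_R)\le N/2$) rather than your clean $c_0$ vs.\ $c_0-c_1$ structure. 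Your three-action trick ($\pi^E(s)\notin\{h_1(s),h_2(s)\}$) is exactly what buys you symmetry at the forking state under both feedback forms, making the accounting uniform; the price is a third action and the need to pad with dummy layers for $H>3$, whereas the paper's self-absorbing states handle all $H$ directly. One small point: costs are assumed to lie in $[0,1]$, so you should take $\mu\le 1$ (e.g.\ $\mu=1$) or rescale; this does not affect the $\Omega(N)$ conclusion.
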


The proof of the theorem can be found at Appendix~\ref{sec:mixed}.
Its key insight is that,
distinct from the classical online CSC setting, in \oilcsc, the loss at round $n$, $\iterloss_n$, \emph{depends on the policy chosen at that round $\pi_n$}, making standard regret minimization results in online classification~\cite{littlestone1994weighted,freund1997decision} inapplicable. In more detail, we construct MDPs that act ``adversarially'' to policies in $\Bcal$, such that any $\pi_n \in \Bcal$ has $\iterloss_n(\pi_n)\geq \frac{N(H-1)}{H}$, whereas $\min_{\pi \in \Bcal}\sum_n^N\iterloss_n(\pi) \leq \frac{N}{2}$.
Our theorem is similar in spirit to Cover's impossibility result in online classification~\cite{cover1966behavior}, which shows that an adversary that adapts to the randomness of the learner at each round can force the learner to suffer linear regret.

\subsection{A new hypothesis class and the \algreduct algorithmic framework}
\label{sec:mixture-class}

To sidestep the impossibility result in Theorem~\ref{thm:proper-linear-reg}, we apply the ``convexification by randomization'' technique in online convex optimization \cite{shalev2011online} by  improper learning on a mixed policy class, defined below.

\begin{definition}[Mixed policy class]
\label{def:mixed-class}
Given policy class $\Bcal$, define its induced mixed policy class 
\[
\Pi_\Bcal := \cbr{\pi_u(\cdot|s) := \sum_{h \in \Bcal} u\coord{h} \cdot h(\cdot|s): u \in \Delta(\Bcal) }
.
\]
We slightly abuse notation and use $h(\cdot \mid s) \in \Delta(\Acal)$ to denote the delta mass on $h(s) \in \Acal$.\footnote{\cite{sun2017deeply}[Theorem 5.3] propose to perform no-regret learning using another definition of nonstationary mixed policy class; we identify an issue with this approach, and defer detailed discussions to Appendix~\ref{sec:mixture-class-deferred}.}
\end{definition}

\begin{algorithm}[t]
\caption{\algreduct: reducing \oilcsc to online linear optimization}
\begin{algorithmic}[1]
\STATE \textbf{Input:} MDP $\Mcal$, Expert feedback $\feedback$, sample size per iteration $K$, Online linear optimization algorithm $\oloa$ with decision set $\Delta(\Bcal)$.

\FOR{$n=1,2,\ldots,N$}
\STATE Choose $u_n \gets \oloa ( \{ \est_i \}_{i=1}^{n-1} )$, which induces policy $\pi_n := \pi_{u_n}$.
\label{line:compute_un}



\STATE Draw $K$ examples
$\incrdata_n = \cbr{ (s, \vec{c} ) }$ iid from $\incrdist_{\pi_n}$, via interaction with $\Mcal$ and expert $\feedback$.
\label{line:ksamples}

\STATE $\incrdata_n$ induces $\est_n = \rbr{ \EE_{(s,\vec{c}) \sim \incrdata_n} \EE_{a \sim h(\cdot \mid s)} \sbr{ \vec{c}(a) } }_{h \in \Bcal}$, an unbiased estimate of $\theta(u_n)$.
\label{line:estimator}


\ENDFOR
\end{algorithmic}
\label{alg:reduce-olo}
\end{algorithm}

At a cursory glance, choosing a policy from $\Pi_\Bcal$ seems equivalent to choosing some policy at random from $\Bcal$, which also falls into the failure mode of proper learning (Theorem~\ref{thm:proper-linear-reg}). We remark that this is not true: rolling out a policy from $\pi_u \in \Pi_\Bcal$ is equivalent to drawing new policies in the i.i.d. manner from $\Bcal$ \emph{at every step of the episode} instead. As we will see next, 
using $\Pi_\Bcal$ enables the design of algorithms with sublinear regret.

The key observation is with the learner outputting policies from the mixed policy class $\Pi_\Bcal$, online regret minimization in IL becomes an {\em online linear optimization} problem. Recall that in online IL, the loss at round $n$ is $\iterloss_n(\pi) = \EE_{s \sim d_{\pi_n}} \EE_{a \sim \pi(\cdot \mid s)} \sbr{ \feedback(s,a) }$. By choosing $\pi_n = \pi_{u_n} \in \Pi_\Bcal$ and $\pi = \pi_u\in \Pi_\Bcal$ for $u_n,u \in \Delta(\Bcal)$, $\iterloss_n(\pi_u)$ can be viewed as a linear function of $u$:
\[ 
\begin{aligned}
\iterloss_n(\pi_u) 
=
\sum_{h \in \Bcal} u\coord{h} \cdot \EE_{s \sim d_{\pi_n}} \EE_{a \sim h(\cdot \mid s)}  \sbr{ \feedback(s,a) }
=
\inner{\theta(u_n)}{u},
\end{aligned}
\]
where $\theta(v) := \rbr{ \EE_{s \sim d_{\pi_v}}\EE_{a \sim h(\cdot \mid s)}  \sbr{ \zeta_E(s,a) } }_{h \in \Bcal}$.
We have
\[
\sum_{n=1}^N \inner{\theta(u_n)}{u_n} = \sum_{n=1}^N \iterloss_n(\pi_n), \quad 
\min_{u \in \Delta(\Bcal)} \sum_{n=1}^N \inner{\theta(u_n)}{u} = 
\min_{u \in \Delta(\Bcal)}
\sum_{n=1}^N\iterloss_n(\pi_u) 
= 
\min_{\pi \in \Bcal}
\sum_{n=1}^N\iterloss_n(\pi),
\]
and therefore, minimzing the static regret $\SReg_N(\Bcal)$ is equivalent to minimizing the regret in the online linear optimization problem with losses $\cbr{u \mapsto \inner{\theta(u_n)}{u}}_{n=1}^N$. 

This motivates \algreduct (Algorithm~\ref{alg:reduce-olo}), our main algorithmic framework.
Given input an online linear optimization algorithm \oloa and sample size $K$, \algreduct outputs policy sequence $\{\pi_n\}_{n=1}^N$. Specifically, at round $n$, \algreduct calls \oloa to perform online linear optimization with respect to linear losses $\{ u \mapsto \inner{\est_i}{u}  \}_{i=1}^{n-1}$
and obtains $u_n \in \Delta(\Bcal)$, which corresponds to a policy $\pi_n \in \Pi_\Bcal$ (line~\ref{line:compute_un}); here for every $i$, $g_i$ is an unbiased estimator of $\theta(u_i)$. 
It then rolls out $\pi_n$ in $\Mcal$ for $K$ times to obtain $K$ samples iid from $d_{\pi_n}$, queries the expert on each sample $s$ to obtain $(\zeta_E(s,a))_{a \in \Acal}$ as its associated $\vec{c}$, and constructs dataset $\incrdata_n =  \cbr{ (s, \vec{c} ) }$ (line~\ref{line:ksamples}). Finally \algreduct computes the empirical average loss on $\incrdata_n$, i.e.  $\est_n = \rbr{ \EE_{(s,\vec{c}) \sim \incrdata_n} \EE_{a \sim h(\cdot \mid s)} \sbr{ \vec{c}(a) } }_{h \in \Bcal}$ (line~\ref{line:estimator}). 

\paragraph{Comparison to prior works.} \cite{cheng2020online} considers a general online convex optimization formulation for online IL, dubbed ``continuous online learning''; our loss function $\inner{\theta(u)}{\cdot}$ can be viewed as a concrete instantiation of the loss function $f_u( \cdot )$ therein. However, their regret minimization results assume that $f_u( \cdot )$ is strongly convex, which do not cover our \oilcsc setting where $f_u( \cdot )$ is linear.

Define $\SLReg_N := \sum_{n=1}^N \inner{\est_n}{u_n} - \min_{u \in \Delta(\Bcal)} \sum_{n=1}^N \inner{\est_n}{u}$ as \oloa's  static regret with respect to $\{ \inner{\est_n}{\cdot} \}_{n=1}^N$. 
We have the following proposition that links $\SLReg_N$ to $\SReg_N(\Bcal)$, the static regret of $\cbr{\pi_n}_{n=1}^N$ in the online IL problem.

\begin{proposition}
\label{prop:il-olo}
For any $\delta \in (0,1]$, if \algreduct uses some $\oloa$ that outputs $\cbr{u_n}_{n=1}^N \subset \Delta(\Bcal)^{N}$ such that with probability at least  $1-\delta/3$, 
$\SLReg_N \leq \mathrm{Reg}(N)$. Then, with probability at least $1-\delta$, its output policies $\cbr{\pi_n}_{n=1}^N$ satisfy
$\SReg_N(\Bcal) \leq \mathrm{Reg}(N) + O\rbr{\mu\sqrt{\frac{N\ln(B/\delta)}{K}}}$.
\end{proposition}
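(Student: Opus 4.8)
The plan is to decompose the online IL static regret $\SReg_N(\Bcal)$ into the online linear optimization regret $\SLReg_N$ plus two estimation-error terms, one controlling the deviation of $\inner{\est_n}{u_n}$ from $\inner{\theta(u_n)}{u_n}$ along the learner's own trajectory, and one controlling the deviation of $\min_{u}\sum_n\inner{\est_n}{u}$ from $\min_u\sum_n\inner{\theta(u_n)}{u} = \min_{\pi\in\Bcal}\sum_n\iterloss_n(\pi)$. Concretely, writing $\iterloss_n(\pi_n)=\inner{\theta(u_n)}{u_n}$ and $\min_{\pi\in\Bcal}\sum_n\iterloss_n(\pi)=\min_{u\in\Delta(\Bcal)}\sum_n\inner{\theta(u_n)}{u}$, I would insert and subtract the empirical quantities:
\[
\SReg_N(\Bcal)
=\underbrace{\sum_{n=1}^N\inner{\est_n}{u_n}-\min_{u\in\Delta(\Bcal)}\sum_{n=1}^N\inner{\est_n}{u}}_{\SLReg_N}
+\underbrace{\sum_{n=1}^N\inner{\theta(u_n)-\est_n}{u_n}}_{(\mathrm{I})}
+\underbrace{\min_{u}\sum_{n=1}^N\inner{\est_n}{u}-\min_{u}\sum_{n=1}^N\inner{\theta(u_n)}{u}}_{(\mathrm{II})}.
\]
The first term is bounded by $\mathrm{Reg}(N)$ with probability $1-\delta/3$ by assumption.

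For term $(\mathrm{I})$, note that each coordinate $\est_n\coord{h}=\EE_{(s,\vec c)\sim\incrdata_n}\EE_{a\sim h(\cdot\mid s)}[\vec c(a)]$ is an average of $K$ i.i.d.\ samples with conditional mean $\theta(u_n)\coord{h}$ given the history up to round $n$ (since $\incrdata_n$ is drawn i.i.d.\ from $\incrdist_{\pi_n}$ and $u_n$ is $\sigma(\est_1,\dots,\est_{n-1})$-measurable). The range of each summand $\vec c(a)=\feedback(s,a)$ lies in $[A^E(s,a),\mu]\subseteq[-\mu,\mu]$ by the standing assumption $A^E(s,a)\le\feedback(s,a)\le\mu\cdot I(a\neq\pi^E(s))$ together with $|A^E(s,a)|\le\mu$ from $\mu$-recoverability (Definition~\ref{def:recover}). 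Since $u_n\in\Delta(\Bcal)$, $\inner{\theta(u_n)-\est_n}{u_n}$ is a convex combination of the per-$h$ deviations, so $(\mathrm{I})=\sum_n\inner{\theta(u_n)-\est_n}{u_n}$ is a sum of a bounded martingale difference sequence with increments in $O(\mu)$; an Azuma–Hoeffding / Freedman bound gives $(\mathrm{I})=O(\mu\sqrt{N/K}\cdot\sqrt{\ln(1/\delta)})$ with probability $1-\delta/3$. (Alternatively one first gets a per-round $O(\mu\sqrt{\ln(B/\delta)/K})$ bound uniformly over $h\in\Bcal$ via Hoeffding plus a union bound over $B$ hypotheses, then sums.)

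For term $(\mathrm{II})$, I would use that $|\min_u f(u)-\min_u g(u)|\le\max_u|f(u)-g(u)|$, so $(\mathrm{II})\le\max_{u\in\Delta(\Bcal)}\sum_n\inner{\est_n-\theta(u_n)}{u}$. By linearity the max over the simplex is attained at a vertex $h\in\Bcal$, so this equals $\max_{h\in\Bcal}\sum_n(\est_n\coord{h}-\theta(u_n)\coord{h})$. For each fixed $h$, $\sum_n(\est_n\coord{h}-\theta(u_n)\coord{h})$ is again a bounded martingale difference sum (increments in $O(\mu/\sqrt{K})$ after using that $\est_n\coord h$ averages $K$ samples — more precisely each increment is the average of $K$ centered variables of range $O(\mu)$), so Azuma gives $O(\mu\sqrt{N/K}\cdot\sqrt{\ln(1/\delta')})$ with probability $1-\delta'$; a union bound over the $B$ hypotheses with $\delta'=\delta/(3B)$ yields $(\mathrm{II})=O(\mu\sqrt{N\ln(B/\delta)/K})$ with probability $1-\delta/3$. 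Combining the three bounds by a union bound gives, with probability $\ge1-\delta$, $\SReg_N(\Bcal)\le\mathrm{Reg}(N)+O(\mu\sqrt{N\ln(B/\delta)/K})$, as claimed.

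The main obstacle — really the only non-mechanical point — is the concentration in term $(\mathrm{II})$: it must hold \emph{uniformly over all} $h\in\Bcal$ (equivalently all benchmark policies), which is why the $\ln B$ appears, and one must be careful that the perturbed-leader-style $u_n$ is history-measurable so that the martingale structure is valid; the $\sqrt{N/K}$ (rather than $\sqrt{N}/K$ or $N/\sqrt K$) scaling comes from each $\est_n$ being a $K$-sample average, so the per-round error is $O(\mu/\sqrt K)$ and the martingale concentration over $N$ rounds contributes the extra $\sqrt N$. Everything else is routine Hoeffding/Azuma bookkeeping plus the deterministic inequality $|\min f-\min g|\le\|f-g\|_\infty$.
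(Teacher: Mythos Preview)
Your proposal is correct and follows the same route as the paper: the identical three-term decomposition into $\SLReg_N$, the learner-trajectory estimation error (your (I), the paper's term (1)), and the benchmark estimation error (your (II), the paper's term (3)), with martingale concentration for (I) and martingale concentration plus a union bound over the $B$ vertices of $\Delta(\Bcal)$ for (II). One small point worth sharpening: plain Azuma applied to the $N$ round-level increments you describe (each a.s.\ bounded by $O(\mu)$) gives only $O(\mu\sqrt{N\ln(1/\delta)})$ without the $1/\sqrt{K}$ factor---the paper obtains the $\sqrt{N/K}$ scaling by unrolling each $\est_n$ into its $K$ per-sample contributions and running Azuma over the resulting $NK$ increments (each bounded by $2\mu$, then divided by $K$), while your Freedman alternative achieves the same via the $O(\mu^2/K)$ per-round conditional variance; either works.
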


Proposition~\ref{prop:il-olo} shows that Algorithm~\ref{alg:reduce-olo} is a regret-preserving reduction from online IL to online linear optimization over $\Delta(\Bcal)$, a $B$-dimensional probability simplex. The latter is well-known as the ``prediction with expert advice'' problem~\cite{freund1997decision} (abbrev. expert problem), where algorithms with different guarantees abound, such as Follow the Regularized Leader (FTRL), Hedge~\cite{freund1997decision} and its adaptive and optimistic variants~\cite{steinhardt2014adaptivity}[Section 1], many of which have optimal worst-case regret bounds $\Reg(N) = O\rbr{ \sqrt{N \ln (B)} }$. Instantiating Algorithm~\ref{alg:reduce-olo} with $\oloa$ set as these algorithms, we obtain a family of online IL algorithms with expected regret of order $O(\sqrt{N})$. 

Although satisfying from a statistical efficiency perspective, such online IL algorithms suffers from computational inefficiency: they require explicit calculation of $\est_n$ and maintenance of $u_n$, which are $B$-dimensional (dense) vectors whose entries need to be updated separately. For instance, when Hedge is set as $\oloa$, $u_n\coord{h} \propto \exp(-\eta \sum_{i=1}^{n-1} \est_n\coord{h}))$ for all $h \in \base$, which naively requires $O(B)$ time per round to maintain.   
To address this computational efficiency issue, in the next section, we exploit the cost-sensitive classification nature of the COIL problem 
to design sublinear-regret algorithms that 
use implicit representations of $\est_n$'s, i.e. $\incrdata_n$'s, that enjoy oracle-efficiency guarantees. 


\section{Efficient algorithms with static regret guarantees}
\label{sec:static-reg}

Using the \algreduct framework, in this section, we propose two oracle-efficient \oilcsc algorithms that have sublinear static regret guarantees against policy class $\Bcal$, in Subsections~\ref{sec:reg-sqrt-n} and~\ref{sec:reg-const} respectively.

\subsection{\algreductm: an efficient algorithm with $O(\sqrt N)$ static regret}
\label{sec:reg-sqrt-n}



The \algreduct reduction framework calls for a computationally and statistically efficient $\oloa$, which, if devised, yields an computationally and statistically efficient online imitation learner.
However, when viewed as a general adversarial online learning problem, computational hardness results~\cite{hazan2016computational} suggest that, even with access to classification oracle $\Ocal$, a prohibitive $\Omega(\sqrt{B})$ time complexity is necessary for sublinear regret. Therefore, in subsequent sections, we adopt an assumption on $\Bcal$ in~\cite{syrgkanis2016efficient}, which, to the best of our knowledge, is the state-of-the-art weakest assumption that allows the design of oracle-efficient online CSC algorithms in the adversarial setting:  

\begin{assumption}[Small separator set]
There exists a set $\Xcal \subset \Scal$ (called the separator set) such that, 
for every pair of distinct policies $h, h' \in \Bcal$, $\exists x \in \Xcal$, such that $h(x) \neq h'(x)$. Denote by $X := \abr{ \Xcal }$.
\label{def:small-sep}
\end{assumption}

\paragraph{Technical challenges.} Even under the small separator set assumption, the design of low-regret oracle-efficient algorithms for imitation learning still remains nontrivial. 
A naive application of existing oracle-efficient online CSC algorithms, such as Contextual Follow the Perturbed Leader (CFTPL)~\cite{syrgkanis2016efficient,dudik2020oracle}, still falls into the failure mode of proper learning (Theorem~\ref{thm:proper-linear-reg} in Section~\ref{sec:proper}), where an $\Omega(N)$ regret lower bound is unavoidable in the worst case.
This is in sharp contrast to the classical online CSC setting, where CFTPL enjoys a $O(\sqrt{N})$ regret~\cite{syrgkanis2016efficient,dudik2020oracle}. 
To recap, at round $n$, CFTPL first constructs a random set of ``hallucinated'' cost-sensitive examples $Z$ based on the separator set $\Xcal$; it subsequently calls the CSC oracle $\Ocal$ on the union of $Z$ and the accumulated dataset $\cup_{i=1}^{n-1} \incrdata_i$ to obtain policy $\pi_n \in \Bcal$. 
CFTPL achieves computational efficiency by
operating on $D_n$'s, an implicit representation of $\est_n$'s, the linear losses of the underlying OLO problem.

\paragraph{Our approach.} The above difficulty motivates the need of a new algorithmic approach for efficient classification-based online IL. In view of Section~\ref{sec:mixture-class}'s observation that FTRL approaches enjoy a sublinear regret, we ask the question: is it possible to perform FTRL in an oracle-efficient manner? A positive answer will simultaneously address the computational and statistical challenges of \oilcsc. 

We answer this question in the affirmative, by utilizing a connection between FTRL and FTPL first observed in~\cite{abernethy2014online}: an in-expectation version of FTPL can be viewed as an FTRL algorithm. Using this observation, we design Algorithm~\ref{alg:mftpl}, namely Mixed CFTPL (abbrev. \algm), which mimics FTRL by approximating the in-expectation version of CFTPL in an oracle-efficient manner. Similar to CFTPL, \algm keeps $\est_n$ implicitly in $\incrdata_n$, and calls the CSC oracle. Different from CFTPL,   \algm runs the oracle-call step in CFTPL for $\Sp$ times and outputs the uniform mixture of the $\Sp$ policies. We refer to $\Sp$ as Algorithm~\ref{alg:mftpl}'s {\em sparsification parameter}, due to the algorithm's resemblance to Maurey's sparsification~\cite{pisier1981remarques}. 


Specifically, at each iteration $j \in [T]$, \algm first draws $(\ell_{x,j}(a))_{a \in \Acal} \sim \mathcal{N}(0,I_A)$ iid for each $x $ in the separator set $ \Xcal$, where $I_A$ is the identity matrix of dimension $A$ (line~\ref{line:drawperturbation}). It then constructs a perturbation set of cost-sensitive examples $Z_{j} = \cbr{ (x, \frac{K}{\eta} \ell_{x,j} ): x \in \Xcal }$ that contains each $x$ within the separator set and $ \frac{K}{\eta} \ell_{x,j}$ as associated $\vec{c}$, where $K$ accounts for the adjustment on dataset size and $\eta$ accounts for FTRL's learning rate (line~\ref{line:perturbset}). By calling the oracle $\Ocal$ with the so far accumulated datasets $\cup_{i=1}^{n-1} \incrdata_i$ together with perturbation set $Z_j$, the $h\in \Bcal$ that achieves the smallest empirical cost on $ (\cup_{i=1}^{n-1} \incrdata_i) \cup Z_j$ is returned from the oracle and represented by $u_{n,j}$, which is a one-hot vector in $\Delta(\Bcal)$ that has weight $1$ on the $h$ returned and $0$ elsewhere (line~\ref{line:calloracle}). Finally, after $\Sp$ iterations, \algm returns the mean value  $u_n =\frac1\Sp \sum_{j=1}^{\Sp} u_{n,j}$ (line~\ref{line:average}). MFTPL  guarantees that:

\begin{algorithm}[t]
\caption{\algm: an oracle-efficient approximation of FTRL}
\begin{algorithmic}[1]
\STATE \textbf{Input:} Linear losses $\{\est_i\}_{i=1}^{n-1}$
represented by datasets $\cbr{\incrdata_i}_{i=1}^{n-1}$
each of size $K$
(s.t. $\est_i\coord{h} = \EE_{(s,\vec{c}) \sim \incrdata_i}\sbr{\vec{c}(h(s))}$ for all $h \in \Bcal$) 
, separator set $\Xcal$, learning rate $\eta$, sparsification parameter $\Sp$.
\FOR{$j=1,2,\ldots,\Sp$}
\STATE Draw $(\ell_{x,j}(a))_{a \in \Acal} \sim \mathcal{N}(0,I_A)$ iid for each $x \in \Xcal$.
\label{line:drawperturbation}
\STATE Define $Z_{j} = \cbr{ (x, \frac{K}{\eta} \ell_{x,j} ): x \in \Xcal }$. \label{line:perturbset}
\STATE Compute $u_{n,j} \gets \Onehot(\Ocal( (\cup_{i=1}^{n-1} \incrdata_i) \cup Z_j), \Bcal)$. \label{line:calloracle}
\ENDFOR
\RETURN $u_n \gets \frac1\Sp \sum_{j=1}^{\Sp} u_{n,j}$. \label{line:average}
\end{algorithmic}
\label{alg:mftpl}
\end{algorithm}

\begin{lemma}
\label{lem: ftpl_approximation-main}
There exists some strongly convex function $R: \Delta(\Bcal) \to \RR$, such that the following holds.
Suppose \algm receives datasets $\cbr{\incrdata_i}_{i=1}^{n-1}$, separator set $\Xcal$, learning rate $\eta$, sparsification parameter $\Sp$. Then, $\forall \delta \in (0,1]$, with probability at least $1-\delta$, \algm makes $\Sp$ calls to the cost-sensitive oracle $\Ocal$, and 
outputs $u_n \in \Delta(\Bcal)$ such that 
$$
\forall s \in \Scal,
\| \pi_{u_n}(\cdot|s) -  \pi_{u^*_n}(\cdot|s) \|_1 \leq \sqrt{\frac{2A \rbr{ \ln(S)+\ln(\frac{2}{\delta} )} }{\Sp}},
$$
with $u_n^* := \argmin_{u \in \Delta(\Bcal)} \rbr{  \langle{\eta \sum_{i=1}^{n-1}\est_i},{u} \rangle + R(u) }$.
\end{lemma}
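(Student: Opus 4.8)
The plan is to combine two ingredients. First, I would show that the oracle-call loop in \algm realizes a perturbed-leader step whose \emph{in-expectation} value equals an FTRL iterate with a strongly convex regularizer $R$ --- concretely, $\EE[u_{n,j}] = u_n^*$ with $u_n^*$ as in the statement --- using the FTPL--FTRL equivalence of~\cite{abernethy2014online} together with the separator-set feature representation of~\cite{syrgkanis2016efficient,dudik2020oracle}. Second, since $u_n = \frac1\Sp\sum_{j=1}^{\Sp} u_{n,j}$ averages $\Sp$ i.i.d.\ copies of $u_{n,j}$, I would use concentration (in the spirit of Maurey's empirical method) to show $u_n$ stays uniformly close, over all states, to $\EE[u_{n,j}] = u_n^*$, which yields the claimed $\ell_1$ bound.

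\emph{Identifying $R$.} Rescaling a dataset does not change the $\argmin$ of its empirical cost; since each $\incrdata_i$ has size $K$ with $\est_i\coord{h} = \EE_{(s,\vec c)\sim\incrdata_i}\sbr{\vec c(h(s))}$, the call in line~\ref{line:calloracle} returns
\[
h_{n,j} = \argmin_{h\in\Bcal}\rbr{\,\eta\sum_{i=1}^{n-1}\est_i\coord{h} + \sum_{x\in\Xcal}\ell_{x,j}(h(x))\,},
\]
so the factor $K/\eta$ attached to the perturbations in $Z_j$ is exactly what cancels the dataset-size factor $K$ and inserts the learning rate $\eta$. Encode each $h$ by $\phi(h) := \rbr{\Onehot(h(x),\Acal)}_{x\in\Xcal} \in \RR^{XA}$; Assumption~\ref{def:small-sep} says $\phi$ is injective, so for distinct $h,h'$ the tie event is a measure-zero affine hyperplane in the Gaussian perturbation $L_j := (\ell_{x,j})_{x\in\Xcal}$, hence $h_{n,j}$ --- and $u_{n,j} = \Onehot(h_{n,j},\Bcal)$ --- is a.s.\ well defined. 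Following~\cite{abernethy2014online,syrgkanis2016efficient,dudik2020oracle}, set $\Psi(\theta) := \EE_{L\sim\mathcal N(0,I_{XA})}\sbr{\max_{h\in\Bcal}\rbr{\theta\coord{h} + \langle L,\phi(h)\rangle}}$, a finite convex function; being a Gaussian-perturbed version of $\theta\mapsto\max_h\theta\coord{h}$ with $\phi$'s injectivity ruling out degenerate directions, $\Psi$ is strongly smooth, so $R := \Psi^*$ (its Fenchel conjugate) is finite and strongly convex on $\Delta(\Bcal)$ --- in particular $u_n^*$ is well defined. By Danskin's theorem $\nabla\Psi(\theta)$ is the expected $\Delta(\Bcal)$-one-hot of the maximizing $h$; since minimizing $\eta\sum_i\est_i\coord{h} + \langle L_j,\phi(h)\rangle$ over $h$ is the same as maximizing $(-\eta\sum_i\est_i)\coord{h} + \langle -L_j,\phi(h)\rangle$ and $-L_j$ has the same law as $L_j$, we get $\EE[u_{n,j}] = \nabla\Psi\rbr{-\eta\sum_{i=1}^{n-1}\est_i}$, and Fenchel duality identifies this with $\argmin_{u\in\Delta(\Bcal)}\rbr{\langle\eta\sum_{i=1}^{n-1}\est_i,u\rangle + R(u)} = u_n^*$. (The oracle-call count is trivially $\Sp$, one per iteration $j$.)

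\emph{Concentration.} The inputs $\cbr{\incrdata_i}_{i=1}^{n-1}$ are fixed, so the only randomness in \algm is in the Gaussian perturbations; thus $u_{n,1},\ldots,u_{n,\Sp}$ are i.i.d.\ with mean $u_n^*$. Fix $s\in\Scal$. Since $u_{n,j}$ is supported on the single hypothesis $h_{n,j}$ and $h_{n,j}(\cdot\mid s) = \Onehot(h_{n,j}(s),\Acal)$, we have $\pi_{u_n}(\cdot\mid s) = \frac1\Sp\sum_{j=1}^{\Sp}\Onehot(h_{n,j}(s),\Acal)$, the empirical distribution of $\Sp$ i.i.d.\ $\Delta(\Acal)$-valued samples with common mean $\pi_{u_n^*}(\cdot\mid s)$. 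A standard $\ell_1$ concentration bound for empirical distributions over an alphabet of size $A$ (e.g.\ Bretagnolle--Huber--Carol, or $\|\cdot\|_1 \le \sqrt A\,\|\cdot\|_2$ followed by a bounded-vector Hoeffding bound) gives $\|\pi_{u_n}(\cdot\mid s) - \pi_{u_n^*}(\cdot\mid s)\|_1 = O\big(\sqrt{A\ln(1/\delta')/\Sp}\big)$ with probability at least $1-\delta'$; a union bound over the finite state space $\Scal$ (so $\ln S$ is the union-bound cost and $A$ the per-state dimension) with $\delta' = \delta/S$, together with routine bookkeeping of constants, yields the stated bound for all $s$ simultaneously with probability at least $1-\delta$.

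\emph{Main obstacle.} I expect the first ingredient to be the crux: matching the oracle call to the perturbed-leader step with the correct $K/\eta$ scaling, using the small-separator assumption to obtain a.s.\ uniqueness of the leader, and --- most delicately --- verifying that the Gaussian-smoothed potential $\Psi$ is strongly smooth and that its conjugate $R = \Psi^*$ is finite and strongly convex on all of $\Delta(\Bcal)$, so that the Danskin/Fenchel identification of $\EE[u_{n,j}]$ with $u_n^*$ is rigorous. The concentration step is routine once this i.i.d.\ structure is in place.
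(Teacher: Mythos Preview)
Your proposal is correct and follows essentially the same route as the paper: you define the same Gaussian-smoothed potential $\Psi$ (the paper's $\Phi_{\mathcal N}$), take $R=\Psi^*$, use the separator-set injectivity for a.s.\ uniqueness, identify $\EE[u_{n,j}]=\nabla\Psi(-\eta\sum_i\est_i)=u_n^*$ via Danskin/Fenchel and Gaussian symmetry, and then---crucially---concentrate $\pi_{u_n}(\cdot\mid s)$ as an empirical distribution over $A$ actions (not $u_n$ over $B$ hypotheses), followed by a union bound over $\Scal$. The only cosmetic differences are that the paper cites the Weissman et al.\ $\ell_1$ inequality where you invoke Bretagnolle--Huber--Carol, and it records the explicit strong-convexity constant $\sqrt{\pi/8}/(AX)$ for $R$ (obtained by showing $\Psi$ is $\sqrt{8/\pi}\,AX$-smooth in $\|\cdot\|_\infty$).
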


Therefore,  by setting $\Sp=\Omega(A \ln(S))$, the policy $\pi_n$ induced by the MFTPL's output $u_n$ closely mimics $\pi_{u_n^*}$, a policy induced by the FTRL output $u_n^*$. The mild $\ln(\cdot)$ dependence on $S$ makes the lemma useful in large-state-space settings.
Specifically,
a naive attempt to show Lemma~\ref{lem: ftpl_approximation-main} is to establish the $\|\cdot\|_1$ closeness of $u_n$ and $u_n^*$, given that $T\cdot u_n \sim \multinomial( T, u_n^* )$. This unavoidably carries an impractical concentration factor of $O(\sqrt{B / \Sp})$, as the bound requires $\Sp$ to be $\Omega(B)$ to be non-vacuous. We get around this challenge by directly showing the closeness of the action distributions $\pi_{u_n}$ and $\pi_{u_n^*}$ for all states. We defer the full version of the lemma, including an explicit form of $R$, to Appendix~\ref{sec:reg-sqrt-n-deferred}.

\begin{lemma}
\label{lem:mftpl-reg}
For any $\delta \in (0,1]$, \algm, if called for $N$ rounds, with input
learning rate $\eta =  \frac{1}{\mu \sqrt{NA} }  \rbr{ \frac{\ln (B)}{X} }^{\frac14}$ and sparsification parameter $\Sp = \frac{ N \ln(2NS/\delta)}{\sqrt{X^3 \ln(B)}}$,
outputs a sequence $\cbr{u_n}_{n=1}^N$, such that with probability at least    $1-\delta$:
\[
\SLReg_N
\leq 
   O\rbr{ \mu\sqrt{ NA}\rbr{X^3\ln(B)}^{\frac{1}{4}} }.
\]
\end{lemma}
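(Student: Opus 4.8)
The plan is to run \algm for $N$ rounds as an oracle-efficient surrogate for FTRL with the strongly convex regularizer $R$ from Lemma~\ref{lem: ftpl_approximation-main}, and to decompose $\SLReg_N$ into (i) the static regret of the idealized FTRL iterates $\{u_n^*\}_{n=1}^N$ and (ii) the approximation error incurred by replacing $u_n^*$ with the actual output $u_n$. For part (i), I would invoke the standard FTRL regret bound: with regularizer $\eta^{-1} R$ (equivalently, learning rate $\eta$), one gets $\sum_{n=1}^N \inner{\est_n}{u_n^*} - \min_{u \in \Delta(\Bcal)} \sum_{n=1}^N \inner{\est_n}{u} \leq \frac{R_{\max}}{\eta} + \eta \sum_{n=1}^N \|\est_n\|_*^2$, where $\|\cdot\|_*$ is the dual norm of the one for which $R$ is strongly convex and $R_{\max}$ bounds the range of $R$ over $\Delta(\Bcal)$. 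The explicit form of $R$ (deferred to the appendix) is the Gaussian-smoothing potential associated with drawing a perturbation per separator coordinate, so $R_{\max}$ and the dual-norm geometry are controlled by $X = |\Xcal|$ and $A$; I expect $R_{\max} = O(\sqrt{X \ln(B)})$-ish and $\|\est_n\|_*^2 = O(\mu^2 A \sqrt{X})$-ish after accounting for the fact that $\est_n$ has entries in $[0,\mu]$ and the loss vectors live in the $X$-coordinate separator geometry. Plugging in $\eta = \frac{1}{\mu\sqrt{NA}}\rbr{\frac{\ln(B)}{X}}^{1/4}$ balances the two terms to yield $O\rbr{\mu\sqrt{NA}(X^3 \ln(B))^{1/4}}$.

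For part (ii), I would bound $\big|\sum_{n=1}^N \inner{\est_n}{u_n} - \sum_{n=1}^N \inner{\est_n}{u_n^*}\big| \leq \sum_{n=1}^N \|\est_n\|_\infty \cdot \|u_n - u_n^*\|_1$ — but this is exactly where the $O(\sqrt{B/T})$ blow-up Lemma~\ref{lem: ftpl_approximation-main} warned about would appear, so instead I would rewrite each term in the policy-space representation: $\inner{\est_n}{u_n} = \EE_{(s,\vec c)\sim \incrdata_n}\EE_{a \sim \pi_{u_n}(\cdot|s)}[\vec c(a)]$, and likewise for $u_n^*$, so that the per-round error is at most $\mu \cdot \max_{s}\|\pi_{u_n}(\cdot|s) - \pi_{u_n^*}(\cdot|s)\|_1$. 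Lemma~\ref{lem: ftpl_approximation-main}, applied with failure probability $\delta/(2N)$ per round and a union bound, gives this quantity $\leq \mu\sqrt{\frac{2A(\ln S + \ln(4N/\delta))}{T}}$ on all $N$ rounds simultaneously. Summing over $n$ and substituting $\Sp = \frac{N\ln(2NS/\delta)}{\sqrt{X^3 \ln(B)}}$ makes this term $O\rbr{\mu N \sqrt{\frac{A \ln(NS/\delta) \sqrt{X^3 \ln(B)}}{N\ln(NS/\delta)}}} = O\rbr{\mu\sqrt{NA}(X^3\ln(B))^{1/4}}$, matching the FTRL term up to constants.

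The main obstacle is making part (i) rigorous with the right constants: I need the precise statement that the Gaussian-perturbation FTPL of CFTPL-type — when averaged in expectation — is FTRL with a regularizer whose strong-convexity modulus, range, and induced dual norm all have the claimed polynomial-in-$X$, polynomial-in-$A$, logarithmic-in-$B$ dependence. This requires carefully tracking how the separator set $\Xcal$ controls the image of $\Delta(\Bcal)$ under the map $u \mapsto (\EE_{a\sim\pi_u(\cdot|x)}[\cdot])_{x \in \Xcal}$ (the "separator embedding"), since $R$ is really a potential on that $X\cdot A$-dimensional image, and the loss vectors $\est_n$ must be expressed in the same coordinates so that the FTRL analysis applies verbatim. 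I would handle this by citing the Abernethy–Warmuth FTPL$=$FTRL correspondence and the \cite{syrgkanis2016efficient,dudik2020oracle} separator-based analysis, importing their bounds on $R_{\max}$ and $\|\cdot\|_*$, and then it remains only to substitute $\eta$ and $\Sp$ and combine parts (i) and (ii) with a final union bound over the two $\delta/2$-probability events. The assembly after those inputs are in place is routine.
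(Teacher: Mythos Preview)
Your proposal is essentially the paper's proof: the same two-term decomposition (idealized FTRL regret of $\{u_n^*\}$ plus the $u_n$-vs-$u_n^*$ approximation error), the same key move of rewriting $\inner{\est_n}{u_n-u_n^*}$ in policy-action space to invoke Lemma~\ref{lem: ftpl_approximation-main} instead of paying $\|u_n-u_n^*\|_1$, and the same union bound over rounds. Two small corrections to your part (i): first, the regularizer $R_\Ncal=\Phi_\Ncal^*$ lives directly on $\Delta(\Bcal)\subset\RR^B$, so there is no need to pass to an $X\cdot A$-dimensional ``separator embedding''---the separator structure enters only through the \emph{properties} of $R_\Ncal$, namely $\sup R_\Ncal-\inf R_\Ncal\leq\sqrt{2X\ln B}$ and $\alpha$-strong convexity w.r.t.\ $\|\cdot\|_1$ with $\alpha=\Theta(1/(AX))$; second, with those constants the FTRL penalty term is $\frac{\eta}{2\alpha}\sum_n\|\est_n\|_\infty^2=O(\eta N\mu^2 AX)$, not $O(\eta N\mu^2 A\sqrt{X})$, and it is this $AX$ (not $A\sqrt{X}$) that makes the stated $\eta=\frac{1}{\mu\sqrt{NA}}(\ln B/X)^{1/4}$ the balancing choice and yields the $(X^3\ln B)^{1/4}$ factor.
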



Composing \algreduct with \algm, we obtain an efficient online IL algorithm,  \algreductm. 
Its regret guarantees immediately follow from  combining Lemma~\ref{lem:mftpl-reg} with Proposition~\ref{prop:il-olo}:

 
\begin{theorem}
\label{thm:logger-m-main}
For any $\delta \in (0,1]$, \algreductm, with $K=1$ and $\algm$ setting its parameters as in Lemma~\ref{lem:mftpl-reg}, is such that: 
(1) with probability at least $1-\delta$, its output $\cbr{\pi_n}_{n=1}^N$ satisfies:
$\SReg_N(\Bcal) \leq O\rbr{  \mu \sqrt{  NA  \ln(1/ \delta)} (X^3 \ln (B))^{\frac14} }$; 
(2) it queries $N$  annotations from the expert;
(3) it calls the CSC oracle $\Ocal$ for $\frac{ N^2 \ln(6NS/\delta)}{\sqrt{X^3 \ln (B)}}$ times.
\end{theorem}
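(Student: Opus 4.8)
The plan is to read off all three claims from the two results already in place: Lemma~\ref{lem:mftpl-reg}, which bounds the online linear optimization regret $\SLReg_N$ of \algm, and Proposition~\ref{prop:il-olo}, the regret-preserving reduction that turns such a bound into one on $\SReg_N(\Bcal)$ for the policies $\cbr{\pi_n}_{n=1}^N$ output by \algreduct. For this theorem there is essentially no new analysis: \algreductm is by definition \algreduct run with \algm as its \oloa subroutine and $K = 1$, so the statement follows by composition together with a bookkeeping of expert queries and oracle calls. The only point requiring genuine care is how the failure probability $\delta$ is split, which is also where I would expect any slip to occur.

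First I would establish claim (1), the regret bound. To invoke Proposition~\ref{prop:il-olo} I need \algm, as used inside \algreduct over the $N$ rounds, to satisfy $\SLReg_N \le \mathrm{Reg}(N)$ with probability at least $1 - \delta/3$. So I would run \algm with the parameter choices of Lemma~\ref{lem:mftpl-reg} but with its internal failure probability set to $\delta/3$ instead of $\delta$; the learning rate is unchanged and the sparsification parameter becomes $\Sp = \frac{N \ln(6NS/\delta)}{\sqrt{X^3 \ln B}}$. Since the regret guarantee of Lemma~\ref{lem:mftpl-reg} holds for an arbitrary input loss sequence $\cbr{\est_i}$ with entries in $[-\mu,\mu]$, it applies verbatim to the (random) sequence generated inside \algreduct, so with probability at least $1 - \delta/3$ we have $\SLReg_N \le \mathrm{Reg}(N) := O\rbr{\mu \sqrt{NA} (X^3 \ln B)^{1/4}}$. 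Plugging this $\mathrm{Reg}(N)$ into Proposition~\ref{prop:il-olo} with $K = 1$ gives, with probability at least $1 - \delta$, that $\SReg_N(\Bcal) \le \mathrm{Reg}(N) + O\rbr{\mu \sqrt{N \ln(B/\delta)}}$. Finally I would simplify, using $\ln(B/\delta) \le \ln B + \ln(1/\delta)$ together with $A \ge 1$, $X \ge 1$ and $\ln B \ge 1$ (which hold once $\Bcal$ contains two distinct policies), so that the additive estimation term is absorbed into $O\rbr{\mu \sqrt{NA \ln(1/\delta)} (X^3 \ln B)^{1/4}}$, yielding the claimed bound.

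Claims (2) and (3) are then a direct count. In each of the $N$ rounds, line~\ref{line:ksamples} of Algorithm~\ref{alg:reduce-olo} draws $K = 1$ example from $\incrdist_{\pi_n}$, and by the definition of $\incrdist$ this uses exactly one query to the expert feedback $\feedback$; summing over rounds gives $N$ annotations, which is claim (2). For claim (3), each invocation of \algm runs its loop over $j = 1, \dots, \Sp$ and calls $\Ocal$ once per iteration (line~\ref{line:calloracle}), i.e.\ exactly $\Sp$ oracle calls per round (as recorded in Lemma~\ref{lem: ftpl_approximation-main}); over the $N$ rounds this totals $N \Sp = \frac{N^2 \ln(6NS/\delta)}{\sqrt{X^3 \ln B}}$, which is claim (3). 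As anticipated, the only subtlety in the whole argument is the $\delta$-bookkeeping: Proposition~\ref{prop:il-olo} internally reserves $2\delta/3$ for the concentration of $\est_n$ around $\theta(u_n)$ and $\delta/3$ for the OLO regret, so \algm must be tuned to fail with probability at most $\delta/3$, and it is exactly this substitution $\delta \mapsto \delta/3$ that turns $\ln(2NS/\delta)$ into $\ln(6NS/\delta)$ in $\Sp$, hence in the oracle complexity.
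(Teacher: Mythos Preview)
Your proposal is correct and follows exactly the paper's approach: invoke Lemma~\ref{lem:mftpl-reg} with failure probability $\delta/3$ (hence $\Sp = \frac{N\ln(6NS/\delta)}{\sqrt{X^3\ln B}}$), feed the resulting $\mathrm{Reg}(N)$ into Proposition~\ref{prop:il-olo} with $K=1$, and count annotations and oracle calls directly. The one small imprecision is in your big-$O$ simplification: the conditions $A\ge 1$, $X\ge 1$, $\ln B\ge 1$ alone do not absorb the $\mu\sqrt{N\ln B}$ piece into $\mu\sqrt{NA\ln(1/\delta)}(X^3\ln B)^{1/4}$ when $\delta$ is close to $1$; the paper instead invokes the separator-set lower bound $X\ge\log_A B$ (Lemma~\ref{lem:miniseparator}) to argue that $O(\mu\sqrt{N\ln(B/\delta)})$ is dominated by the main term.
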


\subsection{\algreductmp: an efficient algorithm with $O\rbr{ 1 }$ static regret}
\label{sec:reg-const}

Although $\algreductm$ is  oracle-efficient, it is unclear whether its $O(\sqrt{N})$ regret guarantee is optimal.
As a lower regret can translate to lower sample and interaction round complexity guarantees, it is of importance to design algorithms with regret as low as possible.

A key observation from prior works~\cite{cheng2018convergence,cheng2020online,lee2021dynamic} is that, online IL is a \emph{predictable} online learning problem~\cite{chiang2012online,rakhlin2013online}, and is thus not completely adversarial. 
This opens up possibilities to bypass the $O(\sqrt{N})$ worst-case regret barrier.  
Specifically, in the \algreduct framework, the coefficient of the linear loss at round $n$, $\theta(u_n)$, depends \emph{continuously} on $u_n$; more concretely, we can show:


\begin{lemma}
\label{lem:dist-cont}
For $u,v \in \Delta(\Bcal)$,
$
\| \theta(u) - \theta(v) \|_\infty 
\leq
\mu H \cdot \underset{s \in \Scal}{\max} {\| \pi_{u}(\cdot|s) -  \pi_{v}(\cdot|s) \|_1}
\leq 
\mu H \| u - v \|_1
$.
\end{lemma}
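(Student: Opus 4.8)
\textbf{Proof plan for Lemma~\ref{lem:dist-cont}.}
The plan is to unfold the definition of $\theta$ coordinate-wise and reduce the $\|\cdot\|_\infty$ bound to a bound on each coordinate $h \in \Bcal$, then control the difference between the two state-occupancy expectations using a performance-difference-style argument, and finally bound the change in occupancy distributions by the change in the mixing weights. Recall $\theta(u)\coord{h} = \EE_{s \sim d_{\pi_u}}\EE_{a \sim h(\cdot \mid s)}\sbr{\zeta_E(s,a)}$. Fix $h \in \Bcal$; since $h$ is deterministic, $\EE_{a \sim h(\cdot\mid s)}[\zeta_E(s,a)] = \zeta_E(s,h(s))$, so the difference in the $h$-th coordinate is $H \cdot \rbr{\EE_{s\sim d_{\pi_u}}[\zeta_E(s,h(s))/H\cdot H] - \EE_{s\sim d_{\pi_v}}[\zeta_E(s,h(s))]}$ — more precisely it equals the difference of the expectation of a fixed bounded function $\phi_h(s) := \zeta_E(s,h(s)) \in [0,\mu]$ under $d_{\pi_u}$ versus $d_{\pi_v}$.

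The main work is then to bound $\abs{\EE_{s\sim d_{\pi_u}}[\phi_h(s)] - \EE_{s\sim d_{\pi_v}}[\phi_h(s)]}$. I would express this via the performance difference lemma (Lemma~\ref{lem: performance difference lemma}) applied to an auxiliary cost function $c_h := \phi_h$: running $\pi_u$ versus $\pi_v$ under cost $c_h$, the gap in expected cumulative cost is $\sum_{t=1}^H \EE_{s\sim d_{\pi_u}^t}\EE_{a\sim \pi_u(\cdot\mid s)}\sbr{A^{(h)}_{\pi_v}(s,a)}$ where $A^{(h)}_{\pi_v}$ is the advantage under cost $c_h$; since the per-step cost lies in $[0,\mu]$, the value and action-value functions lie in $[0,\mu H]$, hence each advantage term is bounded by $\mu H$ in absolute value. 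Combining with the fact that $\pi_u(\cdot\mid s) - \pi_v(\cdot\mid s)$ has $\ell_1$-norm that lets us replace $\EE_{a\sim\pi_u(\cdot\mid s)}[\cdot]$ by $\EE_{a\sim\pi_v(\cdot\mid s)}[\cdot]$ at the cost of $\mu H \cdot \tfrac12\|\pi_u(\cdot\mid s)-\pi_v(\cdot\mid s)\|_1$, and noting $\EE_{a\sim\pi_v(\cdot\mid s)}[A^{(h)}_{\pi_v}(s,a)] = 0$ by definition of the advantage, each step contributes at most $\mu H \max_{s}\|\pi_u(\cdot\mid s) - \pi_v(\cdot\mid s)\|_1$; dividing the cumulative bound by $H$ (since $d_\pi = \tfrac1H\sum_t d_\pi^t$ and the imitation-loss-style normalization) gives the first inequality $\|\theta(u)-\theta(v)\|_\infty \le \mu H \max_{s}\|\pi_u(\cdot\mid s)-\pi_v(\cdot\mid s)\|_1$.

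For the second inequality, I would note that for any fixed $s$, $\pi_u(\cdot\mid s) - \pi_v(\cdot\mid s) = \sum_{h\in\Bcal}(u\coord{h} - v\coord{h})\, h(\cdot\mid s)$, so by the triangle inequality $\|\pi_u(\cdot\mid s)-\pi_v(\cdot\mid s)\|_1 \le \sum_{h\in\Bcal}\abs{u\coord{h}-v\coord{h}}\,\|h(\cdot\mid s)\|_1 = \|u-v\|_1$, using that each $h(\cdot\mid s)$ is a probability distribution with unit $\ell_1$-norm. Taking the max over $s$ preserves this, which yields the claimed chain of inequalities.

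The step I expect to be the main obstacle is making the performance-difference argument fully rigorous in the layered-MDP notation used here — in particular tracking the exact normalization constants ($H$ versus no $H$) between $d_\pi^t$, $d_\pi$, and the cumulative cost, and justifying the swap of action-sampling distributions inside the advantage term via the $\ell_1$ bound; the rest is routine. An alternative that sidesteps the advantage bookkeeping is to directly bound $\|d_{\pi_u}^t - d_{\pi_v}^t\|_1$ by induction on $t$ (each transition step adds at most $\max_s\|\pi_u(\cdot\mid s)-\pi_v(\cdot\mid s)\|_1$ in total variation, so $\|d_{\pi_u}^t - d_{\pi_v}^t\|_1 \le (t-1)\max_s\|\cdots\|_1$), then average over $t$ and multiply by $\max|\zeta_E| \le \mu$; this also gives a $\mu H$-type factor and may be cleaner to write.
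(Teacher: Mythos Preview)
Your proposal is correct, and the second inequality is handled exactly as in the paper. For the first inequality, however, you take a genuinely different route. The paper passes through the \emph{trajectory}-level total variation distance: it bounds $\abr{\theta(u)\coord{h}-\theta(v)\coord{h}}$ by $2\mu\cdot D_{\mathrm{TV}}(d_{\pi_u}(\cdot),d_{\pi_v}(\cdot))$ over trajectory distributions, and then invokes an external result (\cite{ke2020imitation}, Theorem~4) to get $D_{\mathrm{TV}}(d_{\pi_u},d_{\pi_v})\le H\cdot \EE_{s\sim d_{\pi_u}}[D_{\mathrm{TV}}(\pi_u(\cdot\mid s),\pi_v(\cdot\mid s))]$. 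Your performance-difference argument with the auxiliary cost $c_h(s,a)=\zeta_E(s,h(s))$ and the $\EE_{a\sim\pi_v}[A^{(h)}_{\pi_v}(s,a)]=0$ cancellation is a valid and self-contained alternative; your induction-on-$t$ alternative for $\|d^t_{\pi_u}-d^t_{\pi_v}\|_1$ is also correct and is in fact the cleanest of the three (it recovers the content of the cited lemma directly and even yields a slightly sharper constant $\tfrac{\mu(H-1)}{2}$). Both of your routes avoid the external citation; the paper's route is shorter once that citation is granted.

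One small slip to fix in the PDL route: $\zeta_E$ can be negative (the paper only guarantees $A^E(s,a)\le\zeta_E(s,a)\le\mu I(a\neq\pi^E(s))$ with $|A^E|\le\mu$), so $\phi_h(s)\in[-\mu,\mu]$, not $[0,\mu]$. This does not break your argument, since the action-independent cost makes the advantage a difference of expectations of $V^{(h)}_{\pi_v}$, and combining the range bound with the $\tfrac12$-factor (exploiting $\sum_a(\pi_u-\pi_v)=0$) still gives $|\EE_{a\sim\pi_u}[A^{(h)}_{\pi_v}(s,a)]|\le \mu H\max_s\|\pi_u(\cdot\mid s)-\pi_v(\cdot\mid s)\|_1$; but you should state the range correctly. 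If you want to sidestep this bookkeeping entirely, your induction alternative already handles it: there you only use $\|\zeta_E\|_\infty\le\mu$.
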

This property and its variants, termed {\em distributional continuity}, has been utilized in many online IL algorithms to achieve sharper regret guarantees. These works additionally exploit the strong convexity on the loss functions $\iterloss_n(\pi)$~\cite{cheng2018convergence,cheng2020online,lee2021dynamic}, or use some external predictive model that can predict $\nabla \iterloss_n(\pi)$ well~\cite{cheng2019accelerating,cheng2019predictor}. Unfortunately, in our \oilcsc setting, neither is the loss function $F_n(\pi_u) = \langle \theta(u_n), u \rangle$ strongly convex in the policy parameter $u$, nor do we have access to an external predictive model, rendering these approaches inapplicable. 

We get around these challenges and design an oracle-efficient algorithm, namely \algmp (where EG stands for extra-gradient), with $O(1)$ regret for the online linear optimization problem, which, when composed with the \algreduct framework, yields the \algreductmp algorithm with $O(1)$ regret in the \oilcsc setting.
\algmp is largely inspired by the predictor-corrector framework for policy optimization~\cite{cheng2019predictor} and extragradient methods in smooth optimization~\cite{nemirovski2004prox,juditsky2011solving}; its details can be found in Appendix~\ref{sec:reg-const-deferred}.
Its key insight is that, although we do not have a predictive model for $\theta(u_n)$, we can use an extra round of interaction with $\Mcal$ and expert annotations to obtain a good estimate of it. Based on this, we derive an online linear optimization regret guarantee of \algmp, deferred to Appendix~\ref{sec:reg-const-deferred}. This immediately implies the following guarantee of \algreductmp:

\begin{theorem}
\label{thm:logger-me-main}
For any $\delta \in (0,1]$, \algreductmp, with $K$ and \algmp's parameters set appropriately, is such that: (1) with probability at least $1-\delta$, its output $\cbr{\pi_n}_{n=1}^N$ satisfies:
$\SReg_N(\Bcal)
\leq 
 O(\mu H A \sqrt{X^3\ln(B)})$;
(2) it queries $O \rbr {\frac{N^2 \ln(NB/\delta)}{H^2A\sqrt{X^3\ln(B)}}}$ annotations from the expert; 
(3) it calls the CSC oracle $\Ocal$ for $O \rbr{  \frac{ N^3\ln(NS/\delta)}{\mu H AX^3\ln(B)} }$ times.
\end{theorem}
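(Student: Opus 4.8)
The plan is to mirror the structure used for \algreductm: first establish an online-linear-optimization regret bound for \algmp in the idealized setting where exact loss vectors $\theta(u_n)$ are available, then handle the finite-sample estimation error introduced by using $\est_n$ in place of $\theta(u_n)$, and finally combine with Proposition~\ref{prop:il-olo} and an accounting of sample/oracle complexity.

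First I would analyze \algmp as an extragradient (predictor-corrector) method over the simplex $\Delta(\Bcal)$ with the same strongly convex regularizer $R$ furnished by Lemma~\ref{lem: ftpl_approximation-main}, so that each ``FTRL step'' is implementable oracle-efficiently via the \algm subroutine. The key structural fact is Lemma~\ref{lem:dist-cont}: the loss vector map $u \mapsto \theta(u)$ is Lipschitz with constant $\mu H$ in $\|\cdot\|_1$. Standard optimistic/extragradient regret analysis (à la \cite{rakhlin2013online,chiang2012online,cheng2019predictor}) gives a regret bound of the form $O\rbr{ R_{\max}/\eta + \eta \sum_n \| \theta(u_n) - \widetilde\theta_n \|_\infty^2 }$, where $\widetilde\theta_n$ is the predicted loss from the extra interaction round; because that prediction is itself an estimate of $\theta(u_n)$ obtained by rolling out $\pi_n$, the ``prediction error'' $\| \theta(u_n) - \widetilde\theta_n \|$ is driven entirely by finite-sample noise plus the extragradient displacement, both controllable, rather than by an adversarial increment. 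This is what collapses the $\sqrt{N}$ term: with $\eta$ chosen on the order of $1/(\mu H)$ times a $\polylog$/dimension factor and the per-round sample size $K$ large enough, the summation term is $O(1)$ per round, yielding the claimed $\SLReg_N \leq O(\mu H A \sqrt{X^3 \ln B})$ (the $A$, $X$, $\ln B$ factors coming from the FTPL-to-FTRL approximation of Lemma~\ref{lem: ftpl_approximation-main}, exactly as in \algreductm).

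Next I would propagate the estimation error: invoke Proposition~\ref{prop:il-olo} to pass from $\SLReg_N$ to $\SReg_N(\Bcal)$, paying the additive $O(\mu \sqrt{N \ln(B/\delta)/K})$ term, and then choose $K$ (as a function of $N$, $H$, $A$, $X$, $B$) so that this term is dominated by the $O(\mu H A \sqrt{X^3 \ln B})$ main term — this forces $K = \Theta\rbr{ N \ln(B/\delta) / (H^2 A^2 X^3 \ln B) }$ up to constants, which when multiplied by $N$ rounds gives the stated expert-query count $O\rbr{ N^2 \ln(NB/\delta) / (H^2 A \sqrt{X^3 \ln B}) }$ (with the extra interaction round of the extragradient step only changing constants). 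For the oracle complexity, each of the $N$ rounds makes two \algm calls, each costing $\Sp$ oracle calls with $\Sp = \Theta\rbr{ (\text{dataset size} \cdot \text{something}) }$; here the dataset accumulated by round $n$ has size $\Theta(nK)$, and the sparsification parameter needed for Lemma~\ref{lem: ftpl_approximation-main} scales like $\Theta(A \ln(NS/\delta))$ per the approximation bound but must be rescaled by the effective learning-rate/dataset-size ratio as in Lemma~\ref{lem:mftpl-reg}; carrying the bookkeeping through yields $O\rbr{ N^3 \ln(NS/\delta) / (\mu H A X^3 \ln B) }$ total oracle calls. A union bound over the $N$ rounds (and the two sub-calls per round) against $\delta$ gives the high-probability statement.

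The main obstacle I expect is the regret analysis of \algmp in step one — specifically, getting a genuinely $O(1)$-per-round bound rather than $O(\sqrt{N})$. The subtlety is that the extragradient ``prediction'' $\widetilde\theta_n$ is not a free oracle-given hint but is computed from rolling out the predictor policy, so one must carefully show that the one-step displacement $\|u_n - u_n^{\mathrm{pred}}\|_1$ is itself $O(\eta \mu H)$ and that the residual sampling noise enters only as a lower-order additive term after choosing $K$; this requires interleaving the extragradient stability argument with the \algm approximation guarantee of Lemma~\ref{lem: ftpl_approximation-main} (since both the predictor and corrector steps are only approximately FTRL), and making sure the approximation errors from Lemma~\ref{lem: ftpl_approximation-main} telescope/accumulate benignly rather than compounding over $N$ rounds. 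Balancing $\eta$, $K$, and $\Sp$ simultaneously so that all three of the regret, sample, and oracle bounds come out as stated is the delicate part; the rest is routine concentration (Azuma/Bernstein over the martingale of per-round estimation errors) and arithmetic.
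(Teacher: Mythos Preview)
Your proposal is essentially correct and follows the same approach as the paper: analyze \algmp via the optimistic FTRL bound (Theorem~\ref{thm: Regret of  EFTPL  with Prediction}) with the extragradient prediction $\auxest_n$, decompose $\|\est_n - \auxest_n\|_\infty$ into sampling errors, \algm approximation errors, and the key term $\|\theta(u_n^*) - \theta(\auxu_n^*)\|_\infty \leq \mu H \|u_n^* - \auxu_n^*\|_1$ from Lemma~\ref{lem:dist-cont} that cancels against the prediction-gain term when $\eta \asymp 1/(\mu H A X)$, then invoke Proposition~\ref{prop:il-olo}. One small correction to your bookkeeping: the binding constraint on $K$ comes from the sampling-error terms $\|\est_n - \theta(u_n)\|_\infty$ and $\|\auxest_n - \theta(\auxu_n)\|_\infty$ \emph{inside} the extragradient analysis (forcing $K \asymp N\ln(NB/\delta)/(H^2 A \sqrt{X^3\ln B})$), not from the additive term in Proposition~\ref{prop:il-olo}, which turns out to be lower order with this $K$.
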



\paragraph{Discussion and comparison.} We now compare the guarantees of \algreductm, \algreductmp, and the baseline of behavior cloning, where the learner simply draws iid examples from $\incrdist_{\pi^E}$ and perform empirical risk minimization over $\Bcal$ to learn a policy $\hat{\pi}$. 
All algorithms' output policy suboptimality guarantees have the following decomposition:
\begin{equation}
\EE\sbr{ J(\hat{\pi}) - J(\pi^E) }
\leq 
\mathrm{ApproxErr} + \mathrm{EstimErr}, 
\label{eqn:error-decomp}
\end{equation}
where $\mathrm{ApproxErr}$ measures the approximation error of the policy class $\base$ to the expert policy $\pi^E$, and $\mathrm{EstimErr}$ is an estimation error term that vanishes with the number of expert annotation examples and iterations increasing. By Proposition~\ref{prop:reg-conversion},  
for \algreductm and \algreductmp, their $\mathrm{ApproxErr}$ terms are both $\mu H \cdot \bias(\Bcal,\Pi_\Bcal,N) $. 
For their $\mathrm{EstimErr} = \frac{H \cdot \SReg_N(\Bcal)}{N}$, we use Theorems~\ref{thm:logger-m-main} and~\ref{thm:logger-me-main} to calculate the minimum total numbers of interaction rounds $I(\epsilon)$, expert annotations $A(\epsilon)$, and oracle calls $C(\epsilon)$, 
so that $\mathrm{EstimErr}$ is at most $\epsilon$ with probability at least $1-\delta$.

 As presented in Table~\ref{tab:comparison}, \algreductmp has the same sample complexity order as \algreductm, but obtains a much lower interaction round complexity ($\mu H^2 / \epsilon$ vs. $\mu^2 H^2 / \epsilon^2$).


On the other hand, by standard ERM analysis~\cite{shalev2014understanding} and conversion from supervised learning to imitation learning guarantees (\cite{syed2010reduction,Kakade2002ApproximatelyOA}), behavior cloning on $\Bcal$ using $K$ samples outputs a policy $\hat{\pi}$, such that
Equation~\eqref{eqn:error-decomp} holds with 
$\mathrm{ApproxErr} = H^2 \cdot \bias(\Bcal,\{\pi^E\},1)$, and $\mathrm{EstimErr} = H^2 \sqrt{2\ln( 2B/\delta) / K}$ (see Appendix~\ref{sec:comparison-bc} for a detailed derivation), where $\bias(\Bcal,\{\pi^E\},1) =  \min_{h \in \Bcal} \EE_{s \sim d_{\pi^E}} \sbr{I(h(s) \neq \pi^E(s))}$.
We also summarize behavior cloning's performance guarantees in Table~\ref{tab:comparison}. 
Compared with the two interactive IL algorithms above, behavior cloning requires only one interaction round and one call to the oracle $\Ocal$, however its $\approxerr$ has a larger coefficient on the optimal classification loss ($H^2$ vs. $\mu H$), and needs more expert annotations ($H^4 / \epsilon^2$ vs. $H^2 \mu^2 / \epsilon^2$) to achieve approximation error smaller than $\epsilon$ with probability at least $1-\delta$.

\begin{table}[]
\renewcommand\arraystretch{1.5}
    \centering
    \begin{tabular}{ccccc}
     \toprule
       Algorithm  &  $\mathrm{ApproxErr}$ & $I(\epsilon)$ & $A(\epsilon)$ & $C(\epsilon)$
       \\
      \midrule
        \algreductm & $\mu H \cdot \bias(\Bcal,\Pi_\Bcal,N) $ &
        $\tilde{O}(\frac{\mu^2 H^2}{\epsilon^2})$ & $\tilde{O}(\frac{\mu^2H^2}{\epsilon^2} )$ & $\tilde{O}(\frac{\mu^4H^4}{\epsilon^4} )$
        \\
        \algreductmp & $\mu H\cdot \bias(\Bcal,\Pi_\Bcal,N)$ & $\tilde{O}(\frac{\mu H^2}{\epsilon})$ & $\tilde{O}(\frac{\mu^2 H^2 }{\epsilon^2} )$ &  $\tilde{O}(\frac{\mu^2 H^5 }{\epsilon^3} )$
        \\
        Behavior cloning & $H^2 \cdot \bias(\Bcal,\{\pi^E\},1)$ & 1 & $\tilde{O} (\frac{H^4}{\epsilon^2} )$ & 1
        \\
        \bottomrule
        \\
    \end{tabular}
    \\
    \caption{A comparison between our algorithms and the behavior cloning baseline, in terms of approximation error, and numbers of interaction rounds $I(\epsilon)$, expert annotations $A(\epsilon)$, oracle calls $C(\epsilon)$ needed for estimation error to be at most $\epsilon$ with probability $1-\delta$.
    Here $\tilde{O}(\cdot)$ hides dependences on $\ln(\mu H/\epsilon), X, A, \ln( S)$, $\ln (B), \ln (\mu),  \ln(1/\delta)$.
    See Appendix~\ref{sec:comparison-bc} for the full version of the table. }
    \label{tab:comparison}
\end{table}

\section{Computational hardness of sublinear dynamic regret guarantees}
\label{sec:dynamic-reg}

Finally, we study dynamic regret minimization for \oilcsc in the \algreduct framework. Although in the abstract continuous online learning setup, dynamic regret minimization has recently been shown to be  computationally hard~\cite{cheng2020online}, given the peculiar linear loss structure of the \algreduct framework,
the computational tractability of dynamic regret minimization within this framework still remains open.

We fill this gap by showing that, under a standard complexity-theoretic assumption (that $\mathrm{PPAD}$-complete problems do not admit randomized polynomial-time algorithms), there do not exist polynomial-time algorithms that achieve sublinear dynamic regret in \oilcsc. Specifically, we have:

\begin{theorem}
\label{thm:hardness-dreg}
Fix $\const>0$, if there exist a \oilcsc algorithm such that for any $\Mcal$ and expert $\pi^E$, it interacts with $\Mcal$, CSC oracle $\Ocal$, expert feedback $\feedback(s,a) = A^E(s,a)$, and outputs a sequence of $\cbr{ \pi_n }_{n=1}^N \in \Pi_\Bcal^N$ s.t. with probability at least $1/2$,
\[
\DReg_N(\Bcal)
\leq
O(\poly(S, A, B) \cdot N^{1-\const}),
\]
in $\poly(N, S, A, B)$ time, then all problems in $\mathrm{PPAD}$ are solvable in randomized polynomial time.
\end{theorem}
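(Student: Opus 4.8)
The plan is to give a polynomial-time reduction showing that an efficient low-dynamic-regret \oilcsc algorithm would yield a randomized polynomial-time algorithm for computing inverse-polynomially-approximate Nash equilibria of (symmetric) bimatrix games -- the \bimatrix problem -- which is PPAD-complete. The entry point is the linear structure of the \algreduct losses: since $\iterloss_n(\pi_u) = \inner{\theta(u_n)}{u}$, for a learner that outputs $\pi_n = \pi_{u_n} \in \Pi_\Bcal$ the dynamic regret telescopes into a sum of instantaneous ``best-response gaps'',
\[
\DReg_N(\Bcal) \;=\; \sum_{n=1}^N \Big( \inner{\theta(u_n)}{u_n} \;-\; \min_{h \in \Bcal} \theta(u_n)\coord{h} \Big),
\]
each term being an approximate-Stampacchia-variational-inequality residual for the operator $\theta$ over $\Delta(\Bcal)$. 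Hence, if $\DReg_N(\Bcal) \le O(\poly(S,A,B)\cdot N^{1-\const})$ then by averaging at least one iterate $u_{n^\star}$ has residual $O(\poly(S,A,B)\cdot N^{-\const})$, and -- crucially -- $n^\star$ is identifiable in $\poly(N,S,A,B)$ time, because in the instances we construct $\theta(\cdot)$ and hence each gap is explicitly computable.

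\paragraph{The \vimdp construction.} Given a symmetric bimatrix game with payoff matrix $M$ rescaled to have small entries, I would build a layered episodic MDP $\Mcal$ with a small constant horizon $H$: a single initial state $s_0$ with action set $\Acal=[m]$; action $a_1$ at $s_0$ transitions deterministically to a state $s^{(a_1)} \in \Scal_2$; at $s^{(a_1)}$, action $a_2$ incurs cost a suitably shifted copy of $M_{a_1,1}-M_{a_1,a_2}$; and every layer $t\ge 3$ is a single absorbing state with zero cost. The expert $\pi^E$ plays action $1$ everywhere, and $\Bcal=\{h_1,\dots,h_m\}$ with $h_k$ playing action $k$ in every state -- so $\pi^E=h_1\in\Bcal$ (the instance is in fact realizable), $B=A=m$, $S=O(mH)$, the separator set $\Xcal=\{s_0\}$ has size $1$, and $\Ocal$ is implementable by enumeration in $\poly(m)$ time. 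A direct computation shows $A^E$ vanishes on layer $1$ and on all layers $t\ge 3$ (by the choice of transitions, costs, and $\pi^E$), so $\mu=O(1)\le H$ and $\feedback=A^E$ meets the standing assumption, while $A^E(s^{(a_1)},a_2)\propto M_{a_1,1}-M_{a_1,a_2}$ on layer $2$. Since the only branching occurs at $s_0$, the occupancy $d^2_{\pi_u}(s^{(a_1)})=u\coord{h_{a_1}}$ is \emph{linear} in $u$, giving $\theta(u)\coord{h_k}=\tfrac1H\big((Mu)_1-(Mu)_k\big)$, and the instantaneous best-response gap at round $n$ is exactly $\tfrac1H\big(\max_k(Mu_n)_k-u_n^\top M u_n\big)$, i.e.\ the exploitability (Nash gap) of the mixed strategy $u_n$ in $M$.

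\paragraph{Assembling the reduction.} Given a hypothetical \oilcsc algorithm that, in $\poly(N,S,A,B)$ time and with probability $\ge 1/2$, outputs $\cbr{\pi_n}_{n=1}^N\in\Pi_\Bcal^N$ with $\DReg_N(\Bcal)\le O(\poly(S,A,B)\cdot N^{1-\const})$, I would run it on the \vimdp instance (answering its interactions with $\Mcal$ and its $A^E$-queries from the explicit construction, and supplying $\Ocal$ by enumeration), recover the parameters $u_n$ (readable off as $\pi_n(\cdot\mid s_0)=u_n$), compute each $u_n$'s Nash gap in $\poly(m)$ time, and return the minimizer; by the averaging step its gap is $O(\poly(m)\cdot N^{-\const})$, which falls below any prescribed inverse polynomial once $N$ is a sufficiently large polynomial in $m$, keeping the overall running time polynomial. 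This produces a randomized polynomial-time algorithm for inverse-polynomially-approximate symmetric Nash equilibria; composing with the standard symmetrization of general bimatrix games -- and carefully tracking the polynomial blow-ups in approximation quality introduced by symmetrization, by the rescaling of $M$, and by the $1/H$ and $N^{-\const}$ factors (this bookkeeping is what pins down the exponents in \bimatrix and \vimdp) -- contradicts the PPAD-hardness of inverse-polynomially-approximate bimatrix Nash unless every PPAD problem is solvable in randomized polynomial time.

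\paragraph{Main obstacle.} I expect the crux to be the \vimdp construction itself: one needs $\theta(u)$ to be \emph{exactly affine} in $u$ and to encode $M$ transparently, whereas generic occupancies $d^t_{\pi_u}$ are degree-$(t{-}1)$ polynomials in $u$. Confining all nonzero advantages to the single branching layer (where $d^2_{\pi_u}$ is linear) by zeroing the layer-$1$ and deeper advantages through the transition/cost structure and the choice of $\pi^E$, while \emph{simultaneously} keeping $\mu=\poly$, keeping $\Bcal$ polynomial-size with a tiny separator set so that $\Ocal$ and the per-round evaluation of $\theta$ are genuinely polynomial-time, and keeping the end-to-end approximation loss polynomially bounded so that inverse-polynomial Nash-hardness survives the whole chain, is the delicate part; the telescoping identity, the averaging argument, and the final complexity accounting are then routine.
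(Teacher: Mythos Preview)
Your proposal is correct and lands on the same core insight as the paper: build a constant-horizon tree MDP in which the expert advantage $A^E$ vanishes everywhere except on the layer reached after one step, so that $d^2_{\pi_u}$ is linear in $u$ and hence $\theta(u)$ is exactly affine; then observe that the instantaneous dynamic-regret term $\langle\theta(u_n),u_n\rangle-\min_h\theta(u_n)\coord{h}$ is a VI/Nash residual, and finish by averaging plus polynomial bookkeeping.

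The packaging differs. The paper encodes a \emph{general} bimatrix game $(V,W)$ directly into a $(2m{+}1)$-action, $H{=}3$ MDP whose induced operator is $\theta(u)=\tfrac13 Cu$ with
\[
C=\begin{pmatrix}\lambda\,\one_{m\times m}&-V\\-W^\top&\lambda\,\one_{m\times m}\end{pmatrix},
\]
and the expert plays a dedicated $(2m{+}1)$-st action never available to $\Bcal$. The diagonal $\lambda$-blocks are there precisely to force any approximate VI solution $u^*=(u_x^*,u_y^*)$ to put mass in $[\tfrac13,\tfrac23]$ on each half, so that normalizing each block recovers an approximate Nash of $(V,W)$; this normalization lemma is proved from scratch. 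In your version you encode a single matrix $M$ (a symmetric game), obtain a cleaner $m$-action MDP with $\pi^E=h_1\in\Bcal$ (so the hardness holds even in the realizable case --- a nice sharpening the paper does not make), and then outsource the general-to-symmetric step to ``standard symmetrization.'' Be aware that this step is exactly where the $\lambda$-penalty lives: without it an approximate symmetric equilibrium of the symmetrized game can collapse onto one block and fail to encode a Nash of $(V,W)$, and the inverse-polynomial approximation tracking you flag as bookkeeping is the same balancing lemma the paper proves. So your route is correct but not a pure black-box citation; the ``standard symmetrization'' you invoke is essentially the paper's penalty construction, just placed outside the MDP rather than baked into it.
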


The key insight behind our proof of Theorem~\ref{thm:hardness-dreg} is that,  achieving a sublinear dynamic regret in the \oilcsc setup is at least as hard as finding an approximate Nash equilibrium in a two-player general-sum game, a well-known $\mathrm{PPAD}$-complete problem~\cite{chen2006settling}. To establish a reduction from a two-player general-sum game to a \oilcsc problem, we carefully construct a tree-structured MDP whose leave states can be divided to two major groups: one group has costs encoding the two players' payoffs, and the other group has a large constant cost, ensuring that any policy in $\Pi_\Bcal$ with small dynamic regret encodes near-optimal strategies of both players. We refer the readers to Appendix~\ref{sec:dynamic-reg-proof} for details.

\section{Conclusion}
In this work, we investigate the fundamental statistical and computational limits of classification-based online imitation learning (\oilcsc).
On the positive side, we propose the \algreduct framework that enables the design of oracle and regret efficient \oilcsc algorithms with different sample and interaction round complexity tradeoffs, outperforming the behavior cloning baseline. 
On the negative side, we establish impossibility results for sublinear static regret using proper learning in the \oilcsc setting, a subtle but important observation overlooked by prior works. We also show the computational hardness of sublinear dynamic regret guarantees in the \algreduct framework.

Looking forward, it would be interesting to investigate the optimality of our sample complexity and interaction round complexity guarantees; we 
also speculate that it is possible to relax the small separator set assumption 
on $\Bcal$ by utilizing very recent results on smoothed online learning~\cite{block2022smoothed,haghtalab2022oracle}.
Finally, we are also interested in empirically evaluating our algorithms.

\paragraph{Acknowledgments and Disclosure of Funding.} 
We thank the anonymous reviewers for their constructive comments.
We thank Kwang-Sung Jun, Ryn Gray, Jason Pacheco, and members of the University of Arizona machine learning reading group for helpful discussions. We thank Wen Sun for helpful communications regarding~\cite{sun2017deeply}[Theorem 5.3] and pointing us to an updated version~\cite{ajks20}[Section 15.5]. We thank Weijing Wang for helping with illustrative figures. This work is supported by a startup funding by the University of Arizona.





\newpage

\appendix
\addcontentsline{toc}{section}{Appendix} 
\part{Appendix} 
\parttoc

\section{Additional related works}
\label{sec:add-rel-work}

\subsection{IL via reduction to offline learning}

An algorithm is said to reduce IL to offline learning, if it interacts with the MDP and the expert to create a series of offline learning tasks, and outputs a policy whose suboptimality depends on the quality of solving the offline learning tasks.
One representative example is the Behavior Cloning algorithm, where the learner learns a policy by performing offline supervised learning on a dataset drawn from the expert's state-action occupancy distribution. By \cite{ross2010efficient}[Theorem 2.1] (see also earlier work of~\cite{syed2010reduction}), Behavior Cloning's output policy's suboptimality is bounded by $H^2$ times the classification loss with respect to the expert's state-action occupancy distribution. 
Another example is the Forward Training algorithm of~\cite{ross2010efficient}, where a non-stationary policy is trained incrementally. For every step of the MDP, it trains a policy by performing offline supervised learning on a dataset drawn from the state occupancy distribution at this step induced by the nonstationary policy trained for all previous steps. The output policy's suboptimality of Forward Training is bounded in terms of the averaged 0-1 loss of intermediate offline classification problems at all steps.
The same paper~\cite{ross2010efficient} also proposed the SMILe algorithm, where the learned stationary policy is defined as a mixture of policies trained in the past, as well as the expert policy whose weight diminishes in the number of learning rounds. 
At each round, the learner trains a new policy component under the state distribution induced by the learned policy and uses it to update the learned policy.
In the worst case, the output policy's suboptimality guarantee is bounded in terms of the weighted average of 0-1 losses of intermediate offline classification problems. 
As made explicit by~\cite{ross2010efficient}, the SEARN~\cite{daume2009search} algorithm can be applied to imitation learning and its suboptimality guarantee can be bounded in terms of the averaged 0-1 loss of intermediate offline classification problems. Later, \cite{le2016smooth} extends SEARN to continuous-action regime under the setting of exogenous input, following the similar reduction as  in~\cite{daume2009search}.

\paragraph{IL via reduction to offline surrogate loss minimization.} A few works study IL via offline learning that performs minimization over surrogate losses of 0-1 loss. ~\cite{xu2020error},~\cite{ajks20}[Theorem 15.3] show that if the average KL divergence between a policy's action distribution and the expert's action distribution is bounded, the policy's suboptimality can in turn be bounded. In addition, under a realizable setting, maximum likelihood estimation (log loss minimization) ensures that the above KL divergence goes to zero as the training sample size grows to infinity. 
~\cite{xu2020error} also shows policy suboptimality bounds of generative adversarial imitation learning~\cite{ho2016generative} that depends on the approximation power of the policy class, and an estimation error term that depends on the sample size and the expressivity of the discriminator class.

\subsection{IL via reduction to online learning}
    
A major line of research  \cite{ross2010efficient,ross2011reduction,ross2014reinforcement,sun2017deeply,cheng2018convergence,cheng2019accelerating,cheng2019predictor} reduces interactive IL to an online learning problem, where a sequence of online losses are carefully constructed so that the cumulative online loss of a sequence of policies corresponds to the policy sequence's cumulative imitation losses.
In the discrete action setting, where policies can be viewed as classifiers, early works such as \DAgger~\cite{ross2010efficient} do not directly provide an explicit algorithm for online cost-sensitive classification loss regret minimization, and instead perform regret minimization over convex surrogates of the classification losses.
The convex surrogate minimization approach is well-known to be statistically inconsistent even in supervised learning, a special case of imitation learning~\cite{ben2012minimizing}.
Subsequent works~\cite{ross2014reinforcement} reduces online cost-sensitive classification in imitation learning to online least squares regression.

In contrast to these works, we study the original regret minimization problem (induced by CSC losses) in online IL without relaxations, in the nonrealizable setting. Although Sun et al. \cite{sun2017deeply}[Theorem 5.2] implicitly  designs \oilcsc algorithms
for general policy classes
by performing online linear optimization over the convex hull of benchmark policies, we identify a subtle technical issue in their approach; we discuss it in detail in Section~\ref{sec:mixture-class-deferred}.

\paragraph{Online IL as predictable online learning:}
In the above reduction from interactive IL to online learning,
a key observation from prior works~\cite{cheng2018convergence,cheng2020online,lee2021dynamic} is that, online IL is a \emph{predictable} online learning problem~\cite{chiang2012online,rakhlin2013online}.
This observation has enabled the design of more sample efficient~\cite{cheng2020online,lee2021dynamic,cheng2019accelerating,cheng2019predictor} and convergent~\cite{cheng2018convergence} imitation learning algorithms.
However, these works either assume access to an external predictive model~\cite{cheng2019accelerating,cheng2019predictor} or assume strong convexity of the losses~\cite{cheng2020online,lee2021dynamic,cheng2018convergence}, neither of which is satisfied in the \oilcsc setting.
Our \algmp algorithm achieves $O(1)$ static regret without the strong convexity assumption on the losses, and is largely inspired by the predictor-corrector framework of  \cite{cheng2019predictor}, the Mirror-Prox algorithm and extragradient methods in smooth optimization~\cite{nemirovski2004prox,juditsky2011solving}. 

\paragraph{IL with dynamic regret guarantees:}
Prior works in imitation learning \cite{lee2021dynamic, cheng2020online} have designed algorithms that achieve sublinear dynamic regret, under the assumption that the imitation losses are strongly convex in the policy parameters. While strong convexity of the losses naturally occurs in 
settings such as continuous control (e.g. square losses), in our \oilcsc setting with mixed policies, the learner's loss functions do not have strong convexity.

Cheng et al.~\cite{cheng2020online} show that, under an abstract continuous online learning setup, dynamic regret minimization is $\mathrm{PPAD}$-hard, by a reduction from Brouwer's fixed point problem. 
Our computational hardness result Theorem~\ref{thm:hardness-dreg} can be viewed as a strengthening of theirs, in that we show a concrete dynamic regret minimization problem induced by imitation learning in MDPs is $\mathrm{PPAD}$-hard. Our reduction is also significantly different from~\cite{cheng2020online}'s, in that it reduces from the 2-player mixed Nash equilibrium problem: specifically, the reduction constructs a 3-layer MDP based on the payoff matrices of the two players.

In the general online learning setting, \cite{besbes2015non, jadbabaie2015online} design efficient gradient-based algorithms with sublinear dynamic regret guarantees, under the assumption that the sequence of online loss functions have bounded variations. While these results appear to be promising for designing efficient sublinear dynamic regret algorithms in \oilcsc settings, our computational hardness result strongly suggests that additional structural assumptions on the \oilcsc problem are necessary for such guarantees.

\subsection{Other important related works}

\paragraph{Fundamental sample complexity limits of IL:}
Recent works of \cite{rajaraman2020toward,rajaraman2021value}
study minimax sample complexities of realizable imitation learning in the tabular or linear policy class settings, and shows that in general, allowing the learning agent to interact with the environment does not improve the minimax sample complexity. In contrast, in settings where the (MDP, expert policy) pair has a low recoverability constant, interactivity helps reduce the minimax sample complexity.
They also show that knowing the transition probability of the MDP helps reduce the minimax sample complexity. 
Different from their work, our work focuses on the general function approximation setting without realizability assumptions.

\paragraph{Oracle-efficient online and imitation learning:} A line of works \cite{dudik2011efficient,agarwal2014taming,syrgkanis2016efficient,dudik2020oracle,rakhlin2016bistro,syrgkanis2016improved} design oracle-efficient online learning algorithms for online classification and online contextual bandit learning. 
Most of these works either assume that the context distributions are iid, or the contexts are observed ahead of time (i.e. the transductive setting), which 
are inapplicable in the \oilcsc setting.
The only exceptions we are aware of are~\cite{syrgkanis2016efficient,dudik2020oracle}, which utilize a small separator set assumption of the benchmark policy class. However, as we have seen, a direct application of~\cite{syrgkanis2016efficient,dudik2020oracle} to the online IL setting results in a linear regret (Theorem~\ref{thm:proper-linear-reg}), which motivates our design of \algm and \algmp algorithms.  
\cite{sun2019provably} designs oracle-efficient imitation learning algorithms from experts' state observations alone (without seeing experts' actions). Different from ours, their work makes a (strong) realizability assumption: the learner is given access to a policy class and a value function class, that contain the expert's policy and value function, respectively. Also, their algorithm requires regularized CSC oracle, for running FTRL. 

\paragraph{Connections between FTRL and FTPL:}
In online linear optimization, \cite{abernethy2014online} first observe that an in-expectation version of FTPL is equivalent to FTRL, where the regularizer depends on the distribution of the noise perturbation.
This viewpoint yields a productive line of work that designs new 
bandit and online learning algorithms~\cite{abernethy2015fighting,abernethy2019online}. 
Our work utilizes this connection to design oracle-efficient online imitation learning algorithms with static regret guarantees.

\section{Recap of notations and additional notations used in the proofs}
We provide a brief recap of the notations introduced outside Section~\ref{sec:prelims}. 

In Section~\ref{sec:improper-learning}, we introduce mixed policy class $\Pi_\Bcal := \cbr{\pi_u(\cdot|s) := \sum_{h \in \Bcal} u\coord{h} \cdot h(\cdot|s): u \in \Delta(\Bcal) }$, and cost vector $\theta(v) := \rbr{ \EE_{s \sim d_{\pi_v}} \sbr{ \zeta_E(s,h(s)) } }_{h \in \Bcal}$, which is induced by the distribution occupancy of $\pi_v \in \Pi_{\Bcal}$, expert feedback $\feedback$, and $\Bcal$. Also, we define cost vector induced by CSC dataset $\incrdata$ and $\Bcal$ as $\est := \rbr{ \EE_{(s,\vec{c}) \sim \incrdata} \sbr{ \vec{c}(h(s)) } }_{h \in \Bcal}$. 

In Section~\ref{sec:static-reg}, we introduce algorithm \algm and separator set $\Xcal$. Given a deterministic stationary benchmark policy class $\Bcal$, its separator $\Xcal$ satisfies $\forall h, h' \in \Bcal$, $\exists x \in \Xcal$, s.t. $h(x) \neq h'(x)$. Define sample based perturbation loss variables $\ell_x \sim  \mathcal{N}(0,I_A)$ for each $x \in \Xcal$. Denote $\ell = \rbr{\ell_x}_{x \in \Xcal} \sim \mathcal{N}(0,I_{XA})$, and the induced perturbation vector $q(\ell) := (\sum_{x\in \Xcal}\ell_x(h(x)))_{h\in \Bcal} $, where $\ell_x(a)$ denotes the $a$-th term of $\ell_x$. 
When it is clear from context, we abbreviate $q(\ell)$ as $q$.
Define perturbation samples set 
$Z = \cbr{ (x, \frac{K}{\eta} \cdot \ell_{x} )}_{x \in \Xcal }$, where $K$ is the sample budget per round and $\eta$ is the learning rate. Additionally, we use 
$Z_j = \cbr{ (x, \frac{K}{\eta} \cdot \ell_{x,j} )}_{x \in \Xcal }$, $j =1,\ldots,\Sp$ to index $\Sp$ perturbation sets induced by $\Sp$ draws of $\ell_j = \rbr{\ell_{x,j}}_{x \in \Xcal} \sim \mathcal{N}(0,I_{XA})$. Similarly, we abbreviate $q(\ell_j)$ as $q_j$.

We denote by $\Prob(E)$  the probability of event $E$ happening.  For function $f$, we say
\begin{enumerate}
    \item $f(n) = O(\poly(n))$ if $\exists C > 0$ s.t. $f(n) = O(n^C)$;
    \item $f(n_1, \ldots, n_k) = O(\poly(n_1, \ldots, n_k))$ if $f(n_1, \ldots, n_k) = O(\poly(n_1 \times \ldots \times n_k))$;
    \item $f(n_1,\ldots, n_k) = O(\polylog(n_1,\ldots,n_k))$ if $f(n_1, \ldots, n_k) = O( \poly (\ln n_1, \ldots, \ln n_k) )$.
\end{enumerate}
We summarize frequently-used definitions in the main paper and the appendix in Table~\ref{notation-table}.
\begin{table}[!htb]
    \caption{A review of  notations in this paper.} 
  \label{notation-table}
   
      \centering
      \makebox[0pt]{
        \begin{tabular}{llll}
         \toprule
    Name     & Description & Name     & Description    \\
    \midrule
    $\MDP$ & Markov decision process & $(x,\vec{c})$& CSC example\\ 
    $H$ & Episode length & $\incrdist_{\pi}$ & $(x,\vec{c})$ distribution induced by $\pi$, $\Mcal$ and $\pi^E$ \\
    $t$ & Time step in $\MDP$ & $ \Ocal$  & CSC oracle\\
    $\Scal$ & State space & $\Pi_\Bcal$ &  Mixed policy class\\
     $S$ & State space size &$u$ & Mixed policy probability weight\\
    $s$ & State &$\pi_u$ & Mixed policy induced by $u$\\
     $\step(s)$ & Time step of state $s$ &$\theta(u)$ & Linear loss vector induced by $\pi_u$\\
    $\Acal$ & Action space & $K$  & Sample budget per round \\
    $A$ & Action space size & $k$ & Sample iteration index\\
    $a$ & Action  &$\incrdata_n$ & Set of CSC examples at iteration $n$\\
    $\rho$ &  Initial distribution  & $\EE_D$ & Empirical average over set $D$\\
    $P$ & Transition dynamics  & $\est_n$ & Estimator for $\theta(u_n)$ by $\incrdata_n$\\
    $c$ & Cost function & $\ff_n(\pi)$ & Estimator for $\iterloss_n(\pi)$ by $\incrdata_n$\\
    $\pi$ & Policy  & $\Xcal$ &  Separator set for $\Bcal$\\
    $\EE_{\pi}$ & Expectation wrt $\pi$ & $X$ &  Separator set size\\
    $\PP_{\pi}$ & Probability wrt $\pi$ & $\Sp$   &  Sparsification parameter\\
    $d_{\pi}^t$ & State occupancy distribution &  $j$ &  Sparsification iteration number\\
    $d_{\pi}$  & State occupancy distribution &$Z_j$ &  Perturbation example set  \\
    $\tau$  & Trajectory   &  $\mathcal{N}$ & Gaussian distribution\\
    $J(\pi)$ & Expected cumulative cost &$I_A$ &Identity matrix of dimension $A$\\
    $Q_\pi$ &  Action value function & $I_{XA}$ & Identity matrix of dimension $X \cdot A$\\
    $V_\pi$ &  State value function  &  $\ell_x$ & Perturbation vector drawn from $\mathcal{N}(0,I_A)$\\
    $A_\pi$ & Advantage function &$\ell$ & $\{\ell_x\}_{x \in \Xcal} \sim \mathcal{N}(0,I_{XA})$\\
    $\mu$ & Recoverability for $\pi^E$ in $\Mcal$ &$q(\ell)$ & Perturbation vector in $\RR^B$ induced by $\ell$\\
    $\pi^E$ & Expert policy &$\eta$ & Learning rate\\
    $A^E$ & Expert advantage function &$R$ & Closed and strongly convex function\\
    $\feedback$ & Expert feedback function  &$\dom(R)$ & Effective domain of $R$\\
    $\imitloss(\pi)$ & Imitation loss of $\pi$  &$R^*$ & Fenchel conjugate of $R$\\
    $N$ & Number of learning rounds   & $D_{R^*}$ & Bregman divergence of $R^*$\\
    $n$  & Learning round number & $\Phi_{\mathcal{N}}$ &Expected CFTPL objective function \\
    $i$ &  Learning round index  &$\nabla\Phi_{\mathcal{N}}$ & Gradient of $\Phi_{\mathcal{N}}$\\
    $\iterloss_n(\pi)$ & Online loss function  &$R_\Ncal$ & $\Phi^*_{\mathcal{N}}$ (Fenchel conjugate of $\Phi_{\mathcal{N}}$)\\
    $\Bcal$ &  Benchmark policy class & $u^*_n$ & Expectation of output from MFTPL\\
    $B$ &  Benchmark policy class size & $u_n$ & Output from MFTPL\\
    $h$ &  Policy in $\Bcal$  & $\auxest_n$ & Optimistic estimation for $\theta(u_n)$\\
    $\SReg_N(\Bcal)$  &  Online static regret & $\sbr{N}$ &Set $\{1,2,\cdots,N\}$\\
    $\DReg_N(\Bcal)$ & Online dynamic regret  &$I(\cdot)$ & Indicator function\\
    $\SLReg_N$ & Linear optimization regret  & $\Delta(W)$  & All probability distributions over  $W$\\
    $ \bias(\Bcal,\Bcal_0,N)$  & Approximation error & $\Onehot(w,W)$ & Delta mass (one-hot vector) on $w\in W$\\
    $\delta$ & Failure probability  &$u[w]$ &  $w$-th term of $u\in \mathbb{R}^{|W|}$\\
    $\Prob(E)$ & Probability of event $E$& $\Theta$  & $\RR^d$ or $\RR^B$ vector\\
    
       \bottomrule
    \end{tabular}
    }
\end{table}

\section{Deferred materials from Section~\ref{sec:prelims}}

\begin{proposition}[Restatement of Proposition~\ref{prop:reg-conversion}]
\label{prop:reg-conversion-restated}
For any $N \in \mathbb{N}^+$ and online learner that outputs $\cbr{\pi_n}_{n=1}^N \in \Bcal_0^N$, define $\bias(\Bcal,\Bcal_0,N) := \mathop{\max}\limits_{\{\upsilon_n\}_{n=1}^N \in \Bcal_0^N } \min\limits_{\pi \in \Bcal} \EE_{s \sim \bar{d}_{N}} \EE_{a \sim \pi(\cdot \mid s) }\sbr{I(a \neq \pi^E(s))}$, where $\bar{d}_{N} := \frac{1}{N} \sum_{n=1}^N d_{\upsilon_n}$. Then, choosing $\hat{\pi}$ uniformly at random from $\cbr{\pi_n}_{n=1}^N$ has guarantee:
\[
\EE\sbr{ J(\hat{\pi}) - J(\pi^E) }
\leq 
H \cdot
\min\cbr{ \mu\cdot \bias(\Bcal,\Bcal_0,N) + \frac{\EE[\SReg_N(\Bcal)]}{N}, \; \mu \cdot \bias(\Bcal,\Bcal_0,1) + \frac{\EE[\DReg_N(\Bcal)]}{N} }.
\]

\end{proposition}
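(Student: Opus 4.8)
The plan is to combine the inequality $J(\pi) - J(\pi^E) \le H\cdot\imitloss(\pi)$ (already established in \autoref{sec:prelims} via the performance difference lemma, Lemma~\ref{lem: performance difference lemma}) with the definitions of static and dynamic regret in \eqref{eqn:regret}, using two elementary observations: (i) $\iterloss_n(\pi_n) = \imitloss(\pi_n)$, since $\iterloss_n$ evaluates the action distribution of its argument on the occupancy measure $d_{\pi_n}$ and here the argument is $\pi_n$ itself; and (ii) the learner's realized sequence $\{\pi_n\}_{n=1}^N$ is an admissible element of $\Bcal_0^N$, hence the realized expert-disagreement is dominated by the worst-case $\bias$ quantities. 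First, since $\hat{\pi}$ is drawn uniformly from $\{\pi_n\}_{n=1}^N$, we have $\EE[J(\hat{\pi}) - J(\pi^E)] = \frac{1}{N}\sum_{n=1}^N \EE[J(\pi_n) - J(\pi^E)] \le \frac{H}{N}\, \EE\big[\sum_{n=1}^N \iterloss_n(\pi_n)\big]$ by (i) and the performance difference bound.

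Next, for the static branch: by definition $\sum_n \iterloss_n(\pi_n) = \SReg_N(\Bcal) + \min_{\pi\in\Bcal}\sum_n \iterloss_n(\pi)$. Since $\iterloss_n(\pi) = \EE_{s\sim d_{\pi_n}}\EE_{a\sim\pi(\cdot\mid s)}[\feedback(s,a)]$ is linear in the state-occupancy measure, $\frac1N\sum_n \iterloss_n(\pi) = \EE_{s\sim\bar{d}_N}\EE_{a\sim\pi(\cdot\mid s)}[\feedback(s,a)]$ with $\bar{d}_N = \frac1N\sum_n d_{\pi_n}$; using the standing assumption $\feedback(s,a)\le\mu\, I(a\ne\pi^E(s))$ this is at most $\mu\,\EE_{s\sim\bar{d}_N}\EE_{a\sim\pi(\cdot\mid s)}[I(a\ne\pi^E(s))]$. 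Taking $\min_{\pi\in\Bcal}$ and invoking (ii) — $\{\pi_n\}_{n=1}^N\in\Bcal_0^N$ is one feasible choice in the maximization defining $\bias(\Bcal,\Bcal_0,N)$ — gives $\min_{\pi\in\Bcal}\frac1N\sum_n\iterloss_n(\pi)\le\mu\cdot\bias(\Bcal,\Bcal_0,N)$. This last bound is deterministic, so it passes through $\EE[\cdot]$ even when the learner randomizes. Combining yields $\EE[J(\hat{\pi}) - J(\pi^E)] \le H\big(\mu\cdot\bias(\Bcal,\Bcal_0,N) + \EE[\SReg_N(\Bcal)]/N\big)$, the first term of the claimed minimum.

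For the dynamic branch: by definition $\sum_n\iterloss_n(\pi_n) = \DReg_N(\Bcal) + \sum_n \min_{\pi\in\Bcal}\iterloss_n(\pi)$. For each fixed $n$, the same pointwise bound on $\feedback$ gives $\min_{\pi\in\Bcal}\iterloss_n(\pi)\le\mu\,\min_{\pi\in\Bcal}\EE_{s\sim d_{\pi_n}}\EE_{a\sim\pi(\cdot\mid s)}[I(a\ne\pi^E(s))]\le\mu\cdot\bias(\Bcal,\Bcal_0,1)$, since $\pi_n\in\Bcal_0$ and $\bias(\Bcal,\Bcal_0,1)$ is exactly the worst case over a single policy in $\Bcal_0$. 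Summing over $n$, dividing by $N$, multiplying by $H$, and taking expectations yields $\EE[J(\hat{\pi}) - J(\pi^E)] \le H\big(\mu\cdot\bias(\Bcal,\Bcal_0,1) + \EE[\DReg_N(\Bcal)]/N\big)$. Taking the minimum of the two bounds completes the proof.

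As for obstacles: there is no deep difficulty here — the argument is careful bookkeeping built on the performance difference lemma and the regret definitions. The one point that genuinely needs attention is the interplay between the learner's internal randomness and the $\bias$ terms: when the learner randomizes, $\{\pi_n\}_{n=1}^N$ is random, but $\bias(\Bcal,\Bcal_0,N)$ and $\bias(\Bcal,\Bcal_0,1)$ are deterministic worst-case maxima over \emph{all} admissible policy sequences, so they dominate the realized expert-disagreement pathwise and can be pulled outside $\EE[\cdot]$, whereas the regret quantities must remain inside the expectation.
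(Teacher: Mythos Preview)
Your proposal is correct and follows essentially the same approach as the paper: apply the performance difference bound $J(\pi_n)-J(\pi^E)\le H\cdot\iterloss_n(\pi_n)$, decompose $\sum_n\iterloss_n(\pi_n)$ via the static or dynamic regret definition, upper-bound the benchmark term pathwise using $\feedback(s,a)\le\mu\,I(a\ne\pi^E(s))$ and the fact that the realized sequence $\{\pi_n\}$ is admissible in the $\bias$ maximization, and finally take expectations (with the bias terms pulled out as deterministic constants). Your remark on why the bias terms pass through the expectation is exactly the right justification and is somewhat more explicit than the paper's own treatment.
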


\begin{proof}[Proof of Proposition~\ref{prop:reg-conversion-restated}]



By the performance difference lemma (Lemma~\ref{lem: performance difference lemma}), the definitions of $\imitloss$ and $\iterloss_n$, and the assumption that $A^E(s,a) \leq \feedback(s,a)$, we have
\[
\frac1N \sum_{n=1}^N J(\pi_n) - J(\pi^E) 
=H \cdot \frac1N  \sum_{n=1}^N \EE_{s \sim d_{\pi_n}}\EE_{a \sim \pi_n(\cdot|s)} \sbr{A^{E}(s,a)}
\leq 
\frac H N \sum_{n=1}^N   \imitloss (\pi_n)
= 
\frac H N \sum_{n=1}^N   \iterloss_n(\pi_n) .
\]
Following the definition of $\SReg_N(\Bcal)$ and $\DReg_N(\Bcal)$ in Equation~\eqref{eqn:regret},
\begin{equation}
\frac 1 N \sum_{n=1}^N   \iterloss_n(\pi_n) 
=
\frac 1 N \min_{\pi \in \Bcal}\sum_{n=1}^N \iterloss_n(\pi)
+
 \frac{\SReg_N(\Bcal)}{N}
=
\frac 1 N \sum_{n=1}^N \min_{\pi \in \Bcal}\iterloss_n(\pi)
+
 \frac{\DReg_N(\Bcal)}{N} .
 \label{eq:sregdreg}
\end{equation}

Since $\bias(\Bcal,\Bcal_0,N) = \mathop{\max}\limits_{\{\upsilon_n\}_{n=1}^N \in \Bcal_0^N } \min\limits_{\pi \in \Bcal} \EE_{s \sim \bar{d}_{N}} \EE_{a \sim \pi(\cdot \mid s) }\sbr{I(a \neq \pi^E(s))}$,  $\cbr{\pi_n}_{n=1}^N \in \Bcal_0^N$, and our assumption that $\feedback(s,a) \leq \mu I(a \neq \pi^E(s))$, the static regret benchmark is bounded by:
\[
\begin{aligned}
\frac 1 N \min_{\pi \in \Bcal}\sum_{n=1}^N \iterloss_n(\pi) 
&=
 \min_{\pi \in \Bcal}\frac 1 N\sum_{n=1}^N \EE_{s \sim d_{\pi_n}} \EE_{a \sim \pi(\cdot \mid s) } \sbr{\feedback(s,a)} \\
& =
  \min_{\pi \in \Bcal} \EE_{s \sim \bar{d}_N} \EE_{a \sim \pi(\cdot \mid s) } \sbr{\feedback(s,a) }\\
 & \leq
  \min_{\pi \in \Bcal} \EE_{s \sim \bar{d}_N }\EE_{a \sim \pi(\cdot \mid s) } \sbr{\mu \cdot I(a \neq \pi^E(s))}\\
  &\leq 
  \mu \cdot \mathop{\max}\limits_{\{\upsilon_n\}_{n=1}^N \in \Bcal_0^N } \min\limits_{\pi \in \Bcal} \EE_{s \sim \bar{d}_{N}} \EE_{a \sim \pi(\cdot \mid s) }\sbr{I(a \neq \pi^E(s))}\\
&  =
\mu \cdot  \bias(\Bcal,\Bcal_0,N) .
\end{aligned}
\]
Similarly,  $\forall n$, 
\[
\begin{aligned}
\min_{\pi \in \Bcal}\iterloss_n(\pi) 
& =
  \min_{\pi \in \Bcal} \EE_{s \sim d_{\pi_n}} \EE_{a \sim \pi(\cdot \mid s) } \sbr{\feedback(s,a) }\\
 & \leq
  \min_{\pi \in \Bcal} \EE_{s \sim d_{\pi_n}} \EE_{a \sim \pi(\cdot \mid s) } \sbr{\mu \cdot I(a \neq \pi^E(s))}\\
  &\leq 
  \mu \cdot \mathop{\max}\limits_{\upsilon \in \Bcal_0 } \min_{\pi \in \Bcal} \EE_{s \sim d_{\upsilon}} \EE_{a \sim \pi(\cdot \mid s) }\sbr{I(a \neq \pi^E(s))}\\
&  =
\mu \cdot\bias(\Bcal,\Bcal_0,1) .
\end{aligned}
\]

By bringing our observations back to 
Equation~\eqref{eq:sregdreg}, we obtain
\[
\begin{aligned}
\frac1N \sum_{n=1}^N J(\pi_n) - J(\pi^E) 
\leq 
\frac 1 N \sum_{n=1}^N   \iterloss_n(\pi_n) 
 \leq 
 \mu H \cdot \bias(\Bcal,\Bcal_0,N)
 + 
 \frac{H}{N}\SReg_N(\Bcal) ,\\
 \frac 1 N \sum_{n=1}^N J(\pi_n) - J(\pi^E) 
\leq 
 \frac H N \sum_{n=1}^N   \iterloss_n(\pi_n) 
 \leq 
  \mu H \cdot\bias(\Bcal,\Bcal_0,1)
 + 
 \frac{H}{N}\DReg_N(\Bcal).
\end{aligned}
\]
Notice that $J(\pi^E)$, $ \bias(\Bcal,\Bcal_0,N)$, and $\bias(\Bcal,\Bcal_0,1)$ are constants, we apply the fact that given fixed sequence $\{\pi_n\}_{n=1}^{N}$ ,  $\EE\sbr{J(\hat{\pi})|\{\pi_n\}_{n=1}^{N}} = \frac1N \sum_{n=1}^N J(\pi_n)$ and the law of total expectation,
\[
\begin{aligned}
\EE\sbr{ J(\hat{\pi}) - J(\pi^E) }
=&
\EE_{\{\pi_n\}_{n=1}^{N}}\sbr{\EE\sbr{J(\hat{\pi})|\{\pi_n\}_{n=1}^{N}}} - J(\pi^E) \\
=&
\EE_{\{\pi_n\}_{n=1}^{N}}\sbr{\frac1N \sum_{n=1}^N J(\pi_n)} - J(\pi^E) \\
 \leq &
 H \cdot
\min\cbr{ \mu\cdot \bias(\Bcal,\Bcal_0,N) + \frac{\EE[\SReg_N(\Bcal)]}{N}, \; \mu \cdot \bias(\Bcal,\Bcal_0,1) + \frac{\EE[\DReg_N(\Bcal)]}{N} } ,
\end{aligned}
\]
which concludes the proof.
\end{proof}

\section{Deferred materials from Section~\ref{sec:improper-learning}}

\subsection{Deferred materials from Section~\ref{sec:proper}}
\label{sec:mixed}

\begin{theorem}[Restatement of Theorem~\ref{thm:proper-linear-reg}]
\label{thm:proper-linear-reg-proof}
Suppose the expert's feedback is either of the form $\feedback(s,a) = \mu \cdot I(a \neq \pi^E(s))$ or $\feedback(s,a) = A^E(s,a)$. Then, for any $H \geq 3$,
there exists an MDP $\Mcal$ of episode length $H$, a deterministic expert policy $\pi^E$,
a benchmark policy class $\Bcal$, such that for any learner that sequentially and possibly at random generates a sequence of policies $\{\pi_n\}_{n=1}^N \in \Bcal^N$, its static regret satisfies $
\SReg_N(\Bcal) = \Omega(N).$
\end{theorem}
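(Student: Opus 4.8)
The plan is to exhibit a single fixed MDP whose round-$n$ losses are \emph{self-adversarial}: since $\iterloss_n(\cdot)$ is defined through $d_{\pi_n}$, the state occupancy of the very policy the learner commits to at round $n$, I would build $\Mcal$ so that (i) every benchmark policy disagrees with $\pi^E$ on all (or almost all) of the occupancy it itself induces, while (ii) for every mixture of these induced occupancies some fixed benchmark policy disagrees only lightly. Concretely, take $\Bcal=\{h_1,h_2\}$ (any $B\ge 2$ works), $\Scal_1=\{s_1\}$, and $B+1$ disjoint ``branches'' $0,1,2$, where branch $i$ has exactly one state at each step $t\in\{2,\dots,H\}$. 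The expert plays $\pi^E(s_1)=0$ and on branch $0$ plays a ``forward'' action that stays on branch $0$; all branch-$0$ transitions cost $0$. Action $i\in\{1,2\}$ at $s_1$ enters branch $i$; on branch-$i$ states $\pi^E$ plays a ``recover'' action returning to branch $0$ at the next step, every other action keeps you inside branch $i$, and every branch-$i$ state costs $1$ (for all actions). Define $h_i(s_1)=i$, let $h_i$ play the distinct ``stay'' action on branch-$i$ states (so rolling out $h_i$ never leaves branch $i$), and let $h_i$ agree with $\pi^E$ on branch $0$ and on branch $j$ for $j\ne i$. One checks that $\pi^E$ is optimal and $A^E(s,a)=I(a\ne\pi^E(s))$ at every non-final step (and $=0$ at step $H$), so $(\Mcal,\pi^E)$ is $1$-recoverable ($\mu=1\le H$) and both admissible feedback forms, $\mu\cdot I(a\ne\pi^E(s))$ and $A^E(s,a)$, are valid.

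Next I would compute the relevant occupancies. Rolling out $h_i$ gives $d_{h_i}$ with mass $1/H$ on $s_1$ and mass $1/H$ on each branch-$i$ state, and $h_i$ disagrees with $\pi^E$ on all of $\mathrm{supp}(d_{h_i})$ (at $s_1$ since $i\ne 0$, at each branch-$i$ state since ``stay''$\ne$``recover''); hence $\iterloss_n(\pi_n)=\imitloss(h_{i_n})\ge\frac{H-1}{H}$ whenever the learner plays $\pi_n=h_{i_n}$, so its cumulative loss is at least $\frac{H-1}{H}N$ for \emph{every} realization of its (possibly randomized) play. For the benchmark, write $\bar d_N=\frac1N\sum_n d_{\pi_n}=\alpha_1 d_{h_1}+\alpha_2 d_{h_2}$ with empirical frequencies $\alpha\in\Delta(\Bcal)$; by linearity $\min_{\pi\in\Bcal}\sum_n\iterloss_n(\pi)=N\cdot\min_{h\in\Bcal}(M\alpha)_h$ where $M_{h,i}$ is the $d_{h_i}$-weighted cost of $h$ (disagreement for the $\mu I(\cdot)$ form, $\EE_{s\sim d_{h_i}}[A^E(s,h(s))]$ for the $A^E$ form). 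The construction yields $M_{j,i}=1/H$ for $j\ne i$ (since $h_j$ agrees with $\pi^E$ everywhere on branch $i$, disagreeing only at $s_1$) and $M_{i,i}\in\{\frac{H-1}{H},1\}$. Bounding $\min_h(M\alpha)_h\le\frac1B\sum_h(M\alpha)_h$ then gives a benchmark cumulative loss of at most $\frac{H+1}{2H}N$ (for the $\mu I(\cdot)$ form) or $\frac12 N$ (for the $A^E$ form).

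Combining the two estimates, in the $\mu\cdot I(\cdot)$ case $\SReg_N(\Bcal)\ge N-\frac{H+1}{2H}N=\frac{H-1}{2H}N$, and in the $A^E$ case $\SReg_N(\Bcal)\ge\frac{H-1}{H}N-\frac12 N=\frac{H-2}{2H}N$; both are $\Omega(N)$ for every $H\ge 3$ (at least $N/3$ and $N/6$), and being pointwise bounds they hold in expectation over the learner's randomness too, which proves the theorem. I expect the main obstacle to be arranging properties (i) and (ii) \emph{simultaneously} --- that each $h_i$ is driven into a region where it is wrong, yet any convex combination of those regions is cheap for some single $h$ --- while keeping the expert recoverable and the argument uniform across both feedback models; this is exactly where \oilcsc departs from classical online cost-sensitive classification (where Weighted Majority / Hedge are proper no-regret learners), and it parallels Cover's impossibility result~\cite{cover1966behavior}, with the adversary's adaptivity here ``built into'' the round-$n$ loss through its dependence on $\pi_n$. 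Verifying the occupancy formulas, the identity $A^E(s,a)=I(a\ne\pi^E(s))$, and the ``min $\le$ average'' step is routine.
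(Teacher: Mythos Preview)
Your proposal is correct and follows the same core idea as the paper's proof: build an MDP in which every $h\in\Bcal$, once rolled out, occupies precisely the region on which it disagrees with $\pi^E$, while the \emph{other} benchmark policy largely agrees with $\pi^E$ there, and then upper-bound the benchmark via ``$\min\le$ average.'' The paper's concrete instantiation is more minimal --- three states $\{S_0,S_L,S_R\}$, two actions, constant policies $h_L,h_R$, and an asymmetric expert $\pi^E(S_0)=L,\pi^E(S_L)=R,\pi^E(S_R)=L$ --- whereas you add a dedicated recovery branch~$0$, use non-constant $h_i$, and obtain a layered, symmetric construction that scales to $B\ge 2$; the resulting constants are swapped ($\tfrac{H-1}{2H}N$ vs.\ $\tfrac{H-2}{2H}N$ across the two feedback forms) but the argument structure and conclusion are identical.
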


\begin{figure}[h]
  \centering
    \includegraphics[scale=0.3]{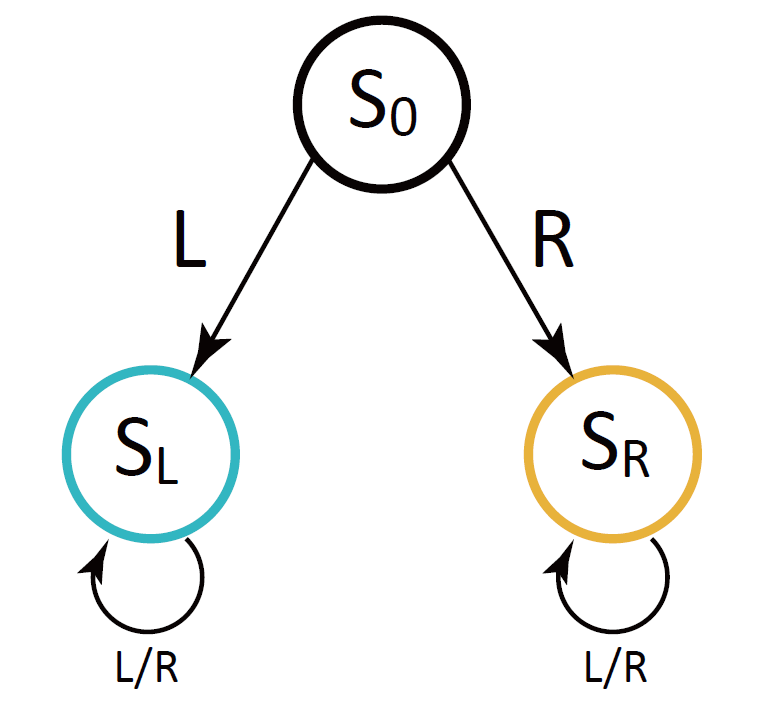}
  \caption{The MDP construction used in the proof of Theorem~\ref{thm:proper-linear-reg-proof}.}
    \label{fig:covers impossibility in imitation learning}
\end{figure}
\begin{proof}

Define MDP $\Mcal$ with:
\begin{itemize}
    \item State space $\Scal = \cbr{S_0,S_L,S_R}$ and action space $\Acal = \{L,R\}$.
    \item Initial state distribution  $\rho(S_0) = 1$
    \item Transition dynamics:  $P_1(S_L|S_0,L)=1$, $P_1(S_R|S_0,R)=1$, i.e. playing $L$ at $S_0$ transitions to $S_L$ deterministically, while playing $R$ at $S_0$ transitions to $S_R$. Also, $\forall t \in [H-1]$, $\forall a \in \Acal$, $P_t(S_L|S_L,a)=1$, $P_t(S_R|S_R,a)=1$, i.e. $S_L$ and $S_R$ have transition dynamics that are self-absorbing before termination. See Figure~\ref{fig:covers impossibility in imitation learning} for an illustration.
    \item Cost function $c(S_0,L) = c(S_0,R) =  c(S_L,R) = c(S_R,L)=0$, $c(S_L,L) = c(S_R,R)=1$.
\end{itemize}  

Meanwhile, let:
\begin{itemize}
\item Benchmark policy class $\Bcal = \{h_L,h_R\}$, where $\forall s$, $h_L(s) = L$ and $h_R(s)=R$. 
\item Deterministic expert $\pi^E$ such that $\pi^E(S_0) =L$, $\pi^E(S_L)=  R$ and $\pi^E(S_R)=  L$. 
\end{itemize}

Notice that $\forall s \in \Scal$, $c(s, \pi^E(s)) = 0$, and therefore $V_{\pi^E}(s) = 0$ for all $s$.  Also, by observing $A^E(s,a) = Q_{\pi^E}(s,a) - V_{\pi^E}(s) = c(s,a) $,
it can be seen that $A^E(S_0,L) =A^E(S_0,R) =  A^E(S_L,R) = A^E(S_R,L)=0$, $A^E(S_L,L) = A^E(S_R,R)=1$. 

By the transition dynamics, rolling out $h_L$ in $\Mcal$ incurs trajectory $\tau_{h_L} = \rbr{S_0,L,S_L,L,\cdots,S_L,L}$ with probability 1, where $A^E(S_0,L) =  I(L \neq \pi^E(S_0))  = 0 $ and $A^E(S_L,L) = I(L \neq \pi^E(S_L)) = 1$.  Similarly, the trajectory induced by $h_R$ is $\tau_{h_R} = \rbr{S_0,R,S_R,R,\cdots,S_R,R}$, where $A^E(S_0,R) = 0 $, $I(R \neq \pi^E(S_0)) = 1$ and $ A^E(S_R,R) = I(R \neq \pi^E(S_R)) = 1$.

\paragraph{For the direct expert annotation feedback $\feedback(s,a) = \mu \cdot I(a \neq \pi^E(s)).$}

To begin with, it can be seen from the advantage function values that $(\Mcal,\pi^E)$ is 1-recoverable (i.e. $\forall s\in \Scal, a\in \Acal$, $\abr{ A^E(s,a) }\leq 1$). Therefore, the feedback is of the form $\zeta_E(s,a) = I(a \neq \pi^E(s))$. Recall that $F_n(\pi) = \EE_{s \sim d_{\pi_n}}\EE_{a \sim \pi(\cdot|s)} \sbr{\zeta_E(s,a)}$, we follow the trajectories of $h_L, h_R$ and obtain:
\begin{itemize}
    \item when $\pi_n =h_L$, 
$F_n(h_L) = \frac{H-1}{H}$, $F_n(h_R) = \frac{1}{H}$.
    \item when $\pi_n =h_R$,
    $F_n(h_R) = 1$, $F_n(h_L) = 0$.
\end{itemize}  

With this, we conclude that, $\forall \{\pi_n\}_{n=1}^N \in \{h_L,h_R\}^N$,  
$ \sum_{n=1}^N F_n(\pi_n) \geq  \frac{ N (H-1)}{H}$.

On the other hand, $\forall \{\pi_n\}_{n=1}^N \in \{h_L,h_R\}^N$, define $P_L := \sum_{n=1}^N \EE_{s \sim d_{\pi_n}}I(L = \pi^E(s))$, and $P_R := \sum_{n=1}^N \EE_{s \sim d_{\pi_n}}I(R = \pi^E(s))$. 
For the benchmark term, we have 
\[
\min_{\pi \in \Bcal} \sum_{n=1}^N F_n(\pi)
=
\min\rbr{ \sum_{n=1}^N F_n(\pi_L), \sum_{n=1}^N F_n(\pi_R) }
= 
\min\rbr{ P_L , P_R}
\leq 
\frac{N}{2},
\]
where the inequality uses the observation that $P_L + P_R = N$ and therefore $\min(P_L, P_R) \leq \frac{N}{2}$.

Together we obtain 
$SReg_N(\Bcal) = \sum_{n=1}^N F_n(\pi_n) - \min_{\pi \in \Bcal}  \sum_{n=1}^N F_n(\pi) \geq
\frac{N (H-1)}{H}-\frac{N}{2}
=
\frac{N(H-2)}{2H}$, which is linear in $N$ when $H \geq 3$.

\paragraph{For the feedback of the form $\feedback(s,a) = A^E(s,a).$} 

By bringing in $\feedback(s,a) = A^E(s,a)$, we obtain 
$\iterloss_n(\pi) = \EE_{s \sim d_{\pi_n}}\EE_{a \sim \pi(\cdot|s)} \sbr{A^E(s,a)}$, following the trajectories of $h_L, h_R$, it can be seen that:
\begin{itemize}
    \item when $\pi_n =h_L$,  
$\iterloss_n(h_L) = \frac{H-1}{H}$,$\iterloss_n(h_R) =0 $.
    \item  when $\pi_n =h_R$, $\iterloss_n(h_R) = \frac{H-1}{H}$, $\iterloss_n(h_L) = 0$.
\end{itemize}  
This implies $\forall \{\pi_n\}_{n=1}^N \in \{h_L,h_R\}^N$,  
$ \sum_{n=1}^N \iterloss_n(\pi_n) =  \frac{ N (H-1)}{H}.$

On the other hand,  $\forall \{\pi_n\}_{n=1}^N \in \{h_L,h_R\}^N$, define $C_L :=  \sum_{n=1}^NI(\pi_n = h_L)$, and  $C_R :=  \sum_{n=1}^NI(\pi_n = h_R)$, where $C_L + C_R = N$. 

For the benchmark term, we have 
\[
\min_{\pi \in \Bcal} \sum_{n=1}^N F_n(\pi)
=
\min\rbr{ \sum_{n=1}^N F_n(\pi_L), \sum_{n=1}^N F_n(\pi_R) }
= 
\min\rbr{C_L \frac{H-1}{H}, C_R \frac{H-1}{H}}
\leq 
\frac{N}{2} \cdot \frac{H-1}{H},
\]
where the inequality uses the observation that $C_L + C_R = N$ and therefore $\min(C_L, C_R) \leq \frac{N}{2}$.

In conclusion, any online proper learning algorithm satisfies 
$\SReg_N(\Bcal) = \sum_{n=1}^N \iterloss_n(\pi_n) - \min_{\pi \in \Bcal}  \sum_{n=1}^N \iterloss_n(\pi) \geq
\frac{ N (H-1)}{H}-\frac{ N(H-1)}{2H}
=
\frac{ N(H-1)}{2H},$
i.e., it suffers regret $ \SReg_N(\Bcal) = \Omega(N)$ which is linear in $N$ when $H\geq 2$.
\end{proof}

\subsection{Deferred materials from Section~\ref{sec:mixture-class}}
\label{sec:mixture-class-deferred}

\paragraph{An alternative mixed policy class and its issues.} Prior work \cite{sun2017deeply}[Theorem 5.3]
propose to use an alternative definition of mixed policy class
\[
\tilde{\Pi}_\Bcal = \cbr{ \ampolicy_u  : u \in \Delta(B) },
\]
where policy $\ampolicy_u$ is executed in an an episode of an MDP by: draw $h \sim u$ at the beginning of the episode, and execute policy $h$ throughout the episode. 
Importantly, $\ampolicy_u$ is not a stationary policy; as a result, $\{a_t\}_{t=1}^H$ are {\em dependent} conditioned on $\{s_t\}_{t=1}^H$; $\{a_t\}_{t=1}^H$ are only conditionally independent given $\{s_t\}_{t=1}^H$ and $h$.

By the definition of $\ampolicy_u$, $J(\ampolicy_u)$ is a weighted combination of $J(h)$ over $h \in \Bcal$, which can be written as $J(\ampolicy_u) = \sum_{h \in \Bcal} u[h] \cdot J(h)$. 
\cite{sun2017deeply}[Theorem 5.3] propose to perform online  optimization over the following losses, 
$\tilde{F}_n(\ampolicy_u) :=  \EE_{s \sim d_{\ampolicy_n}} \sbr{ \sum_{h \in \Bcal}u[h]\cdot A^E(s,h(s)) }$, 
where $\ampolicy_n$ denotes $\ampolicy_{u_n}$; specifically, they output a sequence of $\cbr{u_n}_{n=1}^N \subset \Delta(\Bcal)$,
\[
\sum_{n=1}^N \tilde{F}_n(\ampolicy_{n})
- 
\min_{u \in \Delta(\Bcal)} \sum_{n=1}^N \tilde{F}_n(\ampolicy_{u}) \leq \mu\sqrt{N\log(B)}.
\]
where $\mathop{\sup}\limits_{s,a}|A^E(s,a) |\leq \mu$. 

We show by our MDP example in Figure~\ref{fig:covers impossibility in imitation learning} above, in general,
$
J(\ampolicy_n) - J(\pi^E) 
\neq 
H \cdot \tilde{F}_n(\ampolicy_n) ,
$
which implies that, an online optimization guarantee for $\{ \tilde{F}_n(\ampolicy_u)\}_{n=1}^N$ cannot be converted to a policy suboptimality guarantee.
In contrast, in our \algreduct framework, with the setting of $\feedback = A^E$, we always have that $J(\pi_n) - J(\pi^E) = H\cdot F_n(\pi_n)$, which guarantees the conversion.\footnote{Note that the performance difference lemma (Lemma~\ref{lem: performance difference lemma}) requires the two policies in comparison to be stationary.}

Consider the MDP in the proof of Theorem~\ref{thm:proper-linear-reg-proof} and $u_n = (0.5,0.5)$. 
Here, policy $\ampolicy_n$ is executed by picking $h \in \{h_L,h_R\}$ uniformly at random and executing $h$ through the whole episode. 
Since $J(h_L) = J(h_R)= H-1$ and $J(\pi^E) =0$, we obtain 
$J(\ampolicy_n) - J(\pi^E)= \sum_{h \in \{h_L,h_R\}} u_n[h] \cdot \rbr{ J(h) -  J(\pi^E)} = H-1$.
On the other side, it can be shown that $d_{\ampolicy_n}$ distributes on $\{S_0,S_L,S_R\}$ with probability weight $(\frac{1}{H}, \frac{H-1}{2H}, \frac{H-1}{2H})$, where 
\[ \sum_{h \in \{h_L,h_R\}}u_n[h]\cdot A^E(S_0,h(s)) = 0,
\]
and 
\[ \sum_{h \in \{h_L,h_R\}}u_n[h]\cdot A^E(S_L,h(s)) = \sum_{h \in \{h_L,h_R\}}u_n[h]\cdot A^E(S_R,h(s)) =\frac{1}{2}.
\]
Thus it can be verified that $\tilde{F}_n(\ampolicy_n) = \EE_{s \sim d_{\ampolicy_n}} \sbr{ \sum_{h \in \{h_L,h_R\}}u_n[h]\cdot A^E(s,h(s)) } = 2\cdot\frac{H-1}{2H} \cdot \frac{1}{2}
=\frac{H-1}{2H} $. By this we conclude $
J(\ampolicy_n) - J(\pi^E) 
\neq 
H \cdot \tilde{F}(\ampolicy_n) 
$.

\paragraph{Proof of Proposition~\ref{prop:il-olo}.} We begin by stating a more precise version of Proposition~\ref{prop:il-olo}.

\begin{proposition}[Restatement of Proposition~\ref{prop:il-olo}]
\label{prop:il-olo-restated}
For any $\delta \in (0,1]$, if Algorithm~\ref{alg:reduce-olo} uses online linear optimization algorithm $\oloa$ that outputs $\cbr{u_n}_{n=1}^N \subset \Delta(\Bcal)^N$ s.t. with probability at least $1-\delta/3$, 
\[ 
\SLReg_N = \sum_{n=1}^N \langle \est_n, u_n \rangle - \min_{u \in \Delta(\Bcal)} \sum_{n=1}^N \langle \est_n, u \rangle \leq \mathrm{Reg}(N).
\]
Then, with probability at least $1-\delta$, its output policies $\cbr{\pi_n}_{n=1}^N$ satisfy
\[
\SReg_N(\Bcal)
\leq 
\mathrm{Reg}(N)
+
2\mu\sqrt{\frac{2N \ln(\frac{6}{\delta})}{K}}
+
2\mu\sqrt{2N\frac{\ln (B)+\ln(\frac{6}{\delta})}{K}}
=
\mathrm{Reg}(N) + O\rbr{ \mu \sqrt{\frac{N\ln(B/\delta)}{K}} }.
\]
\end{proposition}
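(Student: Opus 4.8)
The plan is to convert the online-IL static regret $\SReg_N(\Bcal)$ into \oloa's online linear optimization regret $\SLReg_N$ plus two ``bias'' terms that arise because \algreduct feeds \oloa the empirical losses $\est_n$ rather than the population losses $\theta(u_n)$, and then to control those two terms by martingale concentration. Concretely, using the two identities established in Section~\ref{sec:mixture-class}, namely $\sum_{n=1}^N\iterloss_n(\pi_n)=\sum_{n=1}^N\inner{\theta(u_n)}{u_n}$ and $\min_{\pi\in\Bcal}\sum_{n=1}^N\iterloss_n(\pi)=\min_{u\in\Delta(\Bcal)}\sum_{n=1}^N\inner{\theta(u_n)}{u}$, I would add and subtract $\sum_n\inner{\est_n}{u_n}$ and $\min_u\sum_n\inner{\est_n}{u}$ to obtain
\[
\SReg_N(\Bcal)
=
\underbrace{\sum_{n=1}^N\inner{\theta(u_n)-\est_n}{u_n}}_{(\mathrm{I})}
\;+\;
\SLReg_N
\;+\;
\underbrace{\Big(\min_{u\in\Delta(\Bcal)}\sum_{n=1}^N\inner{\est_n}{u}-\min_{u\in\Delta(\Bcal)}\sum_{n=1}^N\inner{\theta(u_n)}{u}\Big)}_{(\mathrm{II})}.
\]
On the event of probability at least $1-\delta/3$ on which $\SLReg_N\le\mathrm{Reg}(N)$, it then suffices to bound $(\mathrm{I})$ and $(\mathrm{II})$ each by $O(\mu\sqrt{N\ln(B/\delta)/K})$.

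The next step is to set up the filtration correctly. Let $\mathcal{F}_n$ be the $\sigma$-algebra generated by the samples $\incrdata_1,\dots,\incrdata_n$ (together with \oloa's internal randomness through round $n$). Since $u_n=\oloa(\{\est_i\}_{i=1}^{n-1})$, $u_n$ is $\mathcal{F}_{n-1}$-measurable; and since $\incrdata_n$ consists of $K$ i.i.d.\ draws from $\incrdist_{\pi_{u_n}}$ with expert annotation, $\EE[\est_n\coord{h}\mid\mathcal{F}_{n-1}]=\theta(u_n)\coord{h}$ for every $h\in\Bcal$, while every per-sample quantity involved lies in $[-\mu,\mu]$ because $-\mu\le A^E(s,a)\le\feedback(s,a)\le\mu\cdot I(a\neq\pi^E(s))\le\mu$. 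For $(\mathrm{I})$: observe that $\inner{\est_n}{u_n}$ is the average of $K$ i.i.d.\ terms $\EE_{a\sim\pi_{u_n}(\cdot\mid s)}[\feedback(s,a)]$ with $s\sim d_{\pi_{u_n}}$, each lying in $[-\mu,\mu]$ and having conditional mean $\inner{\theta(u_n)}{u_n}$; hence $(\mathrm{I})$ equals $\tfrac1K$ times a sum of $NK$ martingale differences each bounded by $2\mu$ in absolute value, and Azuma--Hoeffding over these $NK$ increments gives $|(\mathrm{I})|\le 2\mu\sqrt{2N\ln(6/\delta)/K}$ with probability at least $1-\delta/3$. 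For $(\mathrm{II})$: since $\min f-\min g\le\max(f-g)$ and a linear function over the simplex $\Delta(\Bcal)$ attains its maximum at a vertex, $(\mathrm{II})\le\max_{h\in\Bcal}\sum_{n=1}^N(\est_n\coord{h}-\theta(u_n)\coord{h})$, and for each fixed $h$ the inner sum is again $\tfrac1K$ times a sum of $NK$ bounded martingale differences; Azuma--Hoeffding followed by a union bound over the $B$ policies in $\Bcal$ gives $(\mathrm{II})\le 2\mu\sqrt{2N(\ln B+\ln(6/\delta))/K}$ with probability at least $1-\delta/3$.

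Finally, a union bound over the three events (each failing with probability at most $\delta/3$) yields, with probability at least $1-\delta$,
\[
\SReg_N(\Bcal)\le\mathrm{Reg}(N)+2\mu\sqrt{\tfrac{2N\ln(6/\delta)}{K}}+2\mu\sqrt{\tfrac{2N(\ln B+\ln(6/\delta))}{K}}=\mathrm{Reg}(N)+O\!\left(\mu\sqrt{\tfrac{N\ln(B/\delta)}{K}}\right),
\]
which is the claimed bound. I do not anticipate a substantive obstacle here: the only point requiring care is the bookkeeping of the filtration, so that simultaneously $u_n$ is measurable with respect to the past while $\est_n$ is conditionally unbiased for $\theta(u_n)$ — after which everything reduces to bounded-difference concentration plus the simplex-vertex observation. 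The one subtlety worth flagging is that one must split each round's contribution into its $K$ constituent i.i.d.\ samples to extract the $1/\sqrt{K}$ factor, rather than treating $\est_n$ as a single bounded increment.
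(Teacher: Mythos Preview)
Your proposal is correct and follows essentially the same approach as the paper: the three-term decomposition $(\mathrm{I})+\SLReg_N+(\mathrm{II})$ is exactly the paper's $(1)+(2)+(3)$ (in notation $F_n(\pi_u)=\inner{\theta(u_n)}{u}$, $f_n(\pi_u)=\inner{\est_n}{u}$), and both terms are handled by splitting into the $NK$ per-sample increments, applying Azuma--Hoeffding, and for $(\mathrm{II})$ a union bound over $\Bcal$. The only cosmetic difference is that for $(\mathrm{II})$ you invoke $\min f-\min g\le\max_h(f-g)$ directly, whereas the paper instantiates at the specific minimizer $h^*\in\argmin_h\sum_n F_n(h)$; these are equivalent and yield the same constants.
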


\begin{proof}
Recall that in online IL, the loss at round $n$ is $\iterloss_n(\pi) = \EE_{s \sim d_{\pi_n}} \EE_{a \sim \pi(\cdot \mid s)} \sbr{ \feedback(s,a) }$; 
For $\pi_u\in \Pi_\Bcal$,
\[
\iterloss_n(\pi_u) 
=
\sum_{h \in \Bcal} u\coord{h} \cdot \EE_{s \sim d_{\pi_v}} \EE_{a \sim h(\cdot \mid s)}  \sbr{ \feedback(s,a) }
=
\inner{\theta(u_n)}{u},
\]
where $\theta(v) := \rbr{ \EE_{s \sim d_{\pi_n}}\EE_{a \sim h(\cdot \mid s)}  \sbr{ \zeta_E(s,a) } }_{h \in \Bcal}$.

In \algreduct, $\est_n = \rbr{ \EE_{(s,\vec{c}) \sim \incrdata_n}\EE_{a \sim h(\cdot \mid s)} \sbr{ \vec{c}(a) } }_{h \in \Bcal}$ is our unbiased estimator for $\theta(u_n)$. By defining 
$\ff_n(\pi) :=\EE_{(s,\vec{c}) \sim \incrdata_n}\EE_{a \sim \pi(\cdot|s)}\sbr{\vec{c}(a) }$, it can be seen that
$\ff_n(\pi_u) = \sum_{h \in \Bcal} u\coord{h} \cdot \EE_{(s,\vec{c}) \sim \incrdata_n} \EE_{a \sim h(\cdot \mid s)} \sbr{ \vec{c}(a) }= \langle{\est_n},{u}\rangle$. 

Since the static regret is defined as $\SReg_N(\Bcal)  
= \sum_{n=1}^N \iterloss_n(\pi_n) - \min_{h\in \Bcal} \sum_{n=1}^N \iterloss_n(h) $, where 
\[
\min_{h \in \Bcal}
\sum_{n=1}^N\iterloss_n(h)
=
\min_{u \in \Delta(\Bcal)} \sum_{n=1}^N \inner{\theta(u_n)}{u} 
= 
\min_{u \in \Delta(\Bcal)}
\sum_{n=1}^N\iterloss_n(\pi_u) 
=
\min_{\pi \in \Pi_{\Bcal}}
\sum_{n=1}^N\iterloss_n(\pi).
\]
We write the static regret as
\[
\begin{aligned}
&\SReg_N(\Bcal)  
 =\sum_{n=1}^N \iterloss_n(\pi_n) - \min_{\pi \in \Pi_{\Bcal}} \sum_{n=1}^N \iterloss_n(\pi) \\
& =
\underbrace{\sum_{n=1}^N (\iterloss_n(\pi_n) - \ff_n (\pi_n) )}_{(1)}
+
\underbrace{\sum_{n=1}^N \ff_n (\pi_n)  - \min_{\pi \in  \Pi_{\Bcal}} \sum_{n=1}^N \ff_n(\pi)}_{(2)} 
+
\underbrace{\min_{\pi \in \Pi_{\Bcal}} \sum_{n=1}^N \ff_n(\pi) - 
\min_{\pi \in  \Pi_{\Bcal}} \sum_{n=1}^N \iterloss_n(\pi)}_{(3)}.
\end{aligned}
\]
We will bound each term respectively. First, for (2), we recognize that it equals to $\SLReg_N = \sum_{n=1}^N \langle \est_n, u_n \rangle - \min_{u \in \Delta(\Bcal)} \sum_{n=1}^N \langle \est_n, u \rangle$, which is at most $\mathrm{Reg}(N)$ with probability at least $1-\delta/3$ by the assumptions on $\oloa$.

We now bound the remaining two terms. Before going into details, we index each cost-sensitive examples as $(s_{n,k},\vec{c}_{n,k})$ for the $k$-th sample that drawn from the $k$-th rollout trajectory at the $n$-th round, where $k \in [K]$, $n \in [N]$ and $\vec{c}_{n,k} = \rbr{\feedback(s_{n,k},a)}_{a\in \Acal}$. With this notation, we can write cost-sensitive examples generated at round $n$ as $\incrdata_n = \rbr{(s_{n,k},\vec{c}_{n,k})}_{k = 1}^K$ and write $\ff_n(\pi)=\EE_{(s,\vec{c}) \sim \incrdata_n}\EE_{a \sim \pi(\cdot|s)}\sbr{\vec{c}(a) }=\frac 1 K \sum_{k=1}^K\EE_{a \sim \pi(\cdot \mid s_{n,k})}\sbr{\feedback(s_{n,k},a)}$. 

Also, denote $\EE_{n,k} \sbr{Y}$ as the conditional expectation of random variable $Y$ on all history before the $k$-th rollout of the $n$-th round. More precisely, denote by $\Ucal_{n,k} = \cbr{ s_{n',k'}: (n',k') \preceq (n, k) }$, where $\preceq$ denotes precedence in dictionary order, i.e., $(n_1, k_1) \preceq (n_2, k_2)$ if and only if $n_1 < n_2$, or $n_1 = n_2$ and $k_1 \leq k_2$; and 
$\EE_{n,k} \sbr{ \cdot } := \EE \sbr{ \cdot \mid \Ucal_{n,k-1} }$. As a convention, denote by $\Ucal_{n,0} := \Ucal_{n-1,K}$.
By the assumption of $\forall s \in \Scal, \forall a \in \Acal$, $A^E(s,a)\leq \feedback(s,a)\leq \mu \cdot I(a \neq \pi^E(s))$ and $\abr{A^E(s,a)} \leq \mu $ (recall Section~\ref{sec:prelims}), we have that $|\feedback(s,a)| \leq \mu$ for all $s,a$.  


\paragraph{Term (1): $\sum_{n=1}^N (\iterloss_n(\pi_n) - \ff_n (\pi_n) )$.} We define 
$Y_{n,k} := \iterloss_n(\pi_n) -\EE_{a \sim \pi_n(\cdot \mid s_{n,k})} \sbr{\feedback(s_{n,k},a)} $ 
where $s_{n,k} \sim d_{\pi_n}$. It can be seen from the representation of 
$\ff_n(\pi)=\frac 1 K \sum_{k=1}^K\EE_{a \sim \pi(\cdot \mid s_{n,k})}\sbr{\feedback(s_{n,k},a)}$ that 
\[ 
\iterloss_n(\pi_n)-\ff_n (\pi_n)
=
\frac 1 K \sum_{k=1}^K\rbr{\iterloss_n(\pi_n) -\EE_{a \sim \pi_n(\cdot \mid s_{n,k})}\sbr{\feedback(s_{n,k},a)}}
=
\frac{1}{K}\sum_{k=1}^K Y_{n,k} .
\]
Since $\pi_n$ only depends on history until $n-1$ round, and $s_{n,k}$ are iid drawn from $d_{\pi_n}$, we have  
\[
\begin{aligned}
\EE_{n,k} \sbr{Y_{n,k}} 
=&
\EE \sbr{ \iterloss_n(\pi_n) -\EE_{a \sim \pi_n(\cdot \mid s_{n,k})} \sbr{\feedback(s_{n,k},a)} \mid \Ucal_{n,k-1} }\\
=&
\EE \sbr{ \iterloss_n(\pi_n) -\EE_{s \sim d_{\pi_n}}\EE_{a \sim \pi_n(\cdot \mid s_{n,k})} \sbr{\feedback(s,a)} \mid \Ucal_{n-1,K} }\\
=&
\EE \sbr{ \iterloss_n(\pi_n) -\iterloss_n(\pi_n) \mid \Ucal_{n-1,K} }
= 0 .
\end{aligned}
\]

By applying  $\|\feedback(s_{n,k},\cdot)\|_{\infty} \leq \mu$, we have

\[
\begin{aligned}
|Y_{n,k}|
&=
|\iterloss_n(\pi_n)
-
\EE_{a \sim \pi_n(\cdot \mid s_{n,k})}\sbr{\feedback(s_{n,k},a)}|\\
&=  
|\EE_{s \sim d_{\pi_n}}\EE_{a \sim \pi_n(\cdot|s) }\sbr{\feedback(s,a) } 
-
\inner{\feedback(s_{n,k},\cdot)}{\pi_n(\cdot \mid s_{n,k})} |\\
&\leq
|\EE_{s \sim d_{\pi_n}}\inner{\feedback(s,\cdot)}{\pi_n(\cdot \mid s)}|
+
\|\feedback(s_{n,k},\cdot)\|_\infty \\
&\leq
2\|\feedback(s_{n,k},\cdot)\|_\infty
\leq
2\mu.
\end{aligned}
\]
This implies the sequence of random variables $\{Y_{1,1},Y_{1,2}, \cdots, Y_{1,K}, Y_{2,1}, \cdots, Y_{N,K} \}$ form a martingale difference sequence. Applying Azuma-Hoeffding's  inequality, we get with probability at least $1-\delta/3$, 
\[
\abr{\sum_{n=1}^N (\iterloss_n(\pi_n) - \ff_n (\pi_n) )}
= 
\frac{1}{K}\abr{\sum_{n=1}^N\sum_{k = 1}^{K} Y_{n,k} }
\leq
2\mu\sqrt{\frac{2N {\ln}(\frac{6}{\delta})}{K}} .
\]

\paragraph{Term (3): $\min_{\pi \in  \Pi_{\Bcal}} \sum_{n=1}^N \ff_n(\pi) - 
\min_{\pi \in  \Pi_{\Bcal}} \sum_{n=1}^N \iterloss_n(\pi)$.} Similar to term (1), for any $h \in \base$, we define $\hat{Y}_{n,k}(h) := \iterloss_n(h)- c_{n,k}(h(s_{n,k}),s_{n,k})$ where $s_{n,k} \sim d_{\pi_n}$. Also, we have that $\ff_n(\pi)=\frac 1 K \sum_{k=1}^K\EE_{a \sim \pi(\cdot \mid s_{n,k})}\sbr{\feedback(s_{n,k},a)}$ , which implies $\iterloss_n(h) - \ff_n(h)  = \frac{1}{K}\sum_{k=1}^K \hat{Y}_{n,k}(h)$. 
Following the same analysis shown in term (1), it can be shown that $\EE_{n,k} \sbr{\hat{Y}_{n,k}(h)} = 0$ and  $|\hat{Y}_{n,k}(h)| \leq  2\mu$. By applying Azuma-Hoeffding's  inequality, we get for any given $h \in \Bcal$, with probability at least $1-\frac \delta {3B}$ (recall that $B =|\Bcal|$), 
\[
\abr{\sum_{n=1}^N (\iterloss_n(h) - \ff_n (h) )}
= 
\frac{1}{K}\abr{\sum_{n=1}^N\sum_{k = 1}^{K} \hat{Y}_{n,k} (h)}
\leq
2\mu\sqrt{2N\frac{\ln (B) + \ln(\frac{6}{\delta})}{K}} .
\]
By applying union bound over all $h \in \Bcal$, we get with probability at least $1-\frac{\delta}{3}$,
$
\sum_{n=1}^N (\iterloss_n(h) - \ff_n (h) )
\leq
2\mu\sqrt{2N\frac{\ln(B)+\ln(\frac{6}{\delta})}{K}}
$, $\forall h \in \Bcal$. Also, by the fact that $\ff_n(\pi) =\langle{\est_n},{u}\rangle $, it can be shown that
\[
\min_{\pi \in \Pi_\Bcal} \sum_{n=1}^N \ff_n(\pi) 
=
\min_{u \in \Delta(\Bcal)} \sum_{n=1}^N \langle{\est_n},{u}\rangle 
=
\min_{h \in \Bcal} \sum_{n=1}^N \ff_n(h) .
\]
Since $\min_{\pi \in \Pi_\Bcal} \sum_{n=1}^N \iterloss_n(\pi)  =\min_{h \in \Bcal} \sum_{n=1}^N \iterloss_n(h)$, by denoting $h^*\in \argmin_{h \in \Bcal} \sum_{n=1}^N \iterloss_n(h)$, we conclude with probability at least $1-\delta/3$ ,
\[
\begin{aligned}
\min_{\pi \in \Pi_\Bcal} \sum_{n=1}^N \ff_n(\pi) - 
\min_{\pi \in \Pi_\Bcal} \sum_{n=1}^N \iterloss_n(\pi)
= &
\min_{h \in \Bcal} \sum_{n=1}^N \ff_n(h) - 
\min_{h \in \Bcal} \sum_{n=1}^N \iterloss_n(h)\\
= &
\min_{h \in \Bcal} \sum_{n=1}^N \ff_n(h) 
- 
\sum_{n=1}^N \ff_n(h^*)
+
\sum_{n=1}^N (\ff_n(h^*) - \iterloss_n (h^*) )\\
\leq &
0 + 2\mu\sqrt{2N\frac{\ln (B)+\ln(\frac{6}{\delta})}{K}} .
\end{aligned}
\]

Finally, by combining our high probability bounds on terms (1),(2), and (3), applying union bound,
we conclude that with probability at least $1-\delta$, 
\[
\begin{aligned}
\SReg_N(\Bcal)  
\leq &
\mathrm{Reg}(N)
   +
   2\mu\sqrt{\frac{2N \ln(\frac{6}{\delta})}{K}}
   +
   2\mu\sqrt{2N\frac{\ln (B)+\ln(\frac{6}{\delta})}{K}}
   =
\mathrm{Reg}(N) + O\rbr{ \mu \sqrt{\frac{N\ln(B/\delta)}{K}} }.
   \end{aligned}
    \qedhere
\]
\end{proof}

\section{Deferred materials from Section~\ref{sec:static-reg}}
\label{sec:static-reg-deferred}

\subsection{Deferred materials from Section~\ref{sec:reg-sqrt-n}}
\label{sec:reg-sqrt-n-deferred}


\paragraph{A more precise version of Lemma~\ref{lem: ftpl_approximation-main}.}
Denote by $\ell = (\ell_x)_{x \in \Xcal}$ and $q(\ell) = (\sum_{x\in \Xcal}\ell_x(h(x)))_{h\in \Bcal}$.
Define $\Phi_\mathcal{N}: \RR^B \to \RR$ as:
\begin{equation}
\Phi_\mathcal{N}(\Theta) = \EE_{ \ell \sim \mathcal{N}(0,I_{XA})} \sbr{ \mathop{\max} \limits_{u \in \Delta(\Bcal)} \inner{\Theta + q(\ell)}{u}}.
\label{eq:phi}
\end{equation}


Also, define $R_{\Ncal}: \RR^B \to \RR \cup \cbr{+\infty}$ as $\Phi_\Ncal$'s Fenchel conjugate: 
\begin{equation}
R_{\Ncal}(u) 
= 
\Phi_\mathcal{N}^*(u)
=
\sup_{\tilde{\Theta}\in \mathbb{R}^B} \inner{\tilde{\Theta}}{u} - \Phi_\mathcal{N}(\tilde{\Theta}).
\label{eqn:r}
\end{equation}

We will need the following two lemmas that establish properties of $\Phi_{\Ncal}$ and $R$ useful in the proof of Lemma~\ref{lem: ftpl_approximation-main}; for their proofs, please refer to Section ~\ref{sec:ftrl-ex-reg}.

\begin{lemma}
\label{lem: differentiable function}
$\Phi_\mathcal{N}(\Theta)$ is differentiable and $\nabla\Phi_{\mathcal{N}}(\Theta) = \EE_{\ell \sim \mathcal{N}(0,I_{XA})}\sbr{\mathop{\argmax}\limits_{u\in \Delta(\Bcal)} \left\langle  \Theta + q(\ell) , u\right\rangle}$. 
\end{lemma}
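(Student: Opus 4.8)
The plan is to read $\Phi_{\mathcal{N}}$ as a Gaussian-smoothed support function of the simplex and to invoke the classical result of~\cite{bertsekas1973stochastic} on differentiating an expectation of convex functions (this is the device underlying the FTPL-as-FTRL viewpoint of~\cite{abernethy2014online}). Writing $F(\Theta,\ell) := \max_{u \in \Delta(\Bcal)} \inner{\Theta + q(\ell)}{u}$, so that $\Phi_{\mathcal{N}}(\Theta) = \EE_{\ell \sim \mathcal{N}(0,I_{XA})}\sbr{F(\Theta,\ell)}$, I would first check that $\Phi_{\mathcal{N}}$ is finite and convex on $\RR^B$: for each fixed $\ell$, $F(\cdot,\ell)$ is the support function of $\Delta(\Bcal)$ pre-composed with the translation $\Theta \mapsto \Theta + q(\ell)$, hence convex and $1$-Lipschitz in $\|\cdot\|_\infty$; moreover $\abr{F(\Theta,\ell)} \le \|\Theta\|_\infty + \max_{h \in \Bcal} \abr{q(\ell)\coord{h}}$, and since each $q(\ell)\coord{h} = \sum_{x \in \Xcal}\ell_x(h(x))$ is a centered Gaussian of variance $X$, we get $\EE_\ell\sbr{\max_{h \in \Bcal}\abr{q(\ell)\coord{h}}} < \infty$. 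So $F(\Theta,\cdot)$ is integrable for every $\Theta$, $\Phi_{\mathcal{N}}$ is finite everywhere, and convexity passes through the expectation.

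The key step is to establish that, for each fixed $\Theta$, the maximizer $u^\star(\Theta,\ell) := \argmax_{u \in \Delta(\Bcal)}\inner{\Theta + q(\ell)}{u}$ is almost surely unique. A linear functional over $\Delta(\Bcal)$ is maximized at the single vertex $\Onehot(h^\star,\Bcal)$ whenever $\Theta + q(\ell)$ has a unique largest coordinate, and the maximizer is set-valued only on the event that $\Theta\coord{h} + q(\ell)\coord{h} = \Theta\coord{h'} + q(\ell)\coord{h'}$ for some distinct $h,h' \in \Bcal$. Fixing such a pair, the small-separator assumption (Assumption~\ref{def:small-sep}) provides $x \in \Xcal$ with $h(x) \neq h'(x)$, so that $q(\ell)\coord{h} - q(\ell)\coord{h'} = \sum_{x' \in \Xcal}\rbr{\ell_{x'}(h(x')) - \ell_{x'}(h'(x'))}$ contains the independent term $\ell_x(h(x)) - \ell_x(h'(x))$, which has variance $2$; hence $q(\ell)\coord{h} - q(\ell)\coord{h'}$ is a Gaussian with positive variance and $\Prob_\ell\rbr{q(\ell)\coord{h} - q(\ell)\coord{h'} = \Theta\coord{h'} - \Theta\coord{h}} = 0$. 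A union bound over the at most $B^2$ such pairs shows $u^\star(\Theta,\ell)$ is a well-defined one-hot vector with probability one, and on that event $F(\cdot,\ell)$ is differentiable at $\Theta$ with $\nabla_\Theta F(\Theta,\ell) = u^\star(\Theta,\ell)$, by the standard formula for the subdifferential of a support function (Danskin's theorem) composed with the translation $\Theta \mapsto \Theta + q(\ell)$.

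Putting the pieces together, $F(\cdot,\ell)$ is convex, integrable, and — at the fixed point $\Theta$ — differentiable for almost every $\ell$, with gradient $u^\star(\Theta,\ell) \in \Delta(\Bcal)$ uniformly bounded; the selection $\ell \mapsto u^\star(\Theta,\ell)$ is Borel measurable (off a null set it is a Borel function of $\Theta + q(\ell)$). These are exactly the hypotheses under which~\cite{bertsekas1973stochastic} gives $\partial \Phi_{\mathcal{N}}(\Theta) = \EE_\ell\sbr{\partial_\Theta F(\Theta,\ell)} = \cbr{\EE_\ell\sbr{u^\star(\Theta,\ell)}}$, a singleton, so $\Phi_{\mathcal{N}}$ is differentiable at $\Theta$ with $\nabla \Phi_{\mathcal{N}}(\Theta) = \EE_{\ell \sim \mathcal{N}(0,I_{XA})}\sbr{\argmax_{u \in \Delta(\Bcal)}\inner{\Theta + q(\ell)}{u}}$, as claimed. (One can also argue the interchange by hand: fix a coordinate direction $e_i$, apply the mean value theorem to $t \mapsto F(\Theta + t e_i,\ell)$, and pass to the limit $t \to 0$ via dominated convergence, justified by the uniform $\|\cdot\|_\infty$-Lipschitz bound on $F(\cdot,\ell)$.) I expect the interchange of differentiation and expectation to be the main technical point: almost-sure differentiability in $\ell$ does not by itself yield differentiability of the integral, and it is the uniform boundedness of $\nabla_\Theta F(\cdot,\ell)$ — together with the a.s.\ uniqueness of the maximizer, which is precisely where the separator set enters — that makes the argument go through.
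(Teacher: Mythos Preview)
Your proposal is correct and follows essentially the same approach as the paper: invoke~\cite{bertsekas1973stochastic} to reduce differentiability of $\Phi_{\mathcal{N}}$ to almost-sure uniqueness of $\argmax_{u\in\Delta(\Bcal)}\inner{\Theta+q(\ell)}{u}$, then establish that uniqueness by showing, via the separator set, that each tie event $\{\Theta\coord{h}+q(\ell)\coord{h}=\Theta\coord{h'}+q(\ell)\coord{h'}\}$ involves a nondegenerate Gaussian and hence has probability zero, followed by a union bound. Your write-up is more careful than the paper's (explicit integrability, measurability, and the alternative dominated-convergence route), but the skeleton is identical.
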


\begin{lemma}
\label{lem:sc_dual}
$R_{\Ncal}(u)$ is $\sqrt{\frac{\pi}{8}}{\frac{1}{ AX}}$-strongly convex with respect to $\| \cdot \|_1$. 
\end{lemma}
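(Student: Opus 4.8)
The plan is to establish the \emph{dual} statement — that the potential $\Phi_{\Ncal}$ is smooth with respect to $\|\cdot\|_\infty$ — and then invoke the standard duality between smoothness and strong convexity under Fenchel conjugation. Recall from \cite{kakade2012regularization} that a closed convex function $f$ is $\beta$-strongly smooth with respect to a norm $\|\cdot\|$ (i.e.\ $\|\nabla f(x)-\nabla f(y)\|_* \le \beta\|x-y\|$) if and only if its conjugate $f^*$ is $\tfrac1\beta$-strongly convex with respect to the dual norm $\|\cdot\|_*$. Since $\Phi_{\Ncal}$ is convex (an expectation of a max of affine functions), finite-valued hence continuous hence closed, and differentiable by Lemma~\ref{lem: differentiable function}, and since $R_{\Ncal}=\Phi_{\Ncal}^*$ by \eqref{eqn:r}, it suffices to prove
\[
\|\nabla\Phi_{\Ncal}(\Theta)-\nabla\Phi_{\Ncal}(\Theta')\|_1 \;\le\; \sqrt{\tfrac{8}{\pi}}\,AX\,\|\Theta-\Theta'\|_\infty \qquad\text{for all }\Theta,\Theta'\in\RR^B,
\]
because then $R_{\Ncal}$ is $\bigl(\sqrt{8/\pi}\,AX\bigr)^{-1}=\sqrt{\pi/8}\,\tfrac1{AX}$-strongly convex with respect to $\|\cdot\|_1$, as claimed.

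By Lemma~\ref{lem: differentiable function}, $\nabla\Phi_{\Ncal}(\Theta)\in\Delta(\Bcal)$ is exactly the law of $h^*(\Theta,\ell):=\argmax_{h\in\Bcal}\bigl(\Theta+q(\ell)\bigr)\coord{h}$ for $\ell\sim\mathcal{N}(0,I_{XA})$; this argmax is almost surely unique, since for $h\neq h'$ the difference $q(\ell)\coord{h}-q(\ell)\coord{h'}=\sum_{x\in\Xcal}(\ell_x(h(x))-\ell_x(h'(x)))$ is a nondegenerate Gaussian, its variance being $2|\{x\in\Xcal:h(x)\neq h'(x)\}|\ge 2$ by the separator property (Assumption~\ref{def:small-sep}). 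Coupling the two argmax distributions through a common draw of $\ell$ then gives $\|\nabla\Phi_{\Ncal}(\Theta)-\nabla\Phi_{\Ncal}(\Theta')\|_1 \le 2\,\Pr_{\ell}\bigl[h^*(\Theta,\ell)\neq h^*(\Theta',\ell)\bigr]$, so it remains to bound the probability that the maximizer changes when the offset vector moves from $\Theta$ to $\Theta'$.

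The heart of the argument is a union bound over the separator set together with one-dimensional Gaussian anti-concentration. If $h^*(\Theta,\ell)\neq h^*(\Theta',\ell)$ then, being distinct policies, they disagree on some $x\in\Xcal$; letting $a:=h^*(\Theta,\ell)(x)$ we get $h^*(\Theta,\ell)(x)=a$ and $h^*(\Theta',\ell)(x)\neq a$, so the event $\{h^*(\Theta,\ell)\neq h^*(\Theta',\ell)\}$ is contained in a union of only $XA$ events $E_{x,a}$, one per $(x,a)\in\Xcal\times\Acal$. To bound $\Pr[E_{x,a}]$, condition on every coordinate of $\ell$ \emph{except} $\ell_x(a)$: writing $Z_h=(\Theta+q(\ell))\coord{h}$ and $Z_h'=(\Theta'+q(\ell))\coord{h}$, after conditioning every $Z_h$ with $h(x)=a$ equals $\ell_x(a)+R_h$ for a fixed $R_h$, while every $Z_h$ with $h(x)\neq a$ is a fixed constant, and the same holds for $Z_h'$ with the fixed parts changed by at most $\|\Theta-\Theta'\|_\infty$ coordinatewise. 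Hence, using a.s.\ uniqueness of the conditional maximizers, $\{h^*(\Theta,\ell)(x)=a\}$ reduces to $\{\ell_x(a)>Q_0-P_0\}$ and $\{h^*(\Theta',\ell)(x)\neq a\}$ reduces to $\{\ell_x(a)<Q_0'-P_0'\}$, where $P_0=\max_{h(x)=a}R_h$, $Q_0=\max_{h(x)\neq a}Z_h$ and $P_0',Q_0'$ are their primed counterparts, with $|P_0-P_0'|\le\|\Theta-\Theta'\|_\infty$ and $|Q_0-Q_0'|\le\|\Theta-\Theta'\|_\infty$. Thus $E_{x,a}$ forces $\ell_x(a)$ into an interval of length at most $2\|\Theta-\Theta'\|_\infty$, and since conditionally $\ell_x(a)\sim\mathcal{N}(0,1)$ has density bounded by $1/\sqrt{2\pi}$ we get $\Pr[E_{x,a}\mid\text{conditioning}]\le \tfrac{2}{\sqrt{2\pi}}\|\Theta-\Theta'\|_\infty$. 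Taking expectations and summing over the $XA$ pairs yields $\Pr[h^*(\Theta,\ell)\neq h^*(\Theta',\ell)]\le \tfrac{2AX}{\sqrt{2\pi}}\|\Theta-\Theta'\|_\infty$, and plugging this into the coupling bound gives the smoothness constant $\tfrac{4AX}{\sqrt{2\pi}}=\sqrt{8/\pi}\,AX$, completing the proof.

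\textbf{Expected main obstacle.} The delicate step is the conditioning argument of the third paragraph: one must realize that conditioning on the single missing Gaussian coordinate $\ell_x(a)$ collapses the high-dimensional (and low-rank-perturbed) comparison of all $B$ scores into a one-dimensional threshold comparison, verify carefully that the two relevant argmax events become intervals in $\ell_x(a)$ of the stated length, and dispose of the various boundary/tie events (both unconditional and conditional on $\ell\setminus\{\ell_x(a)\}$) using the separator property so that they have probability zero. By contrast, the convexity and differentiability of $\Phi_{\Ncal}$, the coupling inequality, and the smoothness--strong-convexity duality are all routine.
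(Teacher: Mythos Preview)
Your proposal is correct and follows essentially the same approach as the paper: prove that $\Phi_{\Ncal}$ is $\sqrt{8/\pi}\,AX$-strongly smooth in $\|\cdot\|_\infty$ via the coupling bound $\|\nabla\Phi_{\Ncal}(\Theta)-\nabla\Phi_{\Ncal}(\Theta')\|_1\le 2\Pr[h^*(\Theta,\ell)\neq h^*(\Theta',\ell)]$, a union bound over $(x,a)\in\Xcal\times\Acal$, conditioning on $\ell\setminus\{\ell_x(a)\}$ to reduce to a one-dimensional threshold event of width at most $2\|\Theta-\Theta'\|_\infty$, and the standard Gaussian density bound; then invoke the smoothness/strong-convexity duality from \cite{kakade2012regularization}. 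The paper's Lemma~\ref{lem: beta strongly smooth of Gaussian perturbation} carries out exactly this computation (including the same edge case where $\{h:h(x)=a\}$ or its complement is empty, which you should also note but which is trivial), and the strong-convexity statement is then read off via Fact~\ref{fact:conj}.
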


\begin{lemma}
\label{lem:ftrl-ftpl}
$ 
 \mathop{\argmin} \limits_{u \in \Delta(\Bcal)} \rbr{  \langle{\Theta},{u} \rangle + R_{\Ncal}(u)}
  =
  \nabla \Phi_\mathcal{N}(-\Theta)
$.
\end{lemma}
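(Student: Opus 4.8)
The plan is to prove Lemma~\ref{lem:ftrl-ftpl} by Fenchel duality: since $R_{\Ncal} = \Phi_\mathcal{N}^*$ and $\Phi_\mathcal{N}$ is a proper, closed, convex, differentiable function, the minimizer of the regularized linear objective over $\Delta(\Bcal)$ coincides with the gradient of $\Phi_\mathcal{N}$ evaluated at the negated linear coefficient. First I would verify the regularity of $\Phi_\mathcal{N}$. For each fixed $\ell$, the map $\Theta \mapsto \max_{u \in \Delta(\Bcal)} \inner{\Theta + q(\ell)}{u}$ is a pointwise maximum of affine functions of $\Theta$, hence convex; taking expectation over $\ell \sim \mathcal{N}(0, I_{XA})$ preserves convexity, so $\Phi_\mathcal{N}$ is convex. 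It is also finite everywhere: $\Phi_\mathcal{N}(\Theta) \le \|\Theta\|_\infty + \EE_\ell\|q(\ell)\|_\infty < \infty$, while fixing any $u_0 \in \Delta(\Bcal)$ gives $\Phi_\mathcal{N}(\Theta) \ge \inner{\Theta}{u_0} + \EE_\ell\inner{q(\ell)}{u_0} = \inner{\Theta}{u_0} > -\infty$, using $\EE_\ell\, q(\ell) = 0$. A finite convex function on $\RR^B$ is continuous, hence proper and closed, so the biconjugate theorem yields $R_{\Ncal}^* = \Phi_\mathcal{N}^{**} = \Phi_\mathcal{N}$.

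Next I would rewrite the left-hand side of the claim as a maximization and invoke the conjugate--subdifferential correspondence. Since $\EE_\ell\, q(\ell) = 0$, Jensen's inequality gives $\Phi_\mathcal{N}(\Theta) \ge \max_{v \in \Delta(\Bcal)} \inner{\Theta}{v}$; hence for $u \notin \Delta(\Bcal)$, separating $u$ from the simplex produces a direction $d$ with $\inner{d}{u} > \max_{v \in \Delta(\Bcal)} \inner{d}{v}$, and scaling $d$ shows $R_{\Ncal}(u) = \sup_{\Theta}\rbr{\inner{\Theta}{u} - \Phi_\mathcal{N}(\Theta)} = +\infty$. Thus $\dom R_{\Ncal} \subseteq \Delta(\Bcal)$, so the constrained and unconstrained minimizations agree:
\[
\argmin_{u \in \Delta(\Bcal)}\rbr{\inner{\Theta}{u} + R_{\Ncal}(u)}
=
\mathop{\argmax}\limits_{u \in \RR^B}\rbr{\inner{-\Theta}{u} - R_{\Ncal}(u)}.
\]
Now apply the standard fact that for a proper closed convex $f$, $\hat u \in \mathop{\argmax}_{u}\rbr{\inner{y}{u} - f(u)}$ iff $y \in \partial f(\hat u)$ iff $\hat u \in \partial f^*(y)$, taking $f = R_{\Ncal}$, $f^* = \Phi_\mathcal{N}$, and $y = -\Theta$: the maximizing set equals $\partial \Phi_\mathcal{N}(-\Theta)$. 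By Lemma~\ref{lem: differentiable function}, $\Phi_\mathcal{N}$ is differentiable, so $\partial \Phi_\mathcal{N}(-\Theta) = \cbr{\nabla \Phi_\mathcal{N}(-\Theta)}$, and by the explicit formula there this vector is an expectation of vertices of $\Delta(\Bcal)$, hence lies in $\Delta(\Bcal)$. Combining the displays gives $\argmin_{u \in \Delta(\Bcal)}\rbr{\inner{\Theta}{u} + R_{\Ncal}(u)} = \nabla \Phi_\mathcal{N}(-\Theta)$; uniqueness of the minimizer is independently guaranteed by the strong convexity of $R_{\Ncal}$ (Lemma~\ref{lem:sc_dual}).

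The hard part will be the careful verification of the regularity conditions on $\Phi_\mathcal{N}$ — finiteness, properness, and closedness — together with the identification $\dom R_{\Ncal} \subseteq \Delta(\Bcal)$, since these are exactly what license the biconjugate identity $R_{\Ncal}^* = \Phi_\mathcal{N}$ and the subdifferential-of-conjugate duality; once these are in place, the remainder is routine manipulation of Fenchel conjugates.
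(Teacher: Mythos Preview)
Your approach is correct and matches the paper's: both use Fenchel duality, establishing that $\Phi_\mathcal{N}$ is proper, closed, and convex so that $R_\Ncal^* = \Phi_\mathcal{N}^{**} = \Phi_\mathcal{N}$, that $\dom R_\Ncal \subseteq \Delta(\Bcal)$ so the constrained and unconstrained problems agree, and then identifying the argmin with $\nabla \Phi_\mathcal{N}(-\Theta)$ via the conjugate--subdifferential correspondence. The paper packages the regularity checks into separate lemmas and invokes the strong convexity of $R_\Ncal$ through a general proposition, whereas you argue inline and appeal directly to the differentiability of $\Phi_\mathcal{N}$; these are dual routes to the same conclusion.

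One concrete slip to fix: in the domain argument you write ``Jensen's inequality gives $\Phi_\mathcal{N}(\Theta) \ge \max_{v \in \Delta(\Bcal)} \inner{\Theta}{v}$; hence \ldots scaling $d$ shows $R_{\Ncal}(u) = +\infty$.'' The Jensen \emph{lower} bound on $\Phi_\mathcal{N}$ yields only an \emph{upper} bound on $\inner{\Theta}{u} - \Phi_\mathcal{N}(\Theta)$, which cannot force the supremum to be $+\infty$. What the scaling argument actually needs is the \emph{upper} bound $\Phi_\mathcal{N}(\Theta) \le \max_{v \in \Delta(\Bcal)}\inner{\Theta}{v} + \Phi_\mathcal{N}(0)$, which follows from subadditivity of the max (not Jensen); with it, $\inner{td}{u} - \Phi_\mathcal{N}(td) \ge t\bigl(\inner{d}{u} - \max_v \inner{d}{v}\bigr) - \Phi_\mathcal{N}(0) \to +\infty$ as desired. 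Once you swap in the correct inequality, the rest of your proof goes through.
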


We are now ready to present a more precise version of Lemma~\ref{lem: ftpl_approximation-main}.


\begin{lemma}[A more precise version of Lemma~\ref{lem: ftpl_approximation-main}]
\label{lem: ftpl_approximation-restated}
Suppose Mixed CFTPL receives datasets $\cbr{\incrdata_i}_{i=1}^{n-1}$, separator set $\Xcal$, learning rate $\eta$, sparsification parameter $\Sp$. Then, $\forall \delta \in (0,1]$, with probability at least $1-\delta$, Mixed CFTPL makes $\Sp$ calls to the cost-sensitive oracle $\Ocal$, and 
outputs $u_n \in \Delta(\Bcal)$ s.t.
\begin{equation}
\| \pi_{u_n}(\cdot|s) -  \pi_{u^*_n}(\cdot|s) \|_1 \leq \sqrt{\frac{2A \rbr{ \ln(S)+\ln(\frac{2}{\delta} )} }{\Sp}}, \quad \forall s \in \Scal,
\label{eqn:pi_u_n-pi_u_n_star}
\end{equation}
with 
\begin{equation}
   u_n^* := \argmin_{u \in \Delta(\Bcal)} \rbr{  \inner{\eta \sum_{i=1}^{n-1}\est_i}{u} + R_{\Ncal}(u) }
   =
   \nabla\Phi_{\mathcal{N}}\rbr{ -\eta\sum_{i=1}^{n-1} \est_i }
   ,
   \label{eqn:u-n-star}
\end{equation}
for $\Phi_\Ncal$ and $R$ defined in Equations~\eqref{eq:phi} and~\eqref{eqn:r}. 

\label{lem: ftpl_approximation}
\end{lemma}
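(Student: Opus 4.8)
The plan is to first unpack what \algm's per-iteration oracle call actually computes, recognize its output as an empirical average of i.i.d.\ one-hot vectors drawn from the FTPL distribution whose mean is exactly $u_n^*$, and then control the deviation by a multinomial $\ell_1$-concentration inequality applied \emph{after} pushing forward to the per-state action distribution $\pi_{u_n}(\cdot\mid s)$ (which lives on $\Acal$, of size $A$) rather than to $u_n$ itself (which lives on $\Bcal$, of size $B$); this pushforward is what keeps the bound free of any $\sqrt{B}$ factor.

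First I would analyze a single iteration $j$. Fix the input datasets $\cbr{\incrdata_i}_{i=1}^{n-1}$, write $\Theta := -\eta\sum_{i=1}^{n-1}\est_i$, and let $e_h \in \Delta(\Bcal)$ denote the one-hot vector at $h$. Using $\est_i\coord{h} = \EE_{(s,\vec c)\sim\incrdata_i}\sbr{\vec c(h(s))}$, $|\incrdata_i| = K$, and $\sum_{x\in\Xcal}\ell_{x,j}(h(x)) = q(\ell_j)\coord{h}$, the empirical cost of $h$ on $(\cup_{i=1}^{n-1}\incrdata_i)\cup Z_j$ equals $\tfrac{K/\eta}{(n-1)K+X}\big(\eta\sum_{i=1}^{n-1}\est_i\coord{h} + q(\ell_j)\coord{h}\big)$. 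Since the leading factor is positive, $\Ocal$ returns $h_j := \argmin_h\big(\eta\sum_i\est_i\coord{h} + q(\ell_j)\coord{h}\big)$, so the output $u_{n,j} = e_{h_j}$ equals $\argmax_{u\in\Delta(\Bcal)}\inner{\Theta - q(\ell_j)}{u}$, which is a.s.\ unique by Assumption~\ref{def:small-sep}. Hence \algm makes exactly $\Sp$ calls to $\Ocal$, the vectors $u_{n,1},\dots,u_{n,\Sp}$ are i.i.d.\ over the fresh perturbations $\ell_j$, and --- since $-q(\ell_j)$ has the same law as $q(\ell_j)$ --- Lemma~\ref{lem: differentiable function} gives $\EE[u_{n,j}] = \nabla\Phi_\Ncal(\Theta)$. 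By Lemma~\ref{lem:ftrl-ftpl} this equals $u_n^* = \argmin_{u\in\Delta(\Bcal)}\big(\inner{\eta\sum_i\est_i}{u} + R_\Ncal(u)\big)$, so $u_n = \tfrac1\Sp\sum_{j=1}^\Sp u_{n,j}$ satisfies $\EE[u_n] = u_n^*$ and $\Sp\, u_n \sim \multinomial(\Sp, u_n^*)$.

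Next I would fix a state $s$ and push forward. Since $u_{n,j} = e_{h_j}$, we have $\pi_{u_n}(a\mid s) = \sum_{h:\,h(s)=a} u_n\coord{h} = \tfrac1\Sp\sum_{j=1}^\Sp I(h_j(s)=a)$, i.e.\ $\pi_{u_n}(\cdot\mid s)$ is the empirical distribution over $\Acal$ of the i.i.d.\ samples $h_1(s),\dots,h_\Sp(s)$, whose common law is $\pi_{u_n^*}(\cdot\mid s) = \big(\sum_{h:\,h(s)=a} u_n^*\coord{h}\big)_{a\in\Acal}$. The $\ell_1$-deviation inequality of Weissman et al.~\cite{weissman2003inequalities} for an empirical distribution on $A$ outcomes from $\Sp$ samples then gives $\Prob\big(\|\pi_{u_n}(\cdot\mid s) - \pi_{u_n^*}(\cdot\mid s)\|_1 \geq \epsilon\big) \leq 2^A e^{-\Sp\epsilon^2/2}$. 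A union bound over the $S$ states and solving $S\,2^A e^{-\Sp\epsilon^2/2} = \delta$ shows it suffices to take $\epsilon = \sqrt{2(\ln S + A\ln 2 + \ln(1/\delta))/\Sp}$, which is at most $\sqrt{2A(\ln S + \ln(2/\delta))/\Sp}$ whenever $A\geq 1$ (using $S\geq 1$, $\delta\leq 1$); combined with the oracle-call count this gives the lemma.

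The step requiring the most care is the first one: matching the rescaled, concatenated dataset $(\cup_i\incrdata_i)\cup Z_j$ fed to $\Ocal$ against the linear-optimization-under-Gaussian-perturbation oracle that defines $\nabla\Phi_\Ncal$ in Lemma~\ref{lem: differentiable function} --- in particular tracking the $K/\eta$ rescaling and the dataset-size normalization, the sign of $\Theta$, the harmless swap of $-q(\ell_j)$ for $q(\ell_j)$ by Gaussian symmetry, and the a.s.\ uniqueness of the minimizer via the separator set. Everything after the pushforward is the standard Weissman bound plus a union bound over states; the only genuine idea is to concentrate the $A$-dimensional object $\pi_{u_n}(\cdot\mid s)$ rather than the $B$-dimensional $u_n$, which is precisely the point highlighted right after the informal statement of the lemma.
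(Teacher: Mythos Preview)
Your proposal is correct and follows essentially the same approach as the paper: identify the oracle output $u_{n,j}$ with a sample from the FTPL distribution whose mean is $\nabla\Phi_{\Ncal}(\Theta)=u_n^*$ (via the dataset-rescaling calculation, Gaussian symmetry of $q(\ell)$, and Lemmas~\ref{lem: differentiable function} and~\ref{lem:ftrl-ftpl}), then push forward to the per-state action distribution and apply the Weissman et al.\ $\ell_1$-concentration bound with a union bound over the $S$ states. The only cosmetic difference is that the paper cites the concentration inequality in the form quoted by~\cite{qian2020concentration} (directly giving the $\sqrt{2A(\ln S+\ln(2/\delta))/\Sp}$ bound at per-state failure level $\delta/S$), whereas you start from the raw $2^A e^{-\Sp\epsilon^2/2}$ tail and then verify the resulting $\epsilon$ is dominated by the stated bound.
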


\begin{proof}

In the proof, we refer to results from online linear optimization, which can be checked in Section~\ref{sec:ftrl}.
By Lemma \ref{lem:ftrl-ftpl} and Lemma~\ref{lem: differentiable function}, for $R_\Ncal$ defined in Equation~\eqref{eqn:r} and $\Phi_\mathcal{N}$ defined in Equation~\eqref{eq:phi},
\begin{equation}
\begin{aligned}
u_n^* = &
\argmin_{u \in \Delta(\Bcal)} \rbr{  \inner{\eta \sum_{i=1}^{n-1}\est_i}{u}  + R_{\Ncal}(u) }\\
   = &
   \nabla\Phi_{\mathcal{N}} \rbr{ -\eta\sum_{i=1}^{n-1} \est_i }
   =
   \EE_{\ell \sim \mathcal{N}(0,I_{XA})} \sbr{ \mathop{\argmax} \limits_{u \in \Delta(\Bcal)} \inner{-\eta\sum_{i=1}^{n-1}  \est_i+ q(\ell)}{u}}.
 \end{aligned}
\label{eqn:u-n-star-equiv}
\end{equation}

We now turn to proving Equation~\eqref{eqn:pi_u_n-pi_u_n_star}.
Recall the definition of $u_n = \frac1\Sp \sum_{j=1}^{\Sp} u_{n,j}$ in Algorithm  \ref{alg:mftpl}, where 
$u_{n,j} = \Onehot(\Ocal((\cup_{i=1}^{n-1} \incrdata_i) \cup \cup Z_j), \Bcal)$ and $Z_j$ is induced by Gaussian random variables $\ell_{x,j}\sim \mathcal{N}(0,I_A)$ for each $x \in \Xcal$. Denote by $\ell_j = (\ell_{x,j})_{x \in \Xcal}$. 

Our proof consists of two steps: first, showing that $\EE_{\ell_j \sim \mathcal{N}(0,I_{XA})}  \sbr{u_{n,j}}=  u^*_n$; second, applying the concentration inequality for Multinoulli random variables on $\| \pi_{u_n}(\cdot|s) -  \pi_{u_n^*}(\cdot|s) \|_1$ for all $s \in \Scal$.

\paragraph{To begin with, we first prove $\EE_{\ell_{j}\sim \mathcal{N}(0,I_{XA})}  \sbr{u_{n,j}}=  u_n^*$.} Since each $\incrdata_i$ contains $K$ cost-sensitive examples, and $\Xcal$ has size $X$, we have  $(\cup_{i=1}^{n-1} \incrdata_i) \cup Z_j$ contains in total $(n-1)K+X$ examples. Since 
$\est_i[h] = \EE_{(s,\vec{c}) \sim \incrdata_i} \sbr{ \vec{c}(h(s)) } $ and $Z_{j} = \cbr{ (x, \frac{K}{\eta} \ell_{x,j} ): x \in \Xcal }$, by denoting $q_j\coord{h} := \sum_{x\in \Xcal}\ell_{x,j}(h(x))$, it can be seen that
\[
\begin{aligned}
& \EE_{(x,\vec{c}) \sim (\cup_{i=1}^{n-1} \incrdata_i)  \cup Z_j} \sbr{ \vec{c}(h(x))}  \\
 =& 
 \frac{1}{(n-1)K + X}\rbr{\sum_{i=1}^{n-1}\sum_{(x,\vec{c}) \in \incrdata_i} \vec{c}(h(x))+\sum_{(x,\vec{c}) \in Z_j} \vec{c}(h(x))}\\
  =& 
 \frac{K}{(n-1)K + X} \rbr{\sum_{i=1}^{n-1} \EE_{(x,\vec{c}) \sim \incrdata_i} \sbr{ \vec{c}(h(x))} }
 + \frac{1}{(n-1)K + X} \rbr{\sum_{x \in \Xcal}\frac{K}{\eta} \ell_{x,j}(h(x))}\\
  =&
 \frac{ K}{\eta((n-1)K + X)}\rbr{\eta\sum_{i=1}^{n-1}\est_i \coord{h} + q_j\coord{h}}.
  \end{aligned}
\]
By the definition of oracle $\Ocal$ Definition~\ref{def:oracle}
and $u_{n,j} = \Onehot(\Ocal( (\cup_{i=1}^{n-1} \incrdata_i)  \cup Z_j), \Bcal)$,
\[
\begin{aligned}
u_{n,j}
&= 
\Onehot\rbr{ \Ocal( (\cup_{i=1}^{n-1} \incrdata_i) \cup Z_j), \Bcal } \\
&= 
\Onehot\rbr{ \mathop{\argmin}\limits_{h \in \Bcal} \EE_{(x,\vec{c}) \sim  (\cup_{i=1}^{n-1} \incrdata_i)  \cup Z_j} \sbr{ \vec{c}(h(x))}, \Bcal }\\
&=
\Onehot\rbr{ \mathop{\argmin}\limits_{h \in \Bcal} \frac{K}{\eta((n-1)K + X)}(\eta\sum_{i=1}^{n-1} \est_i\coord{h}  + q_j\coord{h}) , \Bcal }\\
&=
\Onehot\rbr{ \mathop{\argmin}\limits_{h \in \Bcal}(\eta\sum_{i=1}^{n-1} \est_i\coord{h}  + q_j\coord{h}), \Bcal }\\
&=
\Onehot\rbr{ \mathop{\argmax}\limits_{h \in \Bcal}(-\eta\sum_{i=1}^{n-1} \est_i\coord{h}  - q_j\coord{h}), \Bcal }.
\end{aligned}
\]
By this we obtain 
\[
\EE_{\ell_{j}\sim \mathcal{N}(0,I_{XA})}  \sbr{u_{n,j}}
=
\EE_{\ell_{j}\sim \mathcal{N}(0,I_{XA})} \sbr{\Onehot \rbr{ \mathop{\argmax}\limits_{h \in \Bcal}(-\eta\sum_{i=1}^{n-1} \est_i\coord{h}  - q_j\coord{h}), \Bcal } }.
\]
On the other hand, we have by Equation~\eqref{eqn:u-n-star-equiv} that $u_n^* = \EE_{\ell \sim \mathcal{N}(0,I_{XA})} \sbr{ \mathop{\argmax}\limits_{u\in \Delta(\Bcal)} \left\langle -\eta\sum_{i=1}^{n-1} \est_i + q(\ell) , u\right\rangle}$. By lemma \ref{lem: differentiable function_restated} in Section~\ref{sec:ftrl}, under the distribution of $\ell \sim \mathcal{N}(0,I_{XA})$,  $\mathop{\argmin}\limits_{u\in \Delta(\Bcal)} \left\langle  -\eta \sum_{i=1}^{n-1} \est_i +  q(\ell) , u\right\rangle$ is unique with probability $1$, which is a one-hot vector. With this observation, we can write 
\[
\begin{aligned}
u_n^* 
=&
\EE_{\ell \sim \mathcal{N}(0,I_{XA})} \sbr{ \argmax\limits_{u\in \Delta(\Bcal)} \left\langle   -\eta\sum_{i=1}^{n-1} \est_i +  q(\ell) , u\right\rangle}\\
= & 
\EE_{\ell \sim \mathcal{N}(0,I_{XA})} \sbr{ \mathop{\argmax}\limits_{u\in \Delta(\Bcal)} \left\langle -\eta\sum_{i=1}^{n-1} \est_i -  q(\ell) , u\right\rangle} \\
=&
\EE_{\ell \sim \mathcal{N}(0,I_{XA})} \sbr{\Onehot(\mathop{\argmax}\limits_{h \in \Bcal}(-\eta\sum_{i=1}^{n-1} \est_i\coord{h}  + q\coord{h}), \Bcal)},
\end{aligned}
\]
where the first equality uses the following observation: 
Since $\ell$ is iid from $\mathcal{N}(0,I_{XA})$, $\ell$  and $-\ell$ are equal in distribution.
By observing $-q(\ell) = -\rbr{\sum_{x\in \Xcal}\ell_{x}(h(x)))}_{h \in \Bcal}=q(-\ell)$, we have that $q(\ell)$ and $-q(\ell)$ are equal in distribution.
This concludes $\EE_{\ell_{j}} \sbr{u_{n,j}}= u_n^*$.

\paragraph{Next, we bound $\|\pi_{u_n}(\cdot|s) - \pi_{u^*_n}(\cdot|s) \|_1 $.} 
 To this end, we show that for any $s \in \Scal$, $\pi_{u_{n,j}}(\cdot|s)$ is a Multinoulli random variable with expectation $\pi_{u^*_n}(\cdot|s)$, $\pi_{u}(\cdot|s) = \frac{1}{\Sp}\sum_{j = 1}^\Sp \pi_{u_{n,j}}(\cdot|s)$, and applying concentration inequality.

Since $u_{n,j} =\Onehot(\Ocal( (\cup_{i=1}^{n-1} \incrdata_i) \cup Z_j), \Bcal) $, which is a one-hot vector, by denoting $h_{n,j} = \Ocal( (\cup_{i=1}^{n-1} \incrdata_i) \cup Z_j)$, it can be seen that $\pi_{u_{n,j}}(\cdot|s) =\sum_{h \in \Bcal} u_{n,j}\coord{h} h(\cdot|s) =  h_{n,j}(\cdot | s)$ is also a one-hot vector. Also, $\forall s \in \Scal$, 
\[
\begin{aligned}
\EE_{\ell_{j}\sim \mathcal{N}(0,I_{XA})}  \sbr{\pi_{u_{n,j}}(\cdot|s)} 
= \EE_{\ell_{j}\sim \mathcal{N}(0,I_{XA})} \sbr{ \sum_{h \in \Bcal} u_{n,j}\coord{h} h(\cdot|s)} 
=  \sum_{h \in B} u^*_n\coord{h} h(\cdot|s) 
= \pi_{u^*_n}(\cdot|s).
\end{aligned}
\]
Thus, $\pi_{u_{n,j}}(\cdot|s)$ can also be seen as Multinoulli random variable on $\Delta(\Acal)$ with expectation  $\pi_{u^*_n}(\cdot|s)$, and $u_n$, the empirical average of $u_{n,j}$ satisfies
\[
\pi_{u_n}(\cdot|s) 
=
\sum_{h \in \Bcal} u_n\coord{h} h(\cdot|s) 
=
\sum_{h \in \Bcal}  \frac1\Sp \sum_{i=1}^{\Sp} u_{n,j}\coord{h}h(\cdot|s) 
=
\frac{1}{\Sp}\sum_{i = 1}^\Sp \pi_{u_{n,j}}(\cdot|s).
\]
Thus, we apply concentration inequality for Multinoulli random variables~\cite{qian2020concentration}
(originally~\cite{weissman2003inequalities}[Theorem 2.1]) on $\pi_{u_n}(\cdot|s) $  and obtain given any $s \in \Scal$, with probability at least $1-\delta/S$, 
\[
\|\pi_{u_n}(\cdot|s) - \pi_{u^*_n}(\cdot|s) \|_1 < \sqrt{ \frac{2A\rbr{\ln(S)+\ln(\frac{2}{\delta})}}{\Sp}}.
\]

By applying union bound over all states in $\Scal$, we conclude that, with probability at least $1-\delta$,  
\[
\| \pi_{u_n}(\cdot|s) -  \pi_{u^*_n}(\cdot|s) \|_1 \leq\sqrt{ \frac{2A\rbr{\ln(S)+\ln(\frac{2}{\delta})}}{\Sp}}, \forall s \in \Scal .
\qedhere
\]
\end{proof}

\begin{lemma}[Restatement of Lemma~\ref{lem:mftpl-reg}]
\label{lem:mftpl-reg-restated}
For any $\delta \in (0,1]$, \algm, if being called for $N$ rounds, with input
learning rate $\eta =  \frac{1}{\mu \sqrt{NA} }  (\frac{\ln (B)}{X})^{\frac14}$ and sparsification parameter $\Sp = \frac{ N \ln(2NS/\delta)}{\sqrt{X^3 \ln(B)}}$,
outputs a sequence of $\cbr{u_n}_{n=1}^N$, such that with probability $1-\delta$:
\[
\SLReg_N
=
\sum_{n=1}^N \langle {\est_n}, {u_n} \rangle 
- 
\min_{u \in \Delta(\Bcal)} \sum_{n=1}^N \langle {\est_n}, {u} \rangle 
\leq 
   O\rbr{ \mu\sqrt{ NA}(X^3\ln(B))^{\frac{1}{4}} }.
\]
\end{lemma}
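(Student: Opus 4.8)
The plan is to compare \algm's iterates $u_n$ to the idealized FTRL iterates
$u_n^* = \argmin_{u \in \Delta(\Bcal)}\rbr{ \inner{\eta\sum_{i<n}\est_i}{u} + R_\Ncal(u) }$
(equivalently, FTRL on the linear losses $u \mapsto \inner{\est_n}{u}$ with regularizer $\tfrac1\eta R_\Ncal$), and split
\[
\SLReg_N \;=\; \underbrace{\sum_{n=1}^N \inner{\est_n}{u_n - u_n^*}}_{\text{(i): sparsification error}} \;+\; \underbrace{\Big(\sum_{n=1}^N \inner{\est_n}{u_n^*} - \min_{u \in \Delta(\Bcal)}\sum_{n=1}^N \inner{\est_n}{u}\Big)}_{\text{(ii): FTRL regret}}.
\]
Throughout I would use $\|\est_n\|_\infty \le \mu$, which follows from the standing assumption $A^E(s,a) \le \feedback(s,a) \le \mu\, I(a \neq \pi^E(s))$ and $|A^E(s,a)| \le \mu$, since $\est_n\coord{h} = \EE_{(s,\vec c)\sim\incrdata_n}\sbr{\vec c(h(s))}$ is an average of entries of $\vec c = (\feedback(s,\cdot))$.

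For piece (i), I would \emph{not} bound $\|u_n - u_n^*\|_1$ directly, as that forces a vacuous $O(\sqrt{B/\Sp})$ factor; instead I would rewrite $\inner{\est_n}{u_n - u_n^*} = \EE_{(s,\vec c)\sim\incrdata_n}\inner{\vec c}{\pi_{u_n}(\cdot|s) - \pi_{u_n^*}(\cdot|s)} \le \mu \cdot \max_{s\in\Scal}\| \pi_{u_n}(\cdot|s) - \pi_{u_n^*}(\cdot|s) \|_1$ using H\"older's inequality and $\|\vec c\|_\infty \le \mu$. Then I would invoke Lemma~\ref{lem: ftpl_approximation-restated} at each round $n$ with failure probability $\delta/N$ and union bound over $n \in [N]$: on the resulting event of probability $\ge 1-\delta$, $\max_s\|\pi_{u_n}(\cdot|s) - \pi_{u_n^*}(\cdot|s)\|_1 \le \sqrt{2A(\ln S + \ln(2N/\delta))/\Sp} = \sqrt{2A\ln(2NS/\delta)/\Sp}$ for all $n$. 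Plugging $\Sp = N\ln(2NS/\delta)/\sqrt{X^3\ln B}$ makes piece (i) at most $N\mu\sqrt{2A\sqrt{X^3\ln B}/N} = \mu\sqrt{2NA}\,(X^3\ln B)^{1/4}$.

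For piece (ii), I would apply the standard FTRL regret bound for strongly convex regularizers with linear losses (\cite{shalev2011online}, Section~\ref{sec:ftrl}): since $R_\Ncal$ is $\lambda$-strongly convex with respect to $\|\cdot\|_1$ with $\lambda = \sqrt{\pi/8}\,/(AX)$ (Lemma~\ref{lem:sc_dual}) and $\|\est_n\|_\infty \le \mu$, piece (ii) is at most $\frac{D}{\eta} + \frac{\eta N \mu^2}{2\lambda}$, where $D := \sup_{u \in \Delta(\Bcal)} R_\Ncal(u) - \inf_{u \in \Delta(\Bcal)} R_\Ncal(u)$. The one genuinely new estimate is $D \le \sqrt{2X\ln B}$, extracted from $R_\Ncal = \Phi_\Ncal^*$ and the Gaussian perturbation: since $\EE\sbr{q(\ell)\coord{h}} = 0$, we have $\Phi_\Ncal(\Theta) \ge \EE\sbr{\inner{\Theta + q(\ell)}{e_h}} = \Theta\coord{h}$ for every vertex $e_h$, hence $R_\Ncal(e_h) = \sup_\Theta(\Theta\coord{h} - \Phi_\Ncal(\Theta)) \le 0$, and by convexity $R_\Ncal(u) \le 0$ on $\Delta(\Bcal)$; conversely $R_\Ncal(u) \ge -\Phi_\Ncal(0) = -\EE\sbr{\max_h q(\ell)\coord{h}} \ge -\sqrt{2X\ln B}$, because each $q(\ell)\coord{h} = \sum_{x\in\Xcal}\ell_x(h(x)) \sim \Ncal(0,X)$ and the expected maximum of $B$ centered Gaussians of variance $X$ is at most $\sqrt{2X\ln B}$. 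Thus piece (ii) is at most $\frac{\sqrt{2X\ln B}}{\eta} + O(\eta N \mu^2 A X)$.

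Finally, substituting $\eta = \frac{1}{\mu\sqrt{NA}}\rbr{\frac{\ln B}{X}}^{1/4}$, both summands of piece (ii) evaluate to $O\rbr{\mu\sqrt{NA}\,(X^3\ln B)^{1/4}}$, matching piece (i); combining gives $\SLReg_N \le O\rbr{\mu\sqrt{NA}\,(X^3\ln B)^{1/4}}$ with probability $\ge 1-\delta$, as claimed. I expect the main obstacle to be the bound on the range $D$ of the implicit regularizer $R_\Ncal$: it cannot be read off from its definition as a Fenchel conjugate and must be obtained through the $\Phi_\Ncal$-duality and a Gaussian-maximum estimate; once that is in hand, the rest is the decomposition plus bookkeeping with the prescribed $\eta$ and $\Sp$.
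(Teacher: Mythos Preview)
Your proposal is correct and follows the paper's proof essentially step for step: the same decomposition into sparsification error plus FTRL regret, the same rewriting $\inner{\est_n}{u_n-u_n^*}=\EE_{(s,\vec c)\sim\incrdata_n}\inner{\vec c}{\pi_{u_n}(\cdot\mid s)-\pi_{u_n^*}(\cdot\mid s)}$ followed by Lemma~\ref{lem: ftpl_approximation-restated} with a union bound over $n\in[N]$ for piece~(i), and the same FTRL bound $\tfrac{\sqrt{2X\ln B}}{\eta}+\eta\mu^2 NAX$ for piece~(ii). The only cosmetic difference is that the paper packages piece~(ii) as Theorem~\ref{thm: Regret of  EFTPL  with Prediction} (with $\hat g_n=0$), which already absorbs the range estimate $D\le\sqrt{2X\ln B}$ you derive by hand (the paper proves it as Lemmas~\ref{lem: strongly smooth zero potential upperbound}, \ref{lem:upperbound_of_R}, and~\ref{lem:maximum-distance}, via the same Gaussian-maximum and $\Phi_\Ncal(\Theta)\ge\inner{\Theta}{u}$ arguments).
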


\begin{proof}
Denote $u_n^* = \nabla\Phi_{\mathcal{N}}(-\eta\sum_{i=1}^{n-1} \est_i)$ the same as Equation~\eqref{eqn:u-n-star}, where $\est_n = \rbr{ \EE_{(s,\vec{c}) \sim \incrdata_n} \sbr{ \vec{c}(h(s)) } }_{h \in \Bcal}$, we rewrite the regret as

\[
\SLReg_N
=
\underbrace{
\sum_{n=1}^N \langle {\est_n}, {u_n} \rangle 
-
\sum_{n=1}^N \langle {\est_n}, {u^*_n} \rangle }_{(1)} 
+
\underbrace{
\sum_{n=1}^N \langle {\est_n}, {u^*_n} \rangle 
- 
\min_{u \in \Delta(\Bcal)} \sum_{n=1}^N \langle {\est_n}, {u} \rangle}_{(2)} . \\
\]
\paragraph{Term (1): $\sum_{n=1}^N \langle {\est_n}, {u_n} \rangle -
\sum_{n=1}^N \langle {\est_n}, {u^*_n} \rangle$.}  For the first term, instead of bounding it naively by $\|u_n - u_n^*\|_1$, we expand its definition and use Lemma~\ref{lem: ftpl_approximation} to give a tighter bound. Denote $\pi_n^* = \pi_{u_n^*}$, 
by Lemma \ref{lem: ftpl_approximation}, we guarantee that for any round $n$, with probability at least $1-\delta/N$,
\[
\begin{aligned}
 \langle {\est_n}, {u_n-u^*_n} \rangle 
=&
\EE_{(x,\vec{c}) \sim \incrdata_n}
\EE_{a \sim \pi_n(\cdot|s) }\sbr{ \vec{c}(a) }
-
\EE_{(x,\vec{c}) \sim \incrdata_n}
 \EE_{a \sim \pi_n^*(\cdot|s) }\sbr{ \vec{c}(a) }\\
=&
\EE_{(x,\vec{c}) \sim \incrdata_n}\sbr{ \inner{\vec{c} }{\pi_n(\cdot|x)-\pi_n^*(\cdot|x) }}\\
\leq&
 \EE_{(x,\vec{c}) \sim \incrdata_n}\sbr{\| \pi_n(\cdot|x)-\pi_n^*(\cdot|x) \|_1 \| \vec{c} \|_\infty}\\
\leq &
\mu\sqrt{ 2A\frac{\ln(NS)+\ln(\frac{2}{\delta})}{\Sp}},
\end{aligned}
\]
where the last line is form applying $\|\vec{c}\|_\infty = \|\feedback(x,\cdot)\|_\infty \leq \mu$, and with probability at least $1-\delta/N$, for all $s \in \Scal$, $\| \pi_{u_n}(\cdot|s) -  \pi_{u^*_n}(\cdot|s) \|_1 \leq \sqrt{2A\frac{\ln(NS)+\ln(\frac{2}{\delta})}{\Sp}}$. Then, by applying union bound for $N$ rounds, and sum over $n \in [N]$ we obtain that with probability at least $1-\delta$,
\[
\sum_{n=1}^N \langle {\est_n}, {u_n} \rangle -
\sum_{n=1}^N \langle {\est_n}, {u^*_n} \rangle
\leq
\mu N\sqrt{\frac{ 2A\rbr{\ln(NS)+\ln(\frac{2}{\delta})}}{\Sp}} .
\]
 
\paragraph{Term (2): $\sum_{n=1}^N \langle {\est_n}, {u^*_n} \rangle - 
\min_{u \in \Delta(\Bcal)} \sum_{n=1}^N \langle {\est_n}, {u} \rangle$.} By the definition of  $u^*_n = \nabla\Phi_{\mathcal{N}}(-\eta\sum_{i=1}^{n-1} \est_i)$,
$u_n^*$  follows exactly the same update rule of Algorithm \ref{alg: Optimistic FTRL}
with $\Phi_{\mathcal{N}}$ defined in Equation~\eqref{eq:phi}, on online
loss $\est_n$ and optimistic estimation $\hat{\est}_n$ set to be $0$ for all $n$. By Theorem~\ref{thm: Regret of  EFTPL  with Prediction}, the regret of $\cbr{u_n^*}_{n=1}^N$ is bounded by

\[
\begin{aligned}
\sum_{n=1}^N \inner{\est_n }{u^*_n}
-
\min_{u \in \Delta(\Bcal)}\sum_{n=1}^N \inner{ \est_n}{u}
\leq&
 \frac{1}{\eta}\sqrt{2X\ln(B)}
   +
   \sum_{n=1}^N  XA\eta \|\est_n \|_\infty ^2\\
  \leq&
   \frac{1}{\eta}\sqrt{2X\ln(B)}
   +
   \eta \mu^2NXA,
\end{aligned}
\]
where we use the fact that $\est_n = \rbr{ \EE_{(s,\vec{c}) \sim \incrdata_n} \sbr{ \vec{c}(h(s)) } }_{h \in \Bcal}$ has $\|\est_n\|_\infty$ at most $\mu$ as $\| \feedback(s,\cdot) \|_\infty \leq \mu$ for all $s$. 

Finally, by combining the bounds on the first and second terms together, we obtain $\forall \delta \in (0,1]$, with probability at least $1-\delta$,
\[
\sum_{n=1}^N \langle {\est_n}, {u_n} \rangle 
- 
\min_{u \in \Delta(\Bcal)} \sum_{n=1}^N \langle {\est_n}, {u} \rangle 
\leq 
\frac{1}{\eta}\sqrt{2X\ln(B)}
   +
   \eta \mu^2NXA
   +
   \mu N\sqrt{ 2A\frac{\ln(NS)+\ln(\frac{2}{\delta})}{\Sp}}.
\]
By setting $\eta = \frac{1}{\mu \sqrt{NA} }  \rbr{ \frac{\ln (B)}{X} }^{\frac14}$ and $\Sp = \frac{ N \ln(2NS/\delta)}{\sqrt{X^3 \ln(B)}}$, we conclude with probability at least $1-\delta$,
\[
\begin{aligned}
\SLReg_N
\leq 
\rbr{1+\frac{\sqrt{2}}{2}+1} \mu \sqrt{2NA}\rbr{ X^3\ln(B) }^{\frac{1}{4}}
   = 
   O\rbr{ \mu\sqrt{ NA} \rbr{ X^3\ln(B) }^{\frac{1}{4}} }.
   \end{aligned}
   \qedhere
\]
\end{proof}

\begin{theorem}[Restatement of Theorem~\ref{thm:logger-m-main}]
\label{thm:logger-m-restated}
For any $\delta \in (0,1]$, \algreductm, with $\algm$ setting its parameters as in Lemma~\ref{lem:mftpl-reg} and $K=1$, satisfies that: 
(1) with probability at least $1-\delta$, its output $\cbr{\pi_n}_{n=1}^N$ satisfies:
$\SReg(N) \leq O \rbr{  \mu \sqrt{  NA  \ln(1/ \delta)} (X^3 \ln (B))^{\frac14} }$; 
(2) it queries $N$  annotations from expert $\pi^E$; 
(3) it calls the CSC oracle $\Ocal$ for $\frac{ N^2 \ln(6NS/\delta)}{\sqrt{X^3 \ln (B)}}$ times.

Specifically, \algreductm achieves $\frac{H}{N}\SReg_N(\Bcal) \leq \epsilon$  with probability at least $1-\delta$ in
$N = O\rbr{ \frac{\mu^2 H^2 A  \ln(1/ \delta) \sqrt{X^3\ln(B)}}{\epsilon^2} }$ interaction rounds, with $O\rbr{\frac{\mu^2 H^2 A  \ln(1/ \delta) \sqrt{X^3\ln(B)}}{\epsilon^2} }$ expert annotations and $\tilde{O}\rbr{\frac{\mu^4 H^4A^2 \rbr{\ln(1/ \delta)}^2  \ln(S/\delta) \sqrt{X^3\ln(B)}}{\epsilon^4} }$ oracle calls.
\end{theorem}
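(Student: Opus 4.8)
The plan is to obtain all three claims by composing the online linear optimization regret bound of \algm (Lemma~\ref{lem:mftpl-reg-restated}) with the regret-preserving reduction of Proposition~\ref{prop:il-olo-restated}, and then doing the bookkeeping for samples, interaction rounds, and oracle calls. First I would instantiate \algreduct with $\oloa = \algm$ and $K=1$. To make Proposition~\ref{prop:il-olo-restated} yield an overall failure probability $\delta$, it requires $\oloa$ to attain its $\SLReg_N$ bound with probability at least $1-\delta/3$; so I would run \algm with confidence parameter $\delta/3$, i.e. $\eta = \frac{1}{\mu\sqrt{NA}}\rbr{\frac{\ln B}{X}}^{1/4}$ and $\Sp = \frac{N\ln(6NS/\delta)}{\sqrt{X^3\ln B}}$ (this choice is the source of the $\ln(6NS/\delta)$ appearing in claim (3)). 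By Lemma~\ref{lem:mftpl-reg-restated}, with probability at least $1-\delta/3$, $\SLReg_N \leq O\rbr{\mu\sqrt{NA}(X^3\ln B)^{1/4}}$.

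Next I would invoke Proposition~\ref{prop:il-olo-restated} with $K=1$: on the event above (probability at least $1-\delta$ after the union bound internal to that proposition), the output policies satisfy $\SReg_N(\Bcal) \leq O\rbr{\mu\sqrt{NA}(X^3\ln B)^{1/4}} + O\rbr{\mu\sqrt{N\ln(B/\delta)}}$. Using $A\geq 1$, $X\geq 1$, $B\geq 2$ — in particular $\sqrt{\ln(B/\delta)} \leq \sqrt{\ln B}\,\sqrt{1+\ln(1/\delta)}$ — both terms are dominated by $\mu\sqrt{NA\ln(1/\delta)}(X^3\ln B)^{1/4}$, giving claim (1). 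Claims (2) and (3) are then immediate from the structure of \algreduct: with $K=1$ each of the $N$ rounds draws one sample from $\incrdist_{\pi_n}$ and hence makes one expert query, for $N$ annotations total; each round runs \algm once, which makes exactly $\Sp$ calls to $\Ocal$, so the total is $N\Sp = \frac{N^2\ln(6NS/\delta)}{\sqrt{X^3\ln B}}$.

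For the $\epsilon$-version I would plug claim (1) into $\frac{H}{N}\SReg_N(\Bcal)$, obtaining a bound of order $\frac{H\mu\sqrt{A\ln(1/\delta)}(X^3\ln B)^{1/4}}{\sqrt N}$; setting this at most $\epsilon$ and solving gives $N = O\rbr{\frac{\mu^2 H^2 A\ln(1/\delta)\sqrt{X^3\ln B}}{\epsilon^2}}$. Substituting this $N$ back into the annotation count ($=N$) and the oracle-call count ($=N\Sp$), with the residual $\ln(6NS/\delta)$ absorbed into $\tilde O(\cdot)$ (it is $\polylog$ in all parameters once $N$ is expanded), reproduces the stated complexities. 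The step I expect to require the most care is not mathematically deep but bookkeeping-heavy: correctly rescaling \algm's confidence to $\delta/3$ and propagating the resulting change in $\Sp$, and verifying the dominance claim that collapses the two additive regret contributions into the single clean expression — this is exactly where the mild assumptions $A,X\geq 1$, $B\geq 2$ are used.
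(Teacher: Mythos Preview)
Your plan is essentially the paper's own proof: instantiate \algreduct with $\oloa=\algm$ at confidence $\delta/3$ (which indeed produces the $\ln(6NS/\delta)$ in $\Sp$), invoke Lemma~\ref{lem:mftpl-reg-restated}, feed the resulting $\SLReg_N$ bound into Proposition~\ref{prop:il-olo-restated} with $K=1$, and then do the sample/round/oracle bookkeeping.

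The one place where your argument is incomplete is the dominance step you yourself flagged. You write that the additive term $O\rbr{\mu\sqrt{N\ln(B/\delta)}}$ is absorbed into $O\rbr{\mu\sqrt{NA\ln(1/\delta)}(X^3\ln B)^{1/4}}$ ``using $A\ge 1,\ X\ge 1,\ B\ge 2$.'' That is not enough: after your inequality you are left needing $(\ln B)^{1/4}\lesssim \sqrt{A}\,X^{3/4}$, which does not follow from those trivial bounds alone (imagine $A,X$ small and $B$ large). The paper closes this gap by invoking the structural fact about separator sets, Lemma~\ref{lem:miniseparator}, namely $X\ge \log_A B$, which gives $\ln B\le X\ln A\le XA$ and hence $(\ln B)^{1/4}\le (XA)^{1/4}\le \sqrt{A}\,X^{3/4}$. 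With that in hand the second term is indeed of lower order, and the rest of your bookkeeping goes through exactly as you describe. (A side remark: your inequality $\sqrt{\ln(B/\delta)}\le \sqrt{\ln B}\sqrt{1+\ln(1/\delta)}$ actually requires $\ln B\ge 1$, i.e.\ $B\ge e$, not just $B\ge 2$; but once you use Lemma~\ref{lem:miniseparator} you can bypass this intermediate step entirely.)
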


\begin{proof}
Following the results in Lemma~\ref{lem:mftpl-reg}, \algm, if being called for $N$ rounds with 
the prescribed input learning rate 
$\eta$ and sparsification parameter $\Sp =\frac{ N \ln(6NS/\delta)}{\sqrt{X^3 \ln (B)}} $,
 generates a sequence of $\cbr{u_n}_{n=1}^N$, such that with probability at least $1-\frac{\delta}{3}$, 
\[
\SLReg_N
\leq 
   O\rbr{ \mu \sqrt{ NA}(X^3\ln(B))^{\frac{1}{4}} }
.
\]
By Proposition~\ref{prop:il-olo},
\algreductm output policies $\cbr{\pi_n}_{n=1}^N$ such that with probability at least $1-\delta$, 
\[
\SReg_N(\Bcal)
\leq 
O\rbr{  \mu\sqrt{ NA}(X^3\ln(B))^{\frac{1}{4}} }
+
O\rbr{\mu\sqrt{\frac{N\ln(B/\delta)}{K}}} .
\]
By setting $K=1$, we obtain 
\[
\SReg_N(\Bcal)
\leq 
   O\rbr{  \mu\sqrt{ NA}(X^3\ln(B))^{\frac{1}{4}} }
+
   O\rbr{ \mu\sqrt{N(\ln(B/ \delta))} }
   =
      O\rbr{ \mu\sqrt{ NA \ln(1/ \delta)}(X^3\ln(B))^{\frac{1}{4}} },
\]
where $O \rbr{ \mu\sqrt{N\ln(B/ \delta)} }$ is of lower order, by $X \geq \log_A(B)$ proven in Lemma \ref{lem:miniseparator}.

Since at each round, Algorithm~\ref{alg:reduce-olo} queries $K=1$ annotation from the expert and calls \algm once, where \algm calls oracle $\Ocal$ $\Sp =\frac{ N \ln(6NS/\delta)}{\sqrt{X^3 \ln (B)}} $ times,  then for a total of $N$ rounds, it calls $N$  annotations and calls oracle $\Ocal$ for $ \frac{ N^2 \ln(6NS/\delta)}{\sqrt{X^3 \ln (B)}} $ times.

For the second part of the theorem, to guarantee $\frac{H}{N}\SReg_N(\Bcal) \leq \epsilon$, it suffices to let $N = O\rbr{  \frac{\mu^2H^2A \ln(1/ \delta) \sqrt{X^3\ln(B)}}{\epsilon^2} }$. The number of annotations and oracle calls follow from plugging this value of $N$ into their settings in the first part of the theorem.
\end{proof}

\subsection{Deferred materials from Section~\ref{sec:reg-const}}
\label{sec:reg-const-deferred}

\subsubsection{The \algmp algorithm and its guarantees}
\label{sec:algmp-deferred}



We present \algmp (Algorithm~\ref{alg:mftpl-eg}), an alternative to \algm (Algorithm~\ref{alg:mftpl}) for online linear optimization in COIL. Recall that in the \algreduct framework, for every $n$, the linear loss $\est_n$ at round $n$ is induced by $\incrdata_n$, in that $\est_n\coord{h} = \EE_{(s,c) \sim \incrdata_n}\sbr{\vec{c}(h(s))}$ for all $h \in \Bcal$.

Specifically, at round $n$, \algmp first computes $\auxu_n$, the output of $\algm$ on $\{ \est_i \}_{i=1}^{n-1}$ (line~\ref{line:auxun}); different from $\algm$, instead of using this as $u_n$, 
it rather uses this as a \emph{estimator} for $u_n$, which is still to be determined at this point. 
$\auxu_n$ induces policy
$\auxpi_n = \pi_{\auxu_n}$.
After rolling out $\auxpi_n$ in $\Mcal$ and requesting expert annotations (line~\ref{line:get-extra-gradient}), we obtain a dataset $\auxincrdata_n$, whose induced linear loss (denoted by $\auxest_n$), is 
an 
unbiased estimate of $\theta(\auxu_n)$, which by the distributional continuity property (Lemma~\ref{lem:dist-cont}), turns out to be a good estimator of $\theta(u_n)$. 
Finally, \algmp calls \algm on the linear losses $\cbr{g_i}_{i=1}^{n-1} \cup \cbr{ \auxest_n }$ (line~\ref{line:run-extra-gradient-descent}). 


\begin{algorithm}[t]
\caption{Mixed CFTPL with Extra Gradient (abbrev. \algmp)}
\begin{algorithmic}[1]
\REQUIRE MDP $\Mcal$, expert feedback $\feedback$, Linear losses $\{\est_i\}_{i=1}^{n-1}$
represented by datasets $\cbr{\incrdata_i}_{i=1}^{n-1}$
each of size $K$
(s.t. $\est_i\coord{h} = \EE_{(s,\vec{c}) \sim \incrdata_i}\sbr{\vec{c}(h(s))}$ for all $h \in \Bcal$), 
separator set $\Xcal$, learning rate $\eta$, sparsification parameter $\Sp$.
\STATE $\auxu_n \gets \algm \rbr{ \cbr{\incrdata_i}_{i=1}^{n-1},\Xcal, \eta, T }$,
and let $\auxpi_n \gets \pi_{\auxu_n}$.
\label{line:auxun}

\STATE Draw $K$ examples 
$\auxincrdata_n = \cbr{ (s, \vec{c} ) }$ iid from $D_{\auxpi_n,E}$, via interaction with $\Mcal$ and expert $\feedback$.
\label{line:get-extra-gradient}

\RETURN $u_n \gets  \algm \rbr{ \cbr{\incrdata_i}_{i=1}^{n-1} \cup \auxincrdata_n, \Xcal, \eta, T }$.
\label{line:run-extra-gradient-descent}

\end{algorithmic}
\label{alg:mftpl-eg}
\end{algorithm}


To analyze \algmp,
We first restate and prove a distributional continuity property in \oilcsc problems.
\begin{lemma}[Restatement of Lemma~\ref{lem:dist-cont}]
\label{lem:dist-cont_restated}
For any $u,v \in \Delta(\Bcal)$,
\[
\| \theta(u) - \theta(v) \|_\infty 
\;
\substack{ (*1)\\ \leq } 
\;
\mu H\max_{s \in \Scal}\| \pi_{u}(\cdot|s) -  \pi_{v}(\cdot|s) \|_1
\;
\substack{ (*2)\\ \leq }
\;
\mu H \| u - v \|_1 .
\]
\end{lemma}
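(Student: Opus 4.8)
The plan is to peel off the two inequalities from right to left, so the only substantive work is the middle step. For $(*2)$, I would start from the definition $\pi_u(\cdot|s) = \sum_{h \in \Bcal} u\coord{h}\, h(\cdot|s)$, which gives $\pi_u(\cdot|s) - \pi_v(\cdot|s) = \sum_{h \in \Bcal} (u\coord{h} - v\coord{h})\, h(\cdot|s)$. Since each $h(\cdot|s)$ is a delta mass and hence has $\|h(\cdot|s)\|_1 = 1$, the triangle inequality immediately yields $\|\pi_u(\cdot|s) - \pi_v(\cdot|s)\|_1 \leq \sum_{h \in \Bcal} |u\coord{h} - v\coord{h}| = \|u - v\|_1$, uniformly over $s \in \Scal$; taking the max over $s$ gives $(*2)$.

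For $(*1)$, the key observation is that for a deterministic $h \in \Bcal$ we have $\EE_{a \sim h(\cdot|s)}[\feedback(s,a)] = \feedback(s,h(s))$, so writing $g_h(s) := \feedback(s,h(s))$ we get $\theta(u)\coord{h} - \theta(v)\coord{h} = \inner{d_{\pi_u} - d_{\pi_v}}{g_h}$. Under $\mu$-recoverability we have $A^E(s,a) \leq \feedback(s,a) \leq \mu \cdot I(a \neq \pi^E(s))$ together with $|A^E(s,a)| \leq \mu$, hence $|\feedback(s,a)| \leq \mu$ for all $s,a$, so $\|g_h\|_\infty \leq \mu$. Then H\"older's inequality gives $|\theta(u)\coord{h} - \theta(v)\coord{h}| \leq \mu\, \|d_{\pi_u} - d_{\pi_v}\|_1$ for every $h$, i.e. $\|\theta(u) - \theta(v)\|_\infty \leq \mu\, \|d_{\pi_u} - d_{\pi_v}\|_1$.

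It then remains to show $\|d_{\pi_u} - d_{\pi_v}\|_1 \leq H \cdot \max_{s \in \Scal}\|\pi_u(\cdot|s) - \pi_v(\cdot|s)\|_1$, which I would obtain by the standard layered-MDP occupancy-coupling argument. Writing $\epsilon := \max_{s}\|\pi_u(\cdot|s) - \pi_v(\cdot|s)\|_1$, I would prove by induction on $t$ that $\|d_{\pi_u}^t - d_{\pi_v}^t\|_1 \leq (t-1)\epsilon$: the base case is $d_{\pi_u}^1 = d_{\pi_v}^1 = \rho$, and for the inductive step I would write $d_\pi^t(s') = \sum_{s,a} d_\pi^{t-1}(s)\,\pi(a|s)\,P_{t-1}(s'|s,a)$ and split the difference $d_{\pi_u}^t - d_{\pi_v}^t$ into one term that is controlled in $\ell_1$ by $\|d_{\pi_u}^{t-1} - d_{\pi_v}^{t-1}\|_1$ (via $\sum_a \pi(a|s) = 1$ and stochasticity of $P_{t-1}$) and one term controlled by $\epsilon$ (via $\sum_s d_{\pi_v}^{t-1}(s) = 1$). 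Finally, averaging over $t \in [H]$ using $d_\pi = \frac1H\sum_{t=1}^H d_\pi^t$ gives $\|d_{\pi_u} - d_{\pi_v}\|_1 \leq \frac1H\sum_{t=1}^H (t-1)\epsilon = \frac{H-1}{2}\epsilon \leq H\epsilon$, which combined with the previous paragraph yields $(*1)$.

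The only mildly delicate point is the bookkeeping in the inductive step — carefully forming the two-term split of the one-step transition operator and checking that each term is bounded as claimed; everything else is a direct application of the triangle inequality and H\"older's inequality, and no machinery beyond the definitions of $\theta$, $\pi_u$, $d_\pi$ and the bound $|\feedback(s,a)| \leq \mu$ is required.
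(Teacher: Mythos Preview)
Your proof is correct, and for $(*2)$ it matches the paper's argument exactly. For $(*1)$, however, you take a genuinely different route. The paper first rewrites $\theta(u)\coord{h}-\theta(v)\coord{h}$ as an expectation over \emph{trajectory} distributions, bounds it by $2\mu\cdot D_{\mathrm{TV}}\bigl(d_{\pi_u}(\cdot),d_{\pi_v}(\cdot)\bigr)$ (trajectory TV), and then invokes an external result (\cite{ke2020imitation}, Theorem~4) to convert trajectory TV into $H$ times an expected per-state action-distribution TV, which it finally replaces by the max over $s$. You instead bound $|\theta(u)\coord{h}-\theta(v)\coord{h}|$ directly by $\mu\,\|d_{\pi_u}-d_{\pi_v}\|_1$ for the \emph{average state} occupancy, and then prove $\|d_{\pi_u}^t-d_{\pi_v}^t\|_1\le (t-1)\epsilon$ by the standard one-step coupling induction, averaging to get $\frac{H-1}{2}\epsilon\le H\epsilon$. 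Your argument is fully self-contained (no external citation needed), uses only the layered-MDP structure already set up in Section~\ref{sec:prelims}, and in fact yields the sharper constant $\mu\,\frac{H-1}{2}$ in place of $\mu H$. The paper's route has the minor advantage that the cited trajectory-TV bound is stated with an expectation over $s\sim d_{\pi_u}$ rather than a max, which could be useful if one wanted a data-dependent refinement, but for the statement as written your approach is cleaner.
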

\begin{proof}
We show $(*1)$ and $(*2)$ respectively.

For $(*1)$,  recall the definition of trajectory as $\tau = \{s_1,a_1,\cdots,s_H,a_H\}$, we abuse $d_{\pi}(\cdot)$ to denote the distribution of trajectory
induced by policy $\pi$. By denoting  $\theta_u\coord{h} = \EE_{s \sim d_{\pi_u}}\sbr{\feedback(s,h(s)) } $ and $\theta_v\coord{h} = \EE_{s \sim d_{\pi_v}}\sbr{\feedback(s,h(s)) } $, we have for any $h \in \Bcal$: 
\[
\begin{aligned}
\abr{ \theta_u\coord{h} -  \theta_v\coord{h} }
= &
\abr{ \EE_{s \sim d_{\pi_u}}\sbr{\feedback(s,h(s)) } - \EE_{s \sim d_{\pi_v}}\sbr{\feedback(s,h(s)) } }\\
= &
\abr{ \EE_{\tau\sim d_{\pi_u}(\cdot)}\sbr{\frac{1}{H}\sum_{s \in \tau}\feedback(s,h(s)) } - \EE_{\tau \sim d_{\pi_v}(\cdot)}\sbr{\frac{1}{H}\sum_{s \in \tau}\feedback(s,h(s)) } }\\
 = & \abr{ \sum_{\tau \in (\Scal \times \Acal)^H} \rbr{ (d_{\pi_u}(\tau)- d_{\pi_v}(\tau)) \cdot
 \frac{1}{H}\sum_{s \in \tau}\feedback(s,h(s))}}  \\
 \leq &
 \abr{ \sum_{\tau \in (\Scal \times \Acal)^H} \mu (d_{\pi_u}(\tau)- d_{\pi_v}(\tau)) } \\
 = & 
 \mu \cdot \| d_{\pi_u}(\cdot) -  d_{\pi_v}(\cdot)\|_1
 =
 2\mu \cdot D_{\mathrm{TV}}\rbr{d_{\pi_u}(\cdot), d_{\pi_v}(\cdot)}.
\end{aligned}
\]

where the last inequality is by $\forall s \in \Scal$, $\forall a \in \Acal$, $\abr{\feedback(s, a)} \leq \mu$. Here, $D_{\mathrm{TV}}(u,v) := \frac{1}{2}\|u-v\|_1$ denotes the Total Variance (TV) distance between two distributions. 

Now, by~\cite{ke2020imitation}[Theorem 4]:
\begin{equation}
D_{\mathrm{TV}}\rbr{d_{\pi_u}(\cdot), d_{\pi_v}(\cdot)}
\leq
 H \cdot \EE_{s \sim d_{\pi_u}}
\sbr{D_{\mathrm{TV}}
\rbr{\pi_u(\cdot \mid s), \pi_v(\cdot \mid s)}},
\label{eqn: continuity}
\end{equation}
we utilize Equation~\eqref{eqn: continuity} and conclude that for any $h \in \Bcal$,
\[
\begin{aligned}
\abr{ \theta_u\coord{h} -  \theta_v\coord{h} }
\leq&
2\mu \cdot D_{\mathrm{TV}}\rbr{d_{\pi_u}(\cdot), d_{\pi_v}(\cdot)}\\
\leq&
2\mu H \cdot \EE_{s \sim d_{\pi_u}}
\sbr{D_{\mathrm{TV}}
\rbr{\pi_u(\cdot \mid s), \pi_v(\cdot \mid s)}}\\
\leq &
2\mu H \max_{s \in \Scal}D_{\mathrm{TV}}
\rbr{\pi_u(\cdot \mid s), \pi_v(\cdot \mid s)}\\
=&
\mu H\max_{s \in \Scal}\| \pi_{u}(\cdot|s) -  \pi_{v}(\cdot|s) \|_1 .
\end{aligned}
\]
For $(*2)$, by the definition of $\pi_{u_n}(\cdot|s) = \sum_{h \in \Bcal} u\coord{h} h(\cdot|s)$, we have that $\forall s \in \Scal$,
\[
\begin{aligned}
\| \pi_{u}(\cdot|s) -  \pi_{v}(\cdot|s) \|_1 
&=
\|\sum_{h \in \Bcal} u\coord{h} h(\cdot|s)
-
\sum_{h \in \Bcal} v\coord{h} h(\cdot|s)\|_1 \\
&=
\sum_{a\in \Acal} \abr{ \sum_{h \in \Bcal} (u\coord{h}-v\coord{h}) I(h(s) = a) }\\
&\leq
\sum_{a\in \Acal} \sum_{h \in \Bcal} \abr{ u\coord{h}-v\coord{h} } I(h(s) = a) \\
&=
\sum_{h \in \Bcal} \sum_{a \in \Acal} \abr{ u\coord{h}-v\coord{h} } I(h(s) = a) \\
&=
\sum_{h \in \Bcal} \abr{ u\coord{h}-v\coord{h} }
=
\|u- v\|_1 .
\end{aligned}
\] 

This lets us conclude
\[
\| \theta(u) - \theta(v) \|_\infty 
=
\max_{h \in \Bcal}|\theta_u\coord{h} -  \theta_v\coord{h}|
\leq 
\mu H\max_{s \in \Scal}\| \pi_{u}(\cdot|s) -  \pi_{v}(\cdot|s) \|_1
\leq 
\mu H \| u - v \|_1.
\qedhere
\]
\end{proof}

\begin{lemma}
\label{lem:mftpl-eg-reg} 
Let $N \geq \mu HA\sqrt{X^3\ln(B)}$.
For any $\delta \in (0,1]$, if 
\algmp is called for $N$ rounds, with input
learning rate $\eta = \frac{1}{5\mu HA X}$, sparsification parameter $\Sp = \frac{ N^2\ln(8NS/\delta)}{\mu H AX^3\ln(B)}$ and sample budget 
$K =\frac{N\ln(8NB / \delta)}{H^2A\sqrt{X^3\ln(B)}}$  , outputs a sequence $\cbr{u_n}_{n=1}^N$, such that with probability at least $1-\delta$:
\[
\SLReg_N
\leq 
   O\rbr{ \mu H A\sqrt{X^3\ln(B)} }.
\]
\end{lemma}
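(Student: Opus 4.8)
The plan is to recognize \algmp as an approximate \emph{Optimistic Follow-the-Regularized-Leader} method: the auxiliary iterate $\auxu_n$ supplies a one-step prediction of the loss $\est_n$ through $\auxest_n$, and $u_n$ is then the corresponding optimistic update. Let $\Phi_\Ncal$ and $R_\Ncal$ be as in Equations~\eqref{eq:phi} and~\eqref{eqn:r}, and define the idealized FTRL / optimistic-FTRL iterates
\[
\auxu_n^* := \nabla\Phi_\Ncal\Bigl(-\eta\textstyle\sum_{i=1}^{n-1}\est_i\Bigr),
\qquad
u_n^* := \nabla\Phi_\Ncal\Bigl(-\eta\bigl(\textstyle\sum_{i=1}^{n-1}\est_i+\auxest_n\bigr)\Bigr),
\]
which by Lemma~\ref{lem:ftrl-ftpl} are the exact minimizers of the corresponding regularized linear objectives. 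I would first split
\[
\SLReg_N
= \underbrace{\sum_{n=1}^N\inner{\est_n}{u_n-u_n^*}}_{(A)}
\;+\;
\underbrace{\Bigl(\sum_{n=1}^N\inner{\est_n}{u_n^*}-\min_{u\in\Delta(\Bcal)}\sum_{n=1}^N\inner{\est_n}{u}\Bigr)}_{(B)}.
\]
Term $(A)$ is the sparsification error of \algm: since $\inner{\est_n}{u_n-u_n^*}\le\EE_{(x,\vec{c})\sim\incrdata_n}\inner{\vec{c}}{\pi_{u_n}(\cdot\mid x)-\pi_{u_n^*}(\cdot\mid x)}$ and $\|\vec{c}\|_\infty\le\mu$, applying Lemma~\ref{lem: ftpl_approximation} to the \algm call in line~\ref{line:run-extra-gradient-descent} bounds $(A)$ by $O\bigl(\mu N\sqrt{A\ln(NS/\delta)/\Sp}\bigr)$, which is lower order for the prescribed $\Sp$.

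Term $(B)$ is the crux. Since an optimistic-FTRL regret bound (Theorem~\ref{thm: Regret of  EFTPL  with Prediction}) holds for an \emph{arbitrary} loss sequence, I would apply it to $\{\est_n\}$ with predictions $\{\auxest_n\}$ and regularizer $\tfrac1\eta R_\Ncal$; using that $R_\Ncal$ is $\sigma$-strongly convex with respect to $\|\cdot\|_1$ with $\sigma=\sqrt{\pi/8}/(AX)$ (Lemma~\ref{lem:sc_dual}), this yields
\[
(B)\;\le\;\frac{\sqrt{2X\ln B}}{\eta}
\;+\;\sum_{n=1}^N\Bigl(XA\,\eta\,\|\est_n-\auxest_n\|_\infty^2\;-\;\frac{\sigma}{2\eta}\,\|u_n^*-\auxu_n^*\|_1^2\Bigr).
\]
The remaining work is to show the summand is nonpositive up to a lower-order remainder. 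Decompose $\est_n-\auxest_n=\bigl(\est_n-\theta(u_n)\bigr)+\bigl(\theta(u_n)-\theta(\auxu_n)\bigr)+\bigl(\theta(\auxu_n)-\auxest_n\bigr)$. The outer two differences are averages of $K$ conditionally zero-mean vectors bounded by $2\mu$ in $\ell_\infty$, so coordinatewise Azuma--Hoeffding plus a union bound over $B$ coordinates and $N$ rounds (exactly as in the proof of Proposition~\ref{prop:il-olo}) gives, with probability $1-\delta$, that each is $O\bigl(\mu\sqrt{\ln(NB/\delta)/K}\bigr)$ in $\ell_\infty$ uniformly in $n$; with the prescribed $K$ their squared contribution, multiplied by $XA\eta$, is dominated by $\sqrt{2X\ln B}/\eta$.

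For the middle difference, distributional continuity (inequality $(*1)$ of Lemma~\ref{lem:dist-cont}) gives $\|\theta(u_n)-\theta(\auxu_n)\|_\infty\le \mu H\max_{s}\|\pi_{u_n}(\cdot\mid s)-\pi_{\auxu_n}(\cdot\mid s)\|_1$, and a further application of Lemma~\ref{lem: ftpl_approximation} (to relate $\pi_{u_n},\pi_{\auxu_n}$ to $\pi_{u_n^*},\pi_{\auxu_n^*}$) together with inequality $(*2)$ of Lemma~\ref{lem:dist-cont} bounds this by $\mu H\|u_n^*-\auxu_n^*\|_1$ up to the same lower-order sparsification term. Hence $XA\eta\,\|\theta(u_n)-\theta(\auxu_n)\|_\infty^2\le XA\eta\,(\mu H)^2\|u_n^*-\auxu_n^*\|_1^2$, and because the prescribed $\eta=\tfrac1{5\mu HAX}$ satisfies $XA\eta(\mu H)^2=\tfrac{\mu H}{5}\le\tfrac{\sigma}{2\eta}=\tfrac{5}{2}\sqrt{\tfrac{\pi}{8}}\,\mu H$, this term is absorbed into the negative stability term. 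What survives is $(B)\le\tfrac{\sqrt{2X\ln B}}{\eta}+(\text{lower order})=5\sqrt2\,\mu HA\sqrt{X^3\ln B}+(\text{lower order})$. Combining with $(A)$ gives $\SLReg_N=O(\mu HA\sqrt{X^3\ln B})$ with probability at least $1-\delta$; the hypothesis $N\ge\mu HA\sqrt{X^3\ln B}$ is used only to ensure $K\ge1$, $\Sp\ge1$, and that all lower-order terms are genuinely dominated. The step I expect to be the main obstacle is the bookkeeping in $(B)$: correctly instantiating the optimistic-FTRL bound with the right negative stability terms, matching iterate indices so that $\|\est_n-\auxest_n\|_\infty^2$ is cancelled by $-\|u_n^*-\auxu_n^*\|_1^2$ rather than an off-by-one variant, and consistently routing all distances through the \emph{policy-level} $\ell_1$ norm (which is all \algm controls) while still invoking the strong convexity of $R_\Ncal$ in the $u$-coordinates.
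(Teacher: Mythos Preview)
Your proposal is correct and matches the paper's proof almost exactly: the same $(A)+(B)$ split, the same invocation of Theorem~\ref{thm: Regret of  EFTPL  with Prediction} for $(B)$, and the same absorption of $\|\est_n-\auxest_n\|_\infty^2$ into the negative stability term $-\tfrac{1}{4\eta XA}\|u_n^*-\auxu_n^*\|_1^2$ via Lemma~\ref{lem:dist-cont} and Lemma~\ref{lem: ftpl_approximation}. The one place your bookkeeping is loose is the decomposition of $\|\est_n-\auxest_n\|_\infty^2$: the paper splits into \emph{five} pieces at the $\theta$-level, $\est_n\to\theta(u_n)\to\theta(u_n^*)\to\theta(\auxu_n^*)\to\theta(\auxu_n)\to\auxest_n$, yielding a coefficient of exactly $5\mu^2H^2$ on $\|u_n^*-\auxu_n^*\|_1^2$, which is precisely what makes the prescribed $\eta=\tfrac{1}{5\mu HAX}$ cancel against $\tfrac{1}{4\eta XA}=\tfrac{5\mu H}{4}$; your three-then-three split as written gives a factor of $9$, which does not close with this $\eta$ unless you route the blow-up onto the lower-order pieces via a Young-type inequality $(a+b+c)^2\le(1+\varepsilon)b^2+C_\varepsilon(a^2+c^2)$.
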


\begin{proof}
We will follow a proof outline similar to that of Lemma~\ref{lem:mftpl-reg-restated}; intuitively, we can view \algmp as  approximating the execution of  Algorithm~\ref{alg: Optimistic FTRL}.

At the $n$-th round in the execution of \algmp , the algorithm calls \algm with dataset $\cbr{\incrdata_i}_{i=1}^{n-1}$ to get $\auxu_n$ and gather extra data set $\auxincrdata_n$ by rolling out  $\auxpi_n = \pi_{\auxu_n}$ in $\Mcal$.  Then it outputs $u_n$  by running \algm on $\cbr{\incrdata_i}_{i=1}^{n-1} \cup \auxincrdata_n$.

In parallel to the definition that $\est_n = \rbr{ \EE_{(s,\vec{c}) \sim \incrdata_n} \sbr{ \vec{c}(h(s)) } }_{h \in \Bcal}$, we denote the loss vector induced by $\auxincrdata_n$ as $\auxest_n = \rbr{ \EE_{(s,\vec{c}) \sim \auxincrdata_n} \sbr{ \vec{c}(h(s)) } }_{h \in \Bcal}$. 
Following a similar definition as Equation~\eqref{eqn:u-n-star}, we denote $\auxu^*_n :=  \nabla\Phi_{\mathcal{N}}(-\eta  \sum_{i=1}^{n-1}  \est_i ) $ and $u^*_n := \nabla\Phi_{\mathcal{N}}(- \eta ( \sum_{i=1}^{n-1} \est_i + \auxest_n) )$.
We first rewrite the online linear optimization regret in the same way as the proof of Lemma~\ref{lem:mftpl-reg-restated} using $u_n^*$, 
\[
\SLReg_N
=
\underbrace{
\sum_{n=1}^N \langle {\est_n}, {u_n} \rangle 
-
\sum_{n=1}^N \langle {\est_n}, {u^*_n} \rangle }_{(1)} 
+
\underbrace{
\sum_{n=1}^N \langle {\est_n}, {u^*_n} \rangle 
- 
\min_{u \in \Delta(\Bcal)} \sum_{n=1}^N \langle {\est_n}, {u} \rangle}_{(2)},  
\]
and bound terms (1) and (2) respectively.

\paragraph{Term (1): $\sum_{n=1}^N \langle {\est_n}, {u_n} \rangle -
\sum_{n=1}^N \langle {\est_n}, {u^*_n} \rangle$.} Since $u_n$ is the output from \algm with input dataset $\cbr{\incrdata_i}_{i=1}^{n-1} \cup \auxincrdata_n$, separator set $\Xcal$, and learning rate $\eta$ , while 
$u^*_n := \nabla\Phi_{\mathcal{N}}(-\eta ( \sum_{i=1}^{n-1} \est_i +\auxest_n)) $ is induced by $\cbr{\incrdata_i}_{i=1}^{n-1}$ and $\auxest_n$ is induced by $\auxincrdata_n$. 
We apply  Lemma \ref{lem: ftpl_approximation} and guarantee for any given round $n$, with probability at least $1-\frac{\delta}{4N}$, for all $s \in \Scal$, 
$\| \pi_{u_n}(\cdot|s) -  \pi_{u^*_n}(\cdot|s) \|_1 \leq \sqrt{2A\frac{\ln(NS)+\ln(\frac{8}{\delta})}{\Sp}}$.
By applying union bound over $N$ rounds, we obtain that event $E_1$ happens with probability at least $1-\frac \delta 4$, where $E_1$ is defined as 
\begin{equation}
E_1 : \| \pi_{u_n}(\cdot|s) -  \pi_{u^*_n}(\cdot|s) \|_1 \leq \sqrt{2A\frac{\ln(NS)+\ln(\frac{8}{\delta})}{\Sp}},
\forall n \in [N], \forall s \in \Scal.
\label{eq: event1}
\end{equation}
Thus, when $E_1$ happens, $\forall n \in [N]$,

\[
\begin{aligned}
 \langle {\est_n}, {u_n-u^*_n} \rangle 
=&
\EE_{(x,\vec{c}) \sim \incrdata_n}
\EE_{a \sim \pi_n(\cdot|s) }\sbr{ \vec{c}(a) }
-
\EE_{(x,\vec{c}) \sim \incrdata_n}
 \EE_{a \sim \pi_n^*(\cdot|s) }\sbr{ \vec{c}(a) }\\
=&
\EE_{(x,\vec{c}) \sim \incrdata_n}\sbr{ \inner{\vec{c} }{\pi_n(\cdot|x)-\pi_n^*(\cdot|x) }}\\
\leq&
 \EE_{(x,\vec{c}) \sim \incrdata_n}\sbr{\|\pi_n(\cdot|x)-\pi_n^*(\cdot|x)\|_1\|\vec{c} \|_\infty}\\
\leq &
\mu \sqrt{ 2A\frac{\ln(NS)+\ln(\frac{8}{\delta})}{\Sp}},
\end{aligned}
\]
which implies, 
\[
\sum_{n=1}^N \langle {\est_n}, {u_n} \rangle -
\sum_{n=1}^N \langle {\est_n}, {u^*_n} \rangle
\leq
\mu N\sqrt{ 2A\frac{\ln(NS)+\ln(\frac{8}{\delta})}{\Sp}}.
\]

\paragraph{Term (2): $\sum_{n=1}^N \langle {\est_n}, {u^*_n} \rangle - 
\min_{u \in \Delta(\Bcal)} \sum_{n=1}^N \langle {\est_n}, {u} \rangle$.} By the definition of  $u^*_n := \nabla\Phi_{\mathcal{N}}(-\eta (\sum_{i=1}^{n-1} \est_i +\auxest_n)) $,
$u^*_n$ follows the update rule of Algorithm~\ref{alg: Optimistic FTRL}
with $\Phi_{\mathcal{N}}$ defined in Equation~\eqref{eq:phi}, online loss $\est_n$ and optimistic estimation $\auxest_n$. By Theorem~\ref{thm: Regret of EFTPL with Prediction}, the regret is bounded by

\[
\begin{aligned}
& \sum_{n=1}^N \inner{\est_n }{u^*_n}
-
\min_{u \in \Delta(\Bcal)}\sum_{n=1}^N \inner{ \est_n}{u} \\
\leq&
 \frac{1}{\eta}\sqrt{2X\ln(B)}
   +
   \sum_{n=1}^N \rbr{  \eta AX\|\est_n - \auxest_n \|_{\infty} ^2
   -
  \frac{1}{4\eta AX} \|u_n^* - \auxu_n^*\|_1^2 },
\end{aligned}
\]
where we applied the fact that $  \nabla\Phi_{\mathcal{N}}(-\eta (\sum_{i=1}^{n-1} \est_i +\auxest_n)) -  \nabla\Phi_{\mathcal{N}}(-\eta (\sum_{i=1}^{n-1} \est_i))  = u_n^* - \auxu_n^*$.

We now turn to bound $\| \est_n - \auxest_n \|_{\infty}^2$.
Intuitively, $\est_n $ and $ \auxest_n$ are approximators of $\theta(u_n^*)$ and  $\theta(\auxu_n^*)$, while by the inequality $(*2)$ of Lemma \ref{lem:dist-cont_restated},  $\|\theta(u_n^*) - \theta(\auxu_n^*)\|_\infty \leq \mu H\|u_n^* - \auxu_n^*\|_1$.
By \cite{cheng2019accelerating}[Lemma H.3], we bound  $\|\est_n - \auxest_n \|_{\infty} ^2$ by

\[
\begin{aligned}
\|\est_n - \auxest_n \|_{\infty} ^2
\leq &
5\cdot (
\underbrace{\|\est_n -  \theta(u_n)\|_\infty^2}_{(a)} 
+\underbrace{\|\theta(u_n)- \theta(u_n^*)\|_\infty^2}_{(b)}
+\underbrace{\|\theta(u_n^*)-\theta(\auxu_n^*)\|_\infty^2}_{(c)} 
\\
& +\underbrace{\|\theta(\auxu_n^*)-\theta(\auxu_n)\|_\infty^2}_{(\tilde{b})}
+\underbrace{\|\theta(\auxu_n)- \auxest_n\|_\infty^2}_{(\tilde{a})}).
\end{aligned}
\]
We group the terms in three groups: $(a)(\tilde{a})$, $(b)(\tilde{b})$, and $(c)$, and apply different techniques to bound them. For the easiest $(c)$ term, we apply Lemma \ref{lem:dist-cont_restated} and bound it by $\|\theta(u_n^*) - \theta(\auxu_n^*)\|_\infty^2 \leq \mu^2 H^2\|u_n^* - \auxu_n^*\|^2_1$.

 For  $(b)$ and $(\tilde{b})$, we apply inequality $(*1)$ in Lemma \ref{lem:dist-cont_restated} and get
 \[
 \begin{aligned}
 \|\theta(u_n)- \theta(u_n^*)\|_\infty^2 
 \leq &
 \mu^2 H^2\max_{s \in \Scal}\| \pi_{u_n}(\cdot|s) -  \pi_{u_n^*}(\cdot|s) \|_1^2 ,\\
 \|\theta(\auxu_n^*)-\theta(\auxu_n)\|_\infty^2
 \leq &
  \mu^2 H^2\max_{s \in \Scal}\| \pi_{\auxu_n}(\cdot|s) -  \pi_{\auxu_n^*}(\cdot|s) \|_1^2 .
  \end{aligned}
 \]
 For term $(b)$, on event $E_1$, which happens with probability $1-\frac\delta4$, we have that $ \max_{s \in \Scal}\| \pi_{u_n}(\cdot|s) -  \pi_{u_n^*}(\cdot|s) \|_1^2 \leq 2A\frac{\ln(NS)+\ln(\frac{8}{\delta})}{\Sp}$ for all $n \in [N]$.
For term $(\tilde{b})$, the same analysis goes through for  $\auxu_n$ the output from \algm with input dataset $\cbr{\incrdata_i}_{i=1}^{n-1} $,  and $\auxu^*_n := \EE_{\ell \sim \mathcal{N}(0,I_{XA})} \sbr{ \mathop{\argmax} \limits_{u} \inner{\sum_{i = 1}^{n-1}-\est_i + \frac{1}{\eta}  q(\ell)}{u}}$. Again, 
applying Lemma~\ref{lem: ftpl_approximation} and union bound over $n \in [N]$, we guarantee that the following event $E_2$  happens with probability at least $1-\frac{\delta}{4}$, where 
\[
E_2: \max_{s \in \Scal}\| \pi_{\auxu_n}(\cdot|s) - \pi_{ \auxu^*_n}(\cdot|s) \|_1^2 \leq 2A\frac{\ln(NS)+\ln(\frac{8}{\delta})}{\Sp}, \forall n \in [N], \forall s \in \Scal.
\] 
In summary, for $(b)$ and $(\tilde{b})$, we conclude:
\begin{enumerate}
\item With probability at least $1-\frac{\delta}{4}$,  $\forall n \in \Scal$,  $ \|\theta(u_n)- \theta(u_n^*)\|_\infty^2  \leq 2 \mu^2 H^2A\frac{\ln(NS)+\ln(\frac{8}{\delta})}{\Sp}$. 
\item With probability at least $1-\frac{\delta}{4}$,  $\forall n \in \Scal$,  $  \|\theta(\auxu_n^*)-\theta(\auxu_n)\|_\infty^2  \leq 2 \mu^2 H^2A\frac{\ln(NS)+\ln(\frac{8}{\delta})}{\Sp}$. 
\end{enumerate}

 For $(a)$ and $(\tilde{a})$, we first introduce notation $\theta_{n}\coord{h} = \EE_{s \sim d_{\pi_n}}\sbr{\feedback(s, h(s))}$, $\auxtheta_{n}\coord{h} = \EE_{s \sim d_{\auxpi_n}}\sbr{\feedback(s, h(s))}$, where we recall that $\pi_n = \pi_{u_n}$, $\auxpi_n = \pi_{\auxu_n}$. Also, since $\est_n\coord{h} = \EE_{(s,\vec{c}) \sim \incrdata_n}\sbr{ \vec{c}(h(s))}$ , $\auxest_n\coord{h} = \EE_{(s,\vec{c}) \sim \auxincrdata_n}\sbr{\vec{c}(h(s))}$.
 Notice $\forall n \in [N] $ and $\forall h \in \Bcal$,  $\EE\est_n\coord{h} =  \theta_n\coord{h}$ and $\EE\auxest_n\coord{h} =  \auxest_n\coord{h}$. Since $\theta_{n}\coord{h}$, $\auxtheta_{n}\coord{h}$, $\est_n\coord{h}$, $\auxest_n\coord{h}$ are all in $[-\mu,\mu]$. We have by Hoeffding's Inequality,  given any $n \in [N]$ and $h \in \Bcal$,
\begin{enumerate}
\item With probability at least $1-\frac{\delta}{4NB}$,  $|\est_n\coord{h} -\theta_n\coord{h}| \leq 2\mu\sqrt{\frac{\ln(NB) +\ln(\frac{8}{\delta})}{2K}}$.\\
\item With probability at least $1-\frac{\delta}{4NB}$,  $|\auxest_n\coord{h} -\auxtheta_n\coord{h}| \leq 2\mu\sqrt{\frac{\ln(NB) +\ln(\frac{8}{\delta})}{2K}}$.\\
 \end{enumerate}
 With union bound applied over $[N]$ and all $h \in \Bcal$, we obtain\\
 \begin{enumerate}
\item Event $E_3$ happens with probability at least $1-\frac{\delta}{4}$, where $E_3: \|\est_n -  \theta(u_n)\|_\infty^2 \leq 2\mu^2\frac{\ln(NB) +\ln(\frac{8}{\delta})}{K}$, $\forall n \in [N]$.\\
\item Event $E_4$  happens with probability at least $1-\frac{\delta}{4}$, where $E_4: \|\theta(\auxu_n)- \auxest_n\|_\infty^2\leq 2\mu^2\frac{\ln(NB) +\ln(\frac{8}{\delta})}{K}$, $\forall n \in [N]$.\\
\end{enumerate}
Finally, by the union bound, event $E  = E_1\cap E_2 \cap E_3 \cap E_4$ happens with probability at least $1-\delta$. By combining the bounds on all terms we have, we obtain that when event $E$ happens,
\[
\begin{aligned}
\SLReg_N
\leq & 
\mu N\sqrt{ 2A\frac{\ln(NS)+\ln(\frac{8}{\delta})}{\Sp}}
+
   \frac{1}{\eta}\sqrt{2X\ln(B)}\\
   &+
   \sum_{n=1}^N\rbr{ \eta AX \|\est_n - \auxest_n \|_{\infty} ^2
   -
  \frac{1}{4\eta AX} \|u_n^* - \auxu_n^*\|_1^2 }
 \\
  \leq&
  \mu N\sqrt{ 2A\frac{\ln(NS)+\ln(\frac{8}{\delta})}{\Sp}} 
  +
  \frac{1}{\eta}\sqrt{2X\ln(B)}\\
   &+
   \sum_{n=1}^N \rbr{ \eta  AX\cdot 5 \mu^2 H^2\|u_n^* - \auxu_n^*\|^2_1
   -
  \frac{1}{4\eta AX} \|u_n^* - \auxu_n^*\|_1^2 }\\ 
  &+
  \eta AX \rbr{
  20NA \mu^2 H^2\frac{\ln(NS)+\ln(\frac{8}{\delta})}{\Sp}
  +
  20\mu^2 N\frac{\ln(NB) +\ln(\frac{8}{\delta})}{K} }.
\end{aligned}
\]
By setting $\eta = \frac{1}{5\mu HAX }$ , $\Sp = \frac{ N^2\ln(8NS/\delta)}{\mu HAX^3\ln(B)}$ and 
$K = \frac{N \ln(8NB/\delta)}{H^2A\sqrt{X^3\ln(B)}}$, we cancel the terms related to $ \|u_n^* - \auxu_n^*\|_1^2$ and conclude with probability at least $1-\delta$,
\[
\begin{aligned}
\SLReg_N
\leq &
\mu A\sqrt{2\mu HX^3\ln(B)} 
+
(5+ 4 + 4) \cdot \mu HA \sqrt{2X^3\ln(B)}
   =
   O\rbr{ \mu H A \sqrt{X^3\ln(B)} },
   \end{aligned}
\]
where $\mu A\sqrt{2\mu HX^3\ln(B)}$ is of lower order since $\mu \leq H$, and 
$\eta AX \cdot
  20NA \mu^2 H^2\frac{\ln(NS)+\ln(\frac{8}{\delta})}{\Sp} 
  =
  \frac{4\mu^2H^2A^2X^3\ln(B)}{N}
  \leq
  4 \mu H A \sqrt{2X^3\ln(B)}
  $ is from $N \geq \mu H A \sqrt{X^3\ln(B)}$.  
\end{proof}

\begin{theorem}[Restatement of Theorem~\ref{thm:logger-me-main}]
\label{thm:logger-me-restated}
Let $N \geq \mu HA\sqrt{X^3\ln(B)}$. For any $\delta \in (0,1]$,
\algreductmp, with \algmp setting its parameters as in Lemma~\ref{lem:mftpl-eg-reg}, satisfies that: (1) with probability $1-\delta$, its output $\cbr{\pi_n}_{n=1}^N$ satisfies that:
$\SReg_N(\Bcal)
\leq 
 O\rbr{ \mu HA\sqrt{X^3\ln(B)} } $;
(2) it queries $O \rbr {\frac{N^2 \ln(NB/\delta)}{H^2A\sqrt{X^3\ln(B)}}}$ annotations from expert $\pi^E$; 
(3) it calls the CSC oracle $\Ocal$ for $O \rbr{  \frac{ N^3 \ln(NS/\delta)}{\mu HAX^3\ln(B)} }$ times.\\
Specifically, \algreductmp achieves $\frac{H}{N}\SReg_N(\Bcal)<\epsilon$  with probability $1-\delta$ in
$2N =  O\rbr{ \frac{\mu H^2 A\sqrt{X^3\ln(B)}}{\epsilon} }$ interaction rounds, with $\tilde{O}\rbr{\frac{\mu^2 H^2 A \ln(B/\delta) \sqrt{X^3\ln(B)}}{\epsilon^2} }$ expert annotations and $\tilde{O}\rbr{\frac{\mu^2 H^5 A^2 \ln(S/\delta) \sqrt{X^3\ln(B)}}{\epsilon^3}}$ oracle calls.

\end{theorem}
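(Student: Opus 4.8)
The plan is to obtain Theorem~\ref{thm:logger-me-main} purely by composition: feed the online-linear-optimization guarantee of \algmp (Lemma~\ref{lem:mftpl-eg-reg}) into the regret-preserving reduction of Proposition~\ref{prop:il-olo-restated}, and then carry out the parameter bookkeeping for the sample, interaction-round, and oracle-call counts. Concretely, I would instantiate \algreduct (Algorithm~\ref{alg:reduce-olo}) with $\oloa = \algmp$, sample size $K = \frac{N\ln(8NB/\delta')}{H^2A\sqrt{X^3\ln(B)}}$, and \algmp's internal parameters $\eta = \frac{1}{5\mu HAX}$, $T = \frac{N^2\ln(8NS/\delta')}{\mu HAX^3\ln(B)}$ as in Lemma~\ref{lem:mftpl-eg-reg}, taking $\delta' = \delta/3$. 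Assuming $N \geq \mu HA\sqrt{X^3\ln(B)}$ (the hypothesis of the restated Theorem~\ref{thm:logger-me-restated}), Lemma~\ref{lem:mftpl-eg-reg} gives, with probability at least $1-\delta/3$, $\SLReg_N \leq \mathrm{Reg}(N) := O\!\left(\mu HA\sqrt{X^3\ln(B)}\right)$.

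Next I would invoke Proposition~\ref{prop:il-olo-restated} with this $\mathrm{Reg}(N)$ and $K$. After substituting $K$, the additive estimation term $O\!\left(\mu\sqrt{N\ln(B/\delta)/K}\right)$ becomes of order $\mu H\sqrt{A}\,(X^3\ln B)^{1/4}\,\sqrt{\ln(B/\delta)/\ln(8NB/\delta)}$, which is dominated by $\mathrm{Reg}(N)$ since $\sqrt{A}\,(X^3\ln B)^{1/4} \le A\,(X^3\ln B)^{1/2}$ (using $A \ge 1$, $X \ge 1$, $\ln B \ge 1$; the degenerate case $B = 1$ is trivial). Hence, with probability at least $1-\delta$, $\SReg_N(\Bcal) \le O\!\left(\mu HA\sqrt{X^3\ln(B)}\right)$, which is claim~(1).

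For claims~(2) and~(3) I would track resource usage per outer round of \algreductmp. Each round draws $K$ cost-sensitive examples in line~\ref{line:ksamples} of \algreduct and a further $K$ examples inside \algmp in line~\ref{line:get-extra-gradient}; the two calls to \algm (lines~\ref{line:auxun} and~\ref{line:run-extra-gradient-descent}) only read existing datasets and query the CSC oracle, issuing no expert annotations. This yields $2K$ annotations per round, i.e.\ $2NK = O\!\left(\frac{N^2\ln(NB/\delta)}{H^2A\sqrt{X^3\ln B}}\right)$ in total (claim~(2)). By Lemma~\ref{lem: ftpl_approximation}, each call to \algm makes exactly $T$ oracle calls, so there are $2T$ oracle calls per round and $2NT = O\!\left(\frac{N^3\ln(NS/\delta)}{\mu HAX^3\ln B}\right)$ in total (claim~(3)). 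Likewise each round performs two rollout phases (one for $\pi_n$ via line~\ref{line:ksamples}, one for $\auxpi_n$ via line~\ref{line:get-extra-gradient}), so the total number of interaction rounds is $2N$. The ``specifically'' clause then follows by choosing $N$ as the smallest integer with $\frac{H}{N}\SReg_N(\Bcal) \le \epsilon$, i.e.\ $N = O\!\left(\mu H^2A\sqrt{X^3\ln B}/\epsilon\right)$ — which for $\epsilon \le H$ (WLOG) also satisfies the hypothesis $N \ge \mu HA\sqrt{X^3\ln B}$ — and substituting this $N$ into the expressions $A(\epsilon) = 2NK$ and $C(\epsilon) = 2NT$.

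The genuinely hard work is already encapsulated in Lemma~\ref{lem:mftpl-eg-reg}: the Mirror-Prox/extragradient-style argument that converts the distributional-continuity property (Lemma~\ref{lem:dist-cont}) together with the FTRL--FTPL bridge (Lemmas~\ref{lem:ftrl-ftpl} and~\ref{lem: ftpl_approximation}) into an $O(1)$-type bound on $\SLReg_N$, including the delicate cancellation of the negative $\|u_n^* - \auxu_n^*\|_1^2$ terms. Granting that lemma, the only points in the present proof that require care are (i) verifying that the estimation-error and lower-order terms from Proposition~\ref{prop:il-olo-restated} are indeed dominated by $\mathrm{Reg}(N)$ for the prescribed choice of $K$, and (ii) the factor-of-two bookkeeping (two rollouts, two datasets, two batches of oracle calls per round) together with the $\delta/3$ union-bound split between the OLO-regret event and the concentration events inside the reduction.
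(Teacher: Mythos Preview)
Your proposal is correct and follows essentially the same approach as the paper: you compose Lemma~\ref{lem:mftpl-eg-reg} with Proposition~\ref{prop:il-olo-restated}, verify that the estimation term $O(\mu\sqrt{N\ln(B/\delta)/K}) = O(\mu H\sqrt{A}(X^3\ln B)^{1/4})$ is lower order, and then do exactly the per-round bookkeeping ($2K$ annotations, $2T$ oracle calls, $2$ rollout phases per outer round) and the substitution of $N = O(\mu H^2 A\sqrt{X^3\ln B}/\epsilon)$. Your explicit check that $\epsilon \le H$ ensures the hypothesis $N \ge \mu HA\sqrt{X^3\ln B}$ is a nice detail the paper leaves implicit.
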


\begin{proof}
Following the results in Lemma~\ref{lem:mftpl-eg-reg}, for any $\delta \in (0,1]$, \algmp, with the prescribed input learning rate $\eta$, sparsification parameter $\Sp$, and sample budget $K$, outputs a sequence of $\cbr{u_n}_{n=1}^N$, such that with probability at least $1-\frac{\delta}{3}$, 
\[
\SLReg_N
\leq 
     O\rbr{\mu H A\sqrt{X^3\ln(B)}}
.\] 
By Proposition~\ref{prop:il-olo},\algreductmp with the prescribed sample budget $K = O\rbr{\frac{N \ln(NB/\delta)}{H^2A\sqrt{X^3\ln(B)}}}$
outputs policies $\cbr{\pi_n}_{n=1}^N$ that satisfy with probability at least $1-\delta$, 
\[
\begin{aligned}
\SReg_N(\Bcal)
\leq 
   O\rbr{\mu H A\sqrt{X^3\ln(B)}}
+
O\rbr{\mu\sqrt{\frac{N\ln(B/ \delta)}{K}}}
=
O\rbr{\mu H A\sqrt{X^3\ln(B)}},
\end{aligned}
\]
where $O\rbr{\mu\sqrt{\frac{N\ln(B/ \delta)}{K}}} = O\rbr{\mu H\sqrt{A}(X^3\ln(B))^{\frac{1}{4}}}$ is of lower order. 

Since at each round \algreductmp queries $K = O\rbr{\frac{N \ln(NB/\delta)}{H^2A\sqrt{X^3\ln(B)}}}$ annotations from the expert and calls \algmp once, where \algmp also queries $K$ annotations, together for $N$ rounds \algreductmp calls $O \rbr {\frac{N^2 \ln(NB/\delta)}{H^2A\sqrt{X^3\ln(B)}}}$ annotations from the expert. Also, since \algmp calls \algm twice, where \algm calls oracle $\Sp =  O\rbr{\frac{ N^2 \ln(NS/\delta)}{\mu HAX^3\ln(B)} }$ times,  then $N$ rounds together \algreductmp  calls oracle  
$O \rbr{ \frac{ N^3 \ln(NS/\delta)}{\mu HAX^3\ln(B)}  }$ times. 

For the second part of the theorem, to guarantee $\frac{H}{N}\SReg_N(\Bcal) \leq \epsilon$, it suffices to let $N = O \rbr{ \frac{ \mu H^2A\sqrt{X^3\ln(B)}}{\epsilon}}$. The number of annotations and oracle calls follow from plugging this value of $N$ into their settings in the first part of the theorem.
\end{proof}

\subsection{Detailed comparisons between \algreductm, \algreductmp and behavior cloning}
\label{sec:comparison-bc}

We first present a finite-sample analysis of behavior cloning by ERM on the benchmark policy class in the following proposition.


\begin{proposition}[Application of standard agnostic ERM~\cite{shalev2014understanding}]
\label{prop:bcloss}
For $\Bcal$ that contains finite (e.g. $B$) deterministic policies $h: \Scal \rightarrow\Acal$ and  deterministic expert policy $\pi^E$, recall 
$\bias(\Bcal,\{\pi^E\},1) =  \min_{\pi \in \Bcal} \EE_{s \sim d_{\pi^E}} \sbr{I(h(s) \neq \pi^E(s))}$. Consider dataset $\Dcal =\{ (s_k,\pi^E(s_k)) \}_{k = 1}^K$,
where $s_k \sim d_{\pi^E}$. The output $\hat{\pi}$ from running ERM on $\Dcal$ satisfy with probability $1- \delta$,
\[
J(\hat{\pi}) - J(\pi^E)
\leq 
H^2 \cdot \bias(\Bcal,\{\pi^E\},1) + H^2  \sqrt{\frac{2(\ln( B)+\ln (\frac{2}{\delta}))}{K}}.
\]
\end{proposition}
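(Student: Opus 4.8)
The plan is to split the argument into a standard statistical generalization step over the finite class $\Bcal$ and a well-known reduction from supervised $0$-$1$ error to imitation suboptimality. Write $L(h) := \EE_{s \sim d_{\pi^E}}\sbr{I(h(s) \neq \pi^E(s))}$ for the population $0$-$1$ risk of a policy $h$ under the expert's occupancy, so that $\bias(\Bcal,\{\pi^E\},1) = \min_{h \in \Bcal} L(h)$, and let $\hat{L}(h) := \EE_{(s,a) \sim \Dcal}\sbr{I(h(s) \neq a)}$ denote its empirical counterpart on $\Dcal = \{(s_k, \pi^E(s_k))\}_{k=1}^K$ with $s_k \sim d_{\pi^E}$ i.i.d.; by definition $\hat{\pi} \in \argmin_{h \in \Bcal} \hat{L}(h)$.

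First I would establish uniform convergence over $\Bcal$. For each fixed $h \in \Bcal$, $\hat{L}(h)$ is an average of $K$ i.i.d.\ $\{0,1\}$ random variables with mean $L(h)$, so Hoeffding's inequality gives $\Prob\rbr{|\hat{L}(h) - L(h)| > t} \le 2\exp(-2Kt^2)$. Taking $t = \sqrt{\ln(2B/\delta)/(2K)}$ and a union bound over the $B$ policies in $\Bcal$, with probability at least $1-\delta$ we have $|\hat{L}(h) - L(h)| \le t$ for all $h \in \Bcal$ simultaneously. On this event the usual agnostic-ERM argument applies: with $h^\star \in \argmin_{h \in \Bcal} L(h)$, $L(\hat{\pi}) \le \hat{L}(\hat{\pi}) + t \le \hat{L}(h^\star) + t \le L(h^\star) + 2t = \bias(\Bcal,\{\pi^E\},1) + \sqrt{2(\ln B + \ln(2/\delta))/K}$.

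Next I would invoke the standard reduction from supervised $0$-$1$ error to imitation suboptimality (\cite{ross2010efficient}[Theorem 2.1]; see also \cite{syed2010reduction,Kakade2002ApproximatelyOA}): for any stationary deterministic $\hat{\pi}$ and deterministic $\pi^E$ in an episodic MDP with costs in $[0,1]$, $J(\hat{\pi}) - J(\pi^E) \le H^2 \cdot L(\hat{\pi})$. The mechanism, which I would reproduce briefly, is a coupling: run $\hat{\pi}$ and $\pi^E$ under shared environment randomness; as long as the two have agreed on all actions so far they occupy the same state, which at step $t$ is distributed as $d_{\pi^E}^t$, so letting $\tau$ be the first step of disagreement, $\Prob(\tau \le H) \le \sum_{t=1}^H \EE_{s \sim d_{\pi^E}^t}\sbr{I(\hat{\pi}(s) \neq \pi^E(s))} = H \cdot L(\hat{\pi})$ using $d_{\pi^E} = \frac1H \sum_{t=1}^H d_{\pi^E}^t$. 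On $\{\tau = \infty\}$ the policy $\hat{\pi}$ incurs exactly $\pi^E$'s (nonnegative) cost, and on $\{\tau \le H\}$ it incurs at most $H$, whence $J(\hat{\pi}) \le J(\pi^E) + H \cdot \Prob(\tau \le H) \le J(\pi^E) + H^2 L(\hat{\pi})$. Combining with the bound on $L(\hat{\pi})$ from the previous paragraph gives the claim.

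The argument is essentially routine; the only place requiring a little care is making the per-step conditioning in the coupling precise (so that the divergence probability at step $t$ is genuinely controlled by the error under $d_{\pi^E}^t$ rather than under $\hat{\pi}$'s own occupancy), but this is exactly the content of the cited results, so I would either give the short coupling argument above or simply cite it, and the deterministic nature of $\hat{\pi}$ (as an ERM over deterministic $\Bcal$) means no additional care is needed for randomized policies.
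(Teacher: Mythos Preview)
Your proposal is correct. The statistical step (Hoeffding plus a union bound over $\Bcal$, followed by the two-sided ERM sandwich) is exactly what the paper does, and your reduction step yields the same inequality $J(\hat{\pi}) - J(\pi^E) \le H^2\, L(\hat{\pi})$.

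The only genuine difference is in how that reduction is obtained. The paper does not use the coupling/first-disagreement argument you sketch; instead it applies the performance difference lemma in the ``reverse'' direction, writing $J(\pi^E) - J(\hat{\pi}) = H\,\EE_{s \sim d_{\pi^E}}\!\bigl[A_{\hat{\pi}}(s,\pi^E(s))\bigr]$, and then uses that $A_{\hat{\pi}}(s,\hat{\pi}(s))=0$ together with $|A_{\hat{\pi}}|\le H$ to get $-A_{\hat{\pi}}(s,\pi^E(s)) \le H\, I(\hat{\pi}(s)\neq \pi^E(s))$. Your coupling route is the Ross--Bagnell Theorem~2.1 argument and is arguably more elementary (no advantage functions needed), while the paper's PDL route is a one-line algebraic bound that would immediately sharpen if one happened to have a better bound on $\sup_{s,a}|A_{\hat{\pi}}(s,a)|$ than $H$. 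Either is perfectly adequate here.
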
 
\begin{proof}
By standard analysis of ERM for agnostic learning (e.g. \cite{shalev2014understanding}), we have with probability $1-\delta$,

\[
\begin{aligned}
 \EE_{s \sim d_{\pi^E}} \sbr{I(h(s) \neq \pi^E(s))}
\leq &
 \min_{\pi \in \Bcal} \EE_{s \sim d_{\pi^E}} \sbr{I(h(s) \neq \pi^E(s))}+
 \sqrt{\frac{2(\ln( B)+\ln (\frac{2}{\delta}))}{K}}\\
 =&
\bias(\Bcal,\{\pi^E\},1) + \sqrt{\frac{2(\ln( B)+\ln (\frac{2}{\delta}))}{K}}.
\end{aligned}
\]
By the performance difference lemma \ref{lem: performance difference lemma}, we have 
\[
 J(\pi^E) -  J(\hat{\pi}) 
=
H \cdot \EE_{s \sim d_{\pi^E}}  \sbr{ A_{\hat{\pi}}(s,\pi^E(s)) },
\]
where $A_{\hat{\pi}}(s,\pi^E(s))=Q_{\hat{\pi}}(s,\pi^E(s)) -V_{\hat{\pi}}(s)$. By the fact that $A_{\hat{\pi}} $ is bounded in $[-H,H]$ and $A_{\hat{\pi}}(s,\hat{\pi}(s))=0$, we have 
\[
\begin{aligned}
 J(\hat{\pi}) - J(\pi^E) = &
H \cdot \EE_{s \sim d_{\pi^E}}  \sbr{ -A_{\hat{\pi}}(s,\pi^E(s)) } \\
\leq &
H \cdot \EE_{s \sim d_{\pi^E}} \sbr{H\cdot I(h(s) \neq \pi^E(s))}\\
\leq  &
H^2 \cdot \bias(\Bcal,\{\pi^E\},1)
+
H^2  \sqrt{\frac{2(\ln( B)+\ln (\frac{2}{\delta}))}{K}},
\end{aligned}
\]
which concludes the proof.
\end{proof}

We now summarize the policy suboptimality guarantees of \algreductm, \algreductmp and behavior cloning in Table~\ref{tab:compare_full}, based on
Theorems~\ref{thm:logger-m-restated},~\ref{thm:logger-me-restated}, and
Proposition~\ref{prop:bcloss}; this extends Table~\ref{tab:comparison} in the main text. In addition to the main observations made in Section~\ref{sec:static-reg}, we see that \algreductmp has a factor of $O(\ln(B))$ higher sample complexity $A(\epsilon)$ than \algreductm. In addition, in terms of the dependence on $X$  and $\ln(B)$,
the sample complexity of behavior cloning $A(\epsilon)$ has a $\ln(B)$ dependence, which is better than \algreductm's $\sqrt{X^3 \ln(B)}$ and \algreductmp's $\sqrt{X^3 (\ln(B))^3}$. It would be interesting to design interactive imitation learning algorithms with sample complexity that only has a $O(\ln(B))$ dependence, by relaxing the small separator set assumption on $\Bcal$.

\begin{table}
\renewcommand\arraystretch{1.5}
  \caption{A comparison between our algorithms and the behavior cloning. Here $\tilde{O}(N)$  
  denotes $ O(N \ln(N))$.}
  \label{tab:compare_full}
  \centering
  \begin{tabular}{ccc}
    \toprule
   Algorithm  &  $\mathrm{Bias}$ Term & \# Interaction Rounds $I(\epsilon)$ \\
    \midrule
     \algm & $\mu H \cdot \bias(\Bcal,\Pi_\Bcal,N) $ & $O\rbr{\frac{\mu^2 H^2 A \ln(1/\delta)\sqrt{X^3\ln(B)}}{\epsilon^2}}$     \\
    \algmp & $\mu H \cdot \bias(\Bcal,\Pi_\Bcal,N) $ & $O \rbr{\frac{\mu H^2 A\sqrt{X^3\ln(B)}}{\epsilon}}$       \\
     Behavior Cloning &  $ H^2 \cdot \bias(\Bcal,\{\pi^E\},1) $  & 1   \\
       \midrule
       Algorithm  &  \# Expert Annotations $A(\epsilon)$ & \# Oracle Calls $C(\epsilon)$\\
       \midrule
      \algm &   $O\rbr{\frac{\mu^2 H^2 A \ln(1/\delta)\sqrt{X^3\ln(B)}}{\epsilon^2} }$ & $\tilde{O}\rbr{\frac{\mu^4 H^4 A^2 \ln(S/ \delta)\rbr{\ln(1/\delta)}^2 \sqrt{X^3\ln(B)}}{\epsilon^4} }$ \\
      \algmp &   $\tilde{O}\rbr{\frac{\mu^2 H^2 A \ln(B/\delta) \sqrt{X^3\ln(B)}}{\epsilon^2} }$ &  $\tilde{O}\rbr{\frac{
      \mu^2 H^5 A^2 \ln(S/\delta)\sqrt{X^3\ln(B)}}{\epsilon^3} }$ \\
        Behavior Cloning &   $O\rbr{\frac{H^4 \ln(B/\delta)}{\epsilon^2}}$ & 1\\
    \bottomrule
  \end{tabular}
\end{table}

\section{Deferred materials from Section~\ref{sec:dynamic-reg}}
\label{sec:dynamic-reg-proof}

In this section, we present the proof of Theorem~\ref{thm:hardness-dreg}. To this end, 
we will show that obtaining a sublinear dynamic regret guarantee in the \algreduct framework is as hard as computing an approximate mixed Nash equilibrium of a two-player general-sum game. This is achieved by using polynomial time reduction, where we use $Y \leq_{p} X$ to denote that problem $Y$ is polynomial-time reducible to problem $X$.
To facilitate our discussions, we start with some problem definitions.


\subsection{Preliminaries for two-player general-sum games}
\label{sec:two-player-game}
A two-player general-sum game \cite{lemke1964equilibrium}, also known as bimatrix game, is a non-cooperative game between two players where they can choose actions (or strategies) from set $\Acal_x$, $\Acal_y$ that contain $A_x$, $A_y$ choices and gain reward from payoff matrices $V,W \in \mathbb{R}^{A_x \times A_y}$, respectively. We say a bimatrix game $(V,W)$ is positivly normalized if $V,W \in [0,1]^{A_x \times A_y}$.
Note that we use $V_{ij},W_{ij}$  $(i \in [A_x], j \in [A_y] )$ to index each element in matrix $V$ and $W$. In the bimatrix game, if the first player plays the $i$-th action and the second player plays the $j$-th action, they receive payoffs $V_{i,j}$ and $W_{i,j}$ respectively. Define mixed strategies probability distribution on action sets. The two players are allowed to play $x \in \Delta(\Acal_x)$ and $y \in \Delta(\Acal_y)$ that corresponds to the  mixed strategies on  set $\Acal_x$ and $\Acal_y$, and their payoffs are $x^\top V y$ and $x^\top W y$ respectively. A Nash equilibrium of a bimatrix game $(V,W)$ is a pair $(x,y)$, where $ x \in \Delta(\Acal_x)$, $y \in \Delta(\Acal_y)$, and no player can gain more payoff by changing $x$ or $y$ alone. A relaxed notion, $\epsilon$-approximate 2-player Nash equilibrium is defined below.
\begin{definition}[$\epsilon$-approximate 2-player Nash equilibrium]
For $\epsilon \geq 0$,
an $\epsilon$-approximate 2-player Nash equilibrium $(\hat x, \hat y)$ for a bimatrix game $(V,W)$ satisfies that for any $x \in \Delta(\Acal_x)$ and $y \in \Delta(\Acal_y)$,
\[
\left\{
\begin{aligned}
\hat{x}^\top V\hat{y} \geq x^\top V\hat{y}-\epsilon, \\
\hat{x}^\top W\hat{y} \geq \hat{x}^\top  W y-\epsilon,
\end{aligned}
\right. 
\]
where $\hat x \in \Delta(\Acal_x)$, $ \hat y \in \Delta(\Acal_y)$, and $V,W \in \mathbb {R}^{A_x \times A_y}$.
\end{definition}

In other words, at $(\hat x, \hat y)$, by changing a players' mixed strategies unilaterally, the increase of her payoff is smaller than $\epsilon$. We consider a search problem of finding an approximate 2-player Nash equilibrium:

\begin{center}
\fbox{\parbox{0.9\linewidth}{
\bimatrix:\\
Input: A positively normalized bimatrix game $(V,W)$, where $V, W \in \mathbb [0,1]^{m \times m}$, $m \in \mathbb{N}$.\\
Output: An $m^{-12}$-approximate 2-player Nash equilibrium of $(V,W)$.
}
}
\end{center}

It is well-known that 
\bimatrix is a total search problem (i.e. any \bimatrix instance has a solution).
Furthermore, it is
$\mathrm{PPAD}$-complete \cite{chen2006settling,daskalakis2009complexity}, which means that for any problem $Y$ in $\mathrm{PPAD}$, $Y \leq_{p} \bimatrix$. $\mathrm{PPAD}$-complete is a computational complexity class believed to be computationally intractable. 

 
\subsection{A related variational inequality problem} 
The Variational Inequality (VI) formulation serves as a tool to address equilibrium problems. In this subsection, we intoduce a VI problem, which is shown to bridge \bimatrix and achieving sublinear dynamic regret in the \algreduct framework in the following subsections.  

\begin{definition}[Variational inequality]
Given $\Omega \subset \mathbb{R}^d$ and a vector field $\Fcal: \Omega \rightarrow\mathbb{R}^d$,  define $\vi(\Omega, \Fcal)$ , the variational inequality problem induced by $(\Omega, \Fcal)$ as finding $u^* \in \Omega$, such that
\[
\forall u \in \Omega,
\inner{\Fcal(u^*)}{u-u^*} \geq 0.
\]
\end{definition}

\begin{definition}[$\epsilon$-approximate solution of variational inequality]
$u^* \in \Omega$ is said to be an $\epsilon$-approximate solution of $\vi(\Omega, \Fcal)$ if
\[
\forall u \in \Omega,
\inner{\Fcal(u^*)}{u-u^*} \geq -\epsilon.
\]
\end{definition}

Given a discrete state-action episodic MDP $\Mcal$, expert feedback $\feedback$ and deterministic policy class $\Bcal$, where $|\Bcal| = B$, Section~\ref{sec:improper-learning} defines a vector field $\theta: \Delta(\Bcal) \to \RR^B$, where $\theta(u) := \rbr{ \EE_{s \sim d_{\pi_u}} \EE_{a \sim h(\cdot \mid s)} \sbr{\feedback(s,a) } }_{h \in \Bcal}$ and $\pi_u(\cdot|s) := \sum_{h \in \Bcal} u\coord{h} \cdot h(\cdot|s)$. 
Setting $\Omega = \Delta(\Bcal) \subset \mathbb{R}^B$ and $\Fcal = \theta$, we obtain a variational inequality problem, which, as we see next, is tightly connected with the dynamic regret minimization problem in the \algreduct framework.

Note that in the \algreduct framework, if an algorithm at some round $n$ outputs a $\pi_n :=\pi_{u_n} \in \Pi_\Bcal$ that achieves a low instant dynamic regret guarantee: $F_n(\pi_n) - \min_{\pi\in \Bcal} F_n(\pi) \leq \epsilon$, by the fact that $F_n(\pi_u) = \inner{\theta(u_n)}{u} $, we obtain  $\inner{\theta(u_n)}{u_n}  -\mathop{\min}\limits_{u\in \Delta(\Bcal)} \inner{\theta(u_n)}{u}\leq \epsilon$, which is equivalent to $\forall u \in \Delta(\Bcal)$, $\inner{\theta(u_n)}{u-u_n} \geq -\epsilon.$ This implies $u_n$ is a $\epsilon$-approximate solution of $\vi(\Delta(\Bcal), \theta)$.

This motivates the following search problem of finding an $\epsilon$-approximate solution of $\vi(\Delta(\Bcal), \theta)$:
\begin{center}
\fbox{\parbox{0.9\linewidth}{
\vimdp:\\
Input: Discrete state-action episodic MDP $\Mcal = (\Scal,\Acal,H,c,\rho,P)$, expert feedback $\feedback: \Scal \times \Acal \to \RR$, deterministic policy class $\Bcal$.\\
Output: A $(S+A+B)^{-6}$-approximate solution of $\vi(\Delta(\Bcal),\theta)$, where $S = |\Scal|, A = |\Acal|, B = |\Bcal|$.
}
}
\end{center}
For the remainder of this section, we first establish a reduction from \bimatrix to \vimdp.
Then, we show that an efficient algorithm that achieves sublinear dynamic regret in the \algreduct framework yields an efficient procedure for solving \vimdp,
and thus, all $\mathrm{PPAD}$ problems are solvable in randomized polynomial time. 

Before diving into the reduction, we first prove that any \vimdp instance has a solution.

\begin{lemma}
\label{lem: Existence of  VI Solution}
\vimdp is a total search problem, i.e., any \vimdp problem instance has a solution.
\end{lemma}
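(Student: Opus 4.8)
The statement to prove is that \vimdp{} is a total search problem: every instance $(\Mcal, \feedback, \Bcal)$ admits a $(S+A+B)^{-6}$-approximate solution of $\vi(\Delta(\Bcal),\theta)$. The plan is to show something stronger — that an \emph{exact} solution of $\vi(\Delta(\Bcal),\theta)$ always exists — which trivially implies the existence of an $\epsilon$-approximate solution for any $\epsilon \geq 0$. The key observation is that $\Delta(\Bcal)$ is a nonempty, compact, convex subset of $\RR^B$, and that the vector field $\theta: \Delta(\Bcal) \to \RR^B$ is continuous. Given these two facts, the existence of an exact solution is a classical consequence of Brouwer's fixed-point theorem (the standard existence theorem for variational inequalities over compact convex sets; see e.g.\ Hartman--Stampacchia).

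\textbf{Step 1: Compactness and convexity of the domain.} The set $\Delta(\Bcal)$ is the standard probability simplex in $\RR^B$ (recalling $B = |\Bcal|$), which is nonempty, compact, and convex. This is immediate and requires no argument beyond citing the definition.

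\textbf{Step 2: Continuity of $\theta$.} This is the main technical step. Recall $\theta(u)\coord{h} = \EE_{s \sim d_{\pi_u}} \EE_{a \sim h(\cdot\mid s)}\sbr{\feedback(s,a)}$. I would argue continuity by unrolling the occupancy measure: for a fixed policy $\pi_u$, the trajectory distribution $d_{\pi_u}(\tau)$ over $\tau \in (\Scal\times\Acal)^H$ is a finite product of terms of the form $\rho(s_1)$, $\pi_u(a_t\mid s_t)$, and $P_t(s_{t+1}\mid s_t,a_t)$. Since $\pi_u(a\mid s) = \sum_{h\in\Bcal} u\coord{h}\, I(h(s)=a)$ is an affine (hence continuous) function of $u$, each trajectory probability $d_{\pi_u}(\tau)$ is a polynomial in the entries of $u$, and therefore continuous; there are finitely many trajectories (the state and action spaces are finite, $H$ is finite), so $\theta(u) = \rbr{\sum_\tau d_{\pi_u}(\tau)\cdot \frac1H\sum_{s\in\tau}\feedback(s,h(s))}_{h\in\Bcal}$ is a finite sum of continuous functions and hence continuous. (Alternatively one could invoke Lemma~\ref{lem:dist-cont}, which gives $\|\theta(u)-\theta(v)\|_\infty \leq \mu H\|u-v\|_1$, i.e.\ $\theta$ is Lipschitz and a fortiori continuous — this is the cleanest route since the lemma is already proved.)

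\textbf{Step 3: Invoke the existence theorem.} With $\Omega = \Delta(\Bcal)$ nonempty compact convex and $\Fcal = \theta$ continuous, the Hartman--Stampacchia theorem (a standard corollary of Brouwer's fixed-point theorem; see e.g.\ Facchinei--Pang, \emph{Finite-Dimensional Variational Inequalities and Complementarity Problems}, already cited) guarantees a point $u^*\in\Omega$ with $\inner{\Fcal(u^*)}{u-u^*}\geq 0$ for all $u\in\Omega$. Such $u^*$ is in particular an $\epsilon$-approximate solution for every $\epsilon\geq 0$, so it is a valid output for the \vimdp{} instance. This completes the proof.

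\textbf{Main obstacle.} There is no serious obstacle — the result is essentially a packaging of standard facts. The only thing to be careful about is making the continuity argument airtight without circular dependence; using the already-established Lipschitz bound of Lemma~\ref{lem:dist-cont} sidesteps any fuss. If one wanted to avoid citing an external VI-existence theorem, the short self-contained route is: define $F(u) = \Pi_{\Delta(\Bcal)}\bigl(u - \theta(u)\bigr)$ where $\Pi_{\Delta(\Bcal)}$ is Euclidean projection onto the simplex; $F$ is a continuous self-map of the compact convex set $\Delta(\Bcal)$, so Brouwer gives a fixed point $u^*$, and the standard variational characterization of projection shows a fixed point of $F$ is exactly a solution of $\vi(\Delta(\Bcal),\theta)$.
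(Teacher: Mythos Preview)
Your proposal is correct and follows essentially the same approach as the paper: verify that $\Delta(\Bcal)$ is nonempty, compact, and convex; establish continuity of $\theta$ via the Lipschitz bound of Lemma~\ref{lem:dist-cont}; and then invoke the standard VI existence theorem from Facchinei--Pang (Theorem~3.1 there). The paper's proof is exactly this, differing only in that it spells out the Heine--Borel step for compactness and does not include your alternative polynomial-continuity argument or the self-contained Brouwer-plus-projection sketch.
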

\begin{proof}
Theorem 3.1 in \cite{facchinei2007finite} says that, for any nonempty, convex and compact subset $\Omega \subset \mathbb{R}^{n}$ and continuous mapping $\Fcal: \Omega \rightarrow \mathbb{R}^{n}$, there exists an exact solution to the $\vi(\Omega, \Fcal)$. 
Then, it suffices to show that $\Omega=\Delta(\Bcal)$ and $\Fcal = \theta$ satisfy these requirements.

First of all, by lemma \ref{lem:dist-cont}, for any $u,v \in \Delta(\Bcal)$,
$\| \theta(u) - \theta(v) \|_\infty \leq \mu H \| u - v \|_1,$
which implies $\theta(\cdot)$ is a continuous mapping. Secondly, since it is easy to verify that $\Delta(\Bcal)$ is a convex set, it remains to show the compactness of $\Delta(\Bcal)$.
By the Heine–Borel theorem, a subset in $\RR^B$ is compact if and only if it is closed and bounded. Then, it suffices to show $\Delta(\Bcal)$ is closed and bounded. It can be seen that, $\Delta(\Bcal):=\left\{u \in \mathbb{R}^{B} \mid u \succeq 0, \sum_{h \in \Bcal} u[h] = 1 \right\}$ is closed, being the intersection of closed sets, namely the orthant $\mathbb{R}_{+}^{B}$ and the hyperplane $\left\{u \in \mathbb{R}^{B} \mid \sum_{h \in \Bcal} u[h] = 1\right\}$. Also, $\Delta(\Bcal)$ is a subset of the hypercube $[0,1]^{B}$, and is therefore bounded.   
Combining the above, we conclude the proof.
\end{proof}

\subsection{\vimdp is $\mathrm{PPAD}$-hard}

\begin{theorem}
\label{thm:vi-mdp-ppad}
\vimdp is $\mathrm{PPAD}$-hard.
\end{theorem}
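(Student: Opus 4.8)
\textbf{Proof plan for Theorem~\ref{thm:vi-mdp-ppad}.} The plan is to exhibit a polynomial-time reduction $\bimatrix \leq_p \vimdp$. Since $\bimatrix$ is $\mathrm{PPAD}$-complete (Section~\ref{sec:two-player-game}), this establishes $\mathrm{PPAD}$-hardness of $\vimdp$. Given a positively normalized bimatrix game $(V,W)$ with $V,W \in [0,1]^{m\times m}$, I would construct an instance $(\Mcal,\feedback,\Bcal)$ of \vimdp together with a polynomial-time decoding map that turns any $(S+A+B)^{-6}$-approximate solution $u^*$ of $\vi(\Delta(\Bcal),\theta)$ into an $m^{-12}$-approximate Nash equilibrium $(\hat x,\hat y)$ of $(V,W)$. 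The guiding identity is the one stressed in Section~\ref{sec:dynamic-reg-proof}: a $\vi(\Delta(\Bcal),\theta)$ solution $u^*$ satisfies $\inner{\theta(u^*)}{u-u^*}\ge 0$ for all $u\in\Delta(\Bcal)$, i.e. $u^*$ minimizes the linear loss $\inner{\theta(u^*)}{\cdot}$ over $\Delta(\Bcal)$; so the construction must be rigged so that ``$\pi_{u^*}$ is a best response to itself'' encodes the fixed-point/equilibrium condition.

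\textbf{The construction.} I would build a shallow (3-layer, so $H=3$) tree-structured MDP. The root $s_1$ has, from each action, a transition that splits the probability between a family of ``game states'' and a ``garbage state''. Concretely, let the benchmark class $\Bcal$ consist of $B = m + m$ policies: $m$ policies $h^x_1,\dots,h^x_m$ that at $s_1$ choose actions steering toward the $x$-player's coordinates, and $m$ policies $h^y_1,\dots,h^y_m$ for the $y$-player. A mixed policy $u=(p,q)$ with $p\in\mathbb{R}^m_{\ge0},q\in\mathbb{R}^m_{\ge0}$, $\sum p_i+\sum q_j=1$, then induces, via $\pi_u$, marginal action distributions at the relevant states that are $p/\|p\|_1$ and $q/\|q\|_1$ (after normalizing). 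The second layer contains states encoding the pair $(i,j)$ of chosen indices; the cost at the terminal (third) layer is set so that, via the expert-advantage feedback $\feedback(s,a)=A^E(s,a)$, the vector $\theta(u)$ has its $h^x_i$-coordinate equal to (roughly) $-\,(Vy)_i$ plus a common offset and its $h^y_j$-coordinate equal to $-\,(W^\top x)_j$ plus a common offset, where $x = p/\|p\|_1$, $y=q/\|q\|_1$. The ``garbage state'' with a large constant cost is the device that forces any near-optimal $u^*$ to put essentially all its mass on the game-relevant branches, pinning $\|p\|_1,\|q\|_1$ bounded away from $0$, so that the normalizations are well-conditioned; this mirrors the role of the large-constant-cost leaf group mentioned in the introduction to Section~\ref{sec:dynamic-reg}. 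One must also choose costs so that a stationary expert policy $\pi^E$ exists with $A^E$ taking the desired values and with $|A^E|\le\mu$ for a modest $\mu$ (e.g. $\mu = O(1)$), and verify $(\Mcal,\pi^E)$ is $\mu$-recoverable. Finally I would check that $S, A, B$ are all $\mathrm{poly}(m)$, so that the accuracy demand $(S+A+B)^{-6}$ translates (with room to spare) into the required $m^{-12}$; the polynomial slack here is exactly why the exponents were chosen as $-6$ and $-12$.

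\textbf{From $\vi$-approximate solution to $\epsilon$-Nash.} Given $u^*=(p^*,q^*)$ an $(S+A+B)^{-6}$-approximate $\vi$ solution, set $\hat x = p^*/\|p^*\|_1$, $\hat y = q^*/\|q^*\|_1$ (defining these via the normalization after using the garbage-state bound to guarantee the denominators are $\Omega(1)$). The $\vi$-approximation inequality, specialized to comparison points $u$ that move all the $x$-mass onto a single index $i$ while keeping $q^*$ fixed, gives $(V\hat y)_i \le \hat x^\top V\hat y + O((S+A+B)^{-6})/\Omega(1)$ for every pure $i$, hence $x^\top V\hat y \le \hat x^\top V\hat y + m^{-12}$ for every $x\in\Delta(\Acal_x)$; symmetrically for $W$ and $\hat y$. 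This is precisely the $m^{-12}$-approximate Nash condition. Total-ness of \vimdp is already in hand (Lemma~\ref{lem: Existence of VI Solution}), and total-ness of \bimatrix guarantees the reduction is well-defined on all inputs.

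\textbf{Main obstacle.} The delicate part is the cost/expert design: I need a single \emph{stationary} deterministic $\pi^E$ and a cost function $c$ on the tree such that the advantage function $A^E(s,a)$, aggregated by the occupancy of $\pi_u$, reproduces the bilinear forms $Vy$ and $W^\top x$ \emph{exactly} (up to an additive constant per coordinate, which cancels in the $\vi$ inequality), while simultaneously (i) keeping $|A^E|\le\mu$ bounded, (ii) keeping costs in $[0,1]$ as required by the MDP definition — so the payoff entries of $V,W$, already in $[0,1]$, must be embedded into the cost structure without rescaling blow-up, and (iii) ensuring the garbage branch genuinely dominates so that $\|p^*\|_1,\|q^*\|_1=\Omega(1)$ under the approximation budget. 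Getting all three to hold at once, and carefully tracking how the $(S+A+B)^{-6}$ slack survives the $\Omega(1)$-denominator division to land below $m^{-12}$, is where the real work lies; the rest is bookkeeping. I would defer the explicit numerical constants to the appendix and here only lay out the skeleton above.
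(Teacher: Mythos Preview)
Your proposal is correct and follows essentially the same route as the paper: reduce from \bimatrix, build a depth-$3$ tree MDP with $B=2m$ benchmark policies (one block per player), use $\feedback=A^E$, embed a large-constant penalty so that any approximate $\vi$ solution has both halves of $u^*$ with $\Omega(1)$ mass, and decode via $(\hat x,\hat y)=(u^*_x/\|u^*_x\|_1,\,u^*_y/\|u^*_y\|_1)$, with the $(S+A+B)^{-6}$ slack surviving the $\Omega(1)$ division to land below $m^{-12}$.

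One small clarification on the penalty mechanism: in the paper the ``garbage'' is not a separate state you avoid but rather a constant $\lambda$ cost on the \emph{diagonal} blocks of a $2m\times 2m$ matrix $C$ (so $\theta(u)=\tfrac13 Cu$ with $C=\begin{psmallmatrix}\lambda\one & -V\\ -W^\top & \lambda\one\end{psmallmatrix}$), which penalizes imbalance between the $x$- and $y$-halves directly and yields $\|u^*_x\|_1,\|u^*_y\|_1\in[\tfrac13,\tfrac23]$; your ``garbage state'' framing would work too but would need to be arranged so it punishes imbalance rather than merely diverts mass. Also note $\theta(u)$ is linear in the unnormalized $u_y$ (not in $y=u_y/\|u_y\|_1$); the normalization enters only at the decoding step, exactly as you describe.
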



\begin{proof}
Since \bimatrix is $\mathrm{PPAD}$-complete by~\cite{chen2009settling}, we show \vimdp is $\mathrm{PPAD}$-hard by proving \bimatrix $\leq_{p}$ \vimdp.
Our proof is organized as follows: First, we describe map \reducef that maps an instance of \bimatrix to an instance of \vimdp, and map \reduceb that maps a solution of \vimdp  to a solution of \bimatrix. Then, we prove that \reducef and \reduceb run in polynomial time and satisfy:
\begin{enumerate}
    \item If $(V,W)$ is an input of \bimatrix, then $\reducef(V,W)$ is an input of \vimdp.
    \item If $u^*$ is a solution of \vimdp instance $\reducef(V,W)$, then $\reduceb(u^*)$ is also a solution of \bimatrix instance $(V,W)$.
    \item If no $u$ is a solution of \vimdp instance $\reducef(V,W)$, then no $(x,y)$ a solution of \bimatrix instance $(V,W)$.
\end{enumerate}


\paragraph{Map \reducef.}
Given any \bimatrix instance $(V,W)$ where $V,W \in \mathbb [0,1]^{m \times m}$ we construct a \vimdp instance where the MDP $\Mcal$ can be viewed as a three-layer tree (whose details will be given shortly), and 
every non-leaf node in the $\Mcal$ has $A = 2m+1$ children. See Figure~\ref{fig:VI-MDP formal}. 

\begin{figure}[h]
    \centering
    \includegraphics[width=0.99\textwidth]{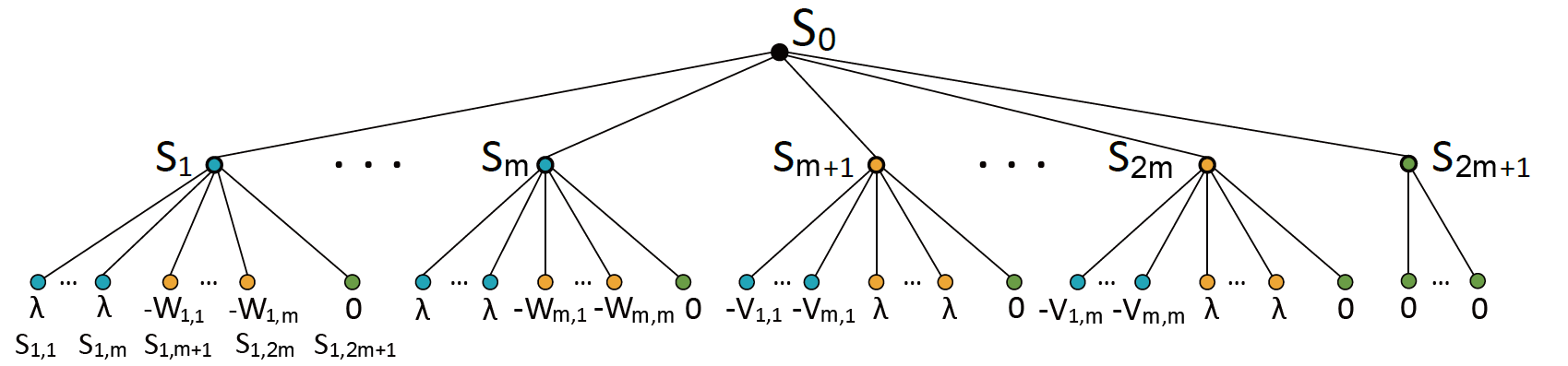}
    \caption{The MDP $\Mcal$ constructed in our reduction.}
     \label{fig:VI-MDP formal}
\end{figure}

Formally, layered MDP $\Mcal$ has episode length $H = 3$ with initial state $\Scal_1 = \cbr{S_0}$,  $A$ states at the second time step denoted as $\Scal_2 = \{S_1,S_2,\cdots S_{A}\}$, and $A^2$ states at the third time step denoted as $\Scal_3 = \{S_{1,1},S_{1,2},\cdots S_{1,A},S_{2,1}, \cdots, S_{A,A}\}$. We define state space $\Scal =\Scal_1 \cup \Scal_2 \cup \Scal_3$. Define action space $\Acal = \{a_1,a_2,\cdots, a_{A}\}$, initial distribution $\rho(S_0) = 1$ and deterministic transition dynamics $P_1(S_i | S_0,a_i) = 1$, $P_2(S_{i,j}|S_i,a_j) = 1$ for all $i,j \in [A]$. Define cost function $c(s,a) := \bar{c}(s)$, where\footnote{Strictly speaking, the cost should be within $[0,1]$; this can be achieved by shifting the cost function by $1$ and dividing by $\lambda +1$. Without affecting the correctness of the proof, we set the cost this way for the simplicity of presentation.}

%
\[
\bar{c}(s) := \left\{
             \begin{array}{ll}
             0 , & s \in \Scal_1 \cup \Scal_2
             \\
               0 , & s = S_{i,j}, \text{where } i = A \text{ or }  j = A\\
             -V_{j,i} , & s = S_{i,j}, \text{where }  i \in [m]+m , j \in [m]\\
             -W_{i,j}, &  s = S_{i,j}, \text{where } i \in [m] , j \in [m] + m\\
             \lambda,  & \text{otherwise}
             \end{array}
\right. 
\]
where we denote $[m]+m := \{m+1,\cdots,2m\}$ and set $\lambda = 54$. 

On the policy side, we define benchmark policy class $\Bcal$ that contains $A-1 = 2m$ deterministic policies $h_j $ ($j \in [A-1]$), such that $\forall s \in \Scal$, $h_j(s) = a_j$. Define deterministic expert policy $\pi^E$ as: $\pi^E(s)  = a_{A}$, $\forall s \in \Scal$. Note that action $a_{A}$ is never a choice for policies in $\Bcal$ but always chosen by the expert. Recall the mixed policy set $\Pi_{\Bcal}$ is defined as
\[
\Pi_\Bcal := \cbr{\pi_u(\cdot|s) := \sum_{h \in \Bcal} u\coord{h} \cdot h(\cdot|s): u \in \Delta(\Bcal) }
.
\]
For the expert feedback function $\feedback$, we use advantage function $A^E(s,a) = Q_{\pi^E}(s,a)-V_{\pi^E}(s)$. The values of $A^E(s,a)$ are calculated as follows:
\begin{itemize}
    \item 
    For $s$ in $\Scal_3$: since for every $a$, $Q_{\pi^E} (s,a) = c(s,a) = \bar{c}(s) = 0$, we have
    $A^E (s,a) = 0$.
    \item 
    For $s$ in $\Scal_2$:
    \begin{itemize}
        \item First, suppose $s = S_i$
    for $i \in \cbr{1,\ldots,A-1}$.
    Following $\pi^E$ directs the agent to $S_{i,A}$, which encounters zero subsequent cost. This implies that $V_{\pi^E}(S_i) = 0$. On the other hand, taking action $a_j$ transitions to $S_{i,j}$ which encounters cost $\bar{c}(S_{i,j})$ subsequently. This means that $A^E (S_i,a_j) = \bar{c}(S_{i,j})$. Recall that $\forall i \in [A]$ $\bar{c}(S_{i,A}) = 0$, by this we have that $A^E (S_i,a_A) = 0$ for all $S \in \Scal_2$.
    \item Next, suppose s = $S_{A}$. Taking any action $a$ and following policy $\pi^E$ afterwards encounters zero subsequent cost, which implies that $Q_{\pi^E}(S_{A}, a) = 0$. This implies that $V_{\pi^E}(S_{A}) = 0$ and $A^E(S_{A}, a) = 0$. 
    \end{itemize}
    \item For $s$ in $\Scal_1$: at the initial state $S_0$, 
    taking any action $a$ and following $\pi^E$ afterwards takes the agents to state $S_{i,A}$ for some $i$, which has zero cost. This implies that  $Q_{\pi^E}(S_{A}, a) = 0$, $V_{\pi^E}(S_{A}) = 0$, and $A^E(S_{A}, a) = 0$.
\end{itemize}
In summary, we have 
\[
\feedback(s,a) := A^E(s,a) = \left\{
             \begin{array}{ll}
             0 , & s \in \Scal_1 \cup \Scal_3 \; \text{or} \; s = S_{A} \; \text{or} \; a = a_{A}
             \\
             -V_{j,i} , & s = S_{i}, a = a_j, \text{where }  i \in [m]+m , j \in [m]\\
             -W_{i,j}, &  s = S_{i}, a = a_j, \text{where } i \in [m] , j \in [m] + m\\
             \lambda,  & \text{otherwise}
             \end{array}
\right. 
\]
In summary, given any \bimatrix instance $(U,V)$, \reducef returns a \vimdp instance $(\Mcal, \feedback,\Bcal)$.

\textbf{Map \reduceb.
} 
The map \reduceb is defined as: given $u^* = (u^*_x,u^*_y) \in \Delta(\Bcal)$, return
\[
\hat{x} = \frac{u^*_x}{\|u^*_x\|_1}, \quad \hat{y} = \frac{u^*_y}{\|u^*_y\|_1}.
\]

\paragraph{Polynomial-time computability of the reduction.} For map \reducef, given any \bimatrix instance $(V,W)$ with $V,W \in \mathbb [0,1]^{m \times m}$, map $\reducef(V,W)$ returns $(\Mcal,\feedback,\Bcal)$, where $\Mcal=(\Scal,\Acal,H,c,\rho,P)$ has $|\Scal| = (2m+1)^2+(2m+1)+1 = O(m^2)$ states, $|\Acal| = 2m+1 = O(m)$ actions, $H=3$, $|\Scal|\cdot|\Acal| = O(m^3)$ cost function values, $(2m+2)(2m+1) = O(m^2)$ values for the deterministic transition probability and one fixed initial distribution. Meanwhile, map \reducef returns $2m$ deterministic benchmark policies and $\feedback$ function with $|\Scal|\cdot|\Acal| = O(m^3)$ values. Combining the above, we conclude that \reducef runs in $O(m^3)$ time.

For map \reduceb, by its definition, it can be computed in $O(m)$ time. In all, \reducef and \reduceb are computable in polynomial-time with respect to $m$.

\paragraph{Correctness of the reduction.}
\begin{enumerate}
     \item If $(V,W)$ is a valid input of \bimatrix, then $V,W \in \mathbb [0,1]^{m \times m}$. Given any $V,W \in \mathbb [0,1]^{m \times m}$, by the definition of \reducef, it is straightforward to see \reducef constructs an discrete state-action episodic MDP with expert feedback $\feedback: \Scal \times \Acal \to \RR$ and deterministic policy class $\Bcal$. Thus, $\reducef(V,W)$ is a valid input of \vimdp.
    \item By Lemma~\ref{lem:reduction} (given below), if $u^*$ is a solution of \vimdp instance $\reducef(V,W)$, then $\reduceb(u^*)$ is also a solution of \bimatrix instance $(V,W)$.
    \item By Lemma~\ref{lem: Existence of  VI Solution}, any \vimdp instance has a solution.
\end{enumerate}
In conclusion, \bimatrix $\leq_{p}$ \vimdp, thus \vimdp is $\mathrm{PPAD}$-hard.
\end{proof}

\begin{lemma}
\label{lem:reduction}
$\reduceb(u^*)$ is a solution of \bimatrix instance $(V,W)$ if $u^*$ is a solution of \vimdp instance $\reducef(V,W)$.
\end{lemma}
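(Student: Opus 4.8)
The plan is to unwind the definitions so that the variational inequality condition on $u^*$ for the constructed MDP becomes exactly (a scaled version of) the two one-sided Nash inequalities for $(V,W)$. Write $u^* = (u^*_x, u^*_y)$ according to the natural split of $\Bcal = \{h_1,\dots,h_{2m}\}$ into the ``$x$-block'' indices $[m]$ and the ``$y$-block'' indices $[m]+m$. First I would compute $\theta(u^*)$ explicitly: since the MDP has $H=3$ with deterministic transitions, and the advantage feedback $\feedback(s,a) = A^E(s,a)$ is supported only on states in $\Scal_2$, the coordinate $\theta(u^*)[h_j]$ is $\frac{1}{3}\sum_{s \in \Scal_2} d_{\pi_{u^*}}^2(s)\, A^E(s, a_j)$ plus a contribution that is constant in $j$ (from $\Scal_1$, where all advantages are $0$, and $\Scal_3$, likewise $0$). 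The key point is that $d_{\pi_{u^*}}^2(S_i) = u^*[h_i]$ for $i \in [2m]$ (probability of landing in $S_i$ at step 2 equals the mixed weight on policies that play $a_i$ at $S_0$) and $d_{\pi_{u^*}}^2(S_A) = 0$ since no benchmark policy plays $a_A$.

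Carrying this through, $\theta(u^*)[h_j]$ for $j \in [m]$ picks up $-\sum_{i \in [m]+m} u^*[h_i] V_{j-\text{shift},\,i-\text{shift}}$ (the reward of the $x$-player against the $y$-player's mixed weights) plus the $\lambda$-cost terms coming from pairs $S_{i,j}$ with both $i,j$ in the same block, and symmetrically for $j \in [m]+m$ with $W$. I would then translate $\inner{\theta(u^*)}{u - u^*} \ge -\epsilon$ for all $u \in \Delta(\Bcal)$: testing against $u$ equal to a vertex $e_{h_j}$ gives a coordinatewise bound $\theta(u^*)[h_j] \ge \inner{\theta(u^*)}{u^*} - \epsilon$ for every $j$, i.e. every pure deviation is nearly optimal against the current mixed profile. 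This is precisely an approximate Nash condition, once one checks that the $\lambda$-penalty terms force the optimal deviations to stay within the ``cross'' block structure — i.e. force $\|u^*_x\|_1$ and $\|u^*_y\|_1$ both bounded away from $0$, so that the normalizations $\hat x = u^*_x/\|u^*_x\|_1$, $\hat y = u^*_y/\|u^*_y\|_1$ in \reduceb are well-defined and the deviation payoffs convert cleanly. The large constant $\lambda = 54$ is chosen exactly to make this forcing argument quantitative: a profile concentrating too much mass in one block incurs $\Theta(\lambda)$ cost, which cannot be within $\epsilon$ of optimal.

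Concretely, the key steps in order: (1) compute $d^2_{\pi_{u^*}}$ and hence $\theta(u^*)$ in closed form in terms of $u^*_x, u^*_y, V, W, \lambda$; (2) show any solution $u^*$ of \vimdp has $\|u^*_x\|_1, \|u^*_y\|_1 \geq c$ for an explicit constant $c$ — argue by contradiction that if, say, $\|u^*_y\|_1$ is tiny then $\pi_{u^*}$ puts $\approx$ all its second-step mass on $\{S_i : i \in [m]\}$, so transitions land in $\Scal_3$ states $S_{i,j}$ with $i,j \in [m]$ which carry cost $\lambda$, and a single pure deviation to some $a_j$ with $j \in [m]+m$ strictly reduces $\theta(u^*)[h_j]$ by $\Omega(\lambda)$, violating near-optimality since $\epsilon = (S+A+B)^{-6} = \poly(m)^{-1} \ll \lambda$; (3) with both block masses bounded below, expand the coordinatewise inequalities from step (1), divide through by $\|u^*_x\|_1$ or $\|u^*_y\|_1$ as appropriate, and verify the resulting inequalities are exactly the $m^{-12}$-approximate Nash conditions for $(\hat x, \hat y)$ — here one must track how $\epsilon$ propagates through the division (multiplying by $1/\|u^*_y\|_1 \le 1/c$) and confirm $(S+A+B)^{-6}/c \le m^{-12}$ given $S = O(m^2), A = O(m), B = O(m)$, which it does with room to spare.

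The main obstacle I expect is step (2): pinning down the quantitative lower bound on the block masses and making the ``a single pure deviation gains $\Omega(\lambda)$'' claim airtight. One has to be careful that the $\lambda$-penalty contribution to $\theta(u^*)[h_j]$ is itself a bilinear form in $(u^*_x, u^*_y)$ — pairs $S_{i,j}$ with $i,j$ both in the $x$-block or both in the $y$-block — so the ``deviation gain'' is not simply $\lambda$ times a single weight but involves the opponent-block mass; the argument needs the constant $\lambda$ large enough relative to the payoff range $[0,1]$ of $V,W$ so that the penalty term dominates no matter the payoff configuration, and this is exactly the role of choosing $\lambda = 54$ rather than just $\lambda = 1$. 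A secondary subtlety is bookkeeping the factor $\frac13$ from $H=3$ and the index shifts between $[2m]$-indexed policies and $[m]\times[m]$-indexed payoff entries, but that is routine once the structure is laid out.
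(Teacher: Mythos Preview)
Your overall architecture---compute $\theta(u^*)$ explicitly, show both block masses $\|u^*_x\|_1,\|u^*_y\|_1$ are bounded away from zero, then extract the Nash inequalities---matches the paper. Step (1) is carried out exactly as you describe (the paper packages it as $\theta(u)=\tfrac13 Cu$ with the block matrix $C$), and your step (2) via a pure deviation into the opposite block is a legitimate variant of the paper's argument (the paper instead tests against a balanced $v$ with $\|v_x\|_1=\|v_y\|_1=\tfrac12$ and shows the quadratic $\lambda(\|u_x\|_1^2+\|u_y\|_1^2)$ forces $\|u^*_x\|_1\in[\tfrac13,\tfrac23]$).

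The gap is in step (3). Testing the VI condition against a vertex $u=e_{h_j}$ with $j\in[m]$ does \emph{not} correspond to a one-player deviation in the bimatrix game: it simultaneously moves all of $u^*_y$'s mass into the $x$-block, so the resulting inequality entangles the $\lambda$-penalty imbalance \emph{and} the other player's payoff $\hat x^\top W\hat y$. Concretely, unwinding $\theta(u^*)[h_j]\ge\langle\theta(u^*),u^*\rangle-\epsilon$ for $j\in[m]$ yields
\[
\|u^*_x\|_1\bigl(\hat x^\top V\hat y+\hat x^\top W\hat y\bigr)-(V\hat y)_j \;\ge\; \lambda\bigl(\|u^*_y\|_1-\|u^*_x\|_1\bigr)\;-\;3\epsilon/\|u^*_y\|_1,
\]
and neither the $\hat x^\top W\hat y$ term nor the $\lambda(\|u^*_y\|_1-\|u^*_x\|_1)$ term vanishes---the latter can be of order $\lambda$ and the former of order $1$, both far larger than $m^{-12}$. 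Your block-mass bound from step (2) only gives $\|u^*_x\|_1,\|u^*_y\|_1\in[c,1-c]$, not equality, so these residual terms are genuinely obstructive.

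The paper's fix is to test the VI against $u=(\|u^*_x\|_1\cdot x,\,u^*_y)$ for arbitrary $x\in\Delta(\Acal_x)$: this keeps the $y$-block fixed and keeps the total $x$-block mass equal to $\|u^*_x\|_1$, so in $\langle Cu^*,\,u-u^*\rangle$ the $\lambda\one_{m\times m}$ contribution is $\lambda\|u^*_x\|_1(x-\hat x)^\top\one_{m\times m}u^*_x=0$ (since $\|x\|_1=\|\hat x\|_1=1$), and the $W$-block contribution is zero because $u_y-u^*_y=0$. What remains is exactly $-\|u^*_x\|_1\|u^*_y\|_1(x-\hat x)^\top V\hat y\ge -3\epsilon$, which after dividing by $\|u^*_x\|_1\|u^*_y\|_1\ge 2/9$ is the player-$x$ Nash inequality. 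The essential idea you are missing is that the test vector must be chosen as a \emph{one-player, mass-preserving} deviation, not a vertex.
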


\begin{proof}
Recall that $u^*$ the solution of a \vimdp instance $(\Mcal, \feedback,\Bcal)$ satisfies $\forall u \in \Delta(\Bcal)$, $\inner{\theta(u^*)}{u-u^*} \geq -(S+A+B)^{-6}$, where $S = |\Scal|, A = |\Acal|, B = |\Bcal|$. The proof follows by first calculating vector field $\theta(u)$ induced by $(\Mcal, \feedback,\Bcal)$, and then showing that $\reduceb(u^*)$ is a solution of the original \bimatrix instance $(V,W)$ if $u^*$ is a solution of \vimdp instance $\reducef(V,W)$.

\paragraph{To begin with, given $V,W \in [0,1]^{m \times m}$ we calculate $\theta$ in the \vimdp instance $\reducef(V,W)$.} We have by definition in Section~\ref{sec:improper-learning}, $\forall u \in \Delta(\Bcal)$, 
\[
\begin{aligned}
\theta(u) 
=
\rbr{ \EE_{s \sim d_{\pi_u}} \EE_{a \sim h_j(\cdot \mid s)} \sbr{ \feedback(s,a) } }_{h_j \in \Bcal} 
=
\frac{1}{3}\rbr{ \EE_{s \sim d^2_{\pi_u}} \sbr{ \feedback(s,h_j(s)) } }_{h_j \in \Bcal},
\end{aligned}
\]
where in the second equality we recall that $d^2_{\pi_u}$ denotes the state occupancy distribution at the second step and $d_{\pi_u} = \frac{1}{3}(d^1_{\pi_u} + d^2_{\pi_u}+d^3_{\pi_u})$. The second equality is by the fact that 
$\feedback(s,a)$ is always $0$ when $s \in \Scal_1 \cup \Scal_3$. 

Recall that $h_j(s) = a_j$ and $B = 2m$ as defined in the proof of Theorem~\ref{thm:vi-mdp-ppad}, it can be verified that $\pi_u(\cdot|s) = \sum_{j \in [2m]} u\coord{j} \cdot h_j(\cdot|s) = (u\coord{1},u\coord{2}, \cdots, u\coord{2m},0 )^\top $ , where the last entry is $0$ since the last action is never chosen by any $h_j \in \Bcal$. We can calculate its state occupancy distribution at step 2 as: $\Prob_{s \sim d^2_{\pi_u}}(s = S_i) = u\coord{i}$ for $i \in [2m]$ and $\Prob_{s \sim d^2_{\pi_u}}(s = S_{A}) = 0$. Therefore, $\forall j \in [2m]$,
\[
\EE_{s \sim d^2_{\pi_u}} \sbr{  \feedback(s,a_j) } 
=
\sum_{i=1}^{2m} u\coord{i}\feedback(S_i,a_j).
 \]
 For any $u \in \Delta(\Bcal)$, we use $u_x, u_y$ to denote the vector that consists of the first $m$ elements and the last $m$ elements of $u$ respectively. 
 Given $V,W \in [0,1]^{m \times m}$, define matrix
 \[
 C := 
 \begin{pmatrix}
    \lambda \one_{m \times m} & -V \\
    -W^\top & \lambda \one_{m \times m}
 \end{pmatrix}
 \in \RR^{2m \times 2m},
 \]
 where $\one_{m \times m}$ denotes the matrix whose entries are all 1's.
 Notice that by the value of $\feedback$ calculated in the proof of Theorem~\ref{thm:vi-mdp-ppad}, it can be verified that $\forall i \in [2m], \forall j \in [2m]$, $\feedback(S_i,a_j) = C_{i,j}$.
 With this, 
 $\theta(u)$ can be written in matrix form:
 \begin{equation}
 \theta(u)
 =
 \frac{1}{3}\rbr{ \sum_{i=1}^{2m} u\coord{i}\feedback(S_i,a_j) }_{j\in[2m]}
 = 
 \frac13
  \begin{pmatrix}
    \lambda \one_{m \times m} & -V \\
    -W^\top & \lambda \one_{m \times m}
 \end{pmatrix}
 \cdot 
  \begin{pmatrix}
   u_x \\
   u_y
 \end{pmatrix}
 = 
 \frac13 C u.
 \label{eq: vi-mdp-function}
 \end{equation}

Therefore, the constructed discrete state-action episodic MDP $\Mcal$, expert feedback $\feedback$ and benchmark policy class $\Bcal$ induces the following instance of
\vimdp: find $u^* \in \Delta(\Bcal)$ such that $\forall u \in \Delta(\Bcal)$,
\begin{equation} 
\left\langle \theta(u^*), u-u^* \right\rangle 
=
\inner{
\frac13 C u^*
}{u - u^*} 
\geq 
-(S+A+B)^{-6}
=
-(4m^2+ 10m + 4)^{-6},
\label{eqn:vi-mdp-matrix-form}
\end{equation}
where we recall that $S = (2m+1)^2+(2m+1)+1$, $A = 2m+1$, and $B = 2m$. 

 \paragraph{Next, we show that $\reduceb(u^*)$ is a solution of \bimatrix instance $(V,W)$.} 
 Recall that in Section~\ref{sec:two-player-game}, given $V,W \in [0,1]^{m \times m}$, the mixed strategies on set $\Acal_x$ and sets $\Acal_y$ and $|\Acal_x| = |\Acal_y| = m$ are represented by $x \in \Delta(\Acal_x)$ and $y \in \Delta(\Acal_y)$, respectively. If $(\hat{x},\hat{y}) \in \Delta(\Acal_x) \times \Delta(\Acal_y)$ is a solution of \bimatrix instance $(V,W)$, then $(\hat{x},\hat{y})$ satisfies for any $x \in \Delta(\Acal_x)$ and $y \in \Delta(\Acal_y)$,
\[
\left\{
\begin{aligned}
\hat{x}^\top V\hat{y} \geq x^\top V\hat{y}-m^{-12}, \\
\hat{x}^\top W\hat{y} \geq \hat{x}^\top  W y-m^{-12}.
\end{aligned}
\right. 
\]
 
Now, consider $u^* = (u^*_x,u^*_y) \in \Delta(\Bcal)$, a solution for \vimdp instance $\reducef(V,W)$,  such that $\forall u \in \Delta(\Bcal)$,
\[     \left\langle \theta(u^*), u-u^* \right\rangle \geq -(4m^2+ 10m + 4)^{-6} = -\epsilon,\]
where we use $\epsilon $ to denote $(4m^2+ 10m + 4)^{-6}$.

We will show that $(\hat{x}, \hat{y}) = \reduceb( u^* ) = (\frac{u_x^*}{\| u_x^* \|_1}, \frac{u_y^*}{\| u_y^* \|_1})$  
is a solution of \bimatrix instance $(V,W)$.
To see this, we first prove that $\forall x \in \Delta(\Acal_x)$,  
\begin{equation}
{\hat{x}}^\top V \hat{y} \geq x^\top V \hat{y}-m^{-12}.
\label{eqn:mne-x}
\end{equation}
To this end, $\forall x \in  \Delta(\Acal_x)$, by setting 
$u = (\| u^*_x \|_1 \cdot x, u^*_y ) \in \Delta(\Bcal)$ and plugging this choice of $u$ into Equation~\eqref{eqn:vi-mdp-matrix-form}, we have
\begin{align*}
-\epsilon
\leq &
\inner{\frac{1}{3}Cu^*}{u - u^*} \\
= & 
 \frac{\| u_x^* \|_1}{3}
\begin{pmatrix}
(x - \hat{x})^\top, \zero^\top   
\end{pmatrix}
\begin{pmatrix}
    \lambda \one_{m \times m} & -V \\
    -W^\top & \lambda \one_{m \times m}
 \end{pmatrix}
 \cdot 
  \begin{pmatrix}
   u_x^* \\
   u_y^*
 \end{pmatrix} 
 \\
 =&  \frac{\| u_x^* \|_1}{3} \rbr{ \lambda(x - \hat{x})^\top \one_{m \times m} u_x^* - (x -\hat{x})^\top V u_y^*}
 \\
 = &  
  \frac{\lambda \| u_x^* \|_1}{3}  \zero^\top u_x^* - \frac{\| u_x^* \|_1}{3} (x -\hat{x})^\top V u_y^*
 \\
  = &  
 -  \frac{\| u_x^* \|_1 \| u_y^* \|_1}{3} (x -\hat{x})^\top V \hat{y},
\end{align*}
where we use $\zero$ to denote the all $0$ vector in $\RR^m$. Combining $ \| u_x^* \|_1  \cdot \| u_y^* \|_1 \geq \frac{2}{9}$ as shown later by Lemma~\ref{lem:vi-approximate-limit}, we obtain 
\[
x^\top V \hat{y} -\hat{x}^\top V \hat{y}
\leq \frac{3}{\| u_x^* \|_1 \| u_y^* \|_1 } \epsilon
\leq 
\frac{27}{2}\epsilon
=
\frac{27}{2(4m^2+ 10m + 4)^{6}}
\leq
m^{-12}.
\]
This establishes Equation~\eqref{eqn:mne-x}.
Using a symmetrical argument,
for any $y \in \Delta(\Acal_y)$,
by taking $u = (u^*_x, \| u^*_y\|_1 \cdot y)$, 
we can also show that $\forall y \in \Delta(\Acal_y)$,
\begin{equation}
    {\hat{x}}^\top W \hat{y} \geq \hat{x}^\top W y -m^{-12}.
    \label{eqn:mne-y}
\end{equation} 
Combining Equations~\eqref{eqn:mne-x} and~\eqref{eqn:mne-y}, we conclude that $(\hat{x}, \hat{y})$ 
is a solution of \bimatrix instance $(V,W)$.
\end{proof}

\begin{lemma}
 \label{lem:vi-approximate-limit}
 $\forall V,W \in [0,1]^{m \times m}$, if $u^* = (u_x^*, u_y^*)$ is a solution of \vimdp instance $\reducef(V,W)$, then
 \[
 \| u_x^* \|_1  \cdot \| u_y^* \|_1 \geq \frac{2}{9}.
 \]
 \end{lemma}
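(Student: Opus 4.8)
The plan is to extract from the variational inequality condition the fact that $u^*=(u_x^*,u_y^*)$ approximately minimizes the linear functional $u \mapsto \inner{\theta(u^*)}{u}$ over $\Delta(\Bcal)$, and then turn this into a lower bound on $pq$ where $p := \|u_x^*\|_1$ and $q := \|u_y^*\|_1$. Since $u^*\in\Delta(\Bcal)$ and the $B=2m$ benchmark policies split into the $m$ ``$x$-policies'' $h_1,\dots,h_m$ and the $m$ ``$y$-policies'' $h_{m+1},\dots,h_{2m}$, we have $p+q=1$. Because the minimum of a linear functional over the simplex is attained at a vertex, the hypothesis $\inner{\theta(u^*)}{u-u^*}\ge -\epsilon$ for all $u\in\Delta(\Bcal)$ (with $\epsilon := (4m^2+10m+4)^{-6}$) is equivalent to
\[
\inner{\theta(u^*)}{u^*} \;\le\; \min_{h \in \Bcal}\theta(u^*)\coord{h} + \epsilon .
\]
The proof then proceeds by lower-bounding the left-hand side and upper-bounding the vertex term, both in terms of $p$ and $q$.

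For the computations I would use the matrix form $\theta(u^*)=\frac13 C u^*$ established in Equation~\eqref{eq: vi-mdp-function}, where
\[
C = \begin{pmatrix}\lambda \one_{m\times m} & -V\\ -W^\top & \lambda \one_{m\times m}\end{pmatrix},\qquad \lambda=54 .
\]
Since $\one_{m\times m}u_x^* = p\cdot\one_m$ and $\one_{m\times m}u_y^* = q\cdot\one_m$ (all-ones vectors scaled by $p$ and $q$), the quadratic form expands as $\inner{\theta(u^*)}{u^*} = \frac13\rbr{\lambda(p^2+q^2)-r}$, where $r := (u_x^*)^\top(V+W)u_y^*$; from $V,W\in[0,1]^{m\times m}$ and $u_x^*,u_y^*\succeq 0$ we get $0\le r\le 2pq$. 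For the vertex term, the first $m$ coordinates of $3\theta(u^*)$ are $\lambda p - (Vu_y^*)_j \le \lambda p$ and the last $m$ coordinates are $\lambda q - (W^\top u_x^*)_j \le \lambda q$, so $\min_h 3\theta(u^*)\coord{h} \le \lambda\min(p,q) \le \lambda/2$, the last inequality using $p+q=1$.

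Combining the two estimates gives $\lambda(p^2+q^2) - r \le \lambda/2 + 3\epsilon$. Substituting $p^2+q^2 = 1-2pq$ and $r\le 2pq$ yields $\lambda - 2(\lambda+1)pq \le \lambda/2 + 3\epsilon$, hence $pq \ge \frac{\lambda - 6\epsilon}{4(\lambda+1)}$. Plugging in $\lambda=54$ and the crude bound $\epsilon=(4m^2+10m+4)^{-6}\le 18^{-6}$ (valid for every $m\ge 1$) gives $pq \ge \frac{54-6\cdot 18^{-6}}{220} > \frac{2}{9}$, as claimed.

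I do not expect a serious obstacle: the whole argument is a short computation once the matrix form of $\theta$ is in hand. The only point requiring care is making the two estimates quantitatively compatible — the off-diagonal ``payoff'' contribution $r$ to $\inner{\theta(u^*)}{u^*}$ is at most $2pq$, while the diagonal $\lambda$-blocks contribute $\lambda(p^2+q^2)=\lambda(1-2pq)$, so a large enough $\lambda$ is precisely what forces $pq$ away from $0$; the value $\lambda=54$ is chosen so that, after discarding the negligible $\epsilon$-slack from the VI solution, the resulting constant is comfortably above $2/9$. One should just double-check that $\epsilon=(S+A+B)^{-6}$ is indeed tiny for all admissible $m$, which it is.
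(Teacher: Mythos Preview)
Your argument is correct. It differs from the paper's in the choice of test point plugged into the VI condition. The paper proceeds by contrapositive: it fixes the specific comparator $v=(v_x,v_y)$ with $\|v_x\|_1=\|v_y\|_1=\tfrac12$ and shows that whenever $\|u_x\|_1\notin(\tfrac13,\tfrac23)$ one gets $\inner{\theta(u)}{v-u}\le -\tfrac13$, contradicting the VI; this yields the individual bounds $\|u_x^*\|_1,\|u_y^*\|_1\in[\tfrac13,\tfrac23]$, from which $pq\ge\tfrac13\cdot\tfrac23=\tfrac29$ follows. You instead take the optimal vertex comparator, turning the VI into $\inner{\theta(u^*)}{u^*}\le\min_h\theta(u^*)[h]+\epsilon$, and extract a direct inequality on $pq$ from the explicit form of $Cu^*$. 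Your route is a bit more streamlined for the stated conclusion and actually gives a strict inequality $pq>\tfrac29$; the paper's route has the side benefit of pinning down $p$ and $q$ individually, which is not needed here but is conceptually a little sharper. Both arguments rely on the same mechanism: the diagonal $\lambda$-blocks contribute $\lambda(1-2pq)$ while the payoff blocks contribute at most $O(pq)$, so a large $\lambda$ forces $pq$ away from zero.
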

 
 \begin{proof}
The lemma is proved by showing
$\forall \; V,W \in [0,1]^{m \times m}$,  $\forall u = (u_x,u_y) \in \Delta(\Bcal)$ such that $\| u_x \|_1 \notin (\frac13, \frac23)$, for any $v = (v_x,v_y) \in \Delta(\Bcal)$ that satisfies $\| v_x \|_1 = \| v_y \|_1 = \frac12$,
 \[
\left\langle\theta(u), v-u \right\rangle
 \leq 
 -\frac{1}{3}
 < 
 -(4m^2+ 10m + 4)^{-6},
 \]
 where we recall the definition of $\theta(\cdot)$ from Equation~\eqref{eq: vi-mdp-function}. This implies $u^* = (u^*_x, u^*_y)$
 the solution of \vimdp instance $\reducef(V,W)$ satisfies $\| u^*_x \|_1, \| u^*_x \|_1 \in [\frac13, \frac23]$, $\forall \; V,W \in [0,1]^{m \times m}$. Finally, it can be easily verified that $\forall \; a,b \in [\frac13, \frac23]$, $ab \geq \frac{2}{9}$.

By the alternative expression of $\theta(\cdot)$ shown in Equation~\eqref{eq: vi-mdp-function}, we can write  $\inner{\theta(u)}{u}$ and $\inner{\theta(u)}{v}$ with $u = (u_x,u_y)$ and $v = (v_x,v_y)$ as:
\[
\begin{aligned}
\inner{\theta(u)}{u} = &
\frac{1}{3}(- u_x^\top  V u_y -  u_x^\top  W u_y +   \lambda(\|u_x\|_1^2 + \|u_y\|_1^2)),\\
\inner{\theta(u)}{v} = &
\frac{1}{3}(  -  v_x^\top  V u_y- u_x^\top  W v_y +   \lambda(\|u_x\|_1\|v_x\|_1 + \|u_y\|_1\|v_y\|_1) ).
\end{aligned}
\]
By algebra, $\forall\; x\in \Delta(\Acal_x)$, $\forall\; y \in \Delta(\Acal_y)$ for $|\Acal_x|=|\Acal_y|=m$, $\forall \; V,W \in [0,1]^{m \times m}$,
$0 \leq x^\top V y, \; x^\top W y \leq 1$, then
\[
\begin{aligned}
\inner{\theta(u)}{v - u} 
=&\frac{1}{3} \rbr{
 u_x^\top  V u_y +  u_x^\top  W u_y
-  v_x^\top  V u_y- u_x^\top  W v_y }\\
&+ 
\frac{1}{3} \rbr{\lambda \| u_x \|_1 ( \| v_x \|_1 - \| u_x \|_1 )
 + \lambda \| u_y \|_1 ( \| v_y \|_1 - \| u_y \|_1)}\\
 \leq &
 \frac{2}{3} + 18 \rbr{ \| u_x \|_1 ( \| v_x \|_1 - \| u_x \|_1 )
 + \| u_y \|_1 ( \| v_y \|_1 - \| u_y \|_1)},\\
 \end{aligned}
\]
where we recall that $\lambda = 54$. 
Therefore, consider $\| u_x \| \notin (\frac13, \frac23)$ and any $v$ such that $\| v_x \| = \| v_y \| = \frac12$, we have:
\begin{align*}
   & \| u_x \|_1 ( \| v_x \|_1 - \| u_x \|_1 )
    + \| u_y \|_1 ( \| v_y \|_1 - \| u_y \|_1 ) \\
    =&
     \| u_x \|_1 ( \frac{1}{2} - \| u_x \|_1 )
    + \| u_y \|_1 ( \frac{1}{2} - \| u_y \|_1 ) \\
    =&
    -2 (\| u_x \|_1  - \frac{1}{2})^2
    < -\frac1{18}.
\end{align*}
Plugging this back, we conclude that $\forall u= (u_x, u_y) \in \Delta(\Bcal)$ such that $ \| u_x \| \notin (\frac13, \frac23)$, for any $v = (v_x, v_y) \in \Delta(\Bcal)$ such that $\| v_x \| = \| v_y \| = \frac12$,
\[
\inner{\theta(u)}{v - u} 
\leq \frac{2}{3}-1 = -\frac{1}{3}.
\qedhere
\]
\end{proof}

\subsection{
Computational hardness of achieving sublinear dynamic regret in \algreduct}
\begin{theorem}[Restatement of Theorem~\ref{thm:hardness-dreg}]
\label{thm:hardness-dreg-restated}
Fix $\const>0$, if there exist a COIL algorithm such that for any $\Mcal$ and expert $\pi^E$, it interacts with $\Mcal$, CSC oracle $\Ocal$, expert feedback $\feedback(s,a) = A^E(s,a)$, and outputs a sequence of $\cbr{ \pi_{u_n}}_{n=1}^N \in \Pi_\Bcal$ such that with probability at least $\frac{1}{2}$,
\[
\DReg_N(\Bcal)
\leq
O(\poly(S, A, B) \cdot N^{1-\const}),
\]
in $\poly(N, S, A, B)$ time, then all problems in $\mathrm{PPAD}$ are solvable in randomized polynomial time.
\end{theorem}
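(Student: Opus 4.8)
The plan is to show that the hypothesized sublinear-dynamic-regret \oilcsc algorithm yields a randomized polynomial-time procedure for \vimdp; combined with Theorem~\ref{thm:vi-mdp-ppad} (that \vimdp is $\mathrm{PPAD}$-hard via the polynomial-time reduction $\reducef,\reduceb$), this gives that every problem in $\mathrm{PPAD}$ is solvable in randomized polynomial time. Concretely, given a \vimdp instance $(\Mcal,\feedback,\Bcal)$ with $\feedback = A^E$ --- exactly the form of feedback assumed both in this theorem and in the construction of \reducef --- we run the assumed algorithm on $\Mcal$, expert $\pi^E$, for $N$ rounds, where $N$ is a polynomial in $S,A,B$ to be fixed later. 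Since the reduction's MDPs are three-layer trees and $\Bcal$ consists of constant policies, every interaction with $\Mcal$, every expert query, and every call to the CSC oracle $\Ocal$ can be simulated in polynomial time, so the whole run takes $\poly(N,S,A,B)$ time.

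By hypothesis, with probability at least $\tfrac12$ the output $\cbr{\pi_{u_n}}_{n=1}^N \in \Pi_\Bcal^N$ satisfies $\DReg_N(\Bcal)\le O(\poly(S,A,B)\cdot N^{1-\const})$, i.e. $\sum_{n=1}^N\big(\iterloss_n(\pi_{u_n})-\min_{\pi\in\Bcal}\iterloss_n(\pi)\big)\le O(\poly(S,A,B)\cdot N^{1-\const})$. Picking $n^\star\in[N]$ uniformly at random and applying Markov's inequality, with probability at least $\tfrac14$ (over the algorithm's randomness and the choice of $n^\star$) the chosen round has instant dynamic regret $\iterloss_{n^\star}(\pi_{u_{n^\star}})-\min_{\pi\in\Bcal}\iterloss_{n^\star}(\pi)\le O(\poly(S,A,B)\cdot N^{-\const})$. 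Recalling the observation preceding Lemma~\ref{lem: Existence of  VI Solution} --- that $\iterloss_n(\pi_u)=\inner{\theta(u_n)}{u}$, so an instant dynamic regret of $\epsilon$ at round $n$ is exactly the statement $\inner{\theta(u_n)}{u-u_n}\ge -\epsilon$ for all $u\in\Delta(\Bcal)$ --- the vector $u_{n^\star}$ is then an $O(\poly(S,A,B)\cdot N^{-\const})$-approximate solution of $\vi(\Delta(\Bcal),\theta)$.

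It remains to choose $N$: since $\const>0$ is a fixed constant, we may take $N$ to be a fixed polynomial in $S,A,B$ (of degree $O(1/\const)$) large enough that $O(\poly(S,A,B)\cdot N^{-\const})\le (S+A+B)^{-6}$, so that $u_{n^\star}$ is a bona fide \vimdp solution. Thus we obtain a randomized $\poly(S,A,B)$-time algorithm that outputs a valid \vimdp solution with probability bounded below by a positive constant; composing with the polynomial-time maps $\reducef,\reduceb$ and noting that a candidate $m^{-12}$-approximate Nash equilibrium of a bimatrix game $(V,W)$ is verifiable in polynomial time (compute $\max_i(V\hat y)_i$ and $\max_j(\hat x^\top W)_j$ and compare), we can repeat until a verified solution is found, yielding an expected-polynomial-time algorithm for the $\mathrm{PPAD}$-complete problem \bimatrix and hence for all of $\mathrm{PPAD}$. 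Totality (Lemma~\ref{lem: Existence of  VI Solution} together with the known totality of \bimatrix) ensures a solution always exists to be found.

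The step needing the most care is turning ``some round has small instant dynamic regret'' into ``we efficiently exhibit such a round'': either one picks $n^\star$ at random and relies on the subsequent verification-and-repeat step (as above), or one estimates each round's instant dynamic regret via polynomially many rollouts, expert queries, and one CSC-oracle call per round, with a Hoeffding plus union bound argument whose estimation slack is absorbed by slightly enlarging $N$; in the specific tree-structured reduction MDPs one can even compute $\theta(u_n)=\tfrac13 C u_n$ exactly (via Equation~\eqref{eq: vi-mdp-function}) and pick the true minimizer. The remaining obstacle is bookkeeping: tracking the two failure sources (the $\tfrac12$-probability regret event and the Markov/estimation event) so the overall success probability stays bounded away from $0$, and verifying that the polynomial $N$ needed to push the approximation error below $(S+A+B)^{-6}$ is still polynomial in $S,A,B$.
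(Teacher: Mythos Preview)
Your proposal is correct and follows essentially the same route as the paper: run the hypothesized algorithm on a \vimdp instance for $N=\poly(S,A,B)^{1/\gamma}$ rounds, extract a round with small instant dynamic regret (which is precisely an approximate VI solution), and wrap in a repeat-until-success loop; then invoke Theorem~\ref{thm:vi-mdp-ppad}.

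The only notable difference is in how you exhibit the good round. The paper simply observes that when $\DReg_N(\Bcal)\le p(S,A,B)\,N^{1-\gamma}$ there \emph{exists} $n$ with instant regret $\le p(S,A,B)\,N^{-\gamma}$ (minimum of nonnegative terms is at most the average), and then checks every $u_n$ directly; your primary variant instead samples $n^\star$ uniformly and uses Markov to get success probability $\ge 1/4$ per trial. Both give expected-constant iterations. You yourself list the paper's approach as your third alternative (compute $\theta(u_n)=\tfrac13Cu_n$ exactly and pick the true minimizer), so there is no substantive divergence. Your treatment is in fact more careful than the paper's on two points: you explain why the CSC oracle and MDP interactions can be simulated in polynomial time for the specific reduction instances, and you make explicit that verification (at the \bimatrix level) is polynomial-time, which the paper leaves implicit.
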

 \begin{proof}
We start the proof by showing if  
there exists a COIL algorithm $\algo$ that achieves $\funsab(S,A,B) N^{1-\const}$ dynamic regret with probability $\frac12$ in time $\funnsab(N, S, A, B)$,
 where $\funsab$ and $\funnsab$ are polynomial functions, then, $\algo$
 yields an algorithm $\algop$ (Algorithm~\ref{alg:algoop}) that solves \vimdp with expected polynomial time $\poly(S,A,B)$. 

\begin{algorithm}
\caption{\algop}
\label{alg:algoop}
\begin{algorithmic}[1]
\WHILE{\TRUE}
\STATE Run $\algo$ on $\Mcal$, $\Bcal$, $\feedback$, $\Ocal$ for $N = ( \funsab(S, A, B)\cdot (S+A+B)^{6})^{\frac1\const}$ rounds, obtaining a sequence of policies $\cbr{\pi_n}_{n=1}^N$ parameterized by $\cbr{u_n}_{n=1}^N$.
\STATE If any of $u_n$ is a 
         $(S+A+B)^{-6}$-approximate solution of $\vi(\Delta(\Bcal),\theta)$, return $u_n$.
\ENDWHILE
\end{algorithmic}
\end{algorithm}

\paragraph{Correctness.} As $\algop$ return only if $u_n$ is an  $(S+A+B)^{-6}$-approximate solution of $\vi(\Delta(\Bcal),\theta)$, it solves the \vimdp problem. 

\paragraph{Time complexity.} We now bound the time complexity of $\algop$.
Note that \algop sets \algo with $N = (\funsab(S,A,B) \cdot (S+A+B)^{6})^{\frac 1 \const} = \poly(S+A+B)$, and 
\algo has a running time of  $\poly(N, S, A, B) = \poly(S, A, B)$, together, each iteration of \algop takes $\poly(S, A, B)$ time.

We now show that \algop  runs for 
an expected number of iterations at most a constant. Specifically, the guarantees of \algo implies that for each iteration, 
with probability at least $\frac{1}{2}$,
\[
\begin{aligned}
\DReg_N(\Bcal)( \cbr{ \pi_{u_n} }_{n=1}^N )
=&
\sum_{n=1}^N\iterloss_n(\pi_{u_n})-\min_{\pi \in \Bcal}\iterloss_n(\pi)\\
=&
\sum_{n=1}^N \inner {\theta(u_n)}{u_n } - \min_{u \in \Delta(\Bcal)}\inner {\theta(u_n)}{u} \\
=&
\sum_{n=1}^N \max_{u \in \Delta(\Bcal)}\inner {\theta(u_n)}{u_n - u}\\
\leq&
f(S, A, B) \cdot N^{1-\const},
\end{aligned}
\]
where the first equality is from the definition of dynamic regret in Section~\ref{sec:prelims}, and the second is by $\iterloss_n(\pi_u) = \inner{\theta(u_n)}{u}$. In this event, since $N = (\funsab(S,A,B) \cdot (S+A+B)^{6})^{\frac 1 \const}$, we have that $\exists n \in [N]$ s.t. 
\[
\max_{u \in \Delta(\Bcal)}\inner {\theta(u_n)}{u_n - u} 
\leq
\funsab(S, A, B) \cdot N^{-\const}
=
(S+A+B)^{-6},
\]
which means $u_n$ is a solution of \vimdp instance $(\Mcal, \feedback,\Bcal)$.
Hence, the expected number of iterations of running $\algo$ before a valid solution being returned is smaller or equal to $\sum_{t=0}^\infty \rbr{\frac{1}{2}}^{t} = 2$. Thus, the expected running time of $\algop$ is $O(\poly(S+A+B))$. 

By Theorem~\ref{thm:vi-mdp-ppad}, all problems in $\mathrm{PPAD}$ are polynomial-time reducible to \vimdp. If there exist a COIL algorithm \algo that achieves sublinear dynamic regret in \algreduct, then \algop can be constructed to solve \vimdp in randomized polynomial time, which means all problems in $\mathrm{PPAD}$ are solvable in randomized polynomial time.
\end{proof}

\section{Online linear optimization results }
\label{sec:ftrl}
In this section, we first provide a recap on online linear optimization, the well-known Optimistic Follow the Regularized Leader (FTRL) algorithm (Algorithm~\ref{alg: Optimistic FTRL})~\cite{rakhlin2013online}, and its regret guarantees (Theorem~\ref{thm:optimistic-ftrl}). 
Section~\ref{sec:ftrl-ex-reg} instantiates this general result with the regularizer $R_\Ncal$ (Equation~\eqref{eqn:r}) defined in Section~\ref{sec:reg-sqrt-n-deferred}.

\subsection{Basic facts on convex analysis}
Before we delve into the optimistic FTRL algorithm, we state some useful definitions and facts from convex analysis. 


\begin{definition}
For a differentiable convex function $f: \RR^d \to \RR $, define
$D_f(v, w) 
=
f(v)
-
f(w)
-\left\langle v -w, \nabla f (w)  \right\rangle$
to be the Bregman divergence induced by $f$.
\end{definition}


\begin{definition}
Given a convex function $f: \RR^d \to \RR \cup \cbr{+ \infty}$, where $\Omega \subseteq \RR^d$, define
$f^*: \RR^d \to \RR \cup \cbr{+ \infty}$ as $f^*(\theta) 
:=
\sup_{w \in \RR^d} \rbr{ \inner{\theta}{w} - f(w) } $
to be its Fechel conjugate.
\end{definition}

\begin{definition}
A convex function $f: \RR^d \to \RR\cup \cbr{+ \infty}$ is proper if it is not identically equal to $+\infty$.
\end{definition}

\begin{definition}
A function $f: \RR^d \rightarrow \RR \cup \cbr{+ \infty}$ is $\alpha$-strongly convex w.r.t. a norm $\|\cdot\|$ if for all $v, w$ in the relative interior of the effective domain of $f$ and $\lambda \in(0,1)$ we have
\[
f(\lambda v+(1-\lambda) w) \leq \lambda f(v)+(1-\lambda) f(w)-\frac{1}{2} \alpha \lambda(1-\lambda)\|v-w\|^2 .
\]
\end{definition}

\begin{definition}
A function $f: \RR^d \rightarrow \RR$ is $\beta$-strongly smooth w.r.t. a norm $\|\cdot\|$ if $f$ is everywhere differentiable and if for all $v, w \in \RR^d$ we have
\[
f(v+w) \leq f(v)+\langle\nabla f(v), w\rangle+\frac{1}{2} \beta\|w\|^2 .
\]
\end{definition}

\begin{definition}
For a convex function $f: \RR^d \to \RR \cup \cbr{+\infty}$, define its effective domain $\dom (f) := \cbr{w \in \RR^d: f(w) < +\infty}$. 
\end{definition}

Note that the effective domain of a strongly smooth function $f$ satisfies $\dom (f) = \RR^d$.


\begin{fact}
\label{fact:conj}
Let $f: \RR^d \to \RR\cup \cbr{+ \infty}$ be a proper, closed and convex function, then: 
\begin{enumerate} 
     \item $f^*$ is closed and convex (\cite{rockafellar1970convex}[Theorem 12.2]);
     \label{item:closed_convex}
     \item $f^{**} = f$ (\cite{rockafellar1970convex}[Corollary 12.2.1]);
     \label{item:one_to_one}
    \item $\theta \in \partial f(w) \Leftrightarrow f(w) + f^*(\theta) = \inner{\theta}{w} \Leftrightarrow w \in \partial f^*(\theta)$, where $\partial g(w)$ denotes $g$'s subdifferential set at $w$ 
    (\cite{zalinescu2002convex}[Theorem 2.4.2]);
    \label{item:fenchel-young-equality}

     \item  $f$ is $\alpha$-strongly convex with respect to a norm $\|\cdot\|$ if and only if $f^{\star}$ is $\frac{1}{\alpha}$-strongly smooth with respect to the dual norm $\|\cdot\|_{\star}$ (\cite{kakade2012regularization}[Theorem 3]).
     \label{item:sc_strongly smooth_duality}

\end{enumerate}
\end{fact}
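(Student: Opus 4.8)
The plan is to verify the four items in sequence, treating them as consequences of standard convex analysis; items (1)--(3) are elementary and essentially self-contained, while item (4) is the substantive one and the main obstacle.

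For item (1), I would simply note that, by definition, $f^*(\theta) = \sup_{w \in \RR^d} \rbr{ \inner{\theta}{w} - f(w) }$ is a pointwise supremum over $w$ of the affine maps $\theta \mapsto \inner{\theta}{w} - f(w)$. A pointwise supremum of a family of affine (hence convex and lower semicontinuous) functions is itself convex and lower semicontinuous, i.e.\ closed; properness of $f$ ensures $f^*$ is not identically $+\infty$, so no further structure is needed. For item (2), the Fenchel--Moreau biconjugation theorem, I would argue through the epigraph: since $f$ is proper, closed, and convex, its epigraph $\mathrm{epi}(f) = \cbr{(w,t): t \geq f(w)}$ is a nonempty closed convex set, so by the separating hyperplane theorem it equals the intersection of all closed half-spaces containing it. These half-spaces correspond precisely to the affine minorants of $f$, and since conjugating twice computes exactly the supremum of all affine minorants, one obtains $f^{**} = f$. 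The only delicate point is handling vertical supporting hyperplanes at boundary points of $\dom(f)$, which is precisely where closedness of $f$ is used.

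For item (3), I would unfold definitions. By definition $\theta \in \partial f(w)$ means $f(v) \geq f(w) + \inner{\theta}{v-w}$ for all $v$, equivalently $\inner{\theta}{w} - f(w) \geq \inner{\theta}{v} - f(v)$ for all $v$, equivalently $w$ attains the supremum defining $f^*(\theta)$, i.e.\ $f^*(\theta) = \inner{\theta}{w} - f(w)$. Rearranging yields the Fenchel--Young equality $f(w) + f^*(\theta) = \inner{\theta}{w}$. The second equivalence $w \in \partial f^*(\theta)$ then follows by applying the first equivalence with $f^*$ in place of $f$, using item (1) to guarantee $f^*$ is proper, closed, and convex, and invoking $f^{**} = f$ from item (2) to identify $(f^*)^* = f$; the Fenchel--Young equality is symmetric in $(w,\theta)$, so it characterizes membership in either subdifferential.

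For item (4), which I expect to be the main obstacle, I would establish both implications. Assuming $f$ is $\alpha$-strongly convex with respect to $\|\cdot\|$, I would first argue that $f^*$ is everywhere finite and differentiable, with its gradient at $\theta$ equal to the (unique, by strong convexity) maximizer $w_\theta$ of $\inner{\theta}{w} - f(w)$. Taking two dual points $\theta_1,\theta_2$ with $w_i := \nabla f^*(\theta_i)$, item (3) gives $\theta_i \in \partial f(w_i)$; adding the two strong-convexity inequalities (one centered at $w_1$ and tested at $w_2$, the other reversed) yields the co-coercivity bound $\inner{\theta_1 - \theta_2}{w_1 - w_2} \geq \alpha \|w_1 - w_2\|^2$. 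Combining this with the Hölder bound $\inner{\theta_1-\theta_2}{w_1-w_2} \leq \|\theta_1-\theta_2\|_{\star}\,\|w_1-w_2\|$ gives $\|\nabla f^*(\theta_1) - \nabla f^*(\theta_2)\| \leq \tfrac{1}{\alpha}\|\theta_1 - \theta_2\|_{\star}$, i.e.\ $\nabla f^*$ is $\tfrac{1}{\alpha}$-Lipschitz from the dual norm to the primal norm, which is equivalent to $\tfrac{1}{\alpha}$-strong smoothness of $f^*$ with respect to $\|\cdot\|_{\star}$. The converse follows by the symmetric argument applied to $f^*$ and then using $f^{**}=f$ from item (2). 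The hard part throughout is making the differentiability of $f^*$ rigorous and keeping the primal/dual norm pairing consistent, together with proving the equivalence between the Lipschitz-gradient form and the quadratic-upper-bound form of strong smoothness; these are exactly the technical steps that the cited reference \cite{kakade2012regularization}[Theorem 3] carries out, so I would either reproduce that argument or invoke it directly.
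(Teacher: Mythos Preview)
Your proof sketches are correct and standard. However, the paper does not prove this statement at all: it is presented as a ``Fact'' with each item simply attributed to a reference (Rockafellar for items (1)--(2), Z\u{a}linescu for item (3), Kakade--Shalev-Shwartz--Tewari for item (4)), and no argument is given in the paper itself. So there is no ``paper's own proof'' to compare against; you have supplied substantially more than the paper does.

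One minor remark on your item (4): the converse direction is not literally the ``symmetric argument.'' The forward direction shows strong convexity of a function implies strong smoothness of its conjugate; to go back you need the companion statement that strong smoothness of a function implies strong convexity of its conjugate, and then apply it to $f^*$ together with $f^{**}=f$. This companion statement has its own (also standard) proof via the Bregman-divergence duality identity $D_{f^*}(\theta_1,\theta_2) = D_f(w_2,w_1)$ for $w_i = \nabla f^*(\theta_i)$, rather than by reversing the co-coercivity chain you used. This is a small gap in your write-up, though the cited reference does handle both directions.
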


\begin{proposition}
\label{prop:sc-conj}
Let $f: \RR^d \rightarrow \RR \cup \cbr{+ \infty}$ be proper, closed and $\alpha$-strongly convex with respect to $\| \cdot \|$, then $f^*$ is differentiable, and $\nabla f^*(\theta) = \argmax_{w \in \dom (f)} \rbr{ \inner{\theta}{w} - f(w)}.$
\end{proposition}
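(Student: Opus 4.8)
\textbf{Proof proposal for Proposition~\ref{prop:sc-conj}.}

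The plan is to derive both conclusions — differentiability of $f^*$ and the formula $\nabla f^*(\theta) = \argmax_{w \in \dom(f)} (\inner{\theta}{w} - f(w))$ — from the conjugate-duality facts already collected in Fact~\ref{fact:conj}, together with the strong convexity/strong smoothness duality. First I would observe that since $f$ is proper, closed, and convex, Fact~\ref{fact:conj}\eqref{item:closed_convex} gives that $f^*$ is closed and convex; moreover, by Fact~\ref{fact:conj}\eqref{item:sc_strongly smooth_duality}, the $\alpha$-strong convexity of $f$ with respect to $\|\cdot\|$ is equivalent to $f^*$ being $\tfrac1\alpha$-strongly smooth with respect to the dual norm $\|\cdot\|_\star$. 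Strong smoothness, by its definition in the excerpt, includes the requirement that $f^*$ be everywhere differentiable — so differentiability of $f^*$ is immediate, and in particular $\dom(f^*) = \RR^d$.

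Next I would pin down the gradient. Fix $\theta \in \RR^d$. Because $f^*$ is differentiable at $\theta$, its subdifferential $\partial f^*(\theta)$ equals the singleton $\{\nabla f^*(\theta)\}$. Now apply Fact~\ref{fact:conj}\eqref{item:fenchel-young-equality}, which states that for a proper closed convex $f$ one has the equivalence $w \in \partial f^*(\theta) \Leftrightarrow f(w) + f^*(\theta) = \inner{\theta}{w}$. The right-hand equality is precisely the statement that $w$ attains the supremum in $f^*(\theta) = \sup_{v} (\inner{\theta}{v} - f(v))$, i.e. $w \in \argmax_{v \in \dom(f)} (\inner{\theta}{v} - f(v))$. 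Hence the $\argmax$ set coincides with $\partial f^*(\theta) = \{\nabla f^*(\theta)\}$; in particular the maximizer exists and is unique, and it equals $\nabla f^*(\theta)$. This is the desired identity. (The restriction of the $\argmax$ to $\dom(f)$ is harmless, since any $v \notin \dom(f)$ has $f(v) = +\infty$ and cannot be a maximizer.)

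I do not anticipate a genuine obstacle here; the statement is essentially a packaging of standard convex-analytic facts, all of which are quoted earlier in the excerpt. The only point requiring a little care is making explicit that the equivalence in Fact~\ref{fact:conj}\eqref{item:fenchel-young-equality} — stated there with $f$ on the ``primal'' side — can be read off for $f^*$ by invoking $f^{**} = f$ (Fact~\ref{fact:conj}\eqref{item:one_to_one}), so that $w \in \partial f^*(\theta) \Leftrightarrow f^*(\theta) + f^{**}(w) = \inner{\theta}{w} \Leftrightarrow f^*(\theta) + f(w) = \inner{\theta}{w}$; then the chain of equivalences closes cleanly. I would also note for completeness that the supremum defining $f^*(\theta)$ is finite for every $\theta$ (again because $f$ is proper and $f^*$ is real-valued as established above), so the $\argmax$ is over a well-posed finite-valued problem. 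With these observations assembled, the proof is a few lines.
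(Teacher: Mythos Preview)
Your proposal is correct. The paper takes a different and shorter route: it restricts $f$ to its effective domain to obtain a real-valued function $f_1:\dom(f)\to\RR$, observes that $f^*=f_1^*$, and then simply invokes \cite{shalev2007online}[Lemma~15] as a black box. Your argument is more self-contained in that it uses only the convex-analytic facts already collected in Fact~\ref{fact:conj}: differentiability of $f^*$ comes from item~\eqref{item:sc_strongly smooth_duality} (strong convexity $\Leftrightarrow$ conjugate is strongly smooth, and smoothness includes everywhere-differentiability by the paper's definition), and the gradient formula comes from item~\eqref{item:fenchel-young-equality} together with $\partial f^*(\theta)=\{\nabla f^*(\theta)\}$. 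One minor simplification: the equivalence in item~\eqref{item:fenchel-young-equality} already includes $w\in\partial f^*(\theta)$ as one of its three equivalent statements, so you do not actually need the detour through $f^{**}=f$ that you mention. Both approaches are fine; yours has the advantage of not depending on an external reference, while the paper's is a one-line citation.
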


\begin{proof}[Proof sketch]
Given a function $f$ that is proper, closed and strongly convex, define $f_1: \dom(f) \to \RR$, where $f_1(w) := f(w)$ on $\dom(f)$. 
It can be seen that $f_1: \Omega \rightarrow \RR$ is closed and strongly convex. It can be checked that $f^*(\theta) = \sup_{w \in \RR^d} \rbr{ \inner{\theta}{w} - f(w) }
=\sup_{w \in \dom (f)} \rbr{ \inner{\theta}{w} - f(w)} = f_1^*(\theta)$ where $f_1^*$ is defined using the notations in~\cite{shalev2007online}. The proposition follows from ~\cite{shalev2007online}[Lemma 15].
\end{proof}

\subsection{General results on FTRL and Optimistic FTRL}

Online linear optimization refers to the following $N$-round protocol: the learner is given a convex decision set $\Omega \subset \RR^d$. At every round $n \in [N]$, the learner chooses some decision $u_n \in \Omega$, and then receives a linear loss $\inner{g_n}{\cdot}$, where $g_n \in \RR^d$. The goal of the learner is to minimize its regret on this sequence of linear losses:
\[
\SLReg_N := \sum_{n=1}^N \left\langle \est_n, u_n\right\rangle -  \min_{u \in \Omega} \sum_{n=1}^N  \left\langle \est_n, u\right\rangle.
\]

Optimistic FTRL (Algorithm~\ref{alg: Optimistic FTRL}), works for online linear optimization with a general decision set $\Omega$. It
takes into input a strongly convex regularizer $R$ with effective domain $\Omega$ and a learning rate $\eta>0$. It maintains the cumulative linear loss $\Theta_n = \sum_{i=1}^n g_i$ over time (line~\ref{line:update-theta}); at round $n$, it first uses a predicted instantaneous loss $\hat{g}_n$ to construct a guess on the cumulative loss $\hat{\Theta}_n$ (line~\ref{line:predict}), then 
chooses the decision $u_n$ that minimizes the regularized guessed cumulative linear loss $\eta \langle \hat{\Theta}_n, {u} \rangle + R(u)$ (line~\ref{line:oftrl}), which by Proposition~\ref{prop:sc-conj}, has an equivalent form of $\nabla R^*(-\eta \hat{\Theta}_n)$. 
We have the following guarantee on the regret of Optimistic FTRL; it is largely inspired by and slightly generalizes the results of~\cite{shalev2011online,abernethy2014online,rakhlin2013online}.

\begin{algorithm}
\begin{algorithmic}[1]
\REQUIRE Convex decision set $\Omega \subseteq \RR^d$,
regularizer $R$ that is closed and $\alpha$-strongly convex with bounded $\dom (R) = \Omega$,
learning rate $\eta > 0$.
\STATE  Initialize $\Theta_0 = 0$.
\FOR{$n=1,2,\ldots,N$}
\STATE  The learner makes prediction $\pred_n$, $\hat{\Theta}_n = \Theta_{n-1} + \pred_n$ .
\label{line:predict}
\STATE The learner plays $u_n  = \argmin_{u \in \Omega} \rbr{ \eta \langle \hat{\Theta}_n, {u} \rangle + R(u) }
= 
\nabla R^*(-\eta \hat{\Theta}_{n})
$  .
\label{line:oftrl}
\STATE The learner receives real loss $\est_n$.
\STATE Update $\Theta_n = \Theta_{n-1} + \est_n$ .
\label{line:update-theta}
\ENDFOR
\end{algorithmic}
\caption{Optimistic FTRL}
\label{alg: Optimistic FTRL}
\end{algorithm}

\begin{theorem}
\label{thm:optimistic-ftrl}
Let $R: \RR^d \rightarrow \RR \cup \cbr{+ \infty}$ be a closed and $\alpha$-strongly convex function with respect to $\| \cdot \|$, such that $\Omega=\dom (R)$. The linear regret of Optimistic FTRL with $R$ and learning rate $\eta > 0$, 
satisfies the following:
\begin{align}
\SLReg_N
\leq &
\frac{\sup_{w \in \Omega} R(w)
- 
\inf_{w \in \Omega} R(w)}{\eta}
+
 \frac{1}{\eta}\sum_{n=1}^{N}
 \rbr{
\underbrace{D_{R^*}\left(-\eta \Theta_{n}, -\eta \hat{\Theta}_{n}\right)}_{\text {divergence penalty}} 
-
\underbrace{D_{R^*}\left(-\eta\Theta_{n-1}, -\eta\hat{\Theta}_{n}\right)}_{\text {prediction gain}}}
\label{eqn:oftrl-1}
 \\
\leq &
\frac{\sup_{w \in \Omega} R(w)
- 
\inf_{w \in \Omega} R(w)}{\eta}
+
\sum_{n=1}^{N}
\frac{\eta \| \hat{\est}_n - \est_n \|_*^2}{2\alpha}
-
\frac{\alpha}{2\eta} \| \nabla R^*(-\eta\Theta_{n-1}) - \nabla R^*(-\eta \hat{\Theta}_{n}) \|^2.
\label{eqn:oftrl-2}
\end{align}
Specifically, if $\hat{g}_n = 0$ for all $n$, 
\[
\SLReg_N
\leq
\frac{\sup_{w \in \Omega} R(w)
- 
\inf_{w \in \Omega} R(w)}{\eta}
+
\sum_{n=1}^{N}
\frac{\eta \| \est_n \|_*^2}{2\alpha}.
\]
\end{theorem}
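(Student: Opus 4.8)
The plan is to prove Theorem~\ref{thm:optimistic-ftrl} by the standard ``be-the-leader'' / Bregman-divergence telescoping argument for Optimistic FTRL, carried out in the Fenchel-dual picture. The key object is $R^*$, the Fenchel conjugate of the regularizer $R$; by Fact~\ref{fact:conj}\ref{item:sc_strongly smooth_duality}, since $R$ is $\alpha$-strongly convex with respect to $\|\cdot\|$, $R^*$ is $\frac1\alpha$-strongly smooth with respect to $\|\cdot\|_*$, and by Proposition~\ref{prop:sc-conj}, $R^*$ is differentiable with $\nabla R^*(-\eta\hat\Theta_n) = u_n$. I would first rescale: write $\Psi := \frac1\eta R$, so $\Psi^*(\theta) = \frac1\eta R^*(\eta\theta)$, and observe that the iterate is $u_n = \nabla \Psi^*(-\hat\Theta_n)$; this absorbs $\eta$ cleanly. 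Then I would establish the core one-step inequality by the potential-function method: define the potential $\Phi_n := \Psi^*(-\Theta_n)$ and track how $-\sum_n \langle g_n, u_n\rangle$ relates to $\Phi_N$ plus accumulated divergence terms.

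The main chain of steps: (1) Show the ``FTL-style'' bound $\sum_{n=1}^N \langle g_n, u_n\rangle - \min_{u\in\Omega}\sum_{n=1}^N\langle g_n, u\rangle \le \frac{\sup_\Omega R - \inf_\Omega R}{\eta} + \sum_{n=1}^N\big(\langle g_n, u_n\rangle + \Psi^*(-\Theta_n) - \Psi^*(-\Theta_{n-1})\big)$, using that $-\min_u\sum\langle g_n,u\rangle + \Psi^*(-\Theta_0) = $ something controlled by the range of $\Psi$ and that $\Psi^*(-\Theta_N) \ge -\sum\langle g_n, u^*\rangle - \Psi(u^*)$ for the comparator $u^*$. (2) For each $n$, rewrite $\langle g_n, u_n\rangle + \Psi^*(-\Theta_n) - \Psi^*(-\Theta_{n-1})$ by inserting and subtracting $\Psi^*(-\hat\Theta_n)$ and using that $u_n = \nabla\Psi^*(-\hat\Theta_n)$ together with $\hat\Theta_n - \Theta_{n-1} = \hat g_n$ and $\Theta_n - \hat\Theta_n = g_n - \hat g_n$. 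The definition of Bregman divergence $D_{\Psi^*}$ then lets this per-round quantity be written exactly as $D_{\Psi^*}(-\Theta_n, -\hat\Theta_n) - D_{\Psi^*}(-\Theta_{n-1}, -\hat\Theta_n)$; unwinding the $\eta$-rescaling gives the $\frac1\eta[D_{R^*}(-\eta\Theta_n,-\eta\hat\Theta_n) - D_{R^*}(-\eta\Theta_{n-1},-\eta\hat\Theta_n)]$ form of Equation~\eqref{eqn:oftrl-1}. (3) Bound the divergence penalty: strong smoothness of $R^*$ gives $D_{R^*}(-\eta\Theta_n, -\eta\hat\Theta_n) \le \frac1{2\alpha}\|\eta\Theta_n - \eta\hat\Theta_n\|_*^2 = \frac{\eta^2}{2\alpha}\|g_n - \hat g_n\|_*^2$. (4) Lower-bound the prediction gain: by strong convexity of $R$ (equivalently a one-sided bound on $D_{R^*}$, or directly from the three-point identity), $D_{R^*}(-\eta\Theta_{n-1}, -\eta\hat\Theta_n) \ge \frac{\alpha}{2}\|\nabla R^*(-\eta\Theta_{n-1}) - \nabla R^*(-\eta\hat\Theta_n)\|^2$. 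Dividing by $\eta$ and summing yields Equation~\eqref{eqn:oftrl-2}. (5) The final display is the special case $\hat g_n = 0$, so $\hat\Theta_n = \Theta_{n-1}$, the prediction-gain term vanishes (it is nonnegative, so we just drop it) and $\|\hat g_n - g_n\|_* = \|g_n\|_*$.

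I expect the main obstacle to be step (1) — getting the FTL/regularization decomposition exactly right with an arbitrary strongly convex regularizer whose effective domain is precisely $\Omega$, so that the comparator term $\min_{u\in\Omega}\sum\langle g_n,u\rangle$ is handled cleanly and the regularizer range $\sup_\Omega R - \inf_\Omega R$ appears with the right sign and constant. The subtlety is that $u_1 = \nabla R^*(-\eta\hat\Theta_1) = \nabla R^*(-\eta\hat g_1)$ is not the unregularized leader, so the telescoping must be set up to compare against the \emph{regularized} leader sequence and then pay the range of $R$ once; this requires invoking Fact~\ref{fact:conj}\ref{item:fenchel-young-equality} (Fenchel--Young equality) to identify $\Psi^*(-\Theta_n) + \Psi(u) \ge -\langle\Theta_n, u\rangle$ with equality at $u = \nabla\Psi^*(-\Theta_n)$, and being careful that all the suprema defining the conjugates are attained (guaranteed by Proposition~\ref{prop:sc-conj} since $\dom(R)=\Omega$ is bounded, hence $R$ is coercive on $\RR^d$ after the $+\infty$ extension). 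The remaining steps (2)--(5) are essentially bookkeeping: the three-point identity for Bregman divergences, the smoothness/strong-convexity duality already recorded in Fact~\ref{fact:conj}, and tracking the $\eta$ factors through the rescaling.
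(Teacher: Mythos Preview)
Your proposal is correct and follows essentially the same approach as the paper: the paper also derives the per-round identity $\eta\langle g_n, u_n\rangle = R^*(-\eta\Theta_{n-1}) - R^*(-\eta\Theta_n) + D_{R^*}(-\eta\Theta_n,-\eta\hat\Theta_n) - D_{R^*}(-\eta\Theta_{n-1},-\eta\hat\Theta_n)$ directly from the Bregman-divergence definition and $u_n=\nabla R^*(-\eta\hat\Theta_n)$, then telescopes and bounds the comparator via Fenchel--Young exactly as you outline in step~(1), and finally handles the two divergence terms by strong smoothness of $R^*$ and a strong-convexity lower bound (the paper's Lemma~\ref{lem: lowerbound for strongly smooth function}) matching your steps~(3)--(4). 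The only cosmetic difference is that the paper carries $\eta$ explicitly rather than absorbing it into a rescaled $\Psi=\frac1\eta R$, and in step~(5) note that the prediction gain is in fact exactly zero (since $\hat\Theta_n=\Theta_{n-1}$), not merely nonnegative.
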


\begin{proof}

By the definition of Bregman divergence, we have 
$$
    \left\{
\begin{aligned}
D_{R^*}(-\eta \Theta_{n},-\eta\hat{\Theta}_{n})
&=
R^*(-\eta\Theta_{n})
-
R^*(-\eta\hat{\Theta}_{n})
-
\left\langle -\eta\Theta_{n} +\eta\hat{\Theta}_{n}, \nabla R^*(-\eta\hat{\Theta}_{n}) \right\rangle  , \\
D_{R^*}(-\eta\Theta_{n-1},-\eta\hat{\Theta}_{n})
&=
R^*(-\eta\Theta_{n-1})
-
R^*(-\eta\hat{\Theta}_{n})
-
\left\langle-\eta\Theta_{n-1} +\eta\hat{\Theta}_{n}, \nabla R^*(-\eta\hat{\Theta}_{n}) \right\rangle  .
\end{aligned}
\right.
    $$
    By rearranging the terms, the two equations can be rewritten as
    $$
    \left\{
\begin{aligned}
\left\langle\eta\Theta_{n} -\eta\hat{\Theta}_{n}, \nabla R^*(-\eta\hat{\Theta}_{n}) \right\rangle
&=
-
R^*(-\eta\Theta_{n})
+
R^*(-\eta\hat{\Theta}_{n})
+
D_{R^*}(-\eta\Theta_{n},-\eta\hat{\Theta}_{n}) ,
\\
-\left\langle \eta\Theta_{n-1} -\eta\hat{\Theta}_{n}, \nabla R^*(-\eta\hat{\Theta}_{n})\right\rangle
&=
R^*(-\eta\Theta_{n-1})
-
R^*(-\eta\hat{\Theta}_{n})
-
D_{R^*}(-\eta\Theta_{n-1},-\eta\hat{\Theta}_{n}) .
\end{aligned}
\right.
  $$
  By adding the two equations and recall that $\Theta_n = \Theta_{n-1}+\est_n$ and $u_n = \nabla R^*(-\hat{\Theta}_n)$, we can write
  \[
  \begin{aligned}
  \eta \left\langle \est_{n}, u_n \right\rangle
  =&
  \left\langle  \eta\Theta_{n} -\eta\Theta_{n-1}, \nabla R^*(-\eta\hat{\Theta}_{n}) \right\rangle\\
  =&
  R^*(-\eta\Theta_{n-1})
  -
  R^*(-\eta\Theta_{n})
  +
  D_{R^*}(-\eta\Theta_{n},-\eta\hat{\Theta}_{n})
  -
  D_{R^*}(-\eta\Theta_{n-1},-\eta\hat{\Theta}_{n}).
  \end{aligned}
  \]
  Summing over $n=1,\ldots,N$, we have 
  \[
  \eta \sum_{n=1}^N \inner{g_n}{u_n}
  =
  R^*(0)
  -
  R^*(-\eta\Theta_{N})
  +
  \sum_{n=1}^N \rbr{ 
  D_{R^*}(-\eta\Theta_{n},-\eta\hat{\Theta}_{n})
  -
  D_{R^*}(-\eta\Theta_{n-1},-\eta\hat{\Theta}_{n}) }.
  \]
  Therefore, we can bound $\eta \SLReg_N$ as:
 \[
 \begin{aligned}
&\eta \SLReg_N 
=
\sum_{n=1}^N \eta \left\langle  \est_n, u_n \right\rangle 
- \min_{u \in \Omega}\sum_{n=1}^N  \left\langle  \eta\est_n,u \right\rangle \\
&=
- \min_{u \in \Omega} \left\langle \eta\Theta_N, u \right\rangle
+
R^*(0)
  -
  R^*(-\eta\Theta_{N})
  +
  \sum_{n=1}^N \rbr{ 
  D_{R^*}(-\eta\Theta_{n},-\eta\hat{\Theta}_{n})
  -
  D_{R^*}(-\eta\Theta_{n-1},-\eta\hat{\Theta}_{n}) }\\
  &=
   \max_{u \in \Omega} \left\langle -\eta\Theta_N, u \right\rangle 
  -
  R^*(-\eta\Theta_{N})
  +
    R^*(0)
    +
    \sum_{n=1}^N \rbr{
    D_{R^*}(-\eta\Theta_{n},-\eta\hat{\Theta}_{n})
  -
    D_{R^*}(-\eta\Theta_{n-1},-\eta\hat{\Theta}_{n})}
  \\
 & \leq   
  \sup_{u \in \Omega} R(u) - \inf_{u \in \Omega} R(u) +
    \sum_{n=1}^N(  
    D_{R^*}(-\eta\Theta_{n},-\eta\hat{\Theta}_{n})
  -
    D_{R^*}(-\eta\Theta_{n-1},-\eta\hat{\Theta}_{n})),
\end{aligned}
\]
where the last inequality is by applying the definition of Fenchel conjugate and item~\ref{item:one_to_one} of Fact\ref{fact:conj}: 
\begin{enumerate}
    \item
    $
    \max_{u \in \Omega} \left\langle -\eta\Theta_N, u \right\rangle 
  -
  R^*(-\eta\Theta_{N})
  \leq
  \sup_{u \in \Omega} \rbr{ \sup_{\tilde{\Theta} \in \RR^d} \left\langle -\tilde{\Theta}, u \right\rangle - R^*(\tilde{\Theta})}
  =
   \sup_{u \in \Omega} R^{**}(u) 
   =
   \sup_{u \in \Omega} R(u) .
    $
    \item 
    $
     R^*(0) = \sup_{u \in \RR^d} \left\langle 0, u \right\rangle - R(u)
     = 
     \sup_{u \in \Omega} \left\langle 0, u \right\rangle - R(u)
     =
     - \inf_{u \in \Omega} R(u) .
    $
    
\end{enumerate}
This concludes the proof of Equation~\eqref{eqn:oftrl-1}. We next prove Equation~\eqref{eqn:oftrl-2}. 

\paragraph{Upper bounding the divergence penalty terms $D_{R^*}(-\eta\Theta_{n},-\eta\hat{\Theta}_{n})$.} Since $R$ is closed and $\alpha$-strongly convex, by item~\ref{item:sc_strongly smooth_duality} of Fact~\ref{fact:conj},
$R^*$ is $\frac{1}{\alpha}$-strongly smooth and $\forall \Theta, \Theta' \in \RR^d$, $
  D_{R^*}(\Theta,\Theta') 
  \leq
  \frac{1}{2\alpha}\|\Theta-\Theta'\|^2,
$ which implies $\forall n \in [N]$,  
\[
D_{R^*}(-\eta\Theta_{n},-\eta\hat{\Theta}_{n}) 
\leq
 \frac{1}{2\alpha}\|\eta\Theta_{n}-\eta\hat{\Theta}_{n}\|_*^2
 =
 \frac{\eta^2}{2\alpha} \|\est_n-\pred_n\|_*^2,
\]
where the last equality is from the definition of $\Theta_n = \est_n + \Theta_{n-1}$ and $\hat{\Theta}_n = \pred_n + \Theta_{n-1}$.

\paragraph{Lower bounding the prediction gain terms $D_{R^*}(-\eta\Theta_{n-1},-\eta\hat{\Theta}_{n})$.} Since $R$ is closed and $\alpha$-strongly convex, 
by Lemma~\ref{lem: lowerbound for strongly smooth function}, we have 
\[
D_{R^*}(-\eta\Theta_{n-1},-\eta\hat{\Theta}_{n})) 
\geq
\frac{\alpha}{2} \|\nabla R^*(-\eta\Theta_{n-1})-\nabla R^*(-\eta\hat{\Theta}_{n})\|^2.
\]
Finally, 
combining Equation~\eqref{eqn:oftrl-1} with the bounds on divergence penalty and prediction gain terms, Equation~\eqref{eqn:oftrl-2} is proved.
\end{proof}

\begin{remark}
An alternative proof of this theorem can be done using the Stronger Follow the Leader Lemma~\cite{cheng2019accelerating,mcmahan2017survey}.
\end{remark}

\subsection{Regularizer induced by example-based perturbations}
\label{sec:ftrl-ex-reg}

In this section, we instantiate the Optimistic FTRL algorithm and regret guarantee in the previous section with a specific $R$ that appears in our \algm, \algmp algorithms. Recall that in these algorithms, we use samples in the separator set $X$ and assign cost vector $\ell_x \sim \mathcal{N}(0,I_A)$ independently on each of them. 
As a result, we define $R$ in Equation~\eqref{eqn:r} as the Fenchel conjugate of $\Phi_{\mathcal{N}}(\Theta) =  \EE_{\ell \sim \mathcal{N}(0,I_{XA})} \sbr{ \mathop{\max}\limits_{u \in \Delta(\Bcal)} \left\langle  \Theta+q(\ell) , u\right\rangle }$; recall that
$q(\ell) = (\sum_{x\in \Xcal}\ell_x(h(x))_{h\in B}$ and $\ell = \rbr{\ell_x}_{x \in \Xcal}$.

First, we prove several useful properties of $\Phi_{\mathcal{N}}$.

\begin{lemma}[Restatement of Lemma~\ref{lem: differentiable function}]
\label{lem: differentiable function_restated}
$\Phi_\mathcal{N}(\Theta)$ is differentiable for any $\Theta \in \mathbb{R}^B$ and $\nabla\Phi_{\mathcal{N}}(\Theta) = \EE_{\ell \sim \mathcal{N}(0,I_{XA})}\sbr{\mathop{\argmax}\limits_{u\in \Delta(\Bcal)} \left\langle  \Theta +   q(\ell) , u\right\rangle}$.
\end{lemma}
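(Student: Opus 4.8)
The plan is to recognize $\Phi_{\mathcal{N}}$ as a Gaussian-smoothing of the support function of $\Delta(\Bcal)$, and to apply a standard "differentiability under the integral sign" argument together with Danskin-type reasoning. First I would rewrite the objective: for each fixed realization of the noise, $u \mapsto \langle \Theta + q(\ell), u \rangle$ is a linear function over the compact convex polytope $\Delta(\Bcal)$, so its maximum is attained; denote by $u(\Theta, \ell) := \argmax_{u \in \Delta(\Bcal)} \langle \Theta + q(\ell), u\rangle$ whichever maximizer is chosen when there are ties. The key observation is that ties --- i.e. the event that two distinct vertices $h, h'$ of $\Delta(\Bcal)$ achieve the same value $\langle \Theta + q(\ell), \cdot \rangle$ --- correspond to $q(\ell)$ lying in a finite union of affine hyperplanes (indexed by pairs $h \neq h'$), translated by $\Theta$. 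Since $q(\ell) = \bigl(\sum_{x \in \Xcal} \ell_x(h(x))\bigr)_{h \in \Bcal}$ is a linear image of the nondegenerate Gaussian $\ell \sim \mathcal{N}(0, I_{XA})$, and by Assumption (small separator set) the map $h \mapsto (h(x))_{x \in \Xcal}$ is injective so the coordinates $q(\ell)[h]$ differ pairwise as affine functionals, the difference $q(\ell)[h] - q(\ell)[h']$ is a nonzero-variance Gaussian for each $h \neq h'$; hence each tie-hyperplane has Lebesgue (and thus Gaussian) measure zero, and by a union bound over the finitely many pairs, $u(\Theta, \ell)$ is unique for $\ell$ almost surely, for every fixed $\Theta$.

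Next I would establish differentiability. Write $M(\Theta, \ell) := \max_{u \in \Delta(\Bcal)} \langle \Theta + q(\ell), u\rangle$. For fixed $\ell$, $\Theta \mapsto M(\Theta, \ell)$ is convex (a max of linear functions) and finite everywhere, hence locally Lipschitz with $\|\partial_\Theta M(\Theta,\ell)\| \leq 1$ (subgradients lie in $\Delta(\Bcal)$, which is bounded in $\ell_1$, and one also has a uniform-in-$\Theta$ envelope bounding $|M(\Theta,\ell)|$ by $|\Theta_0 \cdot u| + \|q(\ell)\|_\infty$ locally, which is integrable since $q(\ell)$ has Gaussian tails). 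These integrability bounds let me invoke the dominated-convergence / Leibniz rule for differentiating under the expectation: at any $\Theta_0$ where $M(\cdot, \ell)$ is differentiable for a.e. $\ell$ (which holds everywhere by the tie-measure-zero argument above, since differentiability of the max of linear functions at $\Theta_0$ is equivalent to uniqueness of the maximizer $u(\Theta_0, \ell)$, with gradient $\nabla_\Theta M(\Theta_0,\ell) = u(\Theta_0,\ell)$), one concludes $\Phi_{\mathcal{N}}$ is differentiable at $\Theta_0$ with
\[
\nabla \Phi_{\mathcal{N}}(\Theta_0) = \EE_{\ell \sim \mathcal{N}(0, I_{XA})} \sbr{ \nabla_\Theta M(\Theta_0, \ell) } = \EE_{\ell \sim \mathcal{N}(0, I_{XA})} \sbr{ \argmax_{u \in \Delta(\Bcal)} \langle \Theta_0 + q(\ell), u \rangle },
\]
which is exactly the claimed formula. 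To make the interchange rigorous I would use the standard criterion: if $\Theta \mapsto f(\Theta, \ell)$ is differentiable at $\Theta_0$ for a.e. $\ell$ and the difference quotients are dominated by an integrable function on a neighborhood of $\Theta_0$, then $\EE[f(\cdot,\ell)]$ is differentiable at $\Theta_0$ with gradient $\EE[\nabla_\Theta f(\Theta_0,\ell)]$; here the domination comes from the uniform $\|\cdot\|\le 1$ bound on subgradients of the convex function $M(\cdot,\ell)$, which bounds all difference quotients by $1$.

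The main obstacle I expect is the careful handling of the ``a.e.\ unique maximizer'' claim and connecting it cleanly to differentiability of the convex function $M(\cdot,\ell)$ --- specifically, arguing that the Gaussian measure of the set of $\ell$ for which the maximizer is non-unique is zero \emph{simultaneously with} the fact that, for the purposes of the Leibniz rule, it suffices to have pointwise (in $\Theta$) a.e.-in-$\ell$ differentiability plus the uniform Lipschitz domination. This is where Assumption (small separator set) is essential: without the injectivity of $h \mapsto (h(x))_{x\in\Xcal}$, two policies could induce identical $q$-coordinates, making $q(\ell)[h] - q(\ell)[h']$ identically zero and the tie-set full measure, breaking differentiability. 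Once this measure-zero fact is in place, the rest is the routine dominated-convergence argument; I would cite Danskin's theorem (or the equivalence ``max of affine functions is differentiable at a point iff the argmax is a singleton, with gradient equal to that argmax'') for the per-$\ell$ gradient computation rather than rederiving it.
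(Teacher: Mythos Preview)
Your proposal is correct and follows essentially the same approach as the paper's proof: both establish that for each fixed $\Theta$ the argmax is almost surely unique (using the separator set assumption to ensure $q(\ell)[h] - q(\ell)[h']$ is a nondegenerate Gaussian for $h \neq h'$, hence ties have probability zero by a union bound over pairs), and then deduce differentiability with the stated gradient formula. The only difference is that the paper packages the ``a.s.\ unique argmax $\Rightarrow$ differentiability of the expected max'' step by citing \cite{bertsekas1973stochastic}[Propositions 2.2, 2.3], whereas you spell it out via the Leibniz/dominated-convergence rule and Danskin's theorem---these are equivalent routes to the same conclusion.
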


\begin{proof}
To prove the lemma, by~\cite{bertsekas1973stochastic}[Propositions 2.2, 2.3], it suffices to prove that for any $\Theta$,  $\argmax_{u \in \Delta(\Bcal)} \left\langle \Theta +   q , u\right\rangle$ is unique with probability $1$, over the draw of $\ell\sim \mathcal{N}(0,I_{XA})$.

By the definition of $\argmax$, it can be seen that the solution of $\mathop{\argmax}\limits_{u \in \Delta(\Bcal)} \left\langle \Theta +   q ,u\right\rangle$ is not unique if and only if there exist $h,h' \in \Bcal$ and $h \neq h'$,
such that $\Theta\coord{h}+q\coord{h} = \Theta\coord{h'}+q\coord{h'} = \mathop{\max}\limits_{h}(\Theta\coord{h}+q\coord{h})$. 

Define event 
$E = \cbr{ \argmax_{u \in \Delta(\Bcal)} \left\langle \Theta +   q ,u\right\rangle \text{is not unique} }$ and event $E_{hh'}$:  $\Theta\coord{h}+ q\coord{h} = \Theta\coord{h'}+q\coord{h'} $, for all $h,h' \in \Bcal$ and $h \neq h'$.  
For $E$ to happen, it is necessary that one of $E_{h,h'}$ happens. Formally,
$E \subseteq \mathop{\cup}\limits_{h\neq h'} E_{hh'}$. By applying the union bound, we obtain
\[
\Prob(E) 
\leq \Prob(\mathop{\cup}\limits_{h\neq h'}  E_{hh'}) 
\leq  \sum_{h \neq h'} \Prob(E_{hh'})  .
\]

We will now show that for any  $ h, h' \in \Bcal$ and $ h\neq h'$, $\Pr(E_{hh'}) = 0$. By its definition
, $E_{hh'}$ happens if and only if $\Theta\coord{h} +   q\coord{h} = \Theta\coord{h'} +   q\coord{h'}$. We can rearrange the terms and get
\[
q\coord{h}-q\coord{h'} 
=
\eta \cdot \rbr{\Theta\coord{h}-\Theta\coord{h'}} .
\]
The following proof shows given any constant $C \in \RR$,  $q\coord{h}-q\coord{h'} =C $ happens with probability 0. 
Here we first recall the definition of separator set $\Xcal$: $\exists x \in \Xcal$ s.t. $ h(x) \neq h'(x)$ for any $h \neq h'$ and define $\Xcal_{h,h'} := \cbr{x\in\Xcal |h(x) \neq h'(x)}$. It can be seen that $\Xcal_{h,h'}$ is nonempty for $ h\neq h'$ by the definition of $\Xcal$.
Now we can rewrite $q\coord{h}-q\coord{h'}$ with the help of $\Xcal_{h,h'}$, 
\[
q\coord{h}-q\coord{h'}  =  \sum_{x \in \Xcal} \ell_{x}(h(x)) - \ell_{x}(h'(x)) = \sum_{x \in \Xcal_{h,h'} } \ell_{x}(h(x)) - \ell_{x}(h'(x)).
\]
Since $\ell = \rbr{\ell_x}_{x \in \Xcal} \sim  \mathcal{N}(0,I_{XA})$, $q\coord{h}-q\coord{h'}$ can be viewed as a sum of $2|\Xcal_{h,h'}|$ independent Gaussian variables following distribution $\Ncal(0,1)$. By this observation, we have that $q\coord{h}-q\coord{h'} \sim \Ncal(0,4|\Xcal_{h,h'}|^2)$, which implies 
$\forall C \in \mathbb{R}$, $\Prob(q\coord{h}-q\coord{h'} = C) = 0$.
This in turn shows that $\Pr(E) \leq \sum_{h \neq h'} \Prob(E_{hh'}) = 0$, which concludes the proof of the lemma.
\end{proof}

  \begin{lemma}
   \label{lem: closed and convex function}
   $\Phi_{\mathcal{N}}$ is closed and convex on  $\mathbb{R}^B$. 
  \end{lemma}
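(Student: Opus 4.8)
The plan is to exhibit $\Phi_{\mathcal{N}}$ as an expectation (over the Gaussian perturbation $\ell$) of functions that are each closed and convex, and then invoke the fact that closedness and convexity are preserved under such averaging. First I would fix $\ell \in \RR^{XA}$ and consider the inner function $\Theta \mapsto \phi_\ell(\Theta) := \max_{u \in \Delta(\Bcal)} \inner{\Theta + q(\ell)}{u}$. For each fixed $u \in \Delta(\Bcal)$, the map $\Theta \mapsto \inner{\Theta + q(\ell)}{u}$ is affine in $\Theta$, hence closed and convex. Since $\phi_\ell$ is the pointwise supremum (in fact maximum, as $\Delta(\Bcal)$ is compact and the objective is continuous) of this family of affine functions indexed by $u \in \Delta(\Bcal)$, $\phi_\ell$ is closed and convex on $\RR^B$ — this is the standard fact that a supremum of closed convex functions is closed and convex (e.g.\ \cite{rockafellar1970convex}). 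Moreover $\phi_\ell$ is finite everywhere (the max over a compact set of a continuous function), so $\dom(\phi_\ell) = \RR^B$.

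Next I would pass to the expectation. Writing $\Phi_{\mathcal{N}}(\Theta) = \EE_{\ell \sim \mathcal{N}(0,I_{XA})}[\phi_\ell(\Theta)]$, convexity is immediate: for $\Theta_1,\Theta_2 \in \RR^B$ and $\lambda \in [0,1]$, $\phi_\ell(\lambda \Theta_1 + (1-\lambda)\Theta_2) \le \lambda \phi_\ell(\Theta_1) + (1-\lambda)\phi_\ell(\Theta_2)$ pointwise in $\ell$, and taking expectations preserves the inequality (the expectations are finite, which I address below). For finiteness of $\Phi_{\mathcal{N}}(\Theta)$ and hence for making sense of the expectation, I would bound $|\phi_\ell(\Theta)| \le \|\Theta\|_\infty + \|q(\ell)\|_\infty$ (using that $u$ ranges over the simplex, so $\inner{v}{u} \le \|v\|_\infty$), and note $\EE\|q(\ell)\|_\infty < \infty$ since each coordinate $q(\ell)\coord{h}$ is a sum of $X$ standard Gaussians and the max of $B$ such Gaussian-type variables has finite expectation. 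This shows $\Phi_{\mathcal{N}}$ is real-valued on all of $\RR^B$.

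Finally, closedness: a convex function that is finite everywhere on $\RR^B$ is automatically continuous (a finite convex function on an open set is continuous), and a continuous function is closed (its epigraph is closed, equivalently it is lower semicontinuous). So $\Phi_{\mathcal{N}}$ being finite-valued and convex on all of $\RR^B$ already gives closedness for free; I would state this as the cleanest route rather than arguing lower semicontinuity via Fatou's lemma, though the latter (applying Fatou to $\liminf_k \phi_\ell(\Theta_k)$ along $\Theta_k \to \Theta$) is an alternative if one prefers not to invoke automatic continuity. I do not expect any genuine obstacle here — the only mild care needed is the integrability bound ensuring $\Phi_{\mathcal{N}}$ is well-defined and finite, which is what lets the "expectation of convex is convex" and "finite convex on $\RR^B$ is continuous hence closed" steps go through cleanly.
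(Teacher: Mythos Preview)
Your proposal is correct. For convexity, your argument and the paper's are essentially the same: the paper writes out the convexity inequality directly using sub-additivity of the max, while you phrase the identical idea structurally (each $\phi_\ell$ is a supremum of affine functions, hence convex, and expectation preserves convexity). The only real difference is in the closedness step. The paper obtains continuity by invoking the already-proved differentiability of $\Phi_{\mathcal{N}}$ (Lemma~\ref{lem: differentiable function}) and then cites \cite{boyd2004convex} for ``continuous on a closed domain $\Rightarrow$ closed.'' You instead use the general fact that a finite-valued convex function on $\RR^B$ is automatically continuous, hence closed. Your route is more self-contained (it does not lean on the differentiability lemma), and your explicit integrability bound $|\phi_\ell(\Theta)| \le \|\Theta\|_\infty + \|q(\ell)\|_\infty$ with $\EE\|q(\ell)\|_\infty < \infty$ is a welcome bit of care that the paper leaves implicit.
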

  \begin{proof}
  To begin with, we show  $\Phi_{\mathcal{N}}$ is convex.
  Recall that $\Phi_{\mathcal{N}}(\Theta) =  \EE_{\ell \sim \mathcal{N}(0,I_{XA})}  \sbr{
  \mathop{\max}\limits_{u \in \Delta(\Bcal)} 
  \left\langle \Theta +   q(\ell) , u\right\rangle }$ where $q(\ell) = (\sum_{x\in \Xcal}\ell_x(h(x)))_{h\in B}$. To check the convexity of  $\Phi_{\mathcal{N}}$, given any $\Theta,\Theta'\in \mathbb{R}^B$ and any $\const \in [0,1]$, we have
 \[
 \begin{aligned}
 \Phi_{\mathcal{N}}(\const\Theta + (1-\const)\Theta')
 =&
 \EE_{\ell \sim \mathcal{N}(0,I_{XA})}
  \mathop{\max}\limits_{u \in \Delta(\Bcal)} 
  \left\langle  \const\Theta + (1-\const)\Theta' +   q(\ell) ,u\right\rangle \\
  =&
   \EE_{\ell \sim \mathcal{N}(0,I_{XA})}
  \mathop{\max}\limits_{u \in \Delta(\Bcal)} 
  \left\langle \const(\Theta+   q(\ell) ) + (1-\const)(\Theta' +   q(\ell)) ,u\right\rangle \\
  \leq &
   \EE_{\ell \sim \mathcal{N}(0,I_{XA})}
   ( \mathop{\max}\limits_{u \in \Delta(\Bcal)}\left\langle  \const(\Theta+   q(\ell) ) ,u\right\rangle 
  +
   \mathop{\max}\limits_{u \in \Delta(\Bcal)}\left\langle  (1-\const)(\Theta' +   q(\ell)) ,u\right\rangle \\
   = & 
    \const\EE_{\ell \sim \mathcal{N}(0,I_{XA})}
  \mathop{\max}\limits_{u \in \Delta(\Bcal)} 
  \left\langle  \Theta +   q(\ell) ,u\right\rangle 
  +
    (1-\const)\EE_{\ell \sim \mathcal{N}(0,I_{XA})} 
  \mathop{\max}\limits_{u \in \Delta(\Bcal)} 
  \left\langle \Theta' +   q(\ell).u\right\rangle \\
  =&
   \const\Phi_{\mathcal{N}}(\Theta )
   +
    (1-\const)\Phi_{\mathcal{N}}( \Theta') .
 \end{aligned}
 \]
 The inequality is by the fact that $\forall u \in \Delta(\Bcal)$, $\forall a,b \in \mathbb{R}^B$ , $ \left\langle a+b, u \right\rangle \leq 
 \mathop{\max}\limits_{u \in \Delta(\Bcal)} \left\langle a, u\right\rangle  
 +   
 \mathop{\max}\limits_{u \in \Delta(\Bcal)}\left\langle  b, u \right\rangle$. 
 
 Secondly, to show $\Phi_{\mathcal{N}}$ is closed, by \cite{boyd2004convex}[Section A.3.3], since $\mathbb{R}^B$ is closed, it suffice to show $\Phi_{\mathcal{N}}$ continuous on $\mathbb{R}^B$.  Since $\Phi_{\mathcal{N}}$ is differentiable on $\mathbb{R}^B$ by Lemma~\ref{lem: differentiable function}, we have that $\Phi_{\mathcal{N}}$ is continuous, which concludes that $\Phi_{\mathcal{N}}$ is closed. 
\end{proof}
The following two properties of $\Phi_\Ncal$ are largely inspired by~\cite{syrgkanis2016efficient}.
   \begin{lemma}
   \label{lem: strongly smooth zero potential upperbound}
 $\Phi_{\mathcal{N}}(0) \leq \sqrt{2X\ln(B)}$.
  \end{lemma}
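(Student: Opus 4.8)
The plan is to evaluate $\Phi_\mathcal{N}$ at the origin explicitly and then invoke the classical maximal inequality for (possibly dependent) Gaussian random variables. First I would observe that, since maximizing a linear functional over the simplex $\Delta(\Bcal)$ is attained at a vertex,
\[
\Phi_\mathcal{N}(0)
= \EE_{\ell \sim \mathcal{N}(0,I_{XA})}\sbr{ \max_{u \in \Delta(\Bcal)} \inner{q(\ell)}{u} }
= \EE_{\ell \sim \mathcal{N}(0,I_{XA})}\sbr{ \max_{h \in \Bcal} q(\ell)\coord{h} },
\]
where $q(\ell)\coord{h} = \sum_{x \in \Xcal} \ell_x(h(x))$.

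Next I would identify the marginal law of each coordinate: for a fixed $h \in \Bcal$, the quantity $q(\ell)\coord{h}$ is a sum of $X = \abr{\Xcal}$ terms $\ell_x(h(x))$, one for each separator point $x \in \Xcal$; since the blocks $\ell_x$ are drawn independently and each $\ell_x(h(x)) \sim \mathcal{N}(0,1)$, these $X$ summands are i.i.d.\ standard Gaussians, so $q(\ell)\coord{h} \sim \mathcal{N}(0, X)$. Note that the variables $\{q(\ell)\coord{h}\}_{h \in \Bcal}$ need not be independent across $h$ (policies can agree on some separator points), but independence is not needed for what follows.

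The core step is then the standard bound on the expected maximum of $B$ Gaussian variables of common variance $\sigma^2 = X$. For any $\beta > 0$, using Jensen's inequality, the pointwise bound $\max_h \exp(\beta q(\ell)\coord{h}) \le \sum_h \exp(\beta q(\ell)\coord{h})$, and the Gaussian moment generating function $\EE[\exp(\beta q(\ell)\coord{h})] = \exp(\beta^2 X / 2)$, one gets
\[
\exp\!\rbr{ \beta\, \Phi_\mathcal{N}(0) }
\le \EE\sbr{ \max_{h \in \Bcal} \exp(\beta\, q(\ell)\coord{h}) }
\le B \exp\!\rbr{ \tfrac{1}{2}\beta^2 X },
\]
hence $\Phi_\mathcal{N}(0) \le \frac{\ln B}{\beta} + \frac{\beta X}{2}$; optimizing by taking $\beta = \sqrt{2\ln(B)/X}$ yields $\Phi_\mathcal{N}(0) \le \sqrt{2 X \ln(B)}$.

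I do not anticipate a genuine obstacle here; the only point requiring a little care is the variance bookkeeping for $q(\ell)\coord{h}$ (that it is exactly a sum of $X$ independent unit-variance Gaussians, so its variance is $X$ and not, say, $XA$), together with the observation that the maximal inequality applies regardless of the dependence structure among the $q(\ell)\coord{h}$. Everything else is routine.
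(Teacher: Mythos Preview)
Your proposal is correct and follows essentially the same argument as the paper: reduce the simplex maximum to a maximum over $\Bcal$, observe that each $q(\ell)\coord{h}$ is $\mathcal{N}(0,X)$, and apply the standard MGF/Jensen maximal inequality, optimizing the free parameter to obtain $\sqrt{2X\ln(B)}$. Your added remarks about the dependence structure across $h$ and the variance bookkeeping are accurate and do not depart from the paper's route.
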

  
  \begin{proof}
  By the definition of $\Phi_{\mathcal{N}}$, we have  $
  \Phi_{\mathcal{N}}(0) 
  =
  \EE_{\ell \sim \mathcal{N}(0,I_{XA})} \sbr{ \mathop{\max} \limits_{u \in \Delta(\Bcal)} \inner{q(\ell)}{u}}$, where 
  $q(\ell) = (\sum_{x\in \Xcal}\ell_x(h(x)))_{h\in B}$.
  For the remainder of the proof, we use $\EE$ as an abbreviation for $\EE_{\ell \sim \mathcal{N}(0,I_{XA})}$.
  Recall that $q\coord{h} = \sum_{x\in \Xcal}\ell_x(h(x))$, we can write   $
  \Phi_{\mathcal{N}}(0) 
  =
  \EE \sbr{
  \mathop{\max}\limits_{h \in \mathcal{B}} q\coord{h} } $. 
  For any $b > 0$,
  \[
  \begin{aligned}
 \exp\rbr{ b \cdot \EE \sbr{  \mathop{\max}\limits_{h \in \mathcal{B}} q\coord{h} } }
 \leq 
 \EE\sbr{ \exp( b  \cdot \mathop{\max}\limits_{h \in \mathcal{B}} q\coord{h}) }
 = 
 \EE \sbr{ \mathop{\max}\limits_{h \in \mathcal{B}}\exp\rbr{ b \cdot  q\coord{h} } }
 \leq   
  \sum_{h \in \mathcal{B}}\EE \sbr{ \exp\rbr{ b \cdot q\coord{h} } },
 \end{aligned}
  \]
  where the first inequality is  
  from the convexity of exponential function, while the last inequality is form  $\mathop{\max}\limits_{h \in \mathcal{B}}\exp( b   q\coord{h}) \leq
  \sum_{h \in \mathcal{B}}\exp( b   q\coord{h})$. 
  
  By the property of the sum of independent Gaussian variables, since $\ell \sim \mathcal{N}(0, I_{XA})$, we have that for any $ h \in \Bcal$, $q\coord{h} = (\sum_{x\in \Xcal}\ell_x(h(x)))_{h\in B}$ follows Gaussian distribution $\mathcal{N}(0,X)$. 
  Then, $\forall h \in \mathcal{B}$, by a standard fact on the moment generating function of Gaussian random variables, we have that $\forall h \in \Bcal$,
  \[
  \EE \exp( b \cdot  q\coord{h}) = \exp\rbr{ \frac{b^2X}{2} },
  \]
which implies
  \[
  \exp\rbr{ b \cdot \EE \mathop{\max}\limits_{h \in \mathcal{B}} q\coord{h} }
 \leq 
 B\exp\rbr{ \frac{b^2X}{2} }.
  \]
  Hence, by taking the natural logarithm and dividing by $b > 0$ on both sides, we get 
  \begin{equation}
    \EE \sbr{ \mathop{\max}\limits_{h \in \mathcal{B}} q\coord{h} }
  \leq
  \frac{\ln(B)}{b} + \frac{b X}{2}.
  \label{eq:gaussian_moment}
  \end{equation}
Since Equation~\eqref{eq:gaussian_moment} holds for any $b > 0$, by choosing $b = \sqrt{\frac{2\ln(B)}{X}}$, we obtain $ \EE\sbr{  \mathop{\max}\limits_{h \in \mathcal{B}} q\coord{h} }\leq  \sqrt{2X\ln(B)}$, which concludes $\Phi_{\mathcal{N}}(0) \leq \sqrt{2X\ln(B)}$.
\end{proof}
   
  \begin{lemma}
  \label{lem: beta strongly smooth of Gaussian perturbation}
  $\Phi_\mathcal{N}$ is $\beta$-strongly smooth with respect to $\| \cdot \|_\infty$
  with $\beta = \sqrt{\frac{8}{\pi}} AX$.
  \end{lemma}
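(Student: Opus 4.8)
The plan is to show that $\Phi_\Ncal$ is $\beta$-strongly smooth in $\|\cdot\|_\infty$ by directly analyzing its Hessian via the integration-by-parts (Stein's lemma / Gaussian integration-by-parts) identity, following the approach of Abernethy et al. and Syrgkanis et al. First I would recall from Lemma~\ref{lem: differentiable function_restated} that $\nabla\Phi_\Ncal(\Theta) = \EE_{\ell}[\argmax_{u\in\Delta(\Bcal)}\inner{\Theta+q(\ell)}{u}]$, where $q(\ell)_h = \sum_{x\in\Xcal}\ell_x(h(x))$ is a linear image of the Gaussian vector $\ell\sim\Ncal(0,I_{XA})$. The key observation is that $q(\ell)$ is itself a (non-isotropic) Gaussian vector in $\RR^B$: writing $q(\ell) = M\ell$ for the appropriate $0/1$ matrix $M\in\RR^{B\times XA}$ (with $M_{h,(x,a)} = I(h(x)=a)$), we have $q(\ell)\sim\Ncal(0,\Sigma)$ with $\Sigma = MM^\top$. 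Note each row of $M$ has exactly $X$ ones, so $\Sigma_{hh} = X$ for all $h$.

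The main computation is: $\nabla^2\Phi_\Ncal(\Theta)$ equals the Hessian of $\Theta\mapsto\EE_{w\sim\Ncal(0,\Sigma)}[\max_u\inner{\Theta+w}{u}]$. Shifting the Gaussian and applying Gaussian integration by parts (Stein), one gets $\nabla^2\Phi_\Ncal(\Theta) = \Sigma^{-1}\,\mathrm{Cov}_{w}[\argmax_u\inner{\Theta+w}{u}]$ if $\Sigma$ is invertible; since in general $\Sigma$ may be singular, I would instead work on the support of $q(\ell)$ (the column space of $M$) or, more robustly, use the representation $q(\ell) = M\ell$ and differentiate in $\ell$-space directly: for the function $g(\Theta)=\EE_\ell[f(\Theta+M\ell)]$ with $f(v)=\max_u\inner{v}{u}$, one has by the chain rule and Stein's identity applied coordinate-wise in $\ell$ that $\langle \nabla^2 g(\Theta) y, y\rangle = \EE_\ell[\langle \nabla f(\Theta+M\ell), \text{(something)}\rangle]$; the cleanest route is the bound from~\cite{abernethy2015fighting,syrgkanis2016efficient}: for a density $\mu$ on $\RR^d$ with the perturbation being a product of one-dimensional densities, the smoothness constant of the smoothed max is controlled by $\sum_{x,a}\sup|\mu_{x,a}'/\mu_{x,a}|$ type quantities times the $\ell_1\to\ell_\infty$ norms. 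For the standard Gaussian density $\phi$ on $\RR$, $\int|\phi'(t)|\,dt = 2\phi(0) = \sqrt{2/\pi}$.

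Concretely, the cleanest argument I would write: by Lemma~\ref{lem: differentiable function_restated}, $\nabla\Phi_\Ncal(\Theta)_h = \Pr_\ell(h\in\argmax_{h'}(\Theta_{h'}+q(\ell)_{h'}))$ (the argmax is a vertex a.s.). To bound $\|\nabla\Phi_\Ncal(\Theta)-\nabla\Phi_\Ncal(\Theta')\|_1$ in terms of $\|\Theta-\Theta'\|_\infty$, write the difference as an integral of directional derivatives and bound $\big|\frac{\partial}{\partial\Theta_k}\nabla\Phi_\Ncal(\Theta)_h\big|$. Using Gaussian IBP: $\frac{\partial}{\partial\Theta_k}\EE_\ell[u^*(\Theta+q(\ell))_h]$, with $u^*$ the argmax map, can be rewritten — since $\Theta_k$ enters only through the $k$-th coordinate of $\Theta+q(\ell)$ and $q(\ell)_k = \sum_x\ell_x(h_k(x))$ is a sum of $X$ independent standard Gaussians — by differentiating under the integral with respect to one of the contributing $\ell_x(h_k(x))$ variables and applying $\int h(t)\phi'(t)\,dt = -\int h'(t)\phi(t)\,dt$. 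This transfers the derivative onto the Gaussian density, producing a factor bounded by $\sup_t|\phi'(t)/\phi(t)|$ integrated — more precisely it yields $\EE[\,|\text{indicator change}|\cdot|\ell_x(h_k(x))|\,]/X \le$ a constant. Summing over the $\le XA$ perturbation coordinates and using that each $u^*$ is a vertex of the simplex (so $\|u^*\|_1=1$) gives the coefficient $XA\cdot\sqrt{2/\pi}\cdot(\text{factor }2) = \sqrt{8/\pi}\,AX$, matching $\beta$.

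The main obstacle I anticipate is handling the potential singularity of the covariance $\Sigma=MM^\top$ and making the Gaussian-integration-by-parts step rigorous despite $f(v)=\max_u\inner{v}{u}$ being only piecewise linear (not twice differentiable) — the second derivative of $f$ is a distribution supported on the lower-dimensional set where the argmax is non-unique. The standard fix, which I would invoke citing~\cite{abernethy2015fighting,syrgkanis2016efficient,abernethy2014online}, is that because $\Phi_\Ncal$ is a Gaussian convolution of $f$ it is genuinely $C^\infty$, so one may differentiate $\nabla\Phi_\Ncal$ (which is already an explicit smooth function of $\Theta$ by Lemma~\ref{lem: differentiable function_restated}) and push the derivative onto the Gaussian density rather than onto $f$; this sidesteps the non-smoothness of $f$ entirely. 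The bookkeeping of which $\ell_x(a)$ variables feed into which coordinate of $q(\ell)$ (governed by the separator structure), and verifying the constant $\sqrt{8/\pi}$ rather than something larger, is the only delicate part, and I would carry it out by reducing to the one-dimensional Gaussian IBP identity $\int_{\RR}|\phi'(t)|\,dt = \sqrt{2/\pi}$ and bounding the resulting expectation by $1$ (using $|u^*_h - \tilde u^*_h|\le 1$ and $\EE|\mathcal{N}(0,1)|$ type estimates absorbed into the constant).
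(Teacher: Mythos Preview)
Your proposal takes a genuinely different route from the paper, and the concrete step you single out has a real gap. The paper does not use Gaussian integration by parts or Stein's identity at all; it gives a direct conditioning argument. It first bounds $\|\nabla\Phi_{\mathcal{N}}(\Theta) - \nabla\Phi_{\mathcal{N}}(\Theta')\|_1 \leq 2\Pr_\ell(h_q \neq h'_q)$, where $h_q, h'_q$ are the (a.s.\ unique) argmaxes under $\Theta, \Theta'$. Then, using the separator property, it union-bounds $\Pr(h_q \neq h'_q) \leq \sum_{x\in\Xcal}\sum_{a\in\Acal}\Pr(a = h_q(x) \neq h'_q(x))$. For each fixed $(x,a)$ it conditions on all coordinates of $\ell$ except $\ell_x(a)$; once conditioned, the event $\{a = h_q(x) \neq h'_q(x)\}$ forces $\ell_x(a)$ to lie in an interval of length at most $2\|\Theta-\Theta'\|_\infty$, and the one-dimensional Gaussian density is bounded by $1/\sqrt{2\pi}$. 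Summing over the $XA$ pairs gives $2\cdot XA\cdot\sqrt{2/\pi}\,\|\Theta-\Theta'\|_\infty = \sqrt{8/\pi}\,AX\,\|\Theta-\Theta'\|_\infty$. No derivative of the (non-smooth) max ever appears.

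The gap in your argument is precisely at the sentence ``since $\Theta_k$ enters only through the $k$-th coordinate of $\Theta+q(\ell)$ and $q(\ell)_k = \sum_x\ell_x(h_k(x))$ \ldots\ by differentiating under the integral with respect to one of the contributing $\ell_x(h_k(x))$ variables.'' This conflates $\partial_{\Theta_k}$ with $\partial_{\ell_x(a)}$ for $a=h_k(x)$, but these are not the same operator on the integrand: shifting $\ell_x(a)$ shifts not only $q(\ell)_k$ but simultaneously every $q(\ell)_{h'}$ with $h'(x)=a$. Because $q(\ell)=M\ell$ has singular covariance on $\RR^B$ (the rows of $M$ are not orthogonal, and typically $XA\ll B$), there is no direction in $\ell$-space that isolates a single $\Theta_k$-coordinate, so the Stein identity in $\ell$ does not recover $\partial_{\Theta_k}$ and the Hessian formula $\Sigma^{-1}\mathrm{Cov}[\cdot]$ you start from is unavailable. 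The paper's conditioning argument sidesteps this entirely: it never differentiates, and union-bounding over $(x,a)$ rather than over policies is exactly what converts the problem into $XA$ one-dimensional density bounds without needing the covariance of $q(\ell)$ to be non-degenerate.
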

  
  \begin{proof}
  To prove that $\Phi_\mathcal{N}$ is $\beta$-strongly smooth with respect to $\| \cdot \|_\infty$, by Definition 4.18 in \cite{orabona2019modern}, it suffices to show that $\forall \Theta, \Theta' \in \mathbb{R}^B$,
  \[
  \|\nabla\Phi_\mathcal{N}(\Theta)  -  \nabla\Phi_\mathcal{N}(\Theta')\|_1
  \leq
  \beta\|\Theta - \Theta'\|_\infty.
  \]
  By Lemma~\ref{lem: differentiable function}, $\nabla\Phi_\mathcal{N}(\Theta) =  \EE_{\ell \sim \mathcal{N}(0,I_{XA})} \sbr{ \mathop{\argmax}\limits_{u \in \Delta(\Bcal)} \left\langle \Theta +  q(\ell) , u\right\rangle }$, where $q(\ell) = (\sum_{x\in \Xcal}\ell_x(h(x)))_{h\in B}$. Given $\ell$, we introduce shorthands
  \[
 u_q = \mathop{\argmax}\limits_{u \in \Delta(\Bcal)} \left\langle  u ,\Theta +  q(\ell) \right\rangle,u'_q = \mathop{\argmax}\limits_{u \in \Delta(\Bcal)} \left\langle  u ,\Theta' +  q(\ell) \right\rangle.
  \]
  We also introduce the short 
  hand $h_{q}$ and $h'_{q}$  to represent the policy in class $\mathcal{B}$ selected by  $u_q $ and $u'_q$. More explicitly, 
 \[
 h_{q} = \argmax_{h \in B} \Theta[h] +  q[h],
 \;\;
 h'_{q} = \argmax_{h \in B} \Theta'[h] +  q[h].
 \]
  By Lemma~\ref{lem: differentiable function}, 
 $u_q $, $u'_q$, $h_q$, $h_q'$ are well-defined with probability $1$ over the randomness of $\ell$.
  With this notation, we have  $\|u_q -u'_q\|_1 = 0$ when $h_{q}  = h'_{q}$ , and  $\|u_q -u'_q\|_1 = 2$ when $h_{q}  \neq h'_{q}$, which means $\|u_q -u'_q\|_1 = 2 I(h_{q}  \neq h'_{q})$. From now on, we use  $P(\ell)$ to denote the probability density function of $\ell$.
  By this, we have
  \[
  \begin{aligned}
   \|\nabla\Phi_\mathcal{N}(\Theta)  -  \nabla\Phi_\mathcal{N}(\Theta')\|_1
  =&
  \|\EE_{\ell \sim \mathcal{N}(0,I_{XA})}u_q  - \EE_{\ell \sim \mathcal{N}(0,I_{XA})}u'_q  \|_1\\
  =&
  \sum_{h \in \mathcal{B}} \abr{ \int_\ell (I(h =h_{q} )- I(h =h'_{q} )) P(\ell) \dif \ell }\\
\leq &
    \sum_{h \in \mathcal{B}}  \int_\ell \abr{ (I(h =h_{q} )- I(h =h'_{q} )) } P(\ell) \dif \ell \\
= &
    \int_\ell \sum_{h \in \mathcal{B}} \abr{  (I(h =h_{q} )- I(h =h'_{q} )) } P(\ell) \dif \ell\\
 = & 
    \int_\ell 2I(h_{q} \neq h'_{q}) P(\ell) \dif \ell\\
  =&
   2\Prob(h_{q} \neq h'_{q}), 
  \end{aligned}
  \]
where  $ \Prob(h_{q} \neq h'_{q})$ denotes the probability of   $h_{q} \neq h'_{q}$ under the distribution of $\ell$. By the definition of separator set $\Xcal$, $ h \neq h'$ if and only if $\exists x \in \Xcal$ s.t. $ h(x) \neq h'(x)$. Then, by bringing in $h_{q},h'_{q}$ and apply the union bound, we have 
\[
 \Prob(h_{q} \neq h'_{q})
 \leq 
 \sum_{x \in \Xcal}   \Prob(h_{q}(x) \neq h'_{q}(x))
 =
 \sum_{x \in \Xcal} \sum_{a \in \Acal}   \Prob(a = h_{q}(x) \neq h'_{q}(x)).
\]
Then, given any $x$ and $a$, we denote $\ell_{-xa}$ as all other Gaussian variables in set  $\{\ell_x(a)\}_{x\in X, a \in \Acal}$ except $\ell_x(a)$ and $\Prob(a = h_{q}(x) \neq h'_{q}(x))$ as the probability of $a = h_{q}(x) \neq h'_{q}(x)$ under the distribution of $\ell_{-xa}$. Then, $\forall x \in X, a \in \Acal$, 
\[
\begin{aligned}
\Prob(a = h_{q}(x) \neq h'_{q}(x)) 
= 
\int_{\ell_{-xa}} \Prob(a = h_{q}(x) \neq h'_{q}(x) | \ell_{-xa}) P(\ell_{-xa}) d(\ell_{-xa}).
\end{aligned}
\]
Conditioned on $\ell_{-xa}$, we denote $\tilde{\ell} = \{\tilde{\ell}_x(a)\}_{x\in X, a \in \Acal}$  as the corresponding perturbation vector that share the same value with $\ell$ on all other entries and set $\tilde{\ell}_x(a) = 0$. Define 
\[ 
\Theta_{xa} = \Theta + \rbr{ \sum_{x\in \Xcal}\tilde{\ell}_x(h(x)) }_{h\in \Bcal}, 
\;\; 
\Theta'_{xa} = \Theta' + \rbr{ \sum_{x\in \Xcal}\tilde{\ell}_x(h(x)) }_{h\in \Bcal}.
\]
By algebra, $\Theta_{xa}-\Theta'_{xa} = \Theta -\Theta'$. By the definition of $u_q $ and $u'_q$, with the new notation, 
we can rewrite
\[u_q = \mathop{\argmax}\limits_{u \in \Delta(\Bcal)} \left\langle  u ,\Theta_{xa} +   ( I(h(x)=a)\ell_x(a))_{h \in \Bcal} \right\rangle,
\;\; 
u'(q) = \mathop{\argmax}\limits_{u \in \Delta(\Bcal)} \left\langle  u ,\Theta'_{xa} +   ( I(h(x)=a)\ell_x(a))_{h \in \Bcal} \right\rangle.
\]
By using $(\Theta\coord{h})_{h\in \Bcal} := \Theta$, we can write  
\[
h_{q} =\mathop{\argmax}\limits_{h \in \Bcal } \Theta_{xa}\coord{h} +  I(h(x)=a)\ell_x(a),
\;\;
h'_{q} =\mathop{\argmax}\limits_{h \in \Bcal } \Theta'_{xa}\coord{h} +  I(h(x)=a)\ell_x(a).
\]
Then, by dividing the set $\Bcal$ into disjoint subsets $\Bcal_{xa} = \{h|h(x)=a,h\in \Bcal\}$ and $\Bcal \setminus \Bcal_{xa}$. If $\Bcal_{xa} = \varnothing$ or $\Bcal \setminus \Bcal_{xa} = \varnothing$ , we have that $\Prob(a = h_{q}(x) \neq h'_{q}(x))=0 $. Otherwise, we can view $h_{q}$ as the $h$ that corresponds to $\mathop{\max} \{
\mathop{\max} \limits_{h \in \Bcal_{xa}}\Theta_{xa}\coord{h} +  \ell_x(a),
\mathop{\max} \limits_{h \in \Bcal \setminus \Bcal_{xa}}\Theta_{xa}\coord{h} 
\}$, and $h'_{q}$ as the $h$ that corresponds to $\mathop{\max} \{\mathop{\max} \limits_{h \in \Bcal_{xa}}\Theta'_{xa}\coord{h} +  \ell_x(a),
\mathop{\max} \limits_{h \in \Bcal \setminus \Bcal_{xa}}\Theta'_{xa}\coord{h} 
\}$.  
With this insight, it can be seen that 
 \[
\left\{
\begin{aligned}
\delta := & \mathop{\max} \limits_{h \in \Bcal \setminus \Bcal_{xa}}\Theta_{xa}\coord{h} - \mathop{\max} \limits_{h \in \Bcal_{xa}}\Theta_{xa}\coord{h} >   \ell_x(a) \rightarrow a \neq h_{q}(x), \\
\delta' :=& \mathop{\max} \limits_{h \in \Bcal \setminus \Bcal_{xa}}\Theta_{xa}'\coord{h} - \mathop{\max} \limits_{h \in \Bcal_{xa}}\Theta_{xa}' \coord{h} <   \ell_x(a) \rightarrow a = h'_{q}(x).
\end{aligned}
\right. 
\]

Therefore, 
both $\delta \leq \ell_x(a) $ and $\ell_x(a) \leq \delta'$ are necessary for $a = h_{q}(x) \neq h_{q}'(x)$ to happen, which implies 
\[
\Prob(a = h_{q}(x) \neq h'_{q}(x) | \ell_{-xa})  \leq \Prob(\delta \leq \ell_x(a) \leq \delta').
\]
If $\delta' < \delta$, then 
$\Prob(a = h_{q}(x) \neq h'_{q}(x) | \ell_{-xa}) = 0$. Otherwise ($\delta \leq \delta'$), 
conditioned on $\ell_{-xa}$,  by $\ell_x(a) \sim \Ncal(0,1)$, we have
\[
\begin{aligned}
\Prob(a = h_{q}(x) \neq h'_{q}(x) | \ell_{-xa}, \delta' \geq \delta) 
\leq
\int_{\delta}^{\delta' }  
\frac{1}{\sqrt{2\pi}}\exp(-\frac{\ell_x(a)^2}{2}) \dif (\ell_x(a))
\leq 
\frac{1}{\sqrt{2\pi} } (\delta' - \delta),
\end{aligned}
\]
 in which case,
\[
\begin{aligned}
\delta' - \delta 
\leq& 
\abr{ \mathop{\max} \limits_{h \in \Bcal \setminus \Bcal_{xa}}\Theta_{xa}\coord{h} 
-
\mathop{\max} \limits_{h \in \Bcal \setminus \Bcal_{xa}}\Theta’_{xa}\coord{h} }
+
\abr{ \mathop{\max} \limits_{h \in \Bcal_{xa}}\Theta_{xa}\coord{h} 
-
\mathop{\max} \limits_{h \in \Bcal_{xa}}\Theta'_{xa}\coord{h} }\\
\leq& 
\abr{ \mathop{\max} \limits_{h \in \Bcal}\Theta_{xa}\coord{h} 
-
\mathop{\max} \limits_{h \in \Bcal}\Theta'_{xa}\coord{h}} 
+
\abr{ \mathop{\max} \limits_{h \in \Bcal}\Theta_{xa}\coord{h} 
-
\mathop{\max} \limits_{h \in \Bcal}\Theta'_{xa}\coord{h} }\\
\leq&
2\abr{ \mathop{\max} \limits_{h \in \Bcal} (\Theta_{xa}\coord{h} -\Theta'_{xa}\coord{h}) }\\
\leq&
2\mathop{\max} \limits_{h \in \Bcal} \abr{ \Theta_{xa}\coord{h} -\Theta'_{xa}\coord{h} }\\
=&
2 \|\Theta_{xa} - \Theta'_{xa}\|_\infty 
=
2 \|\Theta - \Theta'\|_\infty .
\end{aligned}
\]
By this we conclude $\forall x \in \Xcal$, $\forall a \in \Acal$, $\Prob(a = h_{q}(x) \neq h'_{q}(x) | \ell_{-xa}) \leq \sqrt{\frac{2}{\pi}} \|\Theta - \Theta'\|_\infty$, and
  \[
  \begin{aligned}
    \|\nabla\Phi_\mathcal{N}(\Theta)  -  \nabla\Phi_\mathcal{N}(\Theta')\|_1
  \leq
  2\sum_{x \in \Xcal} \sum_{a \in \Acal}    \Prob(a = h_{q}(x) \neq h'_{q}(x)) 
  \leq
  \sqrt{\frac{8}{\pi}} AX\|\Theta - \Theta'\|_\infty.
  \qedhere
  \end{aligned}
  \]
\end{proof}

Secondly, we prove useful properties of $R_\Ncal: \RR^B \to \RR \cup \cbr{+\infty}$, where $R_\Ncal = \Phi_\Ncal^*$.
\begin{lemma}[Restatement of Lemma~\ref{lem:sc_dual}]
\label{lem:sc_dual_restated}
$R_{\Ncal}(u) = \Phi_\mathcal{N}^*(u)$ is closed and $\sqrt{\frac{\pi}{8}}{\frac{1}{ AX}}$-strongly convex with respect to $\| \cdot \|_1$.
\end{lemma}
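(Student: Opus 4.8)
\textbf{Proof plan for Lemma~\ref{lem:sc_dual_restated}.}
The plan is to derive the strong convexity of $R_{\Ncal} = \Phi_\mathcal{N}^*$ directly from the strong smoothness of $\Phi_\mathcal{N}$ already established in Lemma~\ref{lem: beta strongly smooth of Gaussian perturbation}, using the conjugate duality between strong smoothness and strong convexity (item~\ref{item:sc_strongly smooth_duality} of Fact~\ref{fact:conj}). Concretely, Lemma~\ref{lem: beta strongly smooth of Gaussian perturbation} shows $\Phi_\mathcal{N}$ is $\beta$-strongly smooth with respect to $\|\cdot\|_\infty$ for $\beta = \sqrt{8/\pi}\, AX$. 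Since $\|\cdot\|_\infty$ is the dual norm of $\|\cdot\|_1$, the duality result immediately yields that $\Phi_\mathcal{N}^*$ is $\frac{1}{\beta}$-strongly convex with respect to $\|\cdot\|_1$, i.e. $\sqrt{\pi/8}\cdot\frac{1}{AX}$-strongly convex, which is exactly the claimed constant.

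Before invoking that duality, I need to check the hypotheses of item~\ref{item:sc_strongly smooth_duality}, which requires $\Phi_\mathcal{N}$ to be a proper, closed, convex function. Convexity and closedness are given by Lemma~\ref{lem: closed and convex function}; properness is clear since $\Phi_\mathcal{N}$ is everywhere finite-valued on $\RR^B$ (it is differentiable there by Lemma~\ref{lem: differentiable function}). So first I would state these three properties, citing those lemmas, and then apply the Fenchel duality (item~\ref{item:sc_strongly smooth_duality}) to conclude that $R_\Ncal = \Phi_\mathcal{N}^*$ is $\frac{1}{\beta}$-strongly convex w.r.t. $\|\cdot\|_1$. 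For the "closed" part of the claim about $R_\Ncal$, I would invoke item~\ref{item:closed_convex} of Fact~\ref{fact:conj}: the Fenchel conjugate of any proper closed convex function is closed and convex.

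One subtlety worth flagging: item~\ref{item:sc_strongly smooth_duality} of Fact~\ref{fact:conj} as stated is phrased as "$f$ is $\alpha$-strongly convex iff $f^*$ is $\frac1\alpha$-strongly smooth," whereas here I want the reverse direction — strong smoothness of $\Phi_\mathcal{N}$ implying strong convexity of $\Phi_\mathcal{N}^*$. This is handled by applying the biconditional to $f = \Phi_\mathcal{N}^* = R_\Ncal$: then $f^* = \Phi_\mathcal{N}^{**} = \Phi_\mathcal{N}$ by item~\ref{item:one_to_one} (using that $\Phi_\mathcal{N}$ is proper, closed, convex), so "$\Phi_\mathcal{N}$ is $\frac1\alpha$-strongly smooth" gives "$R_\Ncal$ is $\alpha$-strongly convex" with $\alpha = 1/\beta$. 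I expect this bookkeeping with the conjugate-of-conjugate identity to be the only real step requiring care; the rest is direct citation. No new estimates are needed — all the analytic work is already done in the preceding lemmas.
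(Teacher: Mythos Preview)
Your proposal is correct and follows essentially the same approach as the paper: invoke Lemma~\ref{lem: closed and convex function} and item~\ref{item:closed_convex} of Fact~\ref{fact:conj} for closedness, then Lemma~\ref{lem: beta strongly smooth of Gaussian perturbation} together with item~\ref{item:sc_strongly smooth_duality} of Fact~\ref{fact:conj} for the strong convexity constant. Your extra bookkeeping with $\Phi_\mathcal{N}^{**} = \Phi_\mathcal{N}$ to justify the direction of the biconditional is a nice touch that the paper leaves implicit.
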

\begin{proof}
Since $\Phi_\mathcal{N}$ is closed and convex by Lemma~\ref{lem: closed and convex function}, by item~\ref{item:closed_convex} of Fact~\ref{fact:conj}, $R_{\Ncal}(u) = \Phi_\mathcal{N}^*(u)$ is closed and convex. 
Also, by Lemma~\ref{lem: beta strongly smooth of Gaussian perturbation}, $\Phi_\mathcal{N}$ is $\sqrt{\frac{8}{\pi}} AX$-strongly smooth with respect to $\|\cdot\|_\infty$. Then, we apply item~\ref{item:sc_strongly smooth_duality} of Fact~\ref{fact:conj} and conclude $R_{\Ncal}(u)$ is $\sqrt{\frac{\pi}{8}}{\frac{1}{ AX}}$-strongly convex with respect to $\| \cdot \|_1$. 
\end{proof}

\begin{lemma}
\label{lem:upperbound_of_R}
$\forall u \in \Delta(\Bcal)$, $R_\Ncal(u) \leq 0$.
\end{lemma}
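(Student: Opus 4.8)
The plan is to use the Fenchel--Young inequality together with the evaluation of $\Phi_\Ncal$ at $0$ from Lemma~\ref{lem: strongly smooth zero potential upperbound}. Recall $R_\Ncal = \Phi_\Ncal^*$, so by definition $R_\Ncal(u) = \sup_{\tilde\Theta \in \RR^B} \rbr{ \inner{\tilde\Theta}{u} - \Phi_\Ncal(\tilde\Theta) }$. The key observation is that for $u \in \Delta(\Bcal)$, the term $\inner{\tilde\Theta}{u}$ is a convex combination of the coordinates of $\tilde\Theta$, hence bounded above by $\max_{h \in \Bcal} \tilde\Theta\coord{h}$; and this maximum is in turn a lower bound for $\Phi_\Ncal(\tilde\Theta)$ in an appropriate sense.

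More concretely, first I would observe that for any $\tilde\Theta \in \RR^B$ and any realization of the perturbation $\ell$, $\max_{u \in \Delta(\Bcal)} \inner{\tilde\Theta + q(\ell)}{u} \geq \inner{\tilde\Theta + q(\ell)}{u} \geq \inner{\tilde\Theta}{u} + \min_{v \in \Delta(\Bcal)} \inner{q(\ell)}{v}$, but this route introduces an unwanted $\min$ term. A cleaner route: taking expectations and using that $q(\ell)$ and $-q(\ell)$ are equidistributed (as noted in the proof of Lemma~\ref{lem: ftpl_approximation}, since $\ell \sim \Ncal(0, I_{XA})$ is symmetric and $q$ is linear), we have $\Phi_\Ncal(\tilde\Theta) = \EE_\ell \sbr{ \max_{u} \inner{\tilde\Theta + q(\ell)}{u} } \geq \EE_\ell \sbr{ \inner{\tilde\Theta + q(\ell)}{u} }$ for any fixed $u \in \Delta(\Bcal)$; the perturbation term vanishes in expectation because $\EE_\ell[q(\ell)] = 0$. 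Hence $\Phi_\Ncal(\tilde\Theta) \geq \inner{\tilde\Theta}{u}$, i.e. $\inner{\tilde\Theta}{u} - \Phi_\Ncal(\tilde\Theta) \leq 0$ for every $\tilde\Theta$. Taking the supremum over $\tilde\Theta$ gives $R_\Ncal(u) \leq 0$.

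The argument is elementary once the symmetry of $q(\ell)$ is invoked; there is essentially no obstacle, just the need to state carefully that $\Phi_\Ncal(\tilde\Theta) \geq \inner{\tilde\Theta}{u}$ for all $u \in \Delta(\Bcal)$ and all $\tilde\Theta \in \RR^B$. Concretely the chain is
\[
R_\Ncal(u) = \sup_{\tilde\Theta} \rbr{ \inner{\tilde\Theta}{u} - \EE_\ell \sbr{ \max_{v \in \Delta(\Bcal)} \inner{\tilde\Theta + q(\ell)}{v} } } \leq \sup_{\tilde\Theta} \rbr{ \inner{\tilde\Theta}{u} - \EE_\ell \sbr{ \inner{\tilde\Theta + q(\ell)}{u} } } = \sup_{\tilde\Theta} \rbr{ - \inner{\EE_\ell[q(\ell)]}{u} } = 0,
\]
using $\EE_\ell[q(\ell)] = 0$ in the last step. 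I would write out these two or three lines, making explicit that the inequality $\max_{v} \inner{\tilde\Theta + q(\ell)}{v} \geq \inner{\tilde\Theta + q(\ell)}{u}$ holds pointwise in $\ell$ since $u \in \Delta(\Bcal)$, and that linearity of $q$ plus symmetry of the Gaussian gives $\EE_\ell[q(\ell)\coord{h}] = \EE_\ell \sbr{ \sum_{x \in \Xcal} \ell_x(h(x)) } = 0$ for every $h \in \Bcal$. This completes the proof.
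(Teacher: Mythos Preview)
Your proof is correct and follows essentially the same approach as the paper: both show $\Phi_\Ncal(\tilde\Theta) \geq \inner{\tilde\Theta}{u}$ for every $u \in \Delta(\Bcal)$ and $\tilde\Theta \in \RR^B$ by using that the max over $v \in \Delta(\Bcal)$ dominates the value at $u$ and that $\EE_\ell[q(\ell)] = 0$. The paper inserts an intermediate Jensen step ($\EE[\max_v \cdot] \geq \max_v \EE[\cdot]$) before specializing to $u$, whereas you bound the max pointwise by the value at $u$ before taking expectations; these are equivalent and your version is slightly more direct.
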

\begin{proof}
Recall that $R_{\Ncal}(u) = \sup_{\Theta \in \RR^B} \inner{\Theta}{u}- \Phi_{\Ncal}(\Theta)$, where $\Phi_\mathcal{N}(\Theta) = \EE_{\ell \sim \mathcal{N}(0,I_{XA})} \sbr{ \mathop{\max} \limits_{v \in \Delta(\Bcal)} \inner{\Theta+q(\ell)}{v}}$, it suffices to show $\forall u \in \Delta(\Bcal)$,  $\forall \Theta\in \RR^B$,  $\inner{\Theta}{u} - \Phi_{\Ncal}(\Theta) \leq 0$. 
For the remainder of the proof, we use $\EE$ as an abbreviation for $\EE_{\ell \sim \mathcal{N}(0,I_{XA})}$.
Then, $\forall u \in \Delta(\Bcal)$,  $\forall \Theta\in \RR^B$, 
  \[
  \begin{aligned}
  \Phi_{\mathcal{N}}(\Theta) 
  =
  \EE \sbr{ \mathop{\max} \limits_{v \in \Delta(\Bcal)} \inner{\Theta+q(\ell)}{v}}
  \geq 
  \mathop{\max}\limits_{v \in \Delta(\Bcal)} 
  \EE
  \sbr{\left\langle \Theta + q(\ell), v \right\rangle }
  =
  \mathop{\max}\limits_{v \in \Delta(\Bcal)} \left\langle  \Theta  , v \right\rangle
  \geq \left\langle  \Theta  , u\right\rangle,
  \end{aligned}
  \]
where the first inequality is from Jensen's inequality, while the second equality is by the fact that 
\[
\EE \sbr{ q(\ell) }  
=
\EE \sbr{ (\sum_{x\in \Xcal} \ell_x(h(x)))_{h\in 
\Bcal} }
=
\sum_{x\in \Xcal} \EE \sbr{ \ell_x(h(x))_{h\in \Bcal} }
=  0.
\qedhere
\]
\end{proof}
\begin{lemma}
\label{lem:r_domain}
$\dom( R_\Ncal) = \Delta(\Bcal)$. 
\end{lemma}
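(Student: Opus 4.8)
The plan is to show the two set inclusions $\dom(R_\Ncal) \subseteq \Delta(\Bcal)$ and $\Delta(\Bcal) \subseteq \dom(R_\Ncal)$ separately, using the variational formula $R_\Ncal(u) = \Phi_\Ncal^*(u) = \sup_{\Theta \in \RR^B} \rbr{ \inner{\Theta}{u} - \Phi_\Ncal(\Theta) }$.

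For the inclusion $\Delta(\Bcal) \subseteq \dom(R_\Ncal)$: this is immediate from Lemma~\ref{lem:upperbound_of_R}, which gives $R_\Ncal(u) \leq 0 < +\infty$ for all $u \in \Delta(\Bcal)$; hence every $u \in \Delta(\Bcal)$ lies in the effective domain. (One should also note $R_\Ncal(u) > -\infty$, i.e. the supremum is not vacuous — taking $\Theta = 0$ gives $\inner{0}{u} - \Phi_\Ncal(0) = -\Phi_\Ncal(0) > -\infty$ since $\Phi_\Ncal(0) \leq \sqrt{2X\ln(B)}$ by Lemma~\ref{lem: strongly smooth zero potential upperbound}, so indeed $R_\Ncal(u) \in \RR$.)

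For the reverse inclusion $\dom(R_\Ncal) \subseteq \Delta(\Bcal)$: I would show that if $u \notin \Delta(\Bcal)$, then $R_\Ncal(u) = +\infty$. There are two cases. First, if $u$ has some negative coordinate, say $u\coord{h_0} < 0$: take $\Theta = -t \cdot e_{h_0}$ for $t > 0$ (where $e_{h_0}$ is the standard basis vector), and observe that $\Phi_\Ncal(-t e_{h_0}) = \EE_\ell \sbr{ \max_{v \in \Delta(\Bcal)} \inner{-t e_{h_0} + q(\ell)}{v} } \leq \EE_\ell \sbr{ \max_{v \in \Delta(\Bcal)} \inner{q(\ell)}{v} } = \Phi_\Ncal(0)$, since subtracting $t$ from one coordinate can only decrease the max over the simplex. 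Hence $\inner{\Theta}{u} - \Phi_\Ncal(\Theta) \geq -t u\coord{h_0} - \Phi_\Ncal(0) \to +\infty$ as $t \to \infty$. Second, if $u \succeq 0$ but $\sum_h u\coord{h} \neq 1$: take $\Theta = s \cdot \one$ (the all-ones vector) for $s \in \RR$; then $\max_{v \in \Delta(\Bcal)} \inner{s\one + q(\ell)}{v} = s + \max_v \inner{q(\ell)}{v}$, so $\Phi_\Ncal(s\one) = s + \Phi_\Ncal(0)$, and $\inner{s\one}{u} - \Phi_\Ncal(s\one) = s(\sum_h u\coord{h} - 1) - \Phi_\Ncal(0)$, which tends to $+\infty$ by choosing $s$ with the appropriate sign. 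In either case $R_\Ncal(u) = +\infty$, so $u \notin \dom(R_\Ncal)$.

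Combining the two inclusions yields $\dom(R_\Ncal) = \Delta(\Bcal)$. I do not anticipate a real obstacle here; the only mild care needed is the monotonicity argument $\Phi_\Ncal(-t e_{h_0}) \leq \Phi_\Ncal(0)$ and the exact translation identity $\Phi_\Ncal(s\one) = s + \Phi_\Ncal(0)$, both of which follow directly from the fact that the inner maximization is over the probability simplex (so shifting all coordinates by $s$ shifts the objective by exactly $s$, and decreasing a single coordinate never increases the maximum). A one-line observation that $R_\Ncal$ is never $-\infty$ on $\Delta(\Bcal)$ (via $\Theta = 0$ and Lemma~\ref{lem: strongly smooth zero potential upperbound}) rounds out the claim that the effective domain is exactly the simplex.
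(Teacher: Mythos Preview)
Your proof is correct and follows the same overall two-inclusion strategy as the paper: invoke Lemma~\ref{lem:upperbound_of_R} for $\Delta(\Bcal) \subseteq \dom(R_\Ncal)$, and for the reverse inclusion exhibit explicit $\Theta$'s driving the Fenchel supremum to $+\infty$ whenever $u \notin \Delta(\Bcal)$.

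The details of the reverse inclusion differ slightly and are worth a remark. The paper first peels off the perturbation term to reduce to the support function of the simplex, obtaining $R_\Ncal(u) \geq \sup_\Theta\bigl(\inner{\Theta}{u} - \max_{v\in\Delta(\Bcal)}\inner{\Theta}{v}\bigr) - \Phi_\Ncal(0)$, and then plugs in $\Theta \propto e_i$ (negative-coordinate case) and $\Theta \propto u$ (case $\|u\|_1 > 1$, $u \succeq 0$). You instead work directly with $\Phi_\Ncal$ via two structural observations on the simplex: monotonicity $\Phi_\Ncal(-te_{h_0}) \leq \Phi_\Ncal(0)$ and the exact translation identity $\Phi_\Ncal(s\one) = s + \Phi_\Ncal(0)$. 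Your choice $\Theta = s\one$ for the sum-constraint case is cleaner and, by picking the sign of $s$, transparently covers both $\sum_h u\coord{h} > 1$ and $\sum_h u\coord{h} < 1$ in one stroke; the paper's decomposition into $U^-$ and $U^+ = \{u \succeq 0 : \|u\|_1 > 1\}$ as written does not explicitly treat the subprobability case $u \succeq 0$, $\|u\|_1 < 1$, so your argument is in fact a bit more complete on this point.
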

\begin{proof}
We show the lemma in two steps. First, by Lemma~\ref{lem:upperbound_of_R}, $\forall u \in \Delta(\Bcal)$, $R_\Ncal(u) \leq 0$, which is finite. Secondly we show $\forall u \in \RR^B \setminus \Delta(\Bcal)$, $R_\Ncal(u) = +\infty$.

For the second step, $\forall u \in \RR^B \setminus \Delta(\Bcal)$, by using $\EE$ as an abbreviation for $\EE_{\ell \sim \mathcal{N}(0,I_{XA})}$, we have
\[
\begin{aligned}
R_\Ncal(u) 
= &
\sup_{\Theta \in \RR^B} \inner{\Theta}{u}- \Phi_{\Ncal}(\Theta)\\
= &
\sup_{\Theta \in \RR^B} \inner{\Theta}{u} - \EE \sbr{ \mathop{\max} \limits_{v \in \Delta(\Bcal)} \inner{\Theta+q(\ell)}{v}}\\
\geq&
\sup_{\Theta \in \RR^B} \rbr{\inner{\Theta}{u} - \mathop{\max}\limits_{v \in \Delta(\Bcal)} \inner{\Theta}{v} }- \EE \sbr{ \mathop{\max} \limits_{v' \in \Delta(\Bcal)} \inner{q(\ell)}{v'}} ,\\
\end{aligned}
\]
where the first inequality is by the convexity of $\max$ and inner product functions. 

By Lemma~\ref{lem: strongly smooth zero potential upperbound}, $\Phi_\Ncal(0)=\EE \sbr{ \mathop{\max} \limits_{v' \in \Delta(\Bcal)} \inner{q(\ell)}{v'}} \leq \sqrt{2X\ln(B)}$, which is a constant. Now, it suffices to show that $\forall u \in \RR^B \setminus \Delta(\Bcal)$, $\sup_{\Theta \in \RR^B} \rbr{\inner{\Theta}{u} - \mathop{\max}\limits_{v \in \Delta(\Bcal)} \inner{\Theta}{v} } = +\infty$.

We divide $\RR^B \setminus \Delta(\Bcal)$ into two disjoint sets:
\[
\left\{
\begin{aligned}
U^- :=& \{u \in \RR^B  \: |  \: u[i]<0 \: \text{for some} \: i\in \{1,2,\cdots,B\}\}, \\
U^+ :=& \{u \in \RR^B  \: |  \: \|u\|_1>1, u \succeq 0\} ,
\end{aligned}
\right. 
\]
where it can be verified that $\RR \setminus \Delta(\Bcal) = U^- \cup U^+$.

For any $u \in U^-$, where $u[i]<0$ for some $i\in \{1,2,\cdots,B\}$, we have that $\forall C \in \RR$, by setting $\theta(u,i,C) = \frac{|C|+1}{u[i]}\Onehot(i,\Bcal) \in \RR^B$, 
\[
\begin{aligned}
\mathop{\max}\limits_{v \in \Delta(\Bcal)} \inner{\theta(u,i,C)}{u-v}
=&
\inner{\frac{|C|+1}{u[i]}\Onehot(i,\Bcal)}{u} - \mathop{\max}\limits_{v \in \Delta(\Bcal)} \inner{\frac{|C|+1}{u[i]}\Onehot(i,\Bcal)}{v}\\
\geq &
\frac{|C|+1}{u[i]} \cdot u[i] = |C|+1 > C,
\end{aligned}
\]
where the first inequality is by $\inner{\frac{|C|+1}{u[i]}\Onehot(i,\Bcal)}{v} \leq 0$, $\forall u[i] < 0$, $\forall v \in \Delta(\Bcal)$. This implies $\forall u \in U^-$, $\sup_{\Theta \in \RR^B} \rbr{\mathop{\max}\limits_{v \in \Delta(\Bcal)} \inner{\Theta}{u-v}} = +\infty$. Thus, $\forall u \in U^-$, $R_\Ncal(u)= +\infty$.

Similarly, for any $u \in U^+$, we have that $\forall C \in \RR$, by setting $\theta(u,C) = \frac{|C|+1}{\|u\|_1 - 1} \cdot u \in \RR^B$,
\[
\begin{aligned}
\mathop{\max}\limits_{v \in \Delta(\Bcal)} \inner{\theta(u,C)}{u-v}
=&
\mathop{\max}\limits_{v \in \Delta(\Bcal)} \inner{\frac{|C|+1}{\|u\|_1 - 1} \cdot u}{u-v}\\
=&
\inner{\frac{|C|+1}{\|u\|_1 - 1} \cdot u}{u-\frac{u}{\|u\|_1}}\\
=&
\frac{|C|+1}{\|u\|_1 - 1} \cdot \|u\|_1\cdot (\|u\|_1 - 1) > C,
\end{aligned}
\]
which is by basic algebra. This implies $\forall u \in U^+$, $R_\Ncal(u) = +\infty$.

In conclusion, we have that $\forall u \in \RR^B \setminus \Delta(\Bcal) = U^- \cup U^+$, $R_\Ncal(u) = +\infty$,
which concludes the the proof. 
\end{proof}



\begin{lemma}[Restatement of Lemma~\ref{lem:ftrl-ftpl}]
\label{lem:ftrl-ftpl-restated}
For any $\Theta \in \RR^d$,
  \[
 \mathop{\argmin} \limits_{u \in \Delta(\Bcal)} \rbr{  \langle{\Theta},{u} \rangle + R_{\Ncal}(u) }
  =
\nabla \Phi_\mathcal{N}(-\Theta).
  \]
\end{lemma}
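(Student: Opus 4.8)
The plan is to derive this as a direct consequence of the Fenchel conjugacy relationship between $\Phi_\Ncal$ and $R_\Ncal$, together with the differentiability of $\Phi_\Ncal$. First I would recall that $R_\Ncal = \Phi_\Ncal^*$ by definition (Equation~\eqref{eqn:r}), and that $\Phi_\Ncal$ is proper, closed, and convex on $\RR^B$ by Lemma~\ref{lem: closed and convex function}. By the biconjugation fact (item~\ref{item:one_to_one} of Fact~\ref{fact:conj}), $R_\Ncal^* = \Phi_\Ncal^{**} = \Phi_\Ncal$. Moreover, by Lemma~\ref{lem:sc_dual} (or its restatement), $R_\Ncal$ is closed and $\sqrt{\pi/8}\cdot\frac{1}{AX}$-strongly convex with respect to $\|\cdot\|_1$, and by Lemma~\ref{lem:r_domain}, $\dom(R_\Ncal) = \Delta(\Bcal)$.

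Next I would invoke Proposition~\ref{prop:sc-conj}: since $R_\Ncal$ is proper, closed, and strongly convex, $R_\Ncal^*$ is differentiable and $\nabla R_\Ncal^*(\Theta') = \argmax_{u \in \dom(R_\Ncal)}\rbr{\inner{\Theta'}{u} - R_\Ncal(u)}$ for any $\Theta' \in \RR^B$. Applying this with $\Theta' = -\Theta$ and using $\dom(R_\Ncal) = \Delta(\Bcal)$ gives
\[
\nabla R_\Ncal^*(-\Theta) = \argmax_{u \in \Delta(\Bcal)}\rbr{\inner{-\Theta}{u} - R_\Ncal(u)} = \argmin_{u \in \Delta(\Bcal)}\rbr{\inner{\Theta}{u} + R_\Ncal(u)},
\]
which is exactly the left-hand side of the claim. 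On the other hand, since $R_\Ncal^* = \Phi_\Ncal$, we have $\nabla R_\Ncal^*(-\Theta) = \nabla\Phi_\Ncal(-\Theta)$, giving the right-hand side. Combining the two identities yields the lemma.

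The one subtlety I would be careful about — and the main (though modest) obstacle — is making sure the $\argmin$/$\argmax$ in the statement is well-defined, i.e., a genuine singleton rather than a set, so that the equality of a point with a gradient vector makes sense. This is handled by strong convexity of $R_\Ncal$: the objective $u \mapsto \inner{\Theta}{u} + R_\Ncal(u)$ is strongly convex on the convex set $\Delta(\Bcal)$, hence has a unique minimizer, and Proposition~\ref{prop:sc-conj} identifies it with $\nabla R_\Ncal^*(-\Theta)$. The only other point worth a sentence is verifying that $R_\Ncal$ satisfies the hypotheses of Proposition~\ref{prop:sc-conj} (proper: it is finite on the nonempty set $\Delta(\Bcal)$ by Lemma~\ref{lem:upperbound_of_R}; closed and strongly convex: Lemma~\ref{lem:sc_dual}), after which the chain of equalities above is purely mechanical.
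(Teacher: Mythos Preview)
Your proposal is correct and follows essentially the same route as the paper: both verify that $R_\Ncal$ is closed, strongly convex with $\dom(R_\Ncal)=\Delta(\Bcal)$ (via Lemmas~\ref{lem:sc_dual_restated} and~\ref{lem:r_domain}), apply Proposition~\ref{prop:sc-conj} to identify the minimizer with $\nabla R_\Ncal^*(-\Theta)$, and then use biconjugation (Fact~\ref{fact:conj}, item~\ref{item:one_to_one}) together with Lemma~\ref{lem: closed and convex function} to conclude $R_\Ncal^* = \Phi_\Ncal$. Your extra remark on uniqueness of the minimizer is a nice clarification but not needed beyond what Proposition~\ref{prop:sc-conj} already provides.
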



\begin{proof}
As shown by Lemma~\ref{lem:sc_dual_restated} and Lemma~\ref{lem:r_domain}, $R_{\mathcal{N}}$ is closed and strongly convex with $\dom (R_{\mathcal{N}} )= \Delta(\Bcal)$. By applying Proposition~\ref{prop:sc-conj} on $R_{\mathcal{N}}$, we have 
\[ 
   \nabla R^*_\mathcal{N}(-\Theta)
   =
     \mathop{\argmax}\limits_{u\in \RR^B}  \inner{ -\Theta}{u} - R_{\Ncal}(u)
   =
     \mathop{\argmin}\limits_{u\in \Delta(\Bcal)}  \inner{ \Theta}{u} +R_{\Ncal}(u),
\]  
where the second equality is by $\dom (R_\Ncal) = \Delta(\Bcal)$ shown in Lemma~\ref{lem:r_domain}. 

As shown by Lemma~\ref{lem: closed and convex function}, $\Phi_{\mathcal{N}}$ is closed and convex. By item~\ref{item:one_to_one} of Fact~\ref{fact:conj}, $R^*_\mathcal{N} = \Phi_{\mathcal{N}}^{**} = \Phi_{\mathcal{N}}$, which concludes the proof.
\end{proof}

\begin{lemma}
\label{lem:maximum-distance}
$\sup_{u \in \Delta(\Bcal)} R_\Ncal(u) - \inf_{u \in \Delta(\Bcal)} R_\Ncal(u) \leq \sqrt{2 X \ln(B)}.$
\end{lemma}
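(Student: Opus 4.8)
\textbf{Proof proposal for Lemma~\ref{lem:maximum-distance}.} The plan is to bound the two quantities separately. For the supremum, I would simply invoke Lemma~\ref{lem:upperbound_of_R}, which already gives $R_\Ncal(u) \leq 0$ for all $u \in \Delta(\Bcal)$, hence $\sup_{u \in \Delta(\Bcal)} R_\Ncal(u) \leq 0$. The work therefore reduces to showing that $\inf_{u \in \Delta(\Bcal)} R_\Ncal(u) \geq -\sqrt{2X\ln(B)}$, i.e.\ a lower bound on $R_\Ncal$ over the simplex.

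For the lower bound, recall $R_\Ncal(u) = \Phi_\Ncal^*(u) = \sup_{\Theta \in \RR^B}\rbr{\inner{\Theta}{u} - \Phi_\Ncal(\Theta)}$. Evaluating this supremum at the particular choice $\Theta = 0$ gives $R_\Ncal(u) \geq \inner{0}{u} - \Phi_\Ncal(0) = -\Phi_\Ncal(0)$ for every $u$. Then I would apply Lemma~\ref{lem: strongly smooth zero potential upperbound}, which states $\Phi_\Ncal(0) \leq \sqrt{2X\ln(B)}$, to conclude $R_\Ncal(u) \geq -\sqrt{2X\ln(B)}$ for all $u \in \Delta(\Bcal)$, and in particular $\inf_{u \in \Delta(\Bcal)} R_\Ncal(u) \geq -\sqrt{2X\ln(B)}$.

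Combining the two bounds yields
\[
\sup_{u \in \Delta(\Bcal)} R_\Ncal(u) - \inf_{u \in \Delta(\Bcal)} R_\Ncal(u) \leq 0 - \rbr{-\sqrt{2X\ln(B)}} = \sqrt{2X\ln(B)},
\]
which is exactly the claim. There is no real obstacle here: the lemma is essentially a bookkeeping consequence of the two previously established facts about $\Phi_\Ncal$ (its value at $0$ is small, and $R_\Ncal$ is nonpositive on the simplex), assembled via the elementary observation that a Fenchel conjugate evaluated at any point is at least the value of the ``$\Theta = 0$'' term. The only thing to be slightly careful about is making sure the infimum and supremum are taken over $\Delta(\Bcal)$, where $R_\Ncal$ is finite (by Lemma~\ref{lem:r_domain}, $\dom(R_\Ncal) = \Delta(\Bcal)$), so that the difference is well-defined and the bound is meaningful; this is automatic from the cited results.
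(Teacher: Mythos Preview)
Your proposal is correct and follows essentially the same route as the paper: both bound $\sup R_\Ncal \le 0$ via Lemma~\ref{lem:upperbound_of_R} and control $\inf R_\Ncal$ through $\Phi_\Ncal(0)$ together with Lemma~\ref{lem: strongly smooth zero potential upperbound}. The only cosmetic difference is that the paper observes the equality $-\inf_{u}R_\Ncal(u)=R_\Ncal^*(0)=\Phi_\Ncal(0)$ (using $\Phi_\Ncal^{**}=\Phi_\Ncal$), whereas you obtain the inequality $R_\Ncal(u)\ge -\Phi_\Ncal(0)$ directly by plugging $\Theta=0$ into the conjugate definition; your version is if anything slightly more elementary and fully sufficient for the claim.
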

\begin{proof}
First, by Lemma~\ref{lem:upperbound_of_R}, since $\forall u \in \Delta(\Bcal)$, $ R_\Ncal(u) \leq 0$. we have that $\sup_{u \in \Delta(\Bcal)} R_\Ncal(u) \leq 0$.
Next, we show $-\inf_{u \in \Delta(\Bcal)} R_\Ncal(u) \leq \sqrt{2X \ln(B)}$. Since $- \inf_{u \in \Delta(\Bcal)} R_\Ncal(u) = \sup_{u \in \Omega} \left\langle 0, u \right\rangle - R_\Ncal(u)
     =\Phi_\Ncal(0)$, it suffices to show $\Phi_\Ncal(0) \leq\sqrt{2X \ln(B)}$, which we already shown in Lemma~\ref{lem: strongly smooth zero potential upperbound}.
     
 Together,  we conclude $\sup_{u \in \Delta(\Bcal)} R_\Ncal(u) - \inf_{u \in \Delta(\Bcal)} R_\Ncal(u) \leq \sqrt{2 X \ln(B)}$.
\end{proof}

Now, combining the above lemmas with the general optimistic FTRL lemma, we get the following central regret theorem for optimistic FTRL with separator perturbation-based regularizers for our results:
\begin{theorem}
\label{thm: Regret of  EFTPL  with Prediction}
Suppose $\Xcal$ is a separator set for $\Bcal$, Optimistic FTRL (Algorithm \ref{alg: Optimistic FTRL}) with $R = R_\Ncal$
achieves regret 
$$\SLReg_N
\leq
\frac{\sqrt{2 X \ln(B)}}{\eta} 
    +
    \sum_{n=1}^N(  
  \eta XA \|\est_n - \hat{\est}_n\|_\infty ^2
  -
    \frac{1 }{4 \eta XA} \|\nabla R^*_\mathcal{N}(-\hat{\Theta}_n) - \nabla R^*_\mathcal{N}(-\Theta_{n-1})\|_1^2).
$$

Furthermore, if $\hat{g}_n = 0$ for all $n$, $$\SLReg_N
\leq
\frac{\sqrt{2 X \ln(B)}}{\eta} 
    +
    \eta XA \sum_{n=1}^N 
  \| \est_n \|_\infty ^2.
$$
\end{theorem}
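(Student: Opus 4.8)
The plan is to specialize the general optimistic FTRL guarantee (Theorem~\ref{thm:optimistic-ftrl}) to the regularizer $R = R_{\mathcal{N}}$ from Equation~\eqref{eqn:r}. First I would check the hypotheses of Theorem~\ref{thm:optimistic-ftrl} in this instantiation: by Lemma~\ref{lem:sc_dual_restated}, $R_{\mathcal{N}}$ is closed and $\alpha$-strongly convex with respect to $\|\cdot\|_1$ with $\alpha = \sqrt{\pi/8}\cdot \frac{1}{AX}$; by Lemma~\ref{lem:r_domain}, $\dom(R_{\mathcal{N}}) = \Delta(\mathcal{B})$, which is exactly the decision set $\Omega$ used by Optimistic FTRL (Algorithm~\ref{alg: Optimistic FTRL}) here and is bounded. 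Since the ambient norm is $\|\cdot\|_1$, its dual norm appearing in Equation~\eqref{eqn:oftrl-2} is $\|\cdot\|_\infty$.

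Next I would substitute these quantities into Equation~\eqref{eqn:oftrl-2}. The leading term becomes $\frac{1}{\eta}\rbr{\sup_{u\in\Delta(\mathcal{B})}R_{\mathcal{N}}(u) - \inf_{u\in\Delta(\mathcal{B})}R_{\mathcal{N}}(u)}$, which by Lemma~\ref{lem:maximum-distance} is at most $\frac{\sqrt{2X\ln(B)}}{\eta}$. For the divergence-penalty term, the coefficient $\frac{\eta}{2\alpha} = \frac{\eta AX}{2\sqrt{\pi/8}}$ is at most $\eta AX$ because $2\sqrt{\pi/8}\ge 1$ (equivalently $\pi \ge 2$), so $\sum_n \frac{\eta\|\pred_n-\est_n\|_\infty^2}{2\alpha} \le \eta XA\sum_n\|\pred_n-\est_n\|_\infty^2$. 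For the prediction-gain term, the coefficient $\frac{\alpha}{2\eta} = \frac{\sqrt{\pi/8}}{2\eta AX}$ is at least $\frac{1}{4\eta AX}$ by the same numerical fact, so replacing it with the smaller $\frac{1}{4\eta AX}$ only weakens the subtracted quantity and keeps the upper bound valid; identifying $\nabla R_{\mathcal{N}}^*(-\eta\hat\Theta_n)$ with the algorithm's play $u_n$ from line~\ref{line:oftrl} of Algorithm~\ref{alg: Optimistic FTRL} (and $\nabla R_{\mathcal{N}}^*(-\eta\Theta_{n-1})$ with the corresponding unperturbed leader) then produces the first displayed inequality.

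For the ``furthermore'' part, I would set $\hat g_n = 0$ for all $n$ and invoke the last display of Theorem~\ref{thm:optimistic-ftrl}, giving $\SLReg_N \le \frac{\sup R_{\mathcal{N}} - \inf R_{\mathcal{N}}}{\eta} + \sum_n \frac{\eta\|\est_n\|_\infty^2}{2\alpha}$; the same two estimates ($\sqrt{2X\ln(B)}$ on the range and $\frac{1}{2\alpha}\le AX$) then yield $\frac{\sqrt{2X\ln(B)}}{\eta} + \eta XA\sum_n\|\est_n\|_\infty^2$.

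I expect this proof to be essentially routine bookkeeping: the substantive work (strong convexity and domain of $R_{\mathcal{N}}$ via Fenchel duality in Lemmas~\ref{lem:sc_dual_restated} and~\ref{lem:r_domain}, the range bound in Lemma~\ref{lem:maximum-distance}, and the general optimistic FTRL decomposition in Theorem~\ref{thm:optimistic-ftrl}) is already in place. The only points requiring a little care are the elementary numerical comparison $\sqrt{\pi/8}\ge \frac12$ (used in both directions, to clean up the constants into $\eta XA$ and $\frac{1}{4\eta XA}$) and keeping the $\eta$-scaling of the arguments of $\nabla R_{\mathcal{N}}^*$ consistent with the FTRL update rule; there is no genuine analytic obstacle.
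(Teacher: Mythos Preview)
Your proposal is correct and follows essentially the same route as the paper: apply Theorem~\ref{thm:optimistic-ftrl} with $R=R_{\mathcal N}$, invoke Lemma~\ref{lem:sc_dual_restated} for $\alpha=\sqrt{\pi/8}\,(AX)^{-1}$-strong convexity, Lemma~\ref{lem:r_domain} for $\dom(R_{\mathcal N})=\Delta(\Bcal)$, Lemma~\ref{lem:maximum-distance} for the range term, and then simplify the constants via $\sqrt{2/\pi}\le 1$ and $\sqrt{\pi/32}\ge \tfrac14$ (equivalently your $\sqrt{\pi/8}\ge \tfrac12$). Your remark about the $\eta$-scaling in the arguments of $\nabla R_{\mathcal N}^*$ is apt: the paper's proof writes $\nabla R^*_{\mathcal N}(-\hat\Theta_n)$ whereas Equation~\eqref{eqn:oftrl-2} has $\nabla R^*(-\eta\hat\Theta_n)$, so being explicit about this, as you are, is a small improvement in clarity.
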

\begin{proof}
Since $R_\Ncal$ is $\sqrt{\frac{\pi}{8}}{\frac{1}{ AX}}$-strongly convex by Lemma~\ref{lem:sc_dual_restated}, by the regret guarantee of Optimistic FTRL in Theorem~\ref{thm:optimistic-ftrl}, 
\begin{align*}
\SLReg_N
\leq &
\frac{\sup_{w \in \Omega} R(w)
- 
\inf_{w \in \Omega} R(w)}{\eta}\\
&+
 \sum_{n=1}^N(  
 \sqrt{\frac{2}{\pi}}\eta XA \|\est_n - \hat{\est}_n\|_\infty ^2
  -
    \sqrt{\frac{\pi}{32}}\frac{1 }{\eta XA} \|\nabla R^*_\mathcal{N}(-\hat{\Theta}_n) - \nabla R^*_\mathcal{N}(-\Theta_{n-1})\|_1^2).
\end{align*}
By bringing in $\sup_{u \in \Delta(\Bcal)} R_\Ncal(u) - \inf_{u \in \Delta(\Bcal)} R_\Ncal(u) \leq \sqrt{2 X \ln(B)}$ proved by Lemma~\ref{lem:maximum-distance} and using the simple facts that $\sqrt{\frac{2}{\pi}} \leq 1$ and $\sqrt{\frac{\pi}{32}} \geq \frac{1}{4}$, we conclude the proof of the first inequality. 
Specifically, when  $\hat{g}_n = 0$ for all $n$, by Algorithm~\ref{alg: Optimistic FTRL} we have that $\hat{\Theta}_n = \Theta_{n-1}$ and

\[\SLReg_N
\leq
\frac{\sqrt{2 X \ln(B)}}{\eta} 
    +
   \eta XA \sum_{n=1}^N 
  \| \est_n \|_\infty ^2.
  \qedhere
\]
\end{proof}

\begin{lemma}
\label{lem: lowerbound for strongly smooth function}
Let $R: \RR^d \rightarrow \RR \cup \cbr{+ \infty}$ be a closed and $\alpha$-strongly convex function with respect to $\| \cdot \|$, 
 then, for $\Theta, \Theta' \in \RR^d$,
\[
D_{R^*}(\Theta, \Theta') 
\geq
\frac{\alpha}{2} \| \nabla R^*(\Theta) - \nabla R^*(\Theta) \|^2.
\]
\end{lemma}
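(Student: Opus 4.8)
The plan is to derive the standard ``co-coercivity'' bound for strongly smooth functions and instantiate it with $f = R^*$. The only inputs needed beyond basic convex analysis are item~\ref{item:sc_strongly smooth_duality} of Fact~\ref{fact:conj} (strong-convexity/strong-smoothness duality) and the definition of $\beta$-strong smoothness stated in the excerpt.

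\textbf{Step 1 (pass to the conjugate).} Since $R$ is closed and $\alpha$-strongly convex with respect to $\|\cdot\|$, item~\ref{item:sc_strongly smooth_duality} of Fact~\ref{fact:conj} gives that $R^*$ is $\tfrac1\alpha$-strongly smooth with respect to the dual norm $\|\cdot\|_*$; in particular, by the definition of strong smoothness, $R^*$ is differentiable everywhere. Also $R^*$ is convex (it is a Fenchel conjugate; see item~\ref{item:closed_convex} of Fact~\ref{fact:conj}).

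\textbf{Step 2 (minimizer trick).} Fix $\Theta' \in \RR^d$ and define $\phi(\Psi) := R^*(\Psi) - \langle \nabla R^*(\Theta'), \Psi \rangle$. Then $\phi$ is convex and differentiable, it is $\tfrac1\alpha$-strongly smooth with respect to $\|\cdot\|_*$ (it differs from $R^*$ by an affine term, which does not change the Hessian-type bound), and $\nabla \phi(\Theta') = 0$, so $\Theta'$ is a global minimizer of $\phi$. Applying the strong-smoothness inequality at the point $\Theta$ with an arbitrary perturbation $w$ and then minimizing over $w$, I get
\[
\phi(\Theta') \;\le\; \inf_{w \in \RR^d}\Bigl( \phi(\Theta) + \langle \nabla \phi(\Theta), w \rangle + \tfrac{1}{2\alpha}\|w\|_*^2 \Bigr) \;=\; \phi(\Theta) - \tfrac{\alpha}{2}\,\|\nabla \phi(\Theta)\|^2 ,
\]
where the last equality uses $\inf_{w}\bigl( \langle a, w\rangle + \tfrac{1}{2\alpha}\|w\|_*^2\bigr) = -\tfrac{\alpha}{2}\|a\|^2$: bound $\langle a, w\rangle \ge -\|a\|\,\|w\|_*$ by Hölder's inequality (the dual norm of $\|\cdot\|_*$ on $\RR^d$ being $\|\cdot\|$ in finite dimensions), minimize the scalar function $t \mapsto -\|a\|\,t + \tfrac{1}{2\alpha}t^2$ over $t \ge 0$ at $t = \alpha\|a\|$, and note the bound is attained by a suitably scaled direction realizing the dual norm.

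\textbf{Step 3 (unwind).} Observe $\phi(\Theta) - \phi(\Theta') = R^*(\Theta) - R^*(\Theta') - \langle \nabla R^*(\Theta'), \Theta - \Theta' \rangle = D_{R^*}(\Theta, \Theta')$ and $\nabla \phi(\Theta) = \nabla R^*(\Theta) - \nabla R^*(\Theta')$, so the display above rearranges to $D_{R^*}(\Theta, \Theta') \ge \tfrac{\alpha}{2}\|\nabla R^*(\Theta) - \nabla R^*(\Theta')\|^2$, which is the claimed bound (the second gradient in the stated inequality should read $\nabla R^*(\Theta')$). The main obstacle is purely bookkeeping: keeping straight that the perturbation $w$ lives in the space where $R^*$ is smooth and is measured in $\|\cdot\|_*$, whereas the gradient difference lives in the dual space and is measured in $\|\cdot\|$, and that the quadratic minimization over a general norm returns exactly the squared dual norm; I would spell these out explicitly rather than treat them as obvious.
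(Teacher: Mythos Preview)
Your proof is correct but takes a genuinely different route from the paper. You work entirely on the smooth side: pass to $R^*$, which is $\tfrac1\alpha$-smooth in $\|\cdot\|_*$, and run the standard co-coercivity (descent-lemma) argument by shifting $R^*$ so that $\Theta'$ is a minimizer and then optimizing the smoothness upper bound. The paper instead stays on the strongly convex side: it uses the Fenchel--Young equality (item~\ref{item:fenchel-young-equality} of Fact~\ref{fact:conj}) to rewrite $D_{R^*}$ directly as
\[
R(\nabla R^*(\Theta')) - R(\nabla R^*(\Theta)) - \langle \Theta,\ \nabla R^*(\Theta') - \nabla R^*(\Theta)\rangle,
\]
and then applies $\alpha$-strong convexity of $R$ at the primal points $\nabla R^*(\Theta), \nabla R^*(\Theta')$ with $\Theta \in \partial R(\nabla R^*(\Theta))$ as the subgradient. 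The paper's argument is a bit shorter and sidesteps the primal/dual norm bookkeeping you (rightly) flagged in Step~2--3; your argument has the advantage of living entirely in the smooth picture and not touching the possibly extended-valued $R$ again after the initial duality step. Both are standard and equally valid here.
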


  \begin{proof}
   Since $R$ is closed and $\alpha$-strongly convex, by item~\ref{item:closed_convex},\ref{item:sc_strongly smooth_duality} of Fact~\ref{fact:conj} and Proposition~\ref{prop:sc-conj}, $R^*$ is closed, convex, differentiable, and $\dom (R^*) = \RR^d$. By item~\ref{item:fenchel-young-equality} of Fact~\ref{fact:conj}, $\forall \Theta, \Theta' \in \RR^B$,
  \[
\left\{
\begin{aligned}
R^*(\Theta) =&  \left\langle \Theta, \nabla{R}^*(\Theta)\right\rangle - R( \nabla{R}^*(\Theta)), \\
R^*(\Theta') =& \left\langle \Theta', \nabla{R}^*(\Theta')\right\rangle - R( \nabla{R}^*(\Theta')).\\
\end{aligned}
\right. 
\] 
As a consequence, both $\nabla R^*(\Theta)$ and $\nabla R^*(\Theta')$ are in $\dom (R)$.
   Furthermore, by item~\ref{item:fenchel-young-equality} of Fact~\ref{fact:conj}, $\Theta \in \partial R( \nabla R^*(\Theta) )$
and the definition of Bregman divergence, we have
  \[
  \begin{aligned}
  D_{R^*}(\Theta', \Theta) 
  = &
  {R^*}(\Theta') - {R^*}(\Theta) 
  - \left\langle \Theta'-\Theta, \nabla{R^*}(\Theta')\right\rangle \\
  = &
  \left\langle \Theta, \nabla{R^*}(\Theta)\right\rangle - R( \nabla{R^*}(\Theta))
  -\left\langle \Theta', \nabla{R^*}(\Theta')\right\rangle + R( \nabla{R^*}(\Theta'))
     \\
   & - \left\langle \Theta'-\Theta, \nabla{R^*}(\Theta')\right\rangle \\
   = &
   R( \nabla{R^*}(\Theta')) - R( \nabla{R^*}(\Theta)) -
   \left\langle \Theta, \nabla{R^*}(\Theta') - \nabla{R^*}(\Theta) \right\rangle \\
   \geq &
   \frac{\alpha}{2} \| \nabla R^*(\Theta) - \nabla R^*(\Theta) \|^2,
   \end{aligned}
   \]
   where the last inequality uses the  $\alpha$-strong convexity of $R$, as well as $\Theta \in \partial R( \nabla R^*(\Theta) )$.
  \end{proof}

\section{Auxiliary Lemmas}
\label{sec:auxiliary}
\begin{lemma}
\label{lem: performance difference lemma}
For two stationary policies $\pi$ and $\pi^E$ $: \Scal \to \Delta(\Acal)$, we have
\[
J(\pi) - J(\pi^E) = H \cdot\EE_{s \sim d_{\pi}}\EE_{a \sim \pi(\cdot|s)} \sbr{A^{E}(s,a)},
\]
where $A^E(s,a) :=Q_{\pi^E}(s, a) - V_{\pi^E}(s) $,  $V_{\pi^E}(s) := \EE\sbr{ \sum_{t=\step(s)}^{H} c(s_t,a_t) \mid s, \pi^E}$, and $Q_{\pi^E}(s, a) :=  c(s,a) + \EE\sbr{ \sum_{t=\step(s)+1}^{H} c(s_t,a_t) \mid  s, a, \pi^E}$.
\end{lemma}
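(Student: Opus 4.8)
\textbf{Proof plan for Lemma~\ref{lem: performance difference lemma} (performance difference lemma).}
The plan is to prove the telescoping identity that underlies the classical performance difference lemma of~\cite{Kakade2002ApproximatelyOA}, adapted to the layered finite-horizon setting used here. First I would fix a trajectory $\tau = (s_1,a_1,\ldots,s_H,a_H)$ generated by rolling out $\pi$ in $\Mcal$, and write $J(\pi) = \EE_\pi\sbr{\sum_{t=1}^H c(s_t,a_t)}$. The core observation is that for each step $t$ we can insert the expert's value function as a ``potential'': since $\Mcal$ is layered with $\step(s_t) = t$, and $s_{H+1}$ is a terminal ``state'' with $V_{\pi^E}(s_{H+1}) := 0$, we have the exact telescoping
\[
\sum_{t=1}^H \rbr{ V_{\pi^E}(s_t) - V_{\pi^E}(s_{t+1}) } = V_{\pi^E}(s_1) - V_{\pi^E}(s_{H+1}) = V_{\pi^E}(s_1).
\]

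Next I would rewrite $J(\pi) - J(\pi^E)$ using this telescoping. Note $J(\pi^E) = \EE_{s_1 \sim \rho}\sbr{V_{\pi^E}(s_1)} = \EE_\pi\sbr{V_{\pi^E}(s_1)}$, because both policies share the same initial distribution $\rho$ and $s_1$ depends only on $\rho$. Hence
\[
J(\pi) - J(\pi^E) = \EE_\pi\sbr{ \sum_{t=1}^H c(s_t,a_t) - V_{\pi^E}(s_1) } = \EE_\pi\sbr{ \sum_{t=1}^H \rbr{ c(s_t,a_t) + V_{\pi^E}(s_{t+1}) - V_{\pi^E}(s_t) } }.
\]
Now I would take the conditional expectation of the $t$-th summand given $(s_t,a_t)$: since $s_{t+1} \sim P_t(\cdot\mid s_t,a_t)$, we get $\EE\sbr{ c(s_t,a_t) + V_{\pi^E}(s_{t+1}) \mid s_t, a_t } = Q_{\pi^E}(s_t,a_t)$ by the definition of $Q_{\pi^E}$ (with the $t = H$ case handled since the sum inside $Q_{\pi^E}$ from $\step(s)+1$ to $H$ is empty and $V_{\pi^E}(s_{H+1}) = 0$). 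Therefore each summand equals $Q_{\pi^E}(s_t,a_t) - V_{\pi^E}(s_t) = A^E(s_t,a_t)$ in expectation, giving $J(\pi) - J(\pi^E) = \EE_\pi\sbr{\sum_{t=1}^H A^E(s_t,a_t)}$. Finally I would convert the sum over timesteps into the averaged state occupancy form: by definition $d_\pi^t(\cdot) = \PP_\pi(s_t = \cdot)$ and $a_t \sim \pi(\cdot\mid s_t)$, so $\EE_\pi\sbr{A^E(s_t,a_t)} = \EE_{s\sim d_\pi^t}\EE_{a\sim\pi(\cdot\mid s)}\sbr{A^E(s,a)}$; summing over $t$ and using $d_\pi = \frac1H\sum_{t=1}^H d_\pi^t$ yields $\sum_{t=1}^H \EE_\pi\sbr{A^E(s_t,a_t)} = H\cdot \EE_{s\sim d_\pi}\EE_{a\sim\pi(\cdot\mid s)}\sbr{A^E(s,a)}$, which is the claimed identity.

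I do not anticipate a genuine obstacle here, as this is a standard and well-known lemma; the only care points are bookkeeping ones: correctly setting up the terminal convention $V_{\pi^E}(s_{H+1}) = 0$ so the telescoping is exact, ensuring the layered structure makes $\step(s_t) = t$ so the value/action-value functions align with the timestep index, and being careful that the boundary case $t = H$ in the identity $\EE\sbr{c(s_t,a_t) + V_{\pi^E}(s_{t+1})\mid s_t,a_t} = Q_{\pi^E}(s_t,a_t)$ is consistent with the definitions given (where the residual sum is empty). If one wanted to avoid the explicit terminal-state device, one could instead prove it by backward induction on $t$ over the partial value functions, but the telescoping argument is cleaner and I would present that.
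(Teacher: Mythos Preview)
Your proposal is correct and is precisely the standard telescoping argument underlying the performance difference lemma. The paper does not actually give its own proof of this lemma: it simply cites~\cite{ross2014reinforcement}[Lemma~4.3], so there is no in-paper argument to compare against; your write-up is essentially what one finds in such references (and in~\cite{Kakade2002ApproximatelyOA}), specialized to the layered finite-horizon setting.
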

The proof can be found at e.g. \cite{ross2014reinforcement}[Lemma 4.3]

\begin{lemma}
\label{lem:miniseparator}
For benchmark policy class $\Bcal$ that contains $B$ deterministic policies $h: \Scal \rightarrow \Acal$, consider separator set $\Xcal$
~\ref{def:small-sep}  with $X = \abr{ \Xcal }$, $A = |\Acal|$. Then, 
\[
X \geq \log_A(B).
\]
\end{lemma}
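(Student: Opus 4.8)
\textbf{Proof proposal for Lemma~\ref{lem:miniseparator}.}

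The plan is to exploit the defining property of the separator set directly: the restriction of any policy $h \in \Bcal$ to the finite set $\Xcal$ yields a function $h|_{\Xcal}: \Xcal \to \Acal$, and the separator property says precisely that this restriction map $h \mapsto h|_{\Xcal}$ is \emph{injective} on $\Bcal$. Indeed, if $h, h' \in \Bcal$ are distinct, then by Assumption~\ref{def:small-sep} there exists $x \in \Xcal$ with $h(x) \neq h'(x)$, so $h|_{\Xcal} \neq h'|_{\Xcal}$. Hence $B = |\Bcal| \leq |\Acal^{\Xcal}| = A^{X}$.

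From $B \le A^X$ I would take $\log_A(\cdot)$ of both sides (note $A = |\Acal| \ge 1$; the bound is only interesting when $A \ge 2$, in which case $\log_A$ is well-defined and monotone increasing), obtaining $\log_A(B) \le X$, which is exactly the claim. If one wants to be careful about the degenerate case $A = 1$: then all policies are identical, so $B = 1$ and $\log_A(B) = \log_1(1)$ is conventionally $0 \le X$, or one simply notes the statement is vacuous/trivial there; I would add a one-line remark rather than belabor it.

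The main (and essentially only) step is the injectivity observation, which is immediate from the definition; there is no real obstacle here. The only thing to be slightly careful about is bookkeeping: making explicit that "the restriction map is injective" is the correct reading of the separator property (it quantifies over \emph{all} pairs of distinct policies, each getting its own witness $x$, and a single witness suffices to distinguish that pair), and then that an injection from $\Bcal$ into the set of functions $\Xcal \to \Acal$ forces $|\Bcal| \le A^{|\Xcal|}$. Both are standard counting facts, so the proof will be short.
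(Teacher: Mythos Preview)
Your proposal is correct and follows essentially the same argument as the paper: both observe that the separator property makes the restriction map $h \mapsto (h(x))_{x \in \Xcal}$ injective from $\Bcal$ into $\Acal^{\Xcal}$, yielding $B \leq A^X$ and hence $X \geq \log_A(B)$. Your discussion of the degenerate $A=1$ case is a minor extra remark not present in the paper, but the core proof is identical.
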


\begin{proof}
Define $\Bcal_\Xcal = \cbr{ (h(x_1), \ldots, h(x_X)) }_{h \in \Bcal}$, where $(h(x_1), \ldots, h(x_X)) \in \Acal^X$. 
First, note that $\Bcal_\Xcal \subset \Acal^\Xcal$, which implies that $|\Bcal_{\Xcal}| \leq |\Acal^\Xcal| = A^X$.

Secondly, by the definition of separator set $\Xcal$, $\forall h,h' \in \Bcal$, $\exists x \in \Xcal = \{x_1, \cdots, x_X\}$ , s.t. $h(x) \neq h'(x)$. This implies $\forall h,h' \in \Bcal$, $(h(x_1), \ldots, h(x_X)) \neq (h'(x_1), \ldots, h'(x_X))$, and 
every $h$ in $\Bcal$ induces unique $(h(x_1), \ldots, h(x_X))$; this implies that $|\Bcal_X| = |\Bcal| = B$.

Combining the above two observations, we conclude $B = |\Bcal_\Xcal| \leq A^X$, thus $X \geq \log_A(B)$.
\end{proof}

\end{document}